\renewcommand{\algorithmicensure}{\textbf{Output:}}
\renewcommand{\algorithmicrequire}{\textbf{Input:}}
\theoremstyle{definition}
\newtheorem{definition}{Definition}[section]
\theoremstyle{remark}
\newtheorem*{remark}{Remark}
\theoremstyle{plain}
\newtheorem{theorem}{Theorem}[section]
\theoremstyle{plain}
\newtheorem{corollary}{Corollary}[theorem]
\theoremstyle{plain}
\theoremstyle{plain}
\newtheorem{lemma}[theorem]{Lemma}
\newtheorem{claim}[theorem]{Claim}
\newtheorem{fact}[theorem]{Fact}
\newcommand{\stringset}[1]{\{0,1\}^{#1}}
\newcommand{\bn}{\mathbb{N}}
\newcommand{\br}{\mathbb{R}}
\newcommand{\bz}{\mathbb{Z}}
\newcommand{\bq}{\mathbb{Q}}
\newcommand{\bc}{\mathbb{C}}
\newcommand{\ip}[2]{\left\langle #1, #2 \right\rangle}
\DeclareMathOperator{\ent}{S}
\DeclareMathOperator{\Ex}{\mathbb{E}}
\DeclareMathOperator{\tr}{\mathsf{tr}}
\DeclareMathOperator{\dt}{\mathsf{det}}
\newcommand{\Swish}{\ensuremath{\mathsf{swish}}}
\newcommand{\relu}{\ensuremath{\mathsf{ReLU}}} 
\newcommand{\lrelu}{\ensuremath{\mathsf{LReLU}}} 
\newcommand{\selu}{\ensuremath{\mathsf{SELU}}}
\newcommand{\elu}{\ensuremath{\mathsf{ELU}}}
\newcommand{\silu}{\ensuremath{\mathsf{SiLU}}}
\renewcommand{\tanh}{\ensuremath{\mathsf{tanh}}}
\newtheorem{assumption}{Assumption}
\DeclareMathOperator*{\argmax}{arg\,max}
\newcommand{\heart}{\ensuremath\heartsuit}
\title{Effect of Activation Functions on the Training of Overparametrized Neural Nets}
\author{%
	Abhishek Panigrahi\\
	Microsoft Research India\\
	\texttt{t-abpani@microsoft.com} \\
	\And
	Abhishek Shetty \thanks{Work done when the author was at Microsoft Research India.}\\
	Cornell University\\
	\texttt{shetty@cs.cornell.edu} \\
	\And
	Navin Goyal\\
	Microsoft Research India\\
	\texttt{navingo@microsoft.com}  \\
}
\begin{document}
	\iclrfinalcopy
	\maketitle

	

		\begin{abstract}
			It is well-known that overparametrized neural networks trained using gradient-based methods quickly achieve small training error with appropriate hyperparameter settings. Recent papers have proved this statement theoretically for highly overparametrized networks under reasonable assumptions. 
			These results either assume that the activation function is $\relu$ or they depend on the minimum eigenvalue of a certain Gram matrix.
			 In the latter case, existing works only prove that this minimum eigenvalue is non-zero and do not provide quantitative bounds
			 which require that this eigenvalue be large. Empirically, a number of alternative activation functions 
			have been proposed which tend to perform better than $\relu$ at least in some settings but no clear understanding has emerged. This state of affairs underscores the importance of theoretically understanding the impact of activation functions on training.
			
			In the present paper, we provide theoretical results about the effect of activation function on the training of highly overparametrized 2-layer neural networks. A crucial property that governs the performance of an activation is whether or not it is smooth: 

			\begin{itemize}[leftmargin=*]
				\item For non-smooth activations such as $\relu, \selu, \elu$, which are not smooth because there is a point where either the first order or second order derivative is discontinuous, all eigenvalues of the associated Gram matrix are large under minimal assumptions on the data.
				
				\item For smooth activations such as $\tanh, \Swish, \mathsf{polynomial}$, which have derivatives of all orders at all points, the situation is more complex: if the subspace spanned by the data has small dimension then the minimum eigenvalue of the Gram matrix can be small leading to slow training. But if the dimension is large and the data satisfies another mild condition, then the eigenvalues are large. 
				If we allow deep networks, then the small data dimension is not a limitation provided that the depth is sufficient.
			\end{itemize}
		We discuss a number of extensions and applications of these results.  \footnote{There are few differences in this version from the accepted version at ICLR. The eigenvalue lower bound for $J_{2}$ activations in \autoref{ELU_Main} has been modified to $n^{-8}$ from $n^{-7}$, due to a minor mistake in the proof and has been rectified (see proof of \autoref{min_eigen_ELU}). A few non-technical changes have been made at other places in the appendix.}

		\end{abstract}

		\vspace{-15pt}
		\section{Introduction}
		\vspace{-7pt}
		
It is now well-known that overparametrized feedforward neural networks trained using gradient-based algorithms with appropriate hyperparameter choices reliably achieve near-zero training error, e.g., \cite{NeyshaburTS14}. Importantly, overparametrization also often helps with generalization; but our central concern here is the training error which is an important component in understanding generalization. We study the
effect of the choice of activation function (we often just say activation) on the training of overparametrized neural networks. By overparametrized setting we roughly mean that the number of 
parameters or weights in the networks is much larger than the number of data samples.

The well-known universal approximation theorem for feedforward neural networks states that any continuous function on a bounded domain can be approximated arbitrarily well by a finite neural network with one hidden layer. This theorem is generally stated for specific activation functions such as $\mathsf{sigmoid}$ or  \relu{}. A more general form of the theorem shows this for essentially all non-polynomial activations~\citep{LESHNO1993861, pinkus_1999, SONODA2017233}. 
This theorem concerns only the expressive power and does not address how the training and generalization of neural networks are affected by the choice of activation, nor does it provide quantitative information about the size of the network needed for the task. 

 Traditionally, $\mathsf{sigmoid}$ and $\tanh$ had been the popular activations but a number of other activations have also been considered including linear and polynomial activations~\citep{Arora_linear, DuLee_Quadratic18, Bruna_polynomial}.  One of the many innovations in the resurgence of neural networks in the last decade or so has been the realization that \relu{} activation generally performs better than the traditional choices in terms of training and generalization.  
  \relu{} is now the de facto standard for activation functions for neural networks but many other activations are also used
which may be advantageous depending on the situation (e.g. \cite[Chapter 6]{deepbook}). 
In practice, most activation functions often achieve reasonable performance. To quote \cite{deepbook}: \emph{In general, a wide variety of differentiable functions perform perfectly well. Many unpublished activation functions perform just as well as the popular ones.} Concretely, ~\cite{Ramachandran2018SearchingFA} provides a list of ten non-standard functions which all perform close to the state of the art at some tasks. 
This hints at the possibility of a universality phenomenon for training neural networks similar to the one for expressive power mentioned above. 

\vspace{-8pt}
\paragraph{Search for activation functions.} 
A number of recent papers have proposed new activations---such as \elu{}, \selu{}, penalized $\tanh$, \silu{}/\Swish{}---based on either theoretical considerations or automated search using reinforcement learning and other methods; e.g. \cite{Clevert2016FastAA, klambauer2017self, XuHL16, ElfwingUD17, Ramachandran2018SearchingFA}. 
For definitions, see \hyperref[preliminaries]{Section \ref*{preliminaries}} and \hyperref[sec:activations]{Appendix \ref*{sec:activations}}. 
These activation functions have been found to be superior to  \relu{} in many settings. 
See e.g. \cite{Eger2018IsIT, Nwankpa2018ActivationFC} for overview and evaluation. 
We quote once more from \cite{deepbook}: \emph{The design of hidden units is an extremely active area of research and does not
	yet have many definitive guiding theoretical principles.} 

\vspace{-8pt}
\paragraph{Theoretical analysis of training of highly overparametrized neural networks.}
Theoretical analysis of neural network training has seen vigorous activity of late and significant progress was made for the case of highly overparametrized networks. At a high level, the main insight in these works is that when the network is large, small changes in weights can already allow the network to fit the data. And yet, since the weights change very little, the training dynamics approximately behaves as in kernel methods and hence can be analyzed (e.g. \cite{NTK_Jacot, Li_Liang, du2018gradient, allenzhu2018convergence, Du_Lee_2018GradientDF, AZLL, Arora2019FineGrainedAO, oymak2019towards}). There are also many other approaches for theoretical analysis, e.g. \cite{BrutzkusGMS18, Montanari_PNAS, Chizat_Bach_NIPS}. 
Because of the large number of papers on this topic, we have chosen to list only the most closely related ones.

Analyses in many of these papers involve a matrix $\mathbf{G}$, either explicitly~\citep{NTK_Jacot, du2018gradient, Du_Lee_2018GradientDF} or implicitly~\citep{allenzhu2018convergence}. (This matrix also occurs in earlier works~\citep{Xie_etal, TsuchidaRG18}.) 
$\lambda_{\min}(\mathbf{G})$, the minimum eigenvalue of $\mathbf{G}$, is an important parameter that directly controls the rate of convergence of gradient descent training: the higher the minimum eigenvalue the faster the convergence. 
\cite{NTK_Jacot, Du_Lee_2018GradientDF} show that $\lambda_{\min}(\mathbf{G}) > 0$ for certain activations assuming that no two data points are parallel. 
Unfortunately, these results do not provide any quantitative information. The result of \cite{allenzhu2018convergence}, where the matrix 
$\mathbf{G}$ does not occur explicitly, 
can be interpreted as showing that the minimum eigenvalue is large under the reasonable assumption that the data is $\delta$-separated, meaning roughly that no two data points are very close, and the activation used is  \relu{}. This quantitative lower bound on the minimum eigenvalue implies fast convergence of training. 
So far,  \relu{} was the only activation for which such a proof was known.  
\vspace{-8pt}
\paragraph{Our results in brief.} A general result one could hope for is that based on general characteristics of the activations, such as smoothness, convexity etc., one can determine whether the smallest eigenvalue of $\mathbf{G}$ is small or large. We prove results of this type. A crucial distinction turns out to be whether the activation is (a) not smooth (informally, has a ``kink'') or (b) is smooth (i.e. derivatives of all orders exist over $\br$).
The two classes of functions above seem to cover all ``reasonable'' activations; in particular, to our knowledge, all functions used in practice seem to fall in one of the two classes above.  
\vspace{-9pt}
\begin{itemize}[leftmargin=*]
	\item Activations with a kink,  i.e., those with a a jump discontinuity in the derivative of some constant order, have all eigenvalues large under minimal assumptions on the data. E.g., the first derivatives of $\relu$ and $\selu$,  the second derivative of $\elu$ have jump discontinuities at $0$. These results imply that for such activations, training will be rapid.  We also provide a new proof for $\relu$ with the best known lower bound on the minimum eigenvalue. 
	
	\item For smooth activations such as polynomials, $\tanh$ and $\Swish$ the situation is more complex: the minimum eigenvalue can be small depending on the dimension of the span of the dataset. We give a few examples: Let $n$ be size of the dataset which is a collection of points in $\br^d$, and let $d'$ be the dimension of the span of the dataset. For quadratic activation, if $d' = O(\sqrt{n})$, then the minimum singular value is $0$. For $\tanh$ and $\Swish$, if $d' = O(\log^{0.75}{n})$, then the minimum eigenvalue is inverse superpolynomially small ($\exp({-\Omega(n^{1/2d'})})$). In fact, a significant fraction of eigenvalues can be small. This implies that for such datasets, training using smooth activations will be slow if the dimension of the dataset is small. A trade off is possible: assuming stronger bounds on the dimension of the span gives stronger bounds on the eigenvalues. 
	We also show that these results are tight in a precise sense. We further show that the above limitation of smooth activations disappears if one allows sufficiently deep networks. 
\end{itemize}

Unless otherwise stated, we work with one hidden layer neural nets where only the input layer is trained. This choice is made to simplify exposition; extensions of various types including the ones that drop the above restriction are possible and discussed in \hyperref[sec:extensions]{Section \ref*{sec:extensions}}.

		
		\vspace{-8pt}
\section{Preliminaries}\label{preliminaries}
\vspace{-8pt}

Denote the unit sphere in $\br^n$ by $ \mathbb{S}^{n-1} := \left\{\mathbf{u}  \in \br^{n} : \norm{\mathbf{u}}_2 = 1  \right\} $ where 
$\norm{\mathbf{u}}^2 := \norm{\mathbf{u}}^2_2 := \sum_{i=1}^{n} u_i^2   $. For $u, v \in \br^n$, define the standard inner product by $\langle u, v\rangle := \sum_i u_i v_i$.
Given a set $S  $, denote by $ \mathcal{U } \left(S\right) $ the uniform distribution over $ S $. 
$ \mathcal{N} ( \mu , \sigma^2 ) $ denotes the univariate normal distribution with mean $ \mu $ and variance $ \sigma^2  $.
Let $ \mathcal{I}_{S} $ denote the indicator of the set $ S $. For a matrix $\mathbf{A}$ containing elements $a_{ij}$, $\mathbf{a}_i$ denotes its $i$-th column. 
We define some of the popular non-traditional activation functions here: $\Swish(x):=\frac{x}{1+e^{-x}}$~\citep{Ramachandran2018SearchingFA} (called \silu{} in \cite{ElfwingUD17});
$\elu(x) := \max\left(x, 0\right) + \min\left(x, 0\right)\left(e^{x}-1\right)$~\citep{Clevert2016FastAA};
$\selu(x) := \alpha_1\max\left(x, 0\right) + \alpha_2\min\left(x, 0\right)\left(e^{x}-1\right)$, where $\alpha_1$ and $\alpha_2$ are two different constants~\citep{klambauer2017self}. See \hyperref[sec:activations]{Appendix \ref*{sec:activations}} for more definitions.

We consider $2$-layer neural networks  
\begin{equation}\label{def-neural}
F(\mathbf{x}; \mathbf{a}, \mathbf{W}) := \frac{c_{\phi}}{\sqrt{m}} \sum_{k=1}^{m} a_k \phi\left(\mathbf{w}_k^T \mathbf{x}\right),  
\end{equation}
where $\mathbf{x} \in \br^d$ is the input and $\mathbf{W} = \left[\mathbf{w}_1, \ldots, \mathbf{w}_m\right] \in \mathbb{R}^{m \times d}$ is the hidden layer weight matrix and $\mathbf{a} \in \mathbb{R}^{m}$ is the output layer weight vector. 
Activation function $\phi \colon \mathbb{R} \to \mathbb{R} $ acts entrywise on vectors and $ c_{\phi}^2 :=   \mathbb{ E}_{z \in \mathcal{N}\left(0, 1\right)} \phi(z)^2  $.
In the case of one hidden layer nets, we set $ c_{\phi} = 1 $ to simplify expressions; this is without loss of generality. For deeper networks we do not do this as this assumption would result in loss of generality. 
Elements of $\mathbf{W}$  and $\mathbf{a}$ have been initialized i.i.d. from the standard Gaussian distribution.
This initialization and the parametrization in \autoref{def-neural} are from \cite{Du_Lee_2018GradientDF}. Together, these will be referred to as the DZPS setting. Parametrization in practice does not have $\frac{c_\phi}{\sqrt{m}}$ in \autoref{def-neural}; standard initializations in practice include He~\citep{he2015delving} and Glorot initializations~\citep{glorot-10a} and variants.  In the DZPS setting, the argument of the activation can be larger compared to the standard initializations,  making the analysis harder. Our theorems apply to the DZPS setting as well as to standard initializations and for the latter easily follow as corollaries to the analysis in the DZPS setting. 
We defer this discussion to the appendix. 
Unless otherwise stated, we work in the DZPS setting with one hidden layer neural nets with initialization as above with only the input layer trained.

Given labeled input data $\{(\mathbf{x}_i, y_i)\}_{i=1}^{n}$, where $\mathbf{x}_i \in \br^d$ and $y_i \in \br$, we want to find the best fit weights $\mathbf{W}$ 
so that the quadratic loss $ \mathcal{L}(\{(\mathbf{x}_i, y_i)\}_{i=1}^{n}; \mathbf{a}, \mathbf{W}) := \frac{1}{2} \sum_{i=1}^{n} ( y_i - F(\mathbf{x}_i; \mathbf{a}, \mathbf{W}) )^2 $
is minimized. 
We train the neural network by fixing the output weight vector $\mathbf{a}$ at random initialization and training the hidden layer weight matrix 
$\mathbf{W}$
(output layer being trained can be easily handled as in \cite{du2018gradient, Du_Lee_2018GradientDF}; See \autoref{appendix:additional_proof} for details).
In this paper, we focus on the gradient descent algorithm for this purpose. The gradient descent update rule for $\mathbf{W}$ is given by $  \mathbf{W}^{\left(t+1\right)} := \mathbf{W}^{\left(t\right)} - \eta \nabla_{\mathbf{W}} \mathcal{L}(\mathbf{W}^{(t)}) $,
where $\mathbf{W}^{(t)}$ denotes the weight matrix after $t$ steps of gradient descent and $\eta > 0$ is the learning rate.  
The output vector $\mathbf{u}^{\left(t\right)} \in \mathbb{R}^{n}$ is defined by $u_i^{(t)} :=  F( \mathbf{x}_i; \mathbf{a}, \mathbf{W}^{(t)} )$.

%
Next, we define the matrix alluded to earlier, the \emph{Gradient Gram matrix} $\mathbf{G} \in \mathbb{R}^{n \times n}$ associated with the neural network defined in \eqref{def-neural}, referred to as the $G$-matrix in the sequel: 
\begin{equation}\label{gram-def}
g_{i, j} := \frac{1}{m} \sum_{k \in [m]} a_k^2 \, \phi^{\prime}(\mathbf{w}_k^T\mathbf{x}_i) \phi^{\prime}(\mathbf{w}_k^T\mathbf{x}_j) \langle \mathbf{x}_i, \mathbf{x}_j\rangle.
\end{equation}
We will often work with the related matrix $ \mathbf{M} \in \br^{ md \times n }$, whose $i$-th column is obtained by vectorizing 
$\nabla_{\mathbf{W}} \mathcal{F}(\mathbf{x}_i, y_i)$, i.e.,  $ \mathbf{M}_{d(k-1)+1 : dk, i} :=  a_k \phi^{\prime} \left(\mathbf{w}_k^T \mathbf{x}_i\right) \mathbf{x}_i  $ for $k \in [m]$. $\mathbf{G}$ is a Gram matrix: $ \mathbf{G} = \frac{1}{m} \mathbf{M}^T \mathbf{M}$. 
Denote by $ \lambda_i (\mathbf{G})$ the $i$-th eigenvalue of $ \mathbf{G} $ with $ \lambda_1 \geq \lambda_{2} \geq \ldots$, and similarly by  $ \sigma_i (\mathbf{M}) $  the $i $-th singular value of $ \mathbf{M} $. These quantities are related by $ \lambda_i( \mathbf{G} )  = \frac{1}{m}  \sigma_i^2 ( \mathbf{M} ) $.


\todo[disable]{Here the theorems of Du et al. and ALS/NTK should be stated to underline the importance of Gradient Gram matrix. Also say that while it follows from the definition of the matrix that it's psd, we really need it to be pd with large min singular value. Then mention the results in NTK (using Schoenberg's and later theorems) and the result in [DLL+19] where the show that the matrix is pd under some conditions on the activation.}
\todo[disable]{Put the description under the headings}
\todo{We should compare the two settings by saying that in ALS the argument of the activation function is very close to 0 with high probability, but in Du et al. it can be far. While this may not matter much for  \relu{} which is positive homogeneous, it matters for tanh etc. which are not.}



Following \cite{allenzhu2018convergence}, we make the following mild assumptions on data. 
\begin{assumption}\label{ass1}
	$\norm{ \mathbf{x}_i }_2 = 1 \quad \forall i \in [n] $.
\end{assumption}

\todo{Where does the assumption about the $A_i$ being random get used?}

\begin{assumption}\label{ass2}
	$\| (\mathbf{I}_n - \mathbf{x}_i \mathbf{x}_i^T ) \mathbf{x}_j \|_2 \ge \delta \quad \forall i, j \in [n], i \ne j $ i.e., the distance between the subspaces spanned by $\mathbf{x}_i$ and $\mathbf{x}_j$ 
	is at least $\delta>0$. 
\end{assumption}

\autoref{ass1} can be easily satisfied by the following preprocessing: renormalize each $\mathbf{x}_i$ so that 
$\norm{\mathbf{x}_i} \le 1/\sqrt{2}$, add another coordinate to each $\mathbf{x}_i$ so that $\norm{\mathbf{x}_i}=1/\sqrt{2}$ and then add another coordinate with value $1/\sqrt{2}$ to each $\mathbf{x}_i$. This ensures that $\| \mathbf{x}_i - \mathbf{x}_j \|_2 \ge \delta $ implies Assumption~\ref{ass2} for 
$\mathbf{x}_i, \mathbf{x}_j$, which we later verify empirically for CIFAR10.


\todo[disable]{I think that's the incorrect way to justify that the assumption is without loss of generality as the scaling should be the same for all samples. The correct way is as written in [ALS]. Also say that these assumptions have been made in several recent papers ([Li-Liang]?, [ALS], [Du et al.]). Do we need the stronger (but equivalent) assumption in ALS that the last coordinate is $1/\sqrt{2}$? Check where that assumption gets used in that paper: if it gets used in the parts that we use, e.g. anticoncentration for  \relu{}, then we will also need to make it. }
\todo[disable]{The separation assumption needs to be between the subspaces spanned by $x_i$ and $x_j$ resp.}
\todo{Add some comments on the assumption once we have done the experiments with MNIST and CIFAR10 to see to what extent it's satisfied.} 

		\vspace{-6pt}
		\section{Review of Relevant Prior Work} \label{sec:Motivation}
		\vspace{-6pt}
		To motivate the importance of the $G$-matrix, let us first consider the continuous time gradient flow dynamics $ \dot{\mathbf{W}}^{(t)} = - \nabla_{\mathbf{W}} \mathcal{L}(\mathbf{W}^{(t)}) $,
		where $ \mathcal{L}(\mathbf{W}) $ denotes the loss function (in the notation we suppressed dependence on data and the weights of the output layer) and $ \dot{\mathbf{W}} $ denotes the derivative with respect to $t$. 
		Let $\mathbf{y} \in \br^n$ be the vector of outputs. 
		It follows from an application of the chain rule that $ \dot{\mathbf{u}}^{(t)} = \mathbf{G}^{(t)}   (\mathbf{y}-\mathbf{u}^{(t)} ) $.
		Here $ g_{i,j}^{(t)} :=\frac{1}{m} \ip{\mathbf{x}_i}{\mathbf{x}_j} \sum_{k=1}^{m}  \phi' ( \mathbf{w}_k ^{\left(t\right) T}   \mathbf{x}_j   ) \phi' ( \mathbf{w}_k ^{\left(t\right) T}  \mathbf{x}_i   )   $. 
		It can be shown that as $ m \to \infty $, the matrix $ \mathbf{G}^ {\left(t\right)} $ remains close to its initial value $ \mathbf{G}^{  \left(0\right)} $ which is exactly the $G$-matrix (see e.g. \cite{NTK_Jacot, Arora_NTK} for closely related results). 
		This leads us to the approximate solution,
		which upon diagonalizing the PSD matrix $ \mathbf{G}^{  \left(0\right)} $ is given by 
		\begin{equation} \label{eqn:m_infty_gradient_flow}
		\mathbf{y} - \mathbf{u}^{(t)} =    \sum_{i \in [n]} (e^{-\lambda_i t }   \mathbf{v}_i \mathbf{v}_i^T )  (\mathbf{y} - \mathbf{u}^{(0 )}).
		\end{equation}
		Thus, it can be seen that the eigenvalues of the $G$-matrix, in particular $ \lambda_{\min }(\mathbf{G}^{  \left(0\right)}) $, control the rate of change of the output of the neural network towards the true labels. 
		The following result plays a central role in the present paper. 
		\begin{theorem}[Theorem 4.1 of \cite{du2018gradient}] \label{thm:Du}
			Let $\phi$ be \relu. Define matrix $\mathbf{G}^{\infty}$ as 
			$ \mathbf{G}^{\infty} := \mathbb{E}_{\mathbf{w} \sim \mathcal{N}\left(\mathbf{0}, \mathbf{I}_d\right)} \, \mathcal{I}_{\mathbf{w}^T \mathbf{x}_i \ge 0, \mathbf{w}^T \mathbf{x}_j \ge 0} \left\langle \mathbf{x}_i, \mathbf{x}_j \right\rangle$.
			Assume $\lambda = \lambda_{\min}\left(\mathbf{G}^{\infty}\right)>0$. If we set $m \ge \Omega\left(n^6  \lambda^{-4} \kappa^{-3} \right)$ and $\eta \le \mathcal{O}\left(\lambda n^{-3}\right)$ and initialize $\mathbf{w}_k \sim \mathcal{N}\left(0, \mathbf{I}_d\right)$ and 
			$a_k \sim \mathcal{U}\left\{-1, +1\right\}$ for $k \in [m]$, then with probability at least $1-\kappa$ over the random initialization, for 
			$t \geq 1$ we have $ \|\mathbf{y}-\mathbf{u}^{(t)}\|_2^2 \le \left(1 - 0.5\eta \lambda\right)^{t}\|\mathbf{y}-\mathbf{u}^{(0)}\|^2_2$.
		\end{theorem}
		\todo{Is the notation for the identity used in the above theorem also used everywhere else in the paper?}
		In the theorem above it can be seen that the time required to reach a desired amount of error is inversely proportional to $ \lambda $. 
		\cite{Du_Lee_2018GradientDF} extended the above result to general real-analytic functions. 
		\todo{Real-analytic case is not an extension of \relu{}.}
		While the definition of the $G$-matrix shows that it is positive semidefinite, it is not immediately clear that the matrix is non-singular.
		But the following theorem, from \cite{Du_Lee_2018GradientDF} says that the matrix is indeed non-singular under  
		very weak assumptions on the data and activation function. 
		A similar result for the limit $m \to \infty$ but for more general non-polynomial Lipschitz activations was shown in \cite{NTK_Jacot} using techniques from \cite{daniely2016toward}. 
		\begin{lemma}[Lemma F.1 in \cite{Du_Lee_2018GradientDF}]
			If $\phi$ is a non-polynomial analytic function and $\mathbf{x}_i $ and $ \mathbf{x}_j$ are not parallel for distinct $i, j \in \left[n\right]$, then $\lambda_{\min}\left(\mathbf{G}^{\infty}\right) > 0$.
		\end{lemma}

		In these papers the number of neurons required and the rate of convergence depend on $\lambda_{\min}(\mathbf{G}^{\infty})$ (e.g. \autoref{thm:Du} above) and thus it is necessary for the matrix to have large minimum singular value for their analysis to give useful quantitative bounds.  
		Unfortunately, these papers do not provide quantitative lower bound for $\lambda_{\min}\left(\mathbf{G}^{\infty}\right)$. 
		
		\cite{allenzhu2018convergence} considered $L$-layer networks using \relu{} (see \hyperref[sec:Initializations]{Appendix \ref*{sec:Initializations}} for details on their parametrization).
		A major step in their analysis is a lower bound on the gradient norm at each step. 
		\todo{Here we probably want to add a link to where the ALS initialization is defined in the paper.}
		\begin{theorem}[Theorem 3 in \cite{allenzhu2018convergence}]
			With probability at least $1 - \exp({-\Omega(m/\mathrm{poly}(n, \frac{1}{\delta}))})$ with respect to the initialization, for every $\mathbf{W}$ such that $\|\mathbf{W}-\mathbf{W}^{(0)}\|_{F} \le \nicefrac{1}{\mathrm{poly}\left(n, \delta^{-1}\right)}$, we have $ \|\nabla_{\mathbf{W}} \mathcal{L}\left(\mathbf{W}\right)\|_F^2 \ge \Omega(\mathcal{L}(\mathbf{W})\, m \delta d^{-1}  n^{-2} )  $.
		\end{theorem}
		We show that $\lambda_{\min}\left(\mathbf{G}\right)$ is directly related to the lower bound on the gradient in the case of \relu{}. 
		It is not clear if the same method can be extended to other activation functions.
		
		With this in mind, we aim to characterize the minimum eigenvalue of Gram matrix $\mathbf{G}^{\infty}$. Since, $\mathbf{G}^{\infty}$ is the same matrix as $\mathbf{G}^{(0)}$ in the limit $m \to \infty$, we will focus on proving high probability bounds for eigenvalues of $\mathbf{G}^{(0)}$. 
		
		\vspace{-7pt}
\section{Main Results}
\vspace{-7pt}
\subsection{Activations with a kink}
\vspace{-6pt}
For any positive integer constant $r$, presence of a jump discontinuity in the $r$-th derivative of the activation leads to a large lower bound on the minimum eigenvalue. 
The activation function has the form $ \phi(x) = \phi_1(x)\, \mathcal{I}_{x < \alpha} + \phi_2(x)\, \mathcal{I}_{x \ge \alpha} $,
where $-1 \le \alpha \le 1$. 
Recall that $ C^{r+1} $ denotes the set of $ r+1 $ times continuously differentiable functions. 
We need $\phi_1$ and $\phi_2$ to satisfy the following set of conditions parametrized by $r$ and denoted $ \mathbf{J_r} $: 
\vspace{-7pt}
\begin{itemize}
	\item $\phi_1, \phi_2 \in C^{r+1}$ in the domains $  \interval [ open left, soft open fences ]{ -\infty }{ \alpha }$ and $ \interval [ open right, soft open fences ]{ \alpha }{\infty } $, respectively. 
	\item The first $\left(r+1\right)$ derivatives of $\phi_1$ and $\phi_2$ are upper bounded in magnitude by 1 in $  \interval [ open left, soft open fences ]{ -\infty }{ \alpha }$ and $ \interval [ open right, soft open fences ]{ \alpha }{\infty } $ respectively.
	\item For $0 \leq i < r$, we have $\phi_1^{\left(i\right)}(\alpha) = \phi_2^{\left(i\right)}(\alpha)$. 
	\item $|\phi_1^{\left(r\right)}(\alpha) - \phi_2^{\left(r\right)}(\alpha)| = 1$, i.e., the $r$-th derivative has a jump discontinuity at $ \alpha $. 
\end{itemize}
\begin{remark} We fix the constants in $\mathbf{J_r}$ to $1$ for simplicity. We could easily parameterize these constants and make explicit the dependence of our bounds on these parameters. The requirement on the boundedness of derivatives is also not essential and can be relaxed as ``all the action happens'' in the interval $[-O(\sqrt{\log m}), O(\sqrt{\log m})]$. 
\end{remark}

$ \mathbf{J_1} $ covers activations such as $\mathsf{ReLU}$, \selu{} and \lrelu{}, while $ \mathbf{J_2} $ covers activations  such as \elu{}.
Below we state the bound explicitly for $ \mathbf{J_2} $. Similar results hold for $ \mathbf{J_r} $ for $ r \geq 1 $. See \hyperref[sec:Lower_Bounds]{Section \ref*{sec:Lower_Bounds}} for details. 

\begin{theorem}[$ \mathbf{J_2} $ activations]: \label{ELU_Main}
	If the activation $\phi$ satisfies $ \mathbf{J_2} $ then we have 
	\begin{equation*}
	\lambda_{\min}(\mathbf{G}^{(0)}) \ge \Omega(\delta^3  n^{-8}  (\log n )^{-1} ), 
	\end{equation*}
	with probability at least $1 - e^{-\Omega(\delta m/n^2)}$ with respect to $\{\mathbf{w}_k^{(0)}\}_{k=1}^{m}$ and $\{a_k^{(0)}\}_{k=1}^{m}$, given that 
	$   m > \max(    \Omega(  n^3 \delta^{-1} \log(n\delta^{-1})),  \Omega(n^2 \delta^{-1}  \log d )).$
	%
\end{theorem}

The theorem above shows that the presence of a jump discontinuity in the derivative of activation function (or one of its higher derivatives) leads to fast training of the neural network. 
For the special case of $\relu$ we give a new proof. To our knowledge, lower bound on the minimum eigenvalue of the $G$-matrix below is the best known. 
The proof technique uses Hermite polynomials and is motivated by our results for smooth activations in the next section. See \hyperref[sec:relu_hermite]{Section \ref*{sec:relu_hermite}} for details.
\begin{theorem}
	If the activation is $\relu$ and $m \ge \tilde{\Omega}(n^4  \delta^{-3} \log^4 n )$, then $\lambda_{\min} (\mathbf{G}^{(0)}) \ge \Omega((\delta^{1.5} \log^{-1.5} n),$ with probability at least $1 -  e^{-\tilde{\Omega}(m \delta^3 n^{-2} \log^{-3} n)}.$
\end{theorem}
The dependence on $n$ in the above bound is inverse-polylogarithmic as opposed to inverse-polynomial that seems to result from using the technique of 
\cite{allenzhu2018convergence}. It implies that with $m = \tilde{\Omega}(n^6/\delta^6)$ in $\mathsf{poly}(\log(n/\epsilon), 1/\delta)$ steps gradient descent training achieves error less than $\epsilon$.

\vspace{-6pt}
\subsection{Smooth Activations}
\vspace{-6pt}
In contrast to activations with a kink, the situation is more complex for smooth activations and we can divide the results into positive and negative.
\paragraph{Negative results for smooth activations.} The $G$-matrix of constant degree polynomial activations, such as quadratic activation,  must have many zero eigenvalues; and of sufficiently smooth activations, such as $\tanh$ or $\Swish$, must have many small eigenvalues, if the dimension of the span of data is sufficiently small:
\begin{theorem}[restatement of Theorem~\ref{thm:poly}] \label{thm:Poly_Main} 
	Let the activation be a degree-$p$ polynomial such that $\phi'(x) = \sum_{l=0}^{p-1} c_{\ell} x^{\ell}$ and let 
	$d' = \dim (\mathrm{span}\{ \mathbf{x}_1 \dots \mathbf{x}_n \}  ) =\mathcal{O}(n^{{1}/{p}})$. Then we have 
	$   \lambda_{\min}(\mathbf{G}^{(0)}) = 0.$
	Furthermore, $ \lambda_k =0 $,  for $k \geq \lceil  n/d' \rceil   $. 
\end{theorem} 

\begin{theorem}[restatement of Theorem~\ref{thm:small_poly-d}] \label{thm:Tanh_Main} 
	Let the activation function be $\tanh$ and let $d' = \dim( \mathrm{span} \{ \mathbf{x}_1 \dots \mathbf{x}_n  \}) =\mathcal{O}(\log^{0.75} {n})$. Then we have 
	$       \lambda_{\min}(\mathbf{G}^{(0)}) \le \exp({-\Omega(n^{1/2d'})}),$
	with probability at least $1 - 1/n^{3.5}$ over the random choice of weight vectors $\{\mathbf{w}_k^{(0)}\}_{k=1}^{m}$ and $\{a_k^{(0)}\}_{k=1}^{m}$. 
	Furthermore, the same upper bound is satisfied by $ \lambda_k $,  for $k \geq \lceil  n/d' \rceil   $. 
\end{theorem}



See \hyperref[sec:tanh]{Appendix F} for proofs. Note that the bounds above do not make any assumption on the data other than the dimension of its span.  
The proof technique generalizes to give similar results for general classes of smooth activation such as $\Swish$
(\hyperref[sec:Proof_Sketch]{Section \ref*{sec:Proof_Sketch}} and    \hyperref[sec:General_Activation]{Appendix \ref*{sec:General_Activation}}). 
In contrast to the above result, the average eigenvalue of the $G$-matrix for all reasonable activation functions is large: 

\begin{theorem}[informal version of \autoref{thm:trace}]
	Let $ \phi $ be a non-constant activation function, with Lipschitz constant $\alpha$ and let $ \mathbf{G} $ be its $G$-matrix. 
	Then, $ \tr (\mathbf{G} ) = \Omega(n)  $ with high probability. 
\end{theorem} 
The previous two theorems together imply that the $G$-matrix is poorly conditioned when $d=\mathcal{O}(\log^{0.75}{n})$ and the
activation function is smooth, e.g., $\tanh$. The effect on training of $G$-matrix being poorly conditioned can be easily seen in \autoref{eqn:m_infty_gradient_flow} for the $m \to \infty$ case with gradient flow discussed earlier. 
For the finite $m$ setting, we show that the technique of \cite{Arora2019FineGrainedAO} can be extended to the setting of smooth functions (see \hyperref[sec:arora]{Appendix \ref*{sec:arora}}) to prove the following. 

\begin{theorem}
	Denote by $ \mathbf{v}_i $ the eigenvectors of $ \mathbf{G}^{(0)} $ and with $ \lambda_i $ the corresponding eigenvalue. With probability at least $1-\kappa$ over the random initialization, the following holds for $t \geq 0$, $ \|\mathbf{y} - \mathbf{u}^{(t)}\|_2 \leq (\sum_{i=1}^{n} (1 - \eta \lambda_i)^{2t}(\mathbf{v}_i^T(\mathbf{y} - \mathbf{u}^{(0)}))^2)^{1/2}  + \epsilon,$
	provided     $
	m \ge \Omega(n^5 \kappa^{-1} \lambda_{\min}(\mathbf{G}^{(0)})^{-4} \epsilon^{-2} ) $ and $ \eta \le \mathcal{O}(n^{-2}\lambda_{\min}(\mathbf{G}^{(0)})).
	$
	%
\end{theorem}

This result can be interpreted to mean that in the small perturbative regime of \cite{du2018gradient,Arora2019FineGrainedAO}, smooth functions like $\tanh $ do not train fast. 
The learning rate in the above result is small as $\lambda_{\min}(\mathbf{G}^{(0)})$ is small. 
Analyzing the training for higher learning rates remains open. 

\paragraph{Positive results for smooth activations.} We show that in a certain sense the results of \autoref{thm:Poly_Main} and \autoref{thm:Tanh_Main} are tight. Let us illustrate this for \autoref{thm:Tanh_Main}.
Suppose that the activation function is $\tanh$ and that the dimension of the span $V$ of the data $\mathbf{x}_1, \ldots, \mathbf{x}_n$ is $\Omega(n^{\gamma})$, for a constant $\gamma$.
Furthermore, we assume that the data is \emph{smoothed} in the following sense. 
We start with a preliminary dataset $\mathbf{x}'_1, \ldots, \mathbf{x}'_n$ with the same span $V$, then we perturb each data point by adding i.i.d. Gaussian noise, i.e. $\mathbf{x}_i = \mathbf{x}'_i + \mathbf{n}_i$, and normalize to have unit Euclidean norm (see \autoref{assumption3} for a precise statement). This Gaussian noise is obtained by taking the standard Gaussian variable on $V$ multiplied by a small factor. Thus the new data points have span $V$. For such datasets we show the following theorem. 
For more general theorems for polynomial activations and $\tanh$, see \autoref{smoothed:polynomial} and \autoref{smoothed:tanh} respectively.


\begin{theorem}[Informal version of \autoref{smoothed:tanh:cor:poly}]\label{thm:tanh-smooth-in-paper}
	Let  the activation be $\tanh$, let the perturbation noise be of the order $(\delta/\sqrt{n})$ and $d' = \mathrm{dim} \,\mathrm{span}\{\mathbf{x}_1, \ldots, \mathbf{x}_n\} \ge \Omega(n^{\gamma})$, for a constant $\gamma$. Then we have
	$\lambda_{\min}(\mathbf{G}^{(0)}) \gtrapprox \Omega(  \delta^{(2/\gamma)}n^{-(3/\gamma)} )$
	with probability at least 0.99  w.r.t. the noise matrix $\mathbf{N}$, $\{\mathbf{w}_k^{(0)}\}_{k=1}^{m}$ and  $\{a_k^{(0)}\}_{k=1}^{m}$, provided 
	$     m \gtrapprox \tilde{\Omega}(n^{6/\gamma} \delta^{-4/\gamma})$.
\end{theorem}

We now say a few words about our assumption that the data is smoothed. Smoothed analysis, originating from \cite{SpielmanT04}, is a general methodology for analyzing efficiency of algorithms (often those that work well in practice) and can be thought of as a hybrid between worst-case and average-case analysis. 
Since in nature, problem instances are often subject to numerical and observational noise, one can model them by the process of smoothing. 
Smoothed analysis involves analyzing the performance of the algorithm on smoothed instances, which can be substantially better than the worst-case instances. Smoothed analysis has also been used in learning theory and our proof is inspired by \cite{AndersonBGRV14} addressing a different problem, namely rank-1 decomposition of tensors. In the present case, smoothness of the data rules out situations where the data has span $d'$ in a 
\emph{non-robust} sense: most of the data points lie in small dimensional subspace and the dimension of the span is high because of a few points. Some such condition seems essential.  

In another direction, we show that if the network is sufficiently deep for $\tanh$, only the separability assumption on the data suffices. 
For deep networks, \cite{Du_Lee_2018GradientDF} generalized the notion of $G$-matrix to be the $G$-matrix for the penultimate layer (see \hyperref[sec:depth_helps]{Section \ref*{sec:depth_helps}}). This matrix and its eigenvalues play similar role in the dynamics of training as for the one-hidden layer case discussed before; we continue to denote this matrix by $\mathbf{G}^{(0)}$.
This result can be generalized to other smooth activations.

\begin{theorem}[informal version of \autoref{thm:tanh-depth}]\label{thm:tanh-depth-in-paper}
	Let the activation be $\tanh$ and let the data satisfy \autoref{ass1} and \autoref{ass2}. Let the depth $L$ satisfy $L = \Theta(\log{1/\delta})$. Then 
	$   \lambda_{\min}(\mathbf{G}^{(0)}) \ge e^{-\Omega(\sqrt{\log n})} \gg 1/\mathsf{poly}(n)$
	with high probability, provided $m \ge \Omega\left(\mathsf{poly}(n, 1/\delta)\right)$. 
\end{theorem}

\vspace{-9pt}
\section{Extensions} \label{sec:extensions}
\vspace{-7pt}
For a large part of the paper we confine ourselves to the case of one hidden layer where only the input layer is trained. This is in order to focus on the core technical issues of the spectrum of the $G$-matrix. Indeed, our results can be extended along several axes, often by combining our results for the $G$-matrix with existing techniques from the literature. We now briefly discuss some of these extensions. Some of these are worked out in the appendix for completeness.

We can easily generalize to the case when the output layer is also trained (\hyperref[subsec:trainable_output_layer]{Section \ref*{subsec:trainable_output_layer}}). Also, we have focused on training with gradient descent, but training with stochastic gradient descent can also be analyzed for activations in $\mathbf{J_r}$ (\hyperref[subsec:SGD]{Section \ref*{subsec:SGD}}).

Generalization bounds from \cite{Arora2019FineGrainedAO} can easily be extended to the set of functions satisfying $ \mathbf{J_r} $ using techniques from \cite{Du_Lee_2018GradientDF}. 
Similarly, techniques from \cite{allenzhu2018convergence} for higher depth generalize to functions such as \selu{}, \lrelu{} and \elu{}. 
We believe this also generalizes to $ \mathbf{J_r} $. Other loss functions such as cross-entropy can be handled as well as activations with more than one kink. The case of multi-class classification can also be handled (Sec. \ref{sec:multi-class}).
We do not pursue these directions in this paper choosing to focus on the core issues about activations. We briefly discuss extension to more general classes of activations in \autoref{sec:General_Activation}.

%

\vspace{-8pt}  
\section{Proof Sketch} \label{sec:Proof_Sketch}
\vspace{-8pt}
In this section, we provide a high level sketch of the proofs of our results. \vspace{-9pt} \paragraph{Activations with a kink.} We first sketch the proof of \autoref{min_eigen_relu}, which shows that the minimum eigenvalue of the $G$-matrix is large for 
activations satisfying $\mathbf{J_1}$. \todo[disable]{Put reference to the theorem label}
As an illustrative example, consider  \relu{}. 
Its derivative, the step function, is discontinuous at 0. In their convergence proof for  \relu{} networks, 
\cite{allenzhu2018convergence} prove that the norm of the gradient for a $\mathbf{W}$ is large if the loss at $\mathbf{W}$ is large. 
We observe that their technique also shows a lower bound on the lowest singular value of the $\mathbf{M}$-matrix by considering the norm of all possible linear combinations of the columns. For $\zeta \in \mathbb{S}^{n-1}$, define the linear combination $f_\zeta(\mathbf{w}) := \sum_{i=1}^{n} \zeta_i\, \phi'(\mathbf{w}^T \mathbf{x}_i)\, \mathbf{x}_i$. 
\begin{theorem}[Informal statement of  \autoref{claim:thm-GL-RELU}]\label{thm:5.1}
	Let $\phi \in \mathbf{J_1}$. 
	For any $\zeta \in \mathbb{S}^{n-1}$, $f_\zeta(\mathbf{w})$ has large norm with high probability for a randomly chosen standard Gaussian vector $\mathbf{w}$.
\end{theorem}

Using an $\epsilon$-net argument on $\mathbf{\zeta}$, the above result implies a lower bound on the minimum singular value of $\mathbf{M}$.     
\cite{allenzhu2018convergence} write $\mathbf{w}$ as two independent Gaussian vectors $\mathbf{w}'$ and $\mathbf{w}''$, with large and small variances respectively. They isolate an event $E$ involving $\mathbf{w}'$. This event happens if all but one of the summands in 
$f_\zeta(\mathbf{w}) = \sum_{i=1}^{n} \zeta_i\, \phi'((\mathbf{w}'+\mathbf{w}'')^T \mathbf{x}_i)\, \mathbf{x}_i$ are fixed to constant values with good probability over the choice of $\mathbf{w}''$. For the exceptional summand, say $\zeta_j\, \phi'((\mathbf{w}'+\mathbf{w}'')^T \mathbf{x}_j) \mathbf{x}_j$, the choice of $\mathbf{w}'$ is such that the argument $(\mathbf{w}'+\mathbf{w}'')^T \mathbf{x}_j$ can be on either side of the jump discontinuity with large probability over the random choice of $\mathbf{w}''$. The random choice of $\mathbf{w}''$ now shows that the whole sum is not concentrated and so with significant probability has large norm. They show that $E$ has substantial probability over the choice of $\mathbf{w}'$, which implies that with significant probability $\|f_\zeta(\mathbf{w})\|$ is large. The property of all but one of the summands being fixed relies crucially on the fact that the derivative of  \relu{} is constant on both sides of the origin. 

When generalizing this proof to activations in $ \mathbf{J_1} $ we run into the difficulty that the derivative need not be a constant function on the two sides of the jump discontinuity. We are able to resolve this difficulty with additional technical ideas, in particular, using
the assumption that $|\phi'(\cdot)|$ is bounded. We work with event $E'$ involving $\mathbf{w}'$: in the sum defining $f_\zeta(\mathbf{w})$ there is one exceptional summand
$\zeta_j\, \phi'((\mathbf{w}'+\mathbf{w}'')^T \mathbf{x}_j) \,\mathbf{x}_j$ such that the sum involving the rest of the summands---while not fixed to a constant value unlike for  \relu{}---does not change much over the random choice of $\mathbf{w}''$. Whereas the exceptional summand varies a lot with the random choice of $\mathbf{w}''$ because the argument moves around the jump discontinuity. 
We show that $E'$ has significant probability, which proves the theorem.

Now,  we look at the proof of \autoref{ELU_Main} which handles activations in $\mathbf{J_2}$. 
The goal again is to show that for any $\zeta \in \mathbb{S}^{n-1}$, the function  $f_\zeta(\mathbf{w})$ has large norm with good probability for the random choice of $\mathbf{w}$. 
To this end, we consider the Taylor approximation of $g_\zeta(\mathbf{w}) := \sum_{i=1}^{n} \zeta_i\, \phi'(\mathbf{w}^T \mathbf{x}_i) $  around 
$\mathbf{w}'$, that is, $g_\zeta(\mathbf{w}'+\mathbf{w}'') = g_{\zeta}(\mathbf{w}') + (\nabla_{\mathbf{w}} g_{\zeta}(\mathbf{w}'))^T \mathbf{w}'' + H(\mathbf{w}',\mathbf{w}'')$ where $ H $ is the error term. We show that $\norm{\nabla_{\mathbf{w}} g_{\zeta}(\mathbf{w}')}$ is likely to be large over the random choice of $\mathbf{w}'$, and so $(\nabla_{\mathbf{w}} g_{\zeta}(\mathbf{w}'))^T \mathbf{w}''$ is likely to be not concentrated on any single value if the error term $H(\mathbf{w}',\mathbf{w}'')$ is sufficiently small, which we show. To prove that $\|\nabla_{\mathbf{w}} g_{\zeta}(\mathbf{w}')\|$ is large, note that 
$\nabla_{\mathbf{w}} g_{\zeta}(\mathbf{w}') = \sum_{i=1}^{n} \zeta_i\, \phi''(\mathbf{w}^T \mathbf{x}_i)\, \mathbf{x}_i$, which allows us to use the argument above for $f_{\zeta}(\mathbf{w})$ being large in the case of $ \mathbf{J_1} $. This implies that $g_\zeta(\mathbf{w})$ is large with good probability, which implies, with further argument, that $f_\zeta(\mathbf{w})$ is large. 
Full proofs of these results can be found in \hyperref[sec:Lower_Bounds]{Appendix \ref*{sec:Lower_Bounds}}.
As mentioned earlier, the argument can be generalized to condition $ \mathbf{J_r} $ for any constant $r$; we omit the details. 
\vspace{-8pt}
\paragraph{Smooth activations.} First we look at the proof sketch for \autoref{thm:Tanh_Main}. 
To understand the behavior of smooth activations under gradient descent, we first look at the behavior of a natural subclass of smooth functions: polynomials. 
The proof actually works with the $M$-matrix introduced earlier whose spectrum is closely related to that of $\mathbf{G} =  \mathbf{M}^T\mathbf{M}/m$.
In this case, the problem of computing the smallest eigenvalue reduces to finding a non-trivial linear combination of the columns of $\mathbf{M}$ resulting in $\mathbf{0}$. 
We show that if $d'$, the dimension of 
the span of the data, is sufficiently small, then this can be done implying that the smallest eigenvalue is 0. 
By a simple extension of the argument, we can show that in fact the $G$-matrix has low rank. This gives
\begin{theorem}[Informal version of \autoref{thm:poly}]
	The $G$-matrix for polynomial activation functions has low rank if the data spans a low-dimensional subspace. 
\end{theorem}

Given that polynomials have singular $M$-matrices, a natural idea is to approximate the smooth function $ \tanh' $ by a suitable family of polynomials, and then use the above theorem to ``kill'' the polynomial part using an appropriate linear combination and get an upper bound on the eigenvalue 
comparable to the error in 
the approximation. 
An immediate choice is Taylor's approximation. 
The Taylor series for $ \tanh'$ around $0$ has a radius of convergence $ \pi/2 $. Depending on the initialization and $m$, the argument of the function can take values outside $[-\pi/2, \pi/2]$.  
To circumvent this difficulty, we consider a different notion of approximation. Consider a series of Chebyshev polynomials $ \sum a_n T_n(x) $ that approximates $ \tanh'(x)  $ in the $L^{\infty}$ norm in some finite interval. 
The fact that $ \tanh'  $ can be extended analytically to the complex plane can be used to show that the coefficients of the above series decay rapidly.  
The approximation is captured by the following theorem. 

\begin{theorem}[Informal version of \autoref{thm:approx_sigmoid}]
	$ \tanh' $ is approximable on the interval $[-k,k]$ in the $L^{\infty}$-norm by (Chebyshev) polynomials to error $ \epsilon >0 $ using a polynomial of degree $ \mathcal{O} (k \log ( k / \epsilon  ) )  $. 
\end{theorem}
\todo[disable]{fill in the interval above}

When applying the lemma above, the degree required for approximation increases with the number of neurons $ m $. 
This is because the interval $[-k,k]$, in which the approximation is required to hold, grows with $ m $ (the maximum of $ mn  $ Gaussians grows as 
$ \mathcal{O}(\sqrt{\log mn } )$). This leads the degree of polynomial to become too large to be ``killed'' as $m$ becomes larger.
Thus, for large $ m $, this fails to give the required bound. 
To remedy this, we relax the approximation requirement. 
Since we are working with Gaussian initialization of weights a natural relaxation is the $ L^2 $-approximation under the Gaussian measure. 
This leads us to consider the \hyperref[sec:Hermite_L2]{Hermite expansion} (see also \cite{daniely2016toward}) of $ \tanh' $. The $p$-th coefficient in Hermite expansion is an integral involving the $p$-th Hermite polynomial. For large $p$, these polynomials are highly oscillatory which makes evaluation of the coefficients difficult.  
Fortunately, a theorem of \cite{hille1940contributions} comes to rescue. 
Again, the fact that $ \tanh' $ can be analytically extended to a certain region of the complex plane can be used to bound the decay of the Hermite coefficients, which in turn bounds the error in polynomial approximation:

\begin{theorem}[Informal version of \autoref{cor:error}]\label{thm:Hermite-decay-intro}
	$ \tanh' $ is approximable on $\br$ in the $L^2$-norm with respect to the Gaussian measure by (Hermite) polynomials of degree $ p $ with error 
	$ \exp({-\Omega(\sqrt{p})}) $. 
\end{theorem}


In contrast, the $p$-th Hermite coefficients of the step function (also called threshold or sgn) which is the derivative of  \relu{} (whose $G$-matrix has large minimum eigenvalue), decays as $ p^{-0.75} $ (this fact underlies \autoref{thm:relu_hermite}). 
The $L^2$-approximation gives us a bound on the expected loss. 
To argue about high probability bounds, we need to resort to concentration of measure arguments. 
This requires the number of neurons $ m $ to be large. 
Thus, these two notions of approximation complement each other. 

Now, using these theorems for the small and large $m$ regimes, we can show that the eigenvalues of the $G$-matrix are indeed small as stated in \autoref{thm:Tanh_Main}.
%
These results can be easily extended to $\Swish$. 
In fact, the above theorems hold for general functions satisfying certain regularity conditions such as having an analytic continuation onto a 
strip of complex plane that contains the domain of interest, e.g. an interval of $\br$ or all of $\br$ (\hyperref[sec:tanh]{Appendix \ref*{sec:tanh}}).

\vspace{-7pt}

\paragraph{For smoothed data not restricted to small dimension, $\tanh$ works well.} We sketch the proof of \autoref{thm:tanh-smooth-in-paper}
showing that our results about the limitations of 
smooth activations are essentially tight when the data is smoothed. It is well-known (see \autoref{fact:rudelson}) that the minimum singular value of a (tall) matrix $\mathbf{M}$ is lower-bounded as follows: take a column of $\mathbf{M}$ and consider its distance from the span of the rest of the columns. The minimum of this quantity over all columns gives a lower bound on the minimum singular value (up to polynomial factors in the dimensions of $\mathbf{M}$). The problem of lower bounding $\lambda_{\min}(G^{(0)})$ then reduces to the problem of lower bounding a product involving (a) the minimum singular value of $\mathbf{X}^{*p}$, the $p$-th Khatri--Rao power of the data matrix $\mathbf{X}= [\mathbf{x}_1, \ldots, \mathbf{x}_n]$, (b) the $p$-th Hermite coefficient of $\tanh'$ (see \autoref{Lemma:Coeff_Omar} and \autoref{Lemma:lowerbound_carbery}). We use $p$ to be approximately equal to $1/\gamma$.
For (a) we use the above strategy to lower bound the minimum singular value of $\mathbf{X}^{*p}$. It turns out that for any given column, its distance from the span of the rest of the columns can be written as a polynomial in the noise variables. We can then use the anticoncentration inequality of Carbery--Wright (see \autoref{CarberyWright}) to show that this distance is unlikely to be small, and then use the union bound to show that this is unlikely to be small for every column. 
For (b), we invoke a result 
of \cite{boyd1984asymptotic} implying that the upper bound $ \exp({-\Omega\left(\sqrt{p}\right)})$ on the $p$-th Hermite coefficient of $\tanh'$ used in \autoref{thm:Hermite-decay-intro} above is essentially tight.
The choice of $p$ that gives the best lower bound depends on the activation function. 

\vspace{-4pt}

\paragraph{Depth helps for $\tanh$.} We now sketch the proof of \autoref{thm:tanh-depth-in-paper} (for a formal statement, see \autoref{thm:tanh-depth}). For each $i \in [n]$ and $l \in \{0, ..., L-1\}$, let $\mathbf{x}^{(l)}_i$ be the output of layer $l$ on input $\mathbf{x}_i$. We track the behavior of the $\mathbf{x}^{(l)}_i$ as $l$ increases: 

\begin{lemma}[informal version of \autoref{thm:norm_bound} and \autoref{lemma_rhodecrease}]
	As $l$ increases, the Euclidean norm of each $\mathbf{x}^{(l)}_i$ is approximately preserved. On the other hand, for every $i \neq j$, $|(\mathbf{x}^{(l)}_i)^T \mathbf{x}^{(l)}_j|$ shrinks. 
\end{lemma}
This implies that for sufficiently large $L$, the Gram matrix of the output of the penultimate layer, i.e. $(\mathbf{X}^{(L-1)})^T \mathbf{X}^{(L-1)}$, where 
$\mathbf{X}^{(L-1)} = [\mathbf{x}_1^{(L-1)}, \ldots, \mathbf{x}_n^{(L-1)}]$
is diagonally-dominant and has large minimum eigenvalue. The rest of the proof has some overlap with the proof for smoothed data above. 
For each $p \geq 0$,  $\lambda_{\min }(\mathbf{G}^{(0)})$ can be lower bounded by a product involving (a) the minimum eigenvalue of the $p$-th Hadamard power of the Gram matrix of $\mathbf{x}^{(L-1)}_i$, and (b) the $p$-th Hermite coefficient of $\tanh'$ (see \autoref{eqn:G-Gram-Hermite}). For (a) we use the diagonal-dominance of the Gram matrix. 
For (b) we proceed as in the case of smoothed data. The choice 
$p = \Theta(\log n)$ turns out to give the best lower bound on $\lambda_{\min }(\mathbf{G}^{(0)})$. 

\vspace{-9pt}

	\section{Experiments}
	\vspace{-6pt}
	\paragraph{Synthetic data.}
	We consider $n$ equally spaced data points on $\mathbb{S}^1$, randomly lifted to $\mathbb{S}^{9}$. We randomly label the data-points from $\mathcal{U}\left\{-1, 1\right\}$. 
	We train a 2-layer neural network in the DZPS setting with mean squared loss, containing $10^6$ neurons in the first layer with activations $\tanh$,  \relu{}, \Swish{} and \elu{} at learning rate $ 10^{-3} $.
	The output layer is not trained during gradient descent. 
	In \autoref{fig:convergence_syn_10} and \autoref{fig:convergence_syn_50} we plot the squared loss against the number of epochs trained.  Results are averaged over 5 different runs. We observed that the eigenvalues and the eigenvectors stayed essentially constant throughout training, indicating overparametrized regime.  \relu{} converges to zero training error much faster than other activation functions, \elu{} is faster than $\tanh$ and \Swish{}.  
	In \autoref{fig:eig_syn_10} and \autoref{fig:eigen_syn_50} we plot the eigenvalues at initialization. Eigenvalues of \relu{} and \elu{} are larger compared to those of $\tanh$ and \Swish{}. This is consistent with the theory.
	
	\begin{figure}[htbp!]
		\centering
		\begin{subfigure}{0.245\textwidth}
			\includegraphics[width=\textwidth]{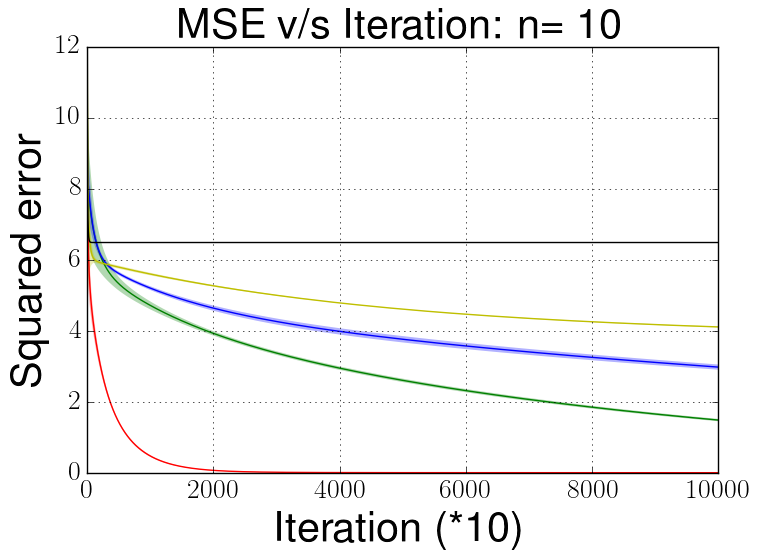}
			\caption{}
			\label{fig:convergence_syn_10}
		\end{subfigure}
		\hfill
		\begin{subfigure}{0.245\textwidth}
			\includegraphics[width=\textwidth]{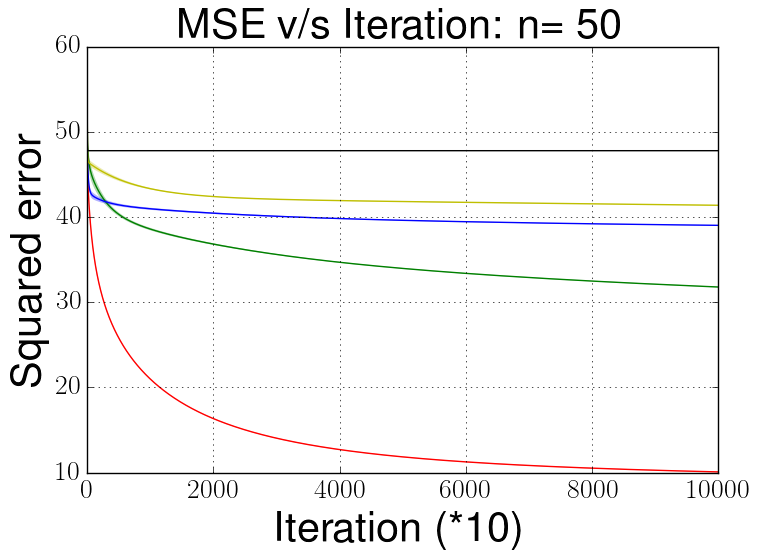}
			\caption{}
			\label{fig:convergence_syn_50}
		\end{subfigure}
		\hfill
		\begin{subfigure}{0.245\textwidth}
			\includegraphics[width=\textwidth]{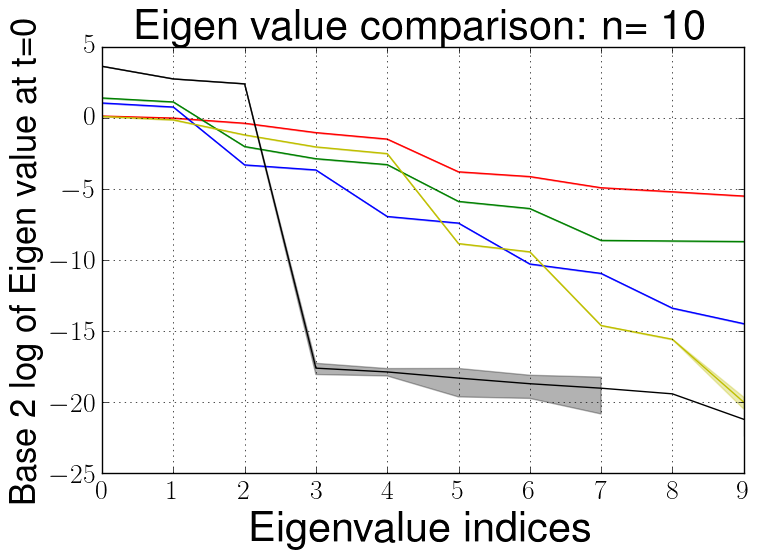}
			\caption{}
			\label{fig:eig_syn_10}
		\end{subfigure}
		\hfill
		\begin{subfigure}{0.245\textwidth}
			\includegraphics[width=\textwidth]{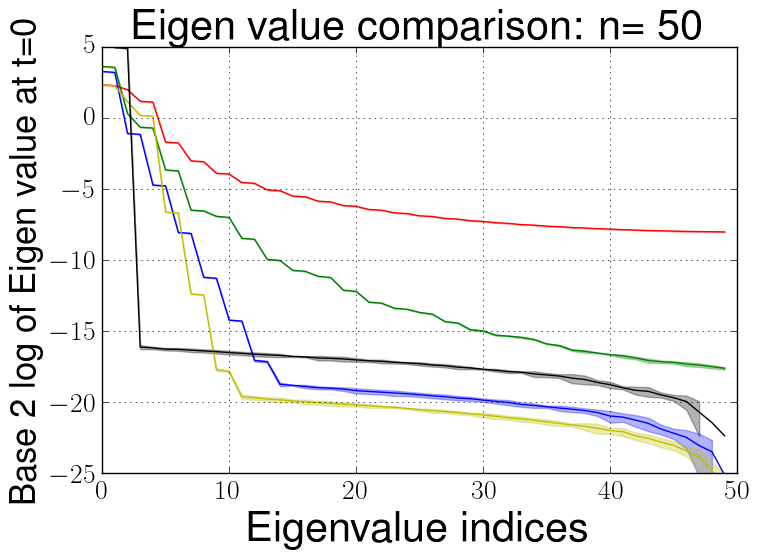}
			\caption{}
			\label{fig:eigen_syn_50}
		\end{subfigure}
		
		\begin{subfigure}{0.65 \textwidth}
			\includegraphics[width=\textwidth]{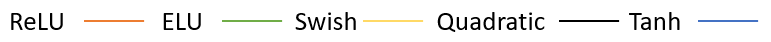}
			\label{fig:legend}
		\end{subfigure}
	  \vspace{-10pt}
	
		\caption{
			\label{fig:syn-data}%
			Experiments on synthetic dataset (From left to right) (a)Rate of convergence of 2-layer network for different activations when $n=10$ (b) Rate of convergence of 2-layer network for different activations when $n=50$ (c) Eigenvalues of the $G$-matrix at initialization for different activations when $n=10$ (d) Eigenvalues of the $G$-matrix at initialization for different activations when $n=50$ }
	\end{figure}
	
	\vspace{-10pt}
	\paragraph{Real data.}
	We consider a random subset of $10^4$ images from CIFAR10 dataset~\citep{krizhevsky2009learning}. We train a 2-layer network containing $10^5$ neurons in the first layer. First, we verify \autoref{ass2} regarding $\delta$-separation of data samples. 
	We plot the $L^2$-distances between all pairs of preprocessed images (described in \hyperref[preliminaries]{Section \ref*{preliminaries}}) in \autoref{fig:subspace_cifar_l2}. It shows that the assumptions hold for CIFAR10, with $\delta$ at least 0.1.  \autoref{fig:mass_cifar} has the plot of the cumulative sums of eigenvalues, normalized to the range $\left[0, 1\right]$, of the data covariance matrix. This figure shows that the intrinsic dimension of data is much larger than $\mathcal{O}\left(\log n\right)$, where $n$ denotes the number of samples. 
	
	
	Eigenvalues of the $G$-matrix for different activations at initialization are plotted in \autoref{fig:eigen_cifar}. This shows that  \relu{} has higher eigenvalues compared to other activations. However there isn't much difference between the spectrum of \elu{} and $\tanh$. This is likely due to the fact that we are in the regime of \autoref{smoothed:tanh}.
	
	
	We observed a difference in the rate of convergence while training a $2$-layer network, with both layers trainable, using 256 batch sized stochastic gradient descent (SGD) with cross entropy loss on the random subset of CIFAR10 dataset at l.r.  $ 10^{-3} $ (\autoref{fig:rate_cifar}). Here we are not in the overparametrized regime as the eigenvalues and eigenvectors change considerably during training. Therefore, observations in \autoref{fig:rate_cifar} can be attributed to the eigenvalue plots in \autoref{fig:eigen_cifar} only in the first few iterations of SGD. 
	\begin{figure*}[htbp!]
		\centering
		\begin{subfigure}[t]{0.25\textwidth}
			\includegraphics[width=\textwidth]{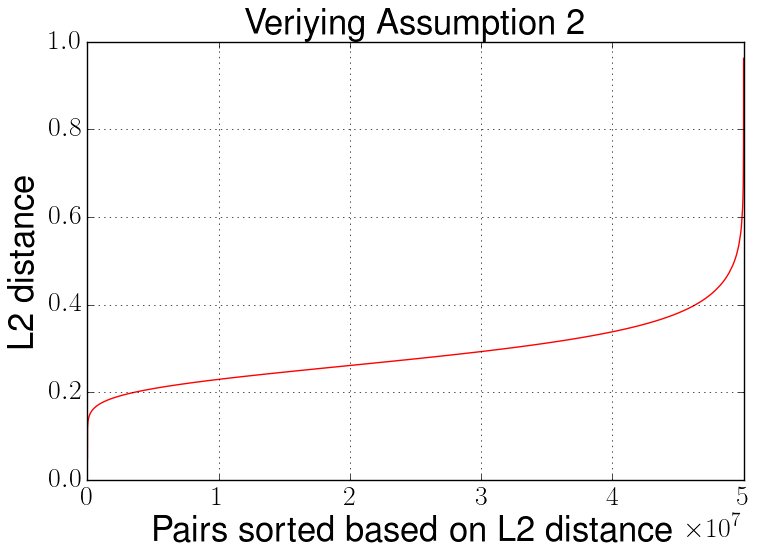}
			\caption{}
			\label{fig:subspace_cifar_l2}
		\end{subfigure}
		\hfill
		\begin{subfigure}[t]{0.25\textwidth}
			\includegraphics[width=\textwidth]{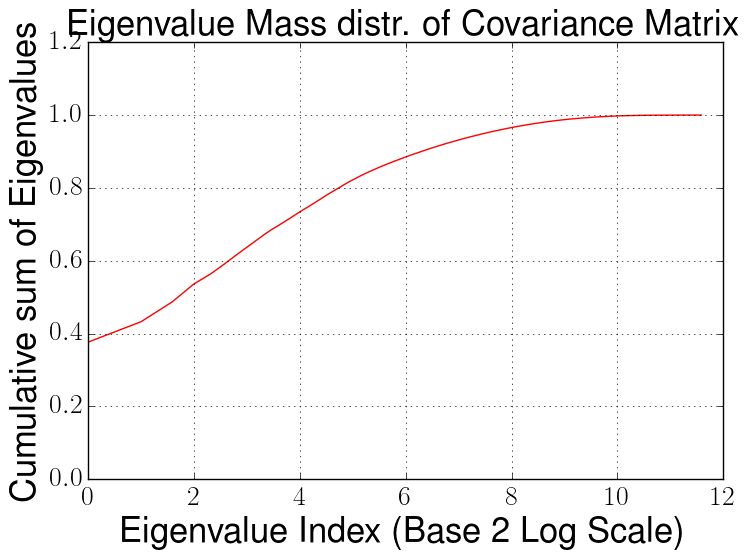}
			\caption{}
			\label{fig:mass_cifar}
		\end{subfigure}
		\hfill
		\begin{subfigure}[t]{0.235\textwidth}
			\includegraphics[width=\textwidth]{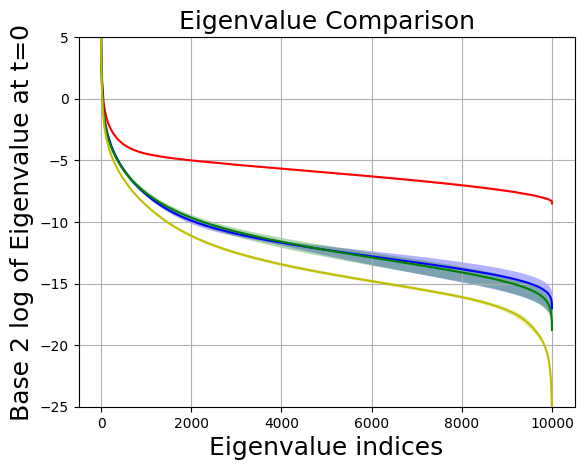}
			\caption{}
			\label{fig:eigen_cifar}
		\end{subfigure}%
		\hfill
		\begin{subfigure}[t]{0.25\textwidth}
			\includegraphics[width=\textwidth]{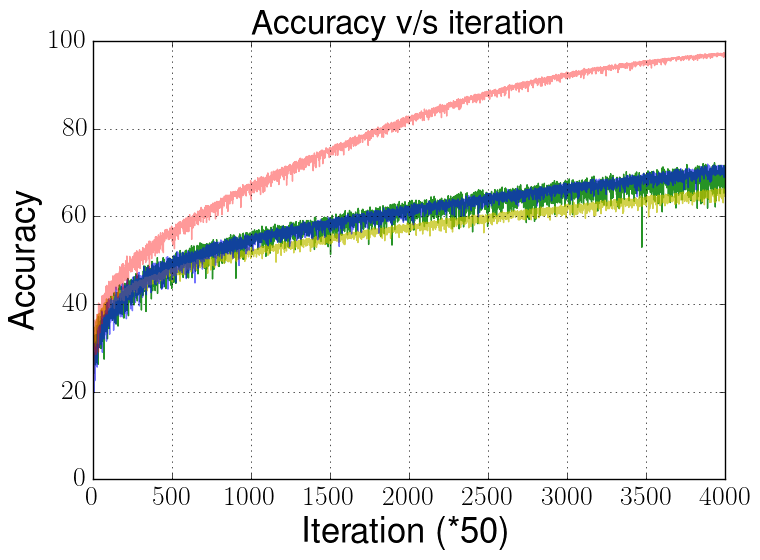}
			\caption{}
			\label{fig:rate_cifar}
		\end{subfigure}
		\label{fig:CIFAR}%
		
		\hfill
		\begin{subfigure}[t]{0.45\textwidth}
			\includegraphics[width=\textwidth]{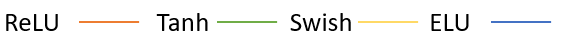}
			\label{fig:legend_real}
		\end{subfigure}
		  \vspace{-10pt}
		\caption{
			Experiments on a random subset of $10^4$ images from CIFAR10 dataset: (a) $L^2$-distances between all pairs of preprocessed images (b) Semilog plot of sum of squares of top $k$ singular values of data matrix (c) Eigenvalue distribution of $G$-matrix at initialization (d) Convergence speed of 2 layer networks using different activation functions.}
	\end{figure*}

		\vspace{-22pt}
\section{Conclusion}
\vspace{-6pt}
In this paper we characterized the effect of activation function on the training of neural networks in the overparametrized regime. Our results hold for very general classes of activations and cover all the activations in use that we are aware of. Many 
avenues for further investigation remain: there are gaps between theory and practice because of the differences in the sizes, learning rates, optimization procedures and architectures used in practice and those analyzed in theory: compared to practice, most theoretical results in the recent literature (including the present paper) require the size of the networks to be very large and the learning rate to be very small. Bridging this gap is an exciting challenge. Fine-grained distinction between the performance of activations is also of interest. For example, \autoref{fig:rate_cifar} shows that $\relu$ converges much faster compared to the other activations. But the roles can be reversed based on the architecture and the dataset etc., e.g., \cite{Ramachandran2018SearchingFA}. In a given situation, what makes one activation more suitable than another?  
		
		
		\Urlmuskip=0mu plus 1mu\relax
		\bibliographystyle{iclr2020_conference}
		\bibliography{references}
		
		\newpage
		\appendix
		\tableofcontents
		\newpage
		
		\section{Additional Related Work} \label{sec:AddRelWork}
		The literature is extensive; we only mention some of the related work \cite{Pennington_resurrecting, louart2018, Hanin_nips18, Hanin_Rolnick, Pennington_Worah_random_matrix, Penntington_Worah_Fisher, Pennington_universality}.
		Of these, perhaps the closest to our work is \cite{louart2018}: they keep the random weights in the input layer fixed and only train the output layer. The main result is determination of 
		the spectrum of a matrix associated with the input layer with the activation function being Lipschitz. This matrix is different from one considered here.
		Also the scaling used is different: both the dimension of the data, number of samples, and the number of neurons grow at the same rate to infinity. The effect of the activation function on the training via the spectrum is considered but no results are provided for popular activations. \cite{DBLP:journals/corr/abs-1902-06853} aim to find the best possible initialization of weights, that lead to better propagation of information in an untrained neural network, i.e. the rate of convergence of correlation among hidden layer outputs of datapoints to 1 should be of the order $\frac{1}{\mathrm{poly}(L)}$, where $L$ is the number of layers. They show smooth activation functions have a slower rate and hence, are better to use in deep neural networks. However, this set of parameterizations, called ``edge of chaos'', were proven to be essential for training by \cite{lee2017deep} under the framework of equivalence of infinite width deep neural networks to Gaussian processes over the weight parameters. The extent to which the approximation of stochastic gradient descent by Bayesian inference holds is still an open problem.
		
		\section{Activations} \label{sec:activations}
		We introduce some of the most popular activation functions. These activation functions are unary, i.e. of type $\phi: \mathbb{R} \to \mathbb{R}$, and act on
		vectors entrywise: $\phi((t_1, t_2, ...)) := (\phi(t_1), \phi(t_2), ...)$. In this paper we do not study activation functions with learnable parameters such as $\mathsf{PReLU}$, or activation functions such as $\mathsf{maxout}$ which are not unary.
		Activations functions
		are also referred to as nonlinearities, although the case of linear activation functions has also received much attention. 
		\begin{itemize}
			\item $\relu(X) := \max\left(x, 0\right)$
			\item $\lrelu(x) := \max\left(x, 0\right) + \alpha \min\left(x, 0\right)$, where $\alpha$ is a constant less than 1
			\item $\mathsf{Linear}(x) := x$
			\item $\tanh(x):=\frac{e^{2x}-1}{e^{2x}+1}$
			\item $\mathsf{sigmoid}(x) := \frac{1}{1+e^{-x}}$
			\item $\Swish(x):=\frac{x}{1+e^{-x}}$ \cite{Ramachandran2018SearchingFA} (called \silu{} in \cite{ElfwingUD17})
			\item $\elu(x) := \max\left(x, 0\right) + \min\left(x, 0\right)\left(e^{x}-1\right)$ \cite{Clevert2016FastAA}
			\item $\selu(x) := \alpha_1\max\left(x, 0\right) + \alpha_2\min\left(x, 0\right)\left(e^{x}-1\right)$, where $\alpha_1$ and $\alpha_2$ are two different constants \cite{klambauer2017self}.
		\end{itemize}
		%

		\section{Preliminary facts and definitions} \label{sec:Prelim_Facts}
We note some well known facts about concentration of Gaussian random variables to be used in the proofs. 

\todo{It would be good to have a reference for these.}

\begin{fact}\label{conc-gauss}
For a Gaussian random variable  $v \sim \mathcal{N}(0, \sigma^2)$, $\forall t \in (0, \sigma)$, we have 
\begin{equation*}
P_v \{ |v| \ge t \} \in \left(1 - \frac{4}{5}\frac{t}{\sigma}, 1 - \frac{2}{3}\frac{t}{\sigma} \right ).
\end{equation*}
In fact, the following holds true. $\forall t \in (0, \sigma/2)$, we have  
\begin{equation*}
    P_v \{ |v - a| \ge t \} \in \left(1 - \frac{4}{5}\frac{t}{\sigma}, 1 - \frac{1}{4}\frac{t}{\sigma} \right ).
\end{equation*}
$\text{ where } 0 \le a \le \sigma$. 
\end{fact}

\begin{fact}\label{conc-gauss-1} For a Gaussian random variable  $v \sim \mathcal{N}(\mu, \sigma^2)$, $\forall t \in (0, \sigma)$, we have $\forall t \ge 0$
\begin{equation*}
P_v \left\{ \abs{v - \mu} \le t \right \} \le 1 - 2e^{-\frac{t^2}{2\sigma^2}}.
\end{equation*}
\end{fact}

\begin{fact}[Hoeffding's inequality, see \cite{MR3185193}]\label{hoeffding}
Let $x_1, x_2, .., x_n$ be n independent random variables, where $x_i$ lies in the interval $\left[a_i, b_i\right]$, and let $\bar{x}$ be the empirical mean, i.e., $\bar{x} = \frac{\sum_{i=1}^{n} x_i}{n}$. 
Then, 
\begin{equation*}
    \Pr\left(\abs{\bar{x} - \mathbb{E}\left(\bar{x}\right)} \ge t\right) \le 2e^{-\frac{2n^2t^2}{\sum_{i=1}^{n}\left(a_i-b_i\right)^2}} .
\end{equation*}
\end{fact}

\begin{fact}[Multiplicative Chernoff bound, see \cite{MR3185193}]\label{chernoff} 
Let $x_1, x_2, .., x_n$ be independent random variables taking values in $\left\{0, 1\right\}$, and let $\bar{x}$ be the empirical mean, i.e., $\bar{x} = \frac{\sum_{i=1}^{n} x_i}{n}$. Then
\begin{align}
    \Pr\left(\bar{x} < \left(1 - t\right)\mathbb{E}\left(\bar{x}\right)\right) \le e^{-\frac{t^2 \mathbb{E}\left(\bar{x}\right)}{2}},\nonumber \\
    \Pr\left(\bar{x} > \left(1 + t\right)\mathbb{E}\left(\bar{x}\right)\right) \le e^{-\frac{t^2 \mathbb{E}\left(\bar{x}\right)}{2 + t}}\nonumber \nonumber
\end{align}
    for all $ t \in \left(0, 1\right)$. 
\end{fact}

\todo[disable]{Fill the next theorem}

\begin{fact}[Maximum of Gaussians, see \cite{MR3185193}]\label{max_gauss}
Let $x_1, x_2, ..., x_n$ be $n$ Gaussians following $\mathcal{N}\left(0, \sigma^2\right)$. Then,
\begin{equation*}
\Pr\left\{ \max_{i \in \left[n\right]}\abs{x_i} \le k\sqrt{\log{n}}\sigma \right\} \ge 1 - \frac{2}{n^{\frac{k^2}{2}-1}}.        
\end{equation*}
where $k>0$ is a constant.
\end{fact}

\begin{fact}[Chi square concentration bound, see lemma 1 in \cite{laurent2000adaptive}]\label{fact:conc_chi}
For a variable $x$ that follows chi square distribution with $k$ degrees of freedom, the following concentration bounds hold true.
 	\begin{equation*}
 		\begin{array}{c}{P(x-k \geq 2 \sqrt{k t}+2 t) \leq \exp (-t)}, \\ {P(k-x \geq 2 \sqrt{k t}) \leq \exp (-t)}.\end{array}
 	\end{equation*}
\end{fact}

\begin{fact}[Mini-max formulation of singular values]\label{minimax}
Given a matrix $\mathbf{M} \in \mathbb{R}^{m \times n}$, assuming $n \le m$, the singular values of $\mathbf{M}$ can be defined as follows.
\begin{equation*}
    \sigma_k\left(\mathbf{M}\right) = \min_{\mathbf{U}} \left\{ \max_{\mathbf{\zeta}} \left\{  \mathbf{M} \mathbf{\zeta} \bigg| \mathbf{\zeta} \in \mathbf{U}, \mathbf{\zeta} \ne 0\right\} \bigg| \dim\left(\mathbf{U}\right)=k\right\}\quad \forall k \in [n].
\end{equation*}


\end{fact}

\begin{fact}[adaptation of \cite{CarberyW}; see \cite{AndersonBGRV14}]\label{CarberyWright}
Let $Q\left(x_1, \ldots, x_n\right)$ be a multilinear polynomial of degree $d$. If $\mathrm{Var}\left(Q\right) = 1$, when $x_i \sim \mathcal{N}\left(0, 1\right) \forall i \in [n]$. Then, $\forall t \in \mathbb{R} \mbox{ and } \epsilon>0$, there exits $C > 0$ s.t.
\begin{equation*}
    \operatorname{Pr}_{\left(x_{1}, \ldots, x_{n}\right) \sim \mathcal{N}\left(0, \mathbf{I}_{n}\right)}\left(\left|Q\left(x_{1}, \ldots, x_{n}\right)-t\right| \leq \epsilon\right) \leq C d \epsilon^{1 / d}.
\end{equation*}
\end{fact}

\begin{fact}[\cite{rudelson2009smallest}]\label{fact:rudelson}
Given a matrix $\mathbf{A} \in \mathbb{R}^{n \times m}$, the following holds true for all $i \in [m]$.
	\begin{equation*}
     	\frac{1}{\sqrt{m}} \min _{i \in[m]} \operatorname{dist}\left(\mathbf{a}_{i}, \mathbf{A}_{-i}\right) \leq \sigma_{\min }(\mathbf{A})
	\end{equation*}
	where $\mathbf{A}_{-i} = \mathrm{span}\left( \mathbf{a}_{j} : j \ne i \right)$
\end{fact}

\begin{fact}[Gershgorin circle theorem, see e.g. \cite{varga2010gervsgorin}]\label{fact:gershgorin} 
	Every eigenvalue of a matrix $\mathbf{A} \in \mathbb{C}^{n \times n}$, with entries $a_{ij}$,  lies within at least one of the discs $\left\{D(a_{ii}, r_{i})\right\}_{i=1}^{n}$, where $r_i = \sum_{j : j \ne i} \abs{a_{ij}}$ and $D(a_{ii}, r_{i}) \subset \mathbb{C}$ denotes a disc centered at $a_{ii}$ and with radius $r_i$. 
\end{fact}

\begin{fact}[Weyl's inequalities~\citep{weyl1912asymptotische}]\label{fact:weyl_ineq}
	Let $\mathbf{A}$ and $\mathbf{B}$ be two hermitian matrices. Then, the following hold true $\forall i, j \in [n]$.
	\begin{equation*}
	\lambda_{i + j - 1} \left(\mathbf{A} + \mathbf{B}\right) \le \lambda_{i} \left(\mathbf{A}\right) + \lambda_j\left(\mathbf{B}\right)
	\end{equation*}
	\begin{equation*}
	\lambda_{i + j - n} \left(\mathbf{A} + \mathbf{B}\right) \ge \lambda_{i} \left(\mathbf{A}\right) + \lambda_j\left(\mathbf{B}\right)
	\end{equation*}
\end{fact}

\begin{definition}[Khatri Rao product and Hadamard product, see \cite{khatri1968solutions}]
	Given a matrix $\mathbf{A} \in \mathbb{R}^{m \times n}$ and a matrix $\mathbf{B} \in  \mathbb{R}^{p \times q}$, the Kronecker (or tensor) product $\mathbf{A}\otimes\mathbf{B}$ is defined as the $ mp \times nq $ matrix given by
    \begin{equation*}
	    \mathbf{A} \otimes \mathbf{B}=\left[\begin{array}{cccc}{a_{11} \mathbf{B}} & {a_{12} \mathbf{B}} & {\cdots} & {a_{1 n} \mathbf{B}} \\ {a_{21} \mathbf{B}} & {a_{22} \mathbf{B}} & {\cdots} & {a_{2 n} \mathbf{B}} \\  {\vdots} & {\vdots} & {\ddots} & {\vdots} \\ {a_{m 1} \mathbf{B}} & {a_{m 2} \mathbf{B}} & {\cdots} & {a_{m n} \mathbf{B}}\end{array}\right].
    \end{equation*}
    
   	Given a matrix $\mathbf{A} \in \mathbb{R}^{m \times n}$ and a matrix $\mathbf{B} \in \mathbb{R}^{p \times n}$, the Khatri Rao product $\mathbf{A}*\mathbf{B}$ is defined as the $ mp \times n $ matrix given by 
    \begin{equation*}
    \mathbf{A} * \mathbf{B}=\left[\begin{array}{llll}{\mathbf{a}_{1} \otimes \mathbf{b}_{1}} & {\mathbf{a}_{2} \otimes \mathbf{b}_{2}} & {\cdots \mathbf{a}_{n} \otimes \mathbf{b}_{n}}\end{array}\right]. 
    \end{equation*}
	
	Given  a matrix $\mathbf{A} \in \mathbb{R}^{m \times n}$ and a matrix $\mathbf{B} \in \mathbb{R}^{m \times n}$, the Hadamard product $\mathbf{A} \odot \mathbf{B} \in \mathbb{R}^{m \times n}$ is defined as
	  \begin{equation*}
	  \left(\mathbf{A} * \mathbf{B}\right)_{ij} = a_{ij} b_{ij}, \quad \forall i \in [m], j \in [n].
	  \end{equation*}
\end{definition}

\textbf{Notation:} Given a matrix $\mathbf{X} \in \mathbb{R}^{m \times n}$, we denote order-$r$ Khatri-Rao product of $\mathbf{X}$ as $\mathbf{X}^{*r} \in \mathbb{R}^{m^r \times n}$, which represents $\underbrace{\mathbf{X} * \mathbf{X} \ldots * \mathbf{X}}_{r\text{ times}}$. 
We denote order-$r$ Hadamard product of $\mathbf{X}$ as $\mathbf{X}^{\odot r} \in \mathbb{R}^{n \times n}$, which represents $\underbrace{\mathbf{X} \odot \mathbf{X} \ldots \odot \mathbf{X}}_{r \text{ times}}$ and can be shown to be equal to $\left(\mathbf{X}^{*r}\right)^T \mathbf{X}^{*r}$.

\subsection{General Approach for Bounding Eigenvalue of $\mathbf{G}$-Matrix}
The Gram matrix $\mathbf{G}$ can be written as 
\begin{equation*}
\mathbf{G} =  
\frac{1}{m} \mathbf{M}^T \mathbf{M} .
\end{equation*}
where $\mathbf{M} \in \mathbb{R}^{md \times n}$ is defined by
\begin{equation*}
\mathbf{M}_{d(k-1)+1 \colon dk, i} =  a_k \phi^{\prime} \left(\mathbf{w}_k^T \mathbf{x}_i\right) \mathbf{x}_i \quad \mathrm{for} \; k \in [m].
\end{equation*}
Denoting by $\lambda_k(\mathbf{G})$ and $\sigma_k(\mathbf{M})$ as $k$th eigenvalue and $k$th singular value of $\mathbf{G}$ and $\mathbf{M}$ respectively, we can write  $\lambda_k\left(\mathbf{G}\right)$ as $\frac{1}{m}\sigma_k\left(\mathbf{M}\right)^2$. 
The minimum singular value of $\mathbf{M}$ can be defined (through the minmax theorem) as 
\begin{equation*}
	\sigma_{\min}\left(\mathbf{M}\right) =  \min_{\mathbf{\zeta} \in \mathbb{S}^{n-1}} \norm{\mathbf{M} \mathbf{\zeta}}. 
\end{equation*}
Hence, the general approach to show a lower bound of $\sigma_{\min}\left(M\right)$ is to show that the following quantity 
\begin{equation}\label{eq:lower_bound_lambda}
    \norm{\sum_{i=1}^{n} \zeta_i \mathbf{m}_i} = \sum_{k=1}^{m} \sqrt{\norm{\sum_{i=1}^{n} a_k \zeta_i \phi'(\mathbf{w}_k^T \mathbf{x}_i) \mathbf{x}_i}^2}
\end{equation}
is lower-bounded for all $\zeta \in \mathbb{S}^{n-1}$.

To show an upper bound, we pick a $\zeta' \in \mathbb{S}^{n-1}$ and use 
\begin{equation}\label{eq:upper_bound_lambda}
   \sigma_{\min}\left(\mathbf{M}\right) = \min_{\zeta \in \mathbb{S}^{n-1}} \norm{\mathbf{M} \zeta} \le \norm{\sum_{i=1}^{n} \zeta'_i \mathbf{m}_i}. 
\end{equation}

		\section{Standard Parametrization and initializations} \label{sec:Initializations}
A 2-layer (i.e. 1-hidden layer) neural network is given by
\begin{equation*}
    F(\mathbf{x}; \mathbf{a}, \mathbf{b}, \mathbf{W}) = \sum_{k=1}^{m} a_k \phi\left(\mathbf{w}_k^T x + b_k\right),
\end{equation*}
where $\mathbf{a} \in \mathbb{R}^{m}$ is the output layer weight vector, $\mathbf{W} \in \mathbb{R}^{d \times m}$ is the hidden layer weight matrix and $\mathbf{b} \in \mathbb{R}^{m}$ denotes the hidden layer bias vector.
This parametrization differs slightly from the choice made in \autoref{def-neural}. 
By standard initialization techniques \cite{he2015delving} and \cite{glorot-10a}, there are two ways in which the initial values of the weights $\mathbf{W}^{(0)}$ and $\mathbf{a}^{(0)}$ are chosen, depending on whether the number of neurons in the previous layer is taken into account (fanin) or the number of neurons in the current layer is taken into account ($\text{fanout}$).
\begin{itemize}
    \item $\mathsf{Init(fanin)}:$ $\mathbf{w}^{(0)}_{k} \sim \mathcal{N}\left(0, \frac{1}{d} \mathbf{I}_d\right)$ and $a^{(0)}_{k} \sim \mathcal{N}\left(0, \frac{1}{m}\right) \quad \forall k \in \left[m\right]$, 
    \item $\mathsf{Init(fanout)}$:  $\mathbf{w}^{(0)}_{k} \sim \mathcal{N}\left(0, \frac{1}{m} \mathbf{I}_d\right)$ and $a^{(0)}_{k} \sim \mathcal{N}\left(0, 1\right) \quad \forall k \in \left[m\right]$. 
\end{itemize}  
Note that \cite{allenzhu2018convergence} use $\mathsf{Init\left(fanout\right)}$  initialization. The elements of $\mathbf{b}$ are initialized from $\mathcal{N}\left(0, \beta^2\right)$, where $\beta$ is a small constant. We set $\beta=0.1$ in $\mathsf{Init\left(fanin\right)}$ initialization and $\beta=\frac{1}{\sqrt{m}}$ in $\mathsf{Init\left(fanout\right)}$ initialization. 

\subsection{Note on $\mathbf{G}$-Matrix for Standard Initializations}
We follow the argument in \hyperref[sec:Motivation]{Section \ref*{sec:Motivation}} to get the $\mathbf{G}$-matrix $\left(\mathbf{G}\right)$ as
\begin{equation*}
    g_{ij}^{(t)} =  \eta \sum_{k=1}^{m} a_k^2 \phi'\left(\mathbf{w}_k^T \mathbf{x}_i + b_k\right) \phi'\left(\mathbf{w}_k^T\mathbf{x}_j + b_k\right) \left\langle  \mathbf{x}_i, \mathbf{x}_j\right \rangle,
\end{equation*}
where $\eta$ is the gradient flow rate needed to control the gradient during descent algorithm to stop gradient explosion, i.e. we need to control the maximum eigenvalue of the Gram matrix. 

For $\mathsf{Init\left(fanin\right)}$ and $\mathsf{Init\left(fanout\right)}$, we set $\eta$ as 1 and $\frac{1}{m}$ respectively to get the same Gradient Gram matrix as in \autoref{gram-def}.

		\section{Lower Bound on the Trace of $G$-Matrix}\label{sec:trace}
In this section, we lower bound the trace of the gradient matrix for a general activation function as a point of comparison for our results regarding the lowest eigenvalue. 
\todo{say what exactly} 

\begin{theorem}\label{thm:trace}
	Let $ \phi $ be an activation function, with Lipschitz constant $\alpha$ and let $ \mathbf{G}^{(0)} $ be the $\mathbf{G}$-matrix at initialization. 
	Let $ \mathbb{E}_{\mathbf{w} \sim \mathcal{N}\left(0, \mathbf{I} \right)} \left( \phi' \left(\mathbf{w}^{T} \mathbf{x}_i  \right)^2 \right) =  2c $ for 
	a positive constant $c$ and for all $ i$. 
	Then, $ \tr (\mathbf{G}^{(0)}) \geq cn  $ with probability greater than $ 1 - e^{-\Omega \left(m/\alpha^2 \log^2{m}\right)} - m^{-3.5} $. 
\end{theorem}

\begin{remark}
	The constant $c$ depends on the choice of the activation function and is bounded away from $0$ for most activation functions such as $ \relu $ or $ \tanh $. 
	
\end{remark}

\begin{proof}
The trace of the $G$-matrix is given by 
\begin{equation*}
	\tr({\mathbf{G}^{(0)}}) = \frac{1}{m} \sum_{j = 1}^{m} \sum_{i = 1}^{n}   \left( a_j^{(0)} \phi' \left( \mathbf{w}_j^{(0) T} x_i \right)  \right)^2. 
\end{equation*}
Using \autoref{max_gauss}, $\left\{a_j^{(0)}\right\}_{j=1}^{m}$ can be shown to be in the range $\left(-3\sqrt{\log m}, 3\sqrt{\log m}\right)$ with probability at-least $1 - m^{-3.5}$. Assuming this is true, we claim the following two statements.
For any $j$ we have (expectation is over $\mathbf{W}^{(0)}$ and $\left\{a_k^{(0)}\right\}_{k=1}^{m}$)
\begin{align*}
	\mathbb{E}_{\mathbf{w}_j^{(0)},  a_j^{(0)}\,:\, |a_j^{(0)}| \le 3\sqrt{\log m} } \sum_{i = 1}^{n}  & \left(  a_j^{(0)} \phi' \left( \mathbf{w}_j^{(0)T} x_i \right)  \right)^2 & \\ &=  \sum_{i = 1}^{n} \mathbb{E}_{a_j^{(0)} \,:\, |{a_j^{(0)}}| \le 3\sqrt{\log m}} \left(a_j^{(0)}\right)^2  \mathbb{E}_{\mathbf{w}_j^{(0)}} \left( \phi' \left( \mathbf{w}_j^{(0)T} \mathbf{x}_i \right)  \right)^2 \\
	&\geq  \sum_{i = 1}^{n} 2c \left(1 - \sqrt{\frac{\log m}{m}}\right) \\
	& \ge cn. 
\end{align*}
where we use the independence of $a_j^{(0)}$ and $\mathbf{w}_j^{(0)}$ and the variance of $a_j^{(0)}$ is at-least $\left(1 -  \sqrt{\frac{\log m}{m}}\right)$, taking the bound on $a_j^{(0)}$ into account, in the intermediate steps.
 
Also, we get from the  Hoeffding bounds, 
\begin{equation*}
	\Pr \left[  \frac{1}{m} \sum_{j = 1}^{m} \sum_{i = 1}^{n}   \left(a_j^{(0)} \phi' \left( \left(\mathbf{w}_j^{(0)}\right)^T \mathbf{x}_i \right)  \right)^2 \geq \frac{1}{2}cn   \right] \leq 1 - e^{-\Omega\left(m n^2/\alpha^4 \log^2 m\right) } 
\end{equation*}
as required. 
\end{proof}
\todo{Review the theorem statement (in particular $Z$ need not be standard Gaussian for ALS) and the proof.}

This shows that the trace is large with high probability. From it follows that average of eigenvalue is $ \Omega\left(1\right) $. 
It also follows that maximum eigenvalue is $ \Omega\left(1\right) $.

	\section{Upper Bound on Lowest Eigenvalue for $\tanh$} \label{sec:tanh}
	
	For activation functions represented by polynomials of low degree, we show that the $G$-matrix is singular. In fact, the rank of this matrix is small if the degree of the polynomial is small. 
	To upper bound the lowest eigenvalue of the $G$-matrix for the $\tanh $ activation function, we proceed by approximating $\tanh'$ with polynomials of low degree. It turns out that $\tanh'$ can be well-approximated by polynomials in senses to be described below. This allows us to use
	the result about polynomial activation functions to show that the minimum singular value of the $G$-matrix of $\tanh$ is small. The approximation of $\tanh'$ by polynomials turns out to be non-trivial. It does have Taylor expansion centered at $0$ but the radius of convergence is $\pi/2$ and thus cannot be used directly if the approximation is required for bigger intervals. 
	
	We consider two different notions of approximations by polynomials, depending on the initialization and the regime of the parameters. 
	It is instructive to first consider polynomial activations before going to $\tanh$. Next section is devoted to polynomial activations.  
	
	\subsection{Polynomial Activation Functions}
	
	Let's begin with the linear activation function. 
	In this case, the $G$-matrix turns out to be the Gram matrix of the data. 
	Since each datapoint is low dimensional, we get that it is singular: 
	
	\begin{lemma}(Linear activation function)
		If $\phi(x) = x$ and $d < n$, then the $G$-matrix is singular, that is 
		\begin{equation*}
		\lambda_{\min}\left(\mathbf{G}^{\left(0\right)}\right) = 0.
		\end{equation*}
		In fact, at least $\left(n-d\right)$ eigenvalues of the $G$-matrix are 0.
	\end{lemma}
	\begin{proof}
		Since $\phi\left(x\right)=x$, we have $\phi'\left(x\right)=1$ and the $G$-matrix is given by
		\begin{equation*}
		\mathbf{G} = \frac{1}{m}\left(\sum_{j=1}^{m} a_j^2\right) \mathbf{X}^T \mathbf{X}    
		\end{equation*}
		where $\mathbf{X} = \left[\mathbf{x}_1, \mathbf{x}_2, ..., \mathbf{x}_n\right]$ is the matrix containing the $\mathbf{x}_i$'s as its columns. Since the $\mathbf{x}_i$'s are $d$-dimensional vectors, rank$(\mathbf{X}) \le d$ and so, $\mathbf{X}$ can have at most $d$ non-zero singular values, which leads to at most $d$ non-zero eigenvalues for $G$.  
	\end{proof}
	
	We next show that activation functions represented by low degree polynomial must have singular Gradient Gram matrices: 
	
	\begin{theorem}\label{thm:poly}
		Let $\phi'(x)=\sum_{\ell=0}^{p}   c_{\ell} x^{\ell} $ and $d' = \dim\left( \mathrm{Span} \left\{ x_1 \dots x_n  \right\}  \right)$.
		Then,  the $G$-matrix is singular, that is
		\begin{equation*}
		\lambda_{\min}\left(\mathbf{G}^{\left(0\right)}\right) = 0,
		\end{equation*}
		assuming that the following condition holds,
		\begin{equation}\label{condition-poly}
		{d'+p\choose p} < n + 1.
		\end{equation}
	\end{theorem}

	\begin{proof}\label{remark-small_d}
		Referring to \autoref{eq:upper_bound_lambda}, it suffices to find one $\zeta' \in \mathbb{S}^{n-1}$ such that
		\begin{equation*}
		\norm{ \sum_{i=1}^{n} \zeta'_i \mathbf{m}_i  }_2  = \sqrt{\sum_{k=1}^{m} \norm{ \sum_{i=1}^{n} \zeta'_i a_k \phi^{\prime}(\mathbf{w}_k^T \mathbf{x}_i) \mathbf{x}_i  }_2^2} = 0.
		\end{equation*}

		For any $k \in [m]$  
		and $\zeta \in \mathbb{R}^n$ consider
		\begin{equation*}
		\sum_{i=1}^{n} \zeta_i  a_k \phi^{\prime}(\mathbf{w}_k^T \mbox{ } \mathbf{x}_i) \mathbf{x}_i =  \sum_{i=1}^{n} \zeta_i a_k \sum_{\ell =1}^{p} c_{\ell - 1} \left(\mathbf{w}_k^T \mathbf{x}_i \right)^{\ell-1} \mathbf{x}_i
		\end{equation*}
		which can be written as
		\begin{equation}\label{set-of-equations}
		a_k \sum_{l=1}^{p} c_{\ell - 1} \sum_{\mathbf{\beta} \in \mathbb{Z}_{+}^{d}, \| \mathbf{\beta} \|_1 = \ell -1} \Bigg( \underbrace{\sum_{i=1}^{n} \mbox{ } 
			\zeta_i  \mathbf{x}_i^\mathbf{\beta} \;   \mathbf{x}_i}_{\dagger} \Bigg) \; \mathbf{w}_k^\mathbf{\beta} 
		\end{equation}
		Here $ \bz_{+} $ denotes the set of non-negative integers, and the notation $\mathbf{x}^\mathbf{\beta}$ is shorthand for $\prod_{j=1}^{d}x_{j}^{\beta_j}$.
		\todo[disable]{I suppose $\mathbb{Z}_+$ means non-negative integers. This should be stated explicitly.}
		Note that the term denoted by $\dagger$  is a $d$-dimensional vector, since $\mathbf{x}_i \in \mathbb{R}^{d}$. The actual number of unique equations in $\dagger$ depends on $d'$, since the $\mathbf{x}_i$'s can have only upto $d'$ order unique moments. Hence, if we want to make the above zero, it suffices to make the term denoted by $\dagger$ zero for each of the summands. 
		This is a system of linear equations in variables $\{ \zeta_i  \}_{i=1}^{n}$.
		Counting the number of constraints for each summand and summing over all the indices gives us that the number of constraints is given by 
		\begin{equation*}
		\sum_{\ell=1}^{p}   {d'+\ell -1 \choose \ell}
		\end{equation*}
		which is  equal to ${d' + p\choose p} - 1$.
		Note that this can also be seen by counting the number of non-trivial monomials of degree at most $p$ in $ d' $ variables. 
		Hence, making the number of constraints less then number of variables, leads to existence of at least one non-zero vector $\zeta''$ satisfying the set of constraints. Since, the set of linear equations is independent of the choice of $k$, the claim holds true for all $k$. Thus, using a unit normalized $\zeta''$ in \autoref{eq:upper_bound_lambda}, we get $\sigma_{\min}\left(\mathbf{M}\right)$ = 0.
	\end{proof}

	\begin{corollary} \label{corr-small_p}
		If $  d' = \dim\left( \mathrm{span} \left\{ \mathbf{x}_1 \dots \mathbf{x}_n  \right\}  \right)  \le \mathcal{O}\left(\log^{0.75} {n}\right)$, $\phi'(x)=\sum_{\ell=0}^{p} c_\ell x^\ell$ and $p \le \mathcal{O}(n^{\frac{1}{d' }})$ then the minimum eigenvalue of the $G$-matrix satisfies 
		\begin{equation*}
		\lambda_{\min}\left(\mathbf{G}^{\left(0\right)}\right) = 0.
		\end{equation*}
	\end{corollary}
	\begin{proof}
		If $d'=\mathcal{O}\left(\log^{0.75} {n}\right)$, \hyperref[condition-poly]{Condition \ref*{condition-poly}} can be simplified to get
		\begin{equation*}
		p = \mathcal{O}(n^{\frac{1}{d'}})
		\end{equation*}
		Applying \autoref{thm:poly} with the above condition, we get the desired solution.
	\end{proof}

	By slightly modifying the proof of the above theorem, we can actually show not only that the matrix is singular, but also that the kernel must be high dimensional. 
	\todo[disable]{I think the following should not be called corollary but rather a theorem.}
	
	\begin{theorem}\label{corr:small_poly-d}
		If $d'  = \dim\left( \mathrm{span} \left\{ \mathbf{x}_1 \dots \mathbf{x}_n  \right\} \right)  \le \mathcal{O}\left(\log^{0.75} {n}\right)$, $\phi'(x)=\sum_{\ell=0}^{p} c_\ell x^\ell$ and $p \le \mathcal{O}(n^{\frac{1}{d'}})$, then $\left(1 - \frac{1}{d'}\right)n$ low-order eigenvalues of the $G$-matrix satisfy 
		\begin{equation*}
		\lambda_{k}\left(\mathbf{G}^{(0)}\right) = 0, \quad \forall k \ge \left\lceil\frac{n}{d'}\right\rceil.
		\end{equation*}
	\end{theorem}
	\begin{proof}
		We follow the proof of \autoref{thm:poly}. Referring to \autoref{minimax}, it suffices to find one $n\left(1-\frac{1}{d'}\right)$ dimensional subspace $\mathbf{U}$ such that
		\begin{equation*}
		\norm{ \sum_{i=1}^{n} \zeta_i \mathbf{m}_i  }_2  = \sqrt{\sum_{k=1}^{m} \norm{ \sum_{i=1}^{n} \zeta_i a_k \phi^{\prime}(\mathbf{w}_k^T \mathbf{x}_i) \mathbf{x}_i  }_2^2} = 0.
		\end{equation*}
		holds true $\forall \zeta \in \mathbf{U}$.
		
		For a weight row vector $\mathbf{w}_k \in \{\mathbf{w}_1, \mbox{ } ... \mbox{ }, \mathbf{w}_{m}\}$,  its corresponding output weight $a_k$  and an arbitrary $\zeta \in \mathbb{R}^n$, we can simplify the following quantity
		\begin{equation*}
		\sum_{i=1}^{n} \zeta_i a_k \phi^{\prime}(\mathbf{w}_k^T \mbox{ } \mathbf{x}_i)\mathbf{x}_i =  \sum_{i=1}^{n} a_k \zeta_i \sum_{\ell=1}^{p} c_{\ell - 1} \left(\sum_{j=1}^{d} w_{k,j} \, x_{i,j}\right)^{\ell-1} \mathbf{x}_i
		\end{equation*}
		to get the same set of constraints on variable $\mathbf{\zeta}$, as in \autoref{set-of-equations}.
		Making the number of constraints less than $\left\lceil\frac{n}{d'}\right\rceil$leads to the existence of the desired subspace $\mathbf{U}$, whose dimension is $n\left(1-\frac{1}{d'}\right)$, satisfying the constraints. This can be restated as
		\begin{equation*}
		{d'+p\choose p} - 1 \le \frac{n}{d'}
		\end{equation*}
		which can be further simplified using the fact that $d'\le\mathcal{O}\left(\log^{0.75} {n}\right)$ to get
		\begin{equation*}
		p \le \mathcal{O}\left(\left(\frac{n}{d'}\right)^{\frac{1}{d'}}\right) = \mathcal{O}\left(n^{1/d'}\right).
		\end{equation*}
		In the above inequality, we use the fact that $1 \le d'^{1/d'} \le \sqrt{2}$. Since, the set of linear equation is independent of the choice of $\mathbf{w}$, the claim holds true for all $\mathbf{w} \in \{ \mathbf{w}_1, ..., \mathbf{w}_m \}$, from which the result follows.
	\end{proof}

	Now, if a function is well approximated by a low degree polynomial then we can use the idea from the above theorem to kill all but the small error term leaving us with a small eigenvalue. 
	But, for different regimes of the parameters, we need to consider different polynomial approximations. 

	\subsection{$L^{\infty}$ Approximation using the Chebyshev Polynomials}
	
	Let $ f : [-k , k ] \to \br  $ be a function for some $k > 0$. We would like to approximate $ f $ with polynomials in the $L^{\infty}$ norm. That is, we would like a polynomial $ g_p $ of degree $ p $ such that 
	\begin{equation*}
	\sup_{x \in [-k, k]} \abs{f(x)  - g_p(x)} \leq \epsilon .
	\end{equation*}
	First, we reduce the above problem to that of approximating functions on $ \left[ - 1 ,1  \right] $.
	The idea is to consider the scaled function $ f_k(x) = f(kx) $.
	Note that $ f_k : \left[-1,1\right]  \to \br$. 
	Let $ h $ be a polynomial approximating $ f_k $ i.e. $ \sup_{x} \abs{f_k(x)  - h(x)} \leq \epsilon  $. Then, consider the function $ g_p(x) = h\left(x /k \right) $. Then, $ \abs{ f(x ) - g_p(x)  } = \abs{f_k\left(x/k\right)   - h(x/k)  } \leq \epsilon $. Thus, we can consider approximation of functions on $ \left[ -1 ,1 \right] $. \\
	
	We are interested in approximating the derivative of $ \tanh $ on $ \left(- \tau , \tau \right) $. Denote by $ \sigma $ the sigmoid function given by 
	\begin{equation*}
	\sigma (x ) = \frac{1}{1 + e^{-x} }. 
	\end{equation*} 
	It follows from the definition that $ \tanh\left(x\right) = 2 \sigma\left(2x\right) -1  $.
	
	The approach is to consider a series in the Chebyshev polynomials $ T_n $. 
	That is, we consider 
	\begin{equation*}
	\sum_{i = 0 }^{N} a_n T_n
	\end{equation*} 
	
	The coefficients $ a_n $ corresponding to $ \sigma $ can be computed using the orthogonality relations for the Chebyshev polynomials. 
	
	\begin{equation*}
	a_n = \frac{2}{\pi }  \int_{-1}^{1} \frac{\sigma(x) T_{n}(x)}{\sqrt{1-x^{2}}} d x . 
	\end{equation*}
	
	Using this, one can show the following theorem about the polynomial approximations of the sigmoid function. 
	
	\begin{theorem}[Equation B.7 in \cite{Shalev-Shwartz:2011:LKH:2340436.2340442}] \label{thm:approx_sigmoid}
		For each $ k \geq 0 $ and $ \epsilon \in (0,1]$, there is  a polynomial $ g_p $ of degree $ p $ with 
		\begin{equation*}
		p = \left\lceil  \frac{\log \left(  \frac{4 \pi   + 2 k}{\pi^2 \epsilon}    \right)   }{\log  \left( 1 + \pi k^{-1}  \right) }     \right\rceil 
		\end{equation*}
		such that 
		\begin{equation*}
		\sup_{x \in \left[-1,1\right]} \abs{g_p(x) - \sigma(kx)} \leq \epsilon. 
		\end{equation*}
	\end{theorem}
	
	The proof of the claim follows by bounding $ a_n $ using contour integration. 
	From the above discussion, in order to approximate $ \tanh $ in the interval $ \left(- \tau, \tau   \right)$, we need to approximate $ 2 \sigma \left( 2 \tau x   \right)  -1 $. From \autoref{thm:approx_sigmoid}, we require a polynomial of degree 
	\begin{equation}\label{eq:approxed}
	p = \left\lceil  \frac{ \log \left(  \frac{4 \pi   + 2\tau  }{\pi^2 \epsilon}    \right)   }{\log  \left( 1 +  \pi \tau ^{-1}  \right) }     \right\rceil . 
	\end{equation}
	Recall that we actually need to approximate the derivative of $ \tanh $. But this can be achieved easily from the fact that $ \tanh'(x) =  \left(1 + \tanh (x) \right)  \left( 1 - \tanh (x)  \right)  $ and the following lemma. 
	
	\begin{lemma}\label{eq:approx_derivative} Let $I$ be an interval and let $ f_i  , g_i: I \to \mathbb{R} $ for $i \in \{1, 2\}$ be functions such that 
		$ \sup_{x \in I} \abs{f_i(x) - g_{ i}(x)} \leq \epsilon $ for all $ i $ and $ \abs{f_i(x)} \leq 1 $ for $x \in I$ and all $i$. Then, 
		\begin{equation*}
		\sup_{x \in I} \abs{ f_1(x) f_2(x) - g_{ 1}(x) g_{ 2}(x)   } \leq 3 \epsilon.
		\end{equation*}
	\end{lemma}
	\begin{proof} For any $x \in I$ we have
		\begin{align*}
		\abs{ f_1(x) f_2\left(x\right)- g_{ 1}(x) g_{ 2} \left(x\right)   } & \leq   \abs{ f_1(x) f_2 \left(x\right)- f_1(x) g_2 \left(x\right) + 
			f_1(x) g_2 \left(x\right) -      g_{ 1}(x) g_{ 2} \left(x\right)   } \\
		& \leq \abs{ f_1(x) f_2  \left(x\right)- f_1(x) g_2 \left(x\right) } +  \abs{ f_1(x) g_2 \left(x\right) -      g_{ 1}(x) g_{ 2} \left(x\right)   } \\
		& \leq \abs{f_1  \left(x\right)} \abs{ f_2  \left(x\right)-  g_2 \left(x\right) } + \abs{g_2 \left(x\right)} \abs{f_1 \left(x\right)  - g_1 \left(x\right) } \\ 
		& \leq  \epsilon + \left(1 + \epsilon \right) \epsilon \\
		& \leq 2 \epsilon + \epsilon^2 \\
		& \leq 3 \epsilon. \qedhere
		\end{align*} 
	\end{proof}
	
	
	\subsection{$L^{2}$ Approximation using Hermite Polynomials} \label{sec:Hermite_L2}
	Next we consider approximating $f$ in the $2$-norm i.e. we would like to find a polynomial $h_p$ of degree $p$ such that
	\begin{equation*}
	\int_{-\infty}^{\infty} \abs{ f(x) - h_p(x) }^2 d\mu (x)
	\end{equation*}
	is minimized. $\mu$ denotes the Gaussian measure on the real line.
	Note that this problem can be solved using the technique of orthogonal polynomials since $ L^2 \left( \mathbb{R}  , \mu \right)   $ is a Hilbert space. 
	The study of orthogonal polynomial is a rich and well-developed area in mathematics (see \cite{MR0372517, MR0350075}). 
	Our main focus in this section will be the case where $f$ is the derivative of $ \tanh $ and in later sections we extend this analysis to other activation functions. 
	
	\todo[disable]{Include a reference for orthogonal and in particular Hermite polynomials. I think Szego.}
	
	\todo[disable]{Find good notation for probabilists Hermite}
	
	Let $H_k$ denote the $k$-th normalized (physicists') Hermite polynomial given by 
	\begin{equation}\label{hermite} 
	H_k(x) = \left[\sqrt{\pi}2^k k!\right]^{-1/2}\left(-1\right)^k e^{x^2}\frac{d^k}{dx^k} e^{-x^2},
	\end{equation}
	and the corresponding normalized (probabilists') Hermite polynomial is given by 
	\begin{equation}\label{hermite-prob} 
	He_{k}(x) = \left[\sqrt{\pi} k!\right]^{-1/2}\left(-1\right)^k e^{x^2/2}\frac{d^k}{dx^k} e^{-x^2/2} . 
	\end{equation}
	The Hermite polynomials are usually written without the normalization. 
	The normalization terms ensure that the polynomials form orthonormal systems with respect to their measures. 
	Recall that $\mu\left(x; \sigma\right)$ denotes the density function of a Gaussian variable with mean $ 0 $    and standard deviation $ \sigma $.
	The series of polynomials $H_k$ are orthonormal with respect to Gaussian measure $\mu\left(x; \frac{1}{\sqrt{2}}\right)$ and the series of polynomials $He_k$ are orthonormal with respect to the standard Gaussian measure $\mu\left(x; 1\right)$ i.e. 
	\begin{align}
	& \int_{-\infty}^{\infty} H_m (x) H_n(x) \, d\mu\left(x; \frac{1}{\sqrt{2}}\right ) = \delta_{m,n}, \label{eq:physics}\\
	&\int_{-\infty}^{\infty} He_{m} (x) He_{n}(x) \, d\mu(x; 1) = \delta_{m,n}\label{eq:probability}.
	\end{align}
	
	The two versions of the Hermite polynomials are related by 
	\begin{equation}\label{eq:connection}
	H_m(x) = He_m\left(\sqrt{2}x\right). 
	\end{equation}
	
	For any function $ f \in L^{2}  \left( \mu \right) $, we can define the Hermite expansion of the function as 
	\begin{equation*}
	f = \sum_{i=0}^{\infty }   f_i He_i .
	\end{equation*}
	From the orthogonality of the Hermite polynomials, we can compute the coefficients as 
	\begin{equation*}
	f_i = \int f \left(x\right) He_i\left(x\right) d \mu\left(x; 1\right). 
	\end{equation*}
	Since $ L^2 $ is a Hilbert space, we can use the Pythagoras theorem to bound the error of the projection onto the space of degree $ k $ polynomials as 
	\begin{equation*}
	\sum_{i = k+1 }^{\infty} |f_i|^2 .
	\end{equation*}
	This leads us to consider the Hermite coefficients of the functions we would like to study. 
	
	We defined the Hermite expansion in terms of probabilists' Hermite polynomials. For physicists' version we can define an expansion similarly. In this paper, we will use probabilists' version in our proofs. Since the literature we draw on comes from both conventions, we will need to talk about physicists' version also. 
	
	
	In the following we will be using complex numbers. For $z \in \mathbb{C}$, the imaginary part of $z$ is denoted by $\Im(z)$. 
	
	\todo{By analytic in the following theorem is real-analytic meant?}
	\todo[disable]{Give a short (one or two sentence) explanation of the condition in the following theorem and also why one needs to go to the complex plane.}
	
	The following theorem provides conditions under which the Hermite coefficients decay rapidly. 
	It says that if a function extends analytically to a strip around the real axis and the function decays sufficiently rapidly as the real part goes to infinity, the Hermite series converges uniformly over compact sets in the strip and consequently has rapidly decaying Hermite coefficients. 
	The extension to the complex plane provide the function with strong regularity conditions. 
	\begin{theorem}\label{thm:hille}(Theorem 1 in \cite{hille1940contributions}) Let $f(z)$ be an analytic function. A necessary and sufficient condition in order that the Fourier-Hermite series
		\begin{equation}\label{eq:hermite}
		\sum_{k=0}^{\infty} c_k H_k(z) e^{-\frac{z^2}{2}} , \quad c_k =  \int_{-\infty}^{\infty} f(t) H_k(t)  e^{-\frac{t^2}{2}} dt
		\end{equation}
		shall exist and converge to the sum $f(z)$ in the strip $S_{\tau}= \left\{ z \in \bc :\abs{ \Im \left(z\right)  }<\tau \right\}$, is that $f(z)$ is holomorphic in $S_{\tau}$ and that to every $\beta$, $0 \le \beta < \tau$, there exits a finite positive $B(\beta)$ such that
		\begin{equation*}
		\abs{f(x+iy)} \le B(\beta) e^{- \abs{x} \left(\beta^2 -y^2\right)^{1/2}}
		\end{equation*}
		where $x \in \left(-\infty, \infty\right)$, $y \in \left(-\beta, \beta \right)$. 
		Moreover, whenever the condition is satisfied, we have 
		\begin{equation}\label{eq:coeff}
		\abs{c_k} \le M(\epsilon) e^{-(\tau-\epsilon)\sqrt{2k+1}}
		\end{equation}
		for all positive $\epsilon$. Here, $M$ denotes a function that depends only on $\epsilon$.
	\end{theorem}
	
	
	The function $ \tanh  $ can be naturally extended to the complex plane using its definition in terms of the exponential function. 
	From this definition, it follows that $ \tanh $ has a simple pole at every point such that $ e^{2z} + 1  = 0$. The set of solutions to this are given by $ 2z =  \left(2n+1\right)\pi  $. 
	Thus, $ \tanh $ is holomorphic in any region not containing these singularities. 
	In particular, $ \tanh  $ is holomorphic in the strip $S_{\pi/2}= \left\{ z \in \bc :\abs{\Im(z)}<\pi/2  \right\}$. The same holds for $\tanh'$.
	
	Using the above theorem, we bound the size of the Hermite coefficients of the derivative of $ \tanh $ and thus bound the error of approximation by low degree polynomials. 
	\begin{theorem} \label{thm:upper-bound}
		Let $\phi_1(z)=\tanh^{\prime}\left( z   / \sqrt{2}  \right) \, e^{-\frac{z^2}{2}}  $. Consider the Hermite expansion of $\phi_1$ in terms of $\left\{H_{k}\right\}_{k=0}^{\infty}$, as 
		\begin{equation}\label{herm-exp}
		\phi_1(z) = \sum_{k=0}^{\infty} c_{k} H_{k}(z) e^{-\frac{z^2}{2}}  . 
		\end{equation}
		Then, 
		\begin{equation*}
		\abs{c_k} \le \mathcal{O}\left(e^{-\frac{\pi \sqrt{k}}{4}}\right). 
		\end{equation*}
	\end{theorem}
	\begin{proof}
		As before consider the strip $S_{\tau}= \left\{ z \in \bc :\abs{\Im(z)}<\tau \right\}$. 
		Note that $\phi_1(z)$ is holomorphic in $S_{\tau}$ for $ \tau < \sqrt{2}\pi/2$. For every $\beta \in [0, \sqrt{2}\pi/4]$, consider $z=x+iy \in S_{\beta}$ and 
		set $ \sqrt{2} x' = x  $ and $ \sqrt{2} y' = y $. Also note that $\tanh'(z) = 1/\cosh^2(z)$. Then
		\begin{align*}
		\abs{\phi_1(z)} &= \abs{\frac{1}{ \cosh^2\left( x' +i y'  \right) } }  \abs{e^{ - \left(x'+ i y'  \right)^2  } }  \\
		& = \abs{\frac{1}{ \cosh\left(  x' +i y'   \right) } }^2    \abs{e^{ - x^2 + y^2 - 2i xy  } }  \\
		& = \abs{\frac{1}{  \cos y' \cosh x' + i \sin y' \sinh x' } }^2 \abs{e^{ - x^2 + y^2 - 2i xy  } } \\
		& = \frac{1}{  \cos^2 y' \cosh^2 x' + \sin^2 y' \sinh^2 x' }  \abs{e^{ - x^2 + y^2 - 2i xy  } } \\ 
		&\leq \frac{ e^{-x^2}  e^{y^2}  \abs{e^{2ixy}}}{  \cos^2 y' \cosh^2 x' }     \\ 
		&\leq \frac{ e^{-x^2}  e^{\beta^2}  }{  (\cos^2 \beta)  \cosh^2 x' } \\ 
		& \leq e^{-x^2}  e^{\beta^2}    (\sec^2 \beta ) \, \mathrm{sech} ^2 x' \\ 
		& \leq 4e^{-x^2}  e^{\beta^2}    (\sec^2 \beta ) \, \left( e^{x'} + e^{-x'}  \right)^{-2} \\ 
		&\leq 4e^{-x^2}  e^{\beta^2}    (\sec^2 \beta ) \, e^{-\sqrt{2}\, \abs{x}} \\
		& \leq 40 \, e^{\beta^2} (\sec^2 \beta)  \, e^{-\sqrt{\beta^2 - y^2} \, \abs{x}}. 
		\end{align*}
		\todo[disable]{The last and second to last lines above are not clear.}
		The last inequality follows by noting that $ \sqrt{\beta^2 - y^2} \leq \beta \leq \sqrt{2} \pi /4 \leq \sqrt{2}   $. 
		This satisfies the required condition with $ B\left(\beta \right) = 40 \, e^{\beta^2} (\sec^2 \beta)  $.
		Hence, from \autoref{thm:hille}, we have that \autoref{herm-exp} is convergent in the strip $S_{\tau}, \text{for } \tau \le \frac{\sqrt{2}\pi}{4}$. Thus, from \autoref{eq:coeff}, using $\epsilon = \frac{\sqrt{2}\pi}{8}$ we have
		\begin{equation*}
		\abs{c_k} \le C e^{-\frac{\pi}{4\sqrt{2}}\sqrt{2k+1}}
		\end{equation*}
		for some constant $C$ independent of $k$.
	\end{proof}
	
	\begin{corollary}\label{cor:coeff-prob}
		Let $\phi_2(x) = \tanh^{\prime}(x)$. Consider the Hermite expansion of $\phi_2$:  
		\begin{equation}\label{herm-exp-prob}
		\phi_2(x) = \sum_{k=0}^{\infty} \bar{c}_{k} He_k(x).
		\end{equation}
		Then, 
		\begin{equation*}
		\abs{\bar{c}_{k}} \le \mathcal{O}\left(e^{-\frac{\pi \sqrt{k}}{4}}\right). 
		\end{equation*}
	\end{corollary}
	\begin{proof}
		Using orthonormality of $He_k$ with respect to the standard Gaussian measure, we have
		\begin{align}
		\bar{c}_k = \int_{-\infty}^{\infty} \phi_2(x)\, He_k(x)\, d\mu(x; 1) &= \frac{1}{\sqrt{2\pi}} \int_{-\infty}^{\infty} \phi_2(x)\, He_k(x) e^{-\frac{x^2}{2}}\, dx \nonumber \\ &= \sqrt{2}  \frac{1}{\sqrt{2\pi}}\int_{-\infty}^{\infty} \phi_2\left(\sqrt{2}x\right) He_k\left(\sqrt{2}x\right) e^{-x^2} dx \nonumber 
		\end{align}
		Using $\phi_2\left(\sqrt{2}x\right) = \phi_1\left(x\right)  e^{\frac{x^2}{2}} $, defined in \autoref{thm:upper-bound} and $He_k\left(\sqrt{2}x\right) = H_{k}\left(x\right)$, as given by \autoref{eq:connection}, we have
		\begin{equation*}
		\bar{c}_k = \frac{1}{\sqrt{\pi}} \int_{-\infty}^{\infty} \phi_1\left(x\right) H_{k}\left(x\right) e^{-\frac{x^2}{2}} dx
		\end{equation*}
		Applying \autoref{thm:upper-bound}, we get the required bound.
	\end{proof}
	
	\begin{corollary}\label{cor:error1}
		Let $\phi_2(x)=\tanh^{\prime}(x)$ and let $\phi_2$ be approximated by Hermite polynomials $\left\{He_k\right\}_{k=0}^{\infty}$ of degree up to $ p $ in \autoref{herm-exp-prob}, denoted by
		\begin{equation*}
		h_p (x) := \sum_{k=1}^{p} \bar{c}_k He_k(x). 
		\end{equation*}
		Let
		\begin{equation*}
		E_p (x) := \phi_2(x) - h_p (x).
		\end{equation*}
		Then,
		\begin{equation*}
		\int_{-\infty}^{\infty} E_p(x)^2 \, d\mu(x; 1)   \le
		O \left(\sqrt{p} e^{-\frac{\pi}{4\sqrt{2}}\sqrt{p}}\right). 
		\end{equation*}
	\end{corollary}
	\begin{proof}
		\begin{equation*}
		E_p(x) = \phi_2(x) - h_p(x) = \sum_{k=p+1}^{\infty}  \bar{c}_k He_k(x).
		\end{equation*}
		Using orthonormality of normalized Hermite polynomials with respect standard Gaussian measure, we have
		\begin{equation*}
		\int_{-\infty}^{\infty} E_p (x)^2 \, d\mu(x; 1) = \sum_{k=p+1}^{\infty} \int_{-\infty}^{\infty} \bar{c}_k^2 He_k(x) He_k(x) \, dx = \sum_{k=p+1}^{\infty}  \bar{c}_k^2.
		\end{equation*}
		Substituting the bounds for $\left\{\bar{c}_k\right\}_{k=p+1}^{\infty}$ from \autoref{cor:coeff-prob}, we have 
		\begin{align*}
		\sum_{k=p+1}^{\infty} \bar{c}_k^2 &\le \sum_{k=p+1}^{\infty} e^{-\frac{\pi}{4\sqrt{2}}\sqrt{2k+1}} \\
		&\le \int_{p}^{\infty} e^{-\frac{\pi}{4\sqrt{2}}\sqrt{2x+1}}dx \\
		&= \frac{32}{\pi^2} \left(\frac{\pi}{4\sqrt{2}}\sqrt{2p+1}+1\right) e^{-\frac{\pi}{4\sqrt{2} }\sqrt{2p+1}}, 
		\end{align*}
		as required. 
	\end{proof}
	
	For comparison, consider the Hermite expansion of the derivative of  \relu{}, the threshold function. It can be shown (see \cite[page 75]{MR0350075}) that 
	\todo[disable]{I think the normalization factor below is for $H_{2n}$ but should be for $H_{2n+1}$.}
	\begin{align}
	\relu{}'(x) &= \frac{1}{2\sqrt{\pi}} \sum_{k=0}^{\infty} \frac{\left(-1\right)^k    \sqrt{\sqrt{\pi}2^{2k+1} (2k+1)!} }{  2^{2k}   \left(2 k +1\right)  k! }   H_{2k+1}(x) + \frac{1}{2} \nonumber\\ 
	& = \frac{1}{ \sqrt[4]{4\pi}  } \sum_{k=0}^{\infty} \frac{\left(-1\right)^k    \sqrt{ (2k+1)!} }{  2^{k}   \left(2 k +1\right)  k! }   H_{2k+1}(x) + \frac{1}{2} \nonumber\\
	& \approx  \sum_{k=0}^{\infty} \frac{\left(-1\right)^k    \sqrt{  2^{2k}k^{2k}  \sqrt{2k}  e^{-2k}        } }{  2^{k}   \sqrt{\left(2 k +1\right)} k^{k}  \sqrt{k}  e^{-k}  }   H_{2k+1}(x) + \frac{1}{2} \nonumber\\
	& \approx  \sum_{k=0}^{\infty} \frac{\left(-1\right)^k              }{     \sqrt{\left(2 k +1\right)}  \sqrt[4]{k}   }   H_{2k+1}(x) + \frac{1}{2} \nonumber\\
	&\approx  \sum_{k=0}^{\infty} \frac{\left(-1\right)^k              }{   k^{0.75}  }   H_{2k+1}(x) + \frac{1}{2} \label{eq:majority_hermite}. 
	\end{align}
	We can also expand the threshold function, in terms of probabilist's Hermite polynomials in the following manner. If the expansion of threshold function is written as $\sum_{k=0}^{\infty} \bar{c}_k H_{e_k}\left(x\right)$, then
	\begin{align}
	\bar{c}_k &= \frac{1}{\sqrt{2\pi}}\int_{-\infty}^{\infty} \relu{}'(x) H_{e_k}(x) e^{-\frac{x^2}{2}} dx \nonumber \\
	&= \frac{1}{\sqrt{\pi}}\int_{-\infty}^{\infty} \relu{}'(\sqrt{2}y) H_{e_k}(\sqrt{2}y) e^{-y^2} dy \nonumber \\
	&= \frac{1}{\sqrt{\pi}}\int_{-\infty}^{\infty} \relu{}'(y) H_{k}(y) e^{-y^2} dy \label{eq:connect_and_relu'} \\
	&= c_k \approx \frac{\left(-1\right)^k              }{   k^{0.75}  }
	\end{align}
	where we use \autoref{eq:connection} and the fact that $\relu{}'(y) = \relu{}'(\sqrt{2} y)$ in \autoref{eq:connect_and_relu'}.

	It can now be seen that the Hermite coefficients do not decay rapidly for this function. 

	\subsubsection{DZPS Setting}
	For this choice of  \hyperref[def:neural]{initialization}, defined in \autoref{preliminaries}, we need to consider two different regimes depending on the number of neurons $ m $. 
	When $ m $ is small, we use Chebyshev approximation in the $L^{\infty}$ norm, while we use the $ L^2 $ approximation by Hermite polynomials when $m $ is large. 
	First consider the Chebyshev approximation. \todo{what do small and large mean here?}
	\begin{theorem}\label{thm:Du-poly}
		Assuming $\phi(x)= \tanh(x)$ and weights $\mathbf{w}_k^{(0)} \sim \mathcal{N}(0, \mathbf{I}_d),  a_{k}^{(0)} \sim \mathcal{N}(0, 1)$ $ \forall k \in [m]$, the minimum eigenvalue of the $\mathbf{G}$-matrix is 
		\begin{equation*}
		\lambda_{\min}\left(\mathbf{G}^{(0)}\right) \le \mathcal{O}\left(n\left(\frac{4\pi + 6\sqrt{\log nm}}{\left(1+\frac{\pi/3}{\sqrt{\log nm}}\right)^{p}}\right)^2\right)
		\end{equation*}
		with probability at least $1-\frac{2}{\left(mn\right)^{3.5}}$ with respect to $\left\{\mathbf{w}_k^{(0)}\right\}_{k=1}^{m}$ and $\left\{ a_k^{(0)}\right\}_{k=1}^{m}$, where $p$ is the largest integer that satisfies \hyperref[condition-poly]{Condition \ref*{condition-poly}}. 
	\end{theorem}
	
	\todo{add a remark about the value of $p$ depending on different regimes. } 
	
	\begin{proof}
		In the following, for typographical reasons we will write $\mathbf{w}_k$ instead of $\mathbf{w}^{(0)}_k$ and $a_k$ instead of $a^{(0)}_k$. Referring to \autoref{eq:upper_bound_lambda}, it suffices to find a vector $\zeta^{g} \in \mathbb{S}^{n-1}$ s.t. $\norm{\sum_{i=1}^{n} \zeta_i^g \mathbf{m}_i}$ is small.

		For each $k \in [m]$ and $i \in [n]$, $\mathbf{w}_k^T \mathbf{x}_i$ is a Gaussian random variable following $\mathcal{N}(0, 1)$. Thus, there are $mn$ Gaussian random variables and with probability at least $\left( 1 - \frac{2}{\left(mn\right)^{3.5}} \right)$ with respect to $\left\{\mathbf{w}_k\right\}_{k=1}^{m}$, $\max_{i \in [n], k \in [m]} \abs{\mathbf{w}_k^T \mathbf{x}_i} \le 3\sqrt{\log nm}$. Hence, we restrict ourselves to the range $\left( -3\sqrt{\log {nm}}, 3\sqrt{\log {nm}}\right)$, when we analyze $\phi(x)$. 
		Now, from \autoref{eq:approxed} and \autoref{eq:approx_derivative}, we have that there exists a polynomial $g\left(x\right)$ of degree $p$ that can approximate $\phi^{\prime}$ in the interval $\left(-3\sqrt{\log nm}, 3\sqrt{\log nm}\right)$ with the error of approximation in $L^{\infty}$ norm $\epsilon$ given by
		\begin{equation*}
		\epsilon \le 3\left(\frac{4\pi + 6\sqrt{\log nm}}{\left(1+\frac{\pi/3}{\sqrt{\log nm}}\right)^{p}}\right).
		\end{equation*}
		
		From \autoref{thm:poly}, there exists $\zeta^g \in \mathbb{S}^{n-1}$ s.t. 
		\begin{equation*}
		\sum_{k=1}^{m} \norm{\sum_{i=1}^{n} \zeta_i^g a_k \, g\left(\mathbf{w}_k^T \mathbf{x}_i\right) \mathbf{x}_i}_2^2 = 0
		\end{equation*}
		provided \hyperref[condition-poly]{Condition \ref*{condition-poly}} holds true.
		For any weight vector $\mathbf{w}_k$, it's corresponding output weight $a_k$ and any $\zeta \in \mathbb{S}^{n-1}$, we have 
		\begin{align}
		\norm{ \sum_{i=1}^{n} \zeta_i \, a_k \phi^{\prime}(\mathbf{w}_k^T \mathbf{x}_i) \mathbf{x}_i - \sum_{i=1}^{n} \zeta_i \, a_k g(\mathbf{w}_k^T \mathbf{x}_i) \mathbf{x}_i}_2 &\leq  \sqrt{a_k^2 \sum_{i=1}^{n} \zeta_i^2 \left(\phi^{\prime}(\mathbf{w}_k^T \mathbf{x}_i) - g(\mathbf{w}_k^T \mathbf{x}_i)\right)^2} \sqrt{\sum_{i=1}^{n} \norm{\mathbf{x}_i}^2}  \label{eq:st1}\\ &\leq \sqrt{n} \epsilon \abs{a_k} \label{eq:st2}
		\end{align}
		where we use triangle inequality and the Cauchy-Schwartz inequality in \hyperref[eq:st1]{Inequality \ref*{eq:st1}}, $\norm{\zeta}=1$, $\norm{\mathbf{x}_i}=1$ and that the maximum error of approximation is $\epsilon$ in \hyperref[eq:st2]{Inequality \ref*{eq:st2}}.  
		Hence, for $\zeta = \zeta^g$, we have
		\begin{align}
		\norm{\sum_{i=1}^{n} \zeta^g_i a_k \phi^{\prime}(\mathbf{w}_k^T \mathbf{x}_i)\, \mathbf{x}_i}_2 &\le \norm{ \sum_{i=1}^{n} \zeta^g_i a_k \phi^{\prime}(\mathbf{w}_k^T \mathbf{x}_i)\, \mathbf{x}_i - \sum_{i=1}^{n} \zeta^g_i a_k g(\mathbf{w}_k^T \mathbf{x}_i)\, \mathbf{x}_i}_2 + \norm{\sum_{i=1}^{n} \zeta^g_i a_k g(\mathbf{w}_k^T \mathbf{x}_i)\, \mathbf{x}_i}_2 \nonumber \\ &\le \sqrt{n} \abs{a_k}\epsilon
		\end{align}
		Thus, 
		\begin{equation*}
		\norm{\sum_{i=1}^{n} \zeta^g_i \mathbf{m}_i} = \sqrt{\sum_{k=1}^{m} \norm{\sum_{i=1}^{n} \zeta^g_i a_k \phi'\left(\mathbf{w}_k^T \mathbf{x}_i\right) \mathbf{x}_i}^2} \le \sqrt{\sum_{k=1}^{m} a_k^2} \sqrt{n}\epsilon \le \sqrt{5m} \sqrt{n} \epsilon
		\end{equation*}
		where in the final step, we use chi-square concentration bounds (\autoref{fact:conc_chi}) to show that $\sqrt{\sum_{k=1}^{m} a_k^2}$ is at-most $\sqrt{5m}$ with probability at-least $1 - e^{-m}$.
		Using \autoref{eq:upper_bound_lambda}, $\sigma_{\min}\left(\mathbf{M}\right) \le \sqrt{5m} \sqrt{n}\epsilon$. That implies, $\lambda_{\min}(\mathbf{G}) = \frac{1}{m} \lambda_{\min}\left(\mathbf{M}\right)^2 \le 5n\epsilon^2$. Since, $\epsilon$ decreases with increasing $p$, we substitute the value of $\epsilon$ at maximum value of $p$ possible in order to get the desired result.
	\end{proof}    
	
	Note that since the upper bound on the eigenvalue depends on $ m $, the bound becomes increasingly worse as we increase $ m $. 
	This is because as we increase $ m $, the interval $\left(-3\sqrt{\log nm}, 3\sqrt{\log nm}\right)$ in which we need the polynomial approximation to hold increases in length and thus the degree needed to approximate grows with $ m $. 
	To remedy this, we relax the approximation guarantee required from the $ L^{\infty} $ norm to the $ L^2 $ norm under the Gaussian measure. 
	This naturally leads to approximation by Hermite polynomials as discussed in \autoref{sec:Hermite_L2}. 
	
	\begin{theorem}\label{thm:Du_exp}
		Assuming $\phi(x)=\tanh(x)$ and weights $\mathbf{w}_k^{(0)} \sim \mathcal{N}(0, \mathbf{I}_d) \; ,  a_{k}^{(0)} \sim \mathcal{N}(0, 1) \; \forall k \in [m]$, 
		with probability at least $1 - e^{-\Omega\left(\frac{mc^2}{n^2 \log m}\right)} - m^{-3.5}$ over the choice of $\left\{ \mathbf{w}_k^{(0)}\right\}_{k=1}^{m}$ and $\left\{ a_k^{(0)}\right\}_{k=1}^{m}$
		the minimum eigenvalue of the $G$-matrix is bounded by $c$, i.e. 
		\begin{equation*}
		\lambda_{\min}(\mathbf{G}^{(0)}) \le c, \text{ where }
		\end{equation*}
		\begin{equation*}
		c = \max{\left(\mathcal{O}\left(\frac{n \log{n} \log m}{\sqrt{m}}\right), \mathcal{O}\left(n^2 \sqrt{p} e^{-\frac{\pi}{4\sqrt{2}}\sqrt{p}}\right)\right)}
		\end{equation*}
		and $p$ is the largest integer that satisfies \hyperref[condition-poly]{Condition \ref*{condition-poly}}.
	\end{theorem}
	\begin{proof}
		In the following, for typographical reasons we will write $\mathbf{w}_k$ instead of $\mathbf{w}^{(0)}_k$. Referring to \autoref{eq:upper_bound_lambda}, it suffices to find a vector $\zeta^{h} \in \mathbb{S}^{n-1}$ s.t.  $\norm{\sum_{i=1}^{n} \zeta_i^h \mathbf{m}_i}$ is small.
		
		
		\autoref{cor:error} gives the error of approximating $\phi^{\prime}$ by a polynomial $h$, consisting of Hermite polynomials of degree $\le p$, in the $ L^2 $ norm. 
		Let $E_p$ denote the error function of approximation, given by $E_p(x) = \phi^{\prime}(x) - h(x)$. 
		
		From \autoref{thm:poly}, there exists $\zeta^h \in \mathbb{S}^{n-1}$ s.t. for all $\mathbf{w}$ and $\tilde{a}$ we have
		\begin{equation*}
		\sum_{i=1}^{n} \zeta^h_i \tilde{a} h(\mathbf{w}^T \mathbf{x}_i)\, \mathbf{x}_i = 0,
		\end{equation*}
		provided $p$ satisfies \hyperref[condition-poly]{Condition \ref*{condition-poly}}.
		
		We can use \autoref{max_gauss} to confine the maximum magnitude of $a_k$ to $3\sqrt{\log m}$. Thus, assuming that this condition holds true, we claim the following. 
		Using $\zeta^h$, we get
		\begin{align}
		\mathbb{E}_{\mathbf{w} \sim \mathcal{N}(0, \mathbf{I}_d), \tilde{a} \sim \mathcal{N}\left(0,1\right) } & \norm{ \sum_{i=1}^{n} \zeta^{h}_i  \tilde{a} \phi^{\prime}(\mathbf{w}^T \mathbf{x}_i) \, \mathbf{x}_i }^2 \nonumber
		\\&= \mathbb{E}_{\mathbf{w} \sim \mathcal{N}(0, \mathbf{I}_d), \tilde{a} \sim \mathcal{N}\left(0,1\right)} \norm{ \sum_{i=1}^{n} \zeta^{h}_i \tilde{a} h_p(\mathbf{w}^T \mathbf{x}_i)\, \mathbf{x}_i + \sum_{i=1}^{n} \tilde{a} \zeta^{h}_i E_p(\mathbf{w}^T \mathbf{x}_i)\, \mathbf{x}_i }^2 \label{step-1:summing} 
		\\ &= \mathbb{E}_{\mathbf{w} \sim \mathcal{N}(0, \mathbf{I}_d), \tilde{a} \sim \mathcal{N}\left(0,1\right)} \norm{ \sum_{i=1}^{n} \zeta^{h}_i \tilde{a} E_p(\mathbf{w}^T \mathbf{x}_i)\, \mathbf{x}_i }^2 \label{step-2:summing}
		\\ &\le \mathbb{E}_{\mathbf{w} \sim \mathcal{N}(0, \mathbf{I}_d), \tilde{a} \sim \mathcal{N}\left(0,1\right)} \tilde{a}^2 \norm{ \sum_{i=1}^{n} \zeta^h_i \mathbf{x}_i }^2 \left(\sum_{i=1}^{n} \left(E_p\left(\mathbf{w}^T \mathbf{x}_i\right)\right)^2\right) \label{step-3:summing}
		\\ &\le  n \sum_{i=1}^{n} \left\{\mathbb{E}_{\mathbf{w} \sim \mathcal{N}(0, \mathbf{I}_d)} \left(E_p\left(\mathbf{w}^T\mathbf{x}_i\right)\right)^2 \right\} \label{step-5:summing}\\
		&= n^2 \, \mathbb{E}_{\mathbf{w} \sim \mathcal{N}(0, \mathbf{I}_d)} \left(E_p\left(\mathbf{w}^T\mathbf{x}_1\right)\right)^2\label{step-4:summing}.
		\end{align}
		We approximate $\phi^{\prime}$ by $h$  and  use the definition of $\zeta^h$ in  \autoref{step-1:summing}, apply Cauchy-Schwartz in \autoref{step-2:summing},  $\| \mathbf{x}_i \| =1 \; \forall i \in [n]$, $\norm{\zeta^h}=1$, the linearity of expectation in \autoref{step-3:summing} and maximum variance of $\tilde{a}$, given the constraint on it's magnitude, as upperbounded by 1 in \autoref{step-5:summing}. $\mathbf{w}^T\mathbf{x}_1$ follows a Gaussian distribution $\mathcal{N}(0, 1)$. Hence, denoting $\epsilon_h$ as the error of approximation from \autoref{cor:error1}, we get
		\todo[disable]{Correct the reference: Theorem F.2 is about swish.} 
		\begin{equation*}
		\mathbb{E}_{\mathbf{w} \sim \mathcal{N}(0, \mathbf{I}_d), \tilde{a} \sim \mathcal{N}\left(0,1\right)} \norm{ \sum_{i=1}^{n} \zeta^{h}_i  \tilde{a} \phi^{\prime}(\mathbf{w}^T \mathbf{x}_i)\, \mathbf{x}_i }^2 \le n^2 \epsilon_h
		\end{equation*}
		where
		\begin{equation*}
		\epsilon_h \le \mathcal{O}(\sqrt{p}e^{-\frac{\pi}{4\sqrt{2}}\sqrt{p}}) . 
		\end{equation*}
		Applying Hoeffding's inequality for $m$ weight vectors $\mathbf{w}_k \sim \mathcal{N}(0, \mathbf{I}_d)$ and  $a_k \sim \mathcal{N}(0,1)$, we get 
		\begin{equation*}
		\Pr_{\left\{\mathbf{w}_k\right\}_{k=1}^{m}, \left\{a_k\right\}_{k=1}^{m}}   \left\{\frac{1}{m} \sum_{k=1}^{m} \norm{ \sum_{i=1}^{n} \zeta^{h}_i  a_k \phi^{\prime}(\mathbf{w}_k^T \mathbf{x}_i) \mathbf{x}_i }^2 \le \left( n^2 \epsilon_h + t \right) \right\} \ge 1 -  e^{-\frac{2 m t^2}{9 n^2 \log^2 m}} - m^{-3.5}.
		\end{equation*}
		In the above inequality, we use the restriction of the maximum magnitude of $a_k$ to $3\sqrt{\log m}$ and hence, $\forall k$, $\norm{\sum_{i=1}^{n} \zeta_i^h a_k \phi^{\prime}\left(\mathbf{w}_k^T\mathbf{x}_i\right)\mathbf{x}_i} \in \left(0, 3\sqrt{n} \sqrt{\log m}\right)$.
		Using $t=\max\left(n^2 \epsilon_h, \frac{n \log{n} \log m}{\sqrt{m}}\right)$, $c$ being a constant and substituting the value of $\epsilon$, we get the final upper bound. 
		Thus,
		\begin{align}
		\sum_{k=1}^{m} \norm{ \sum_{i=1}^{n} \zeta^h_i \phi^{\prime}(\mathbf{w}_k^T \mathbf{x}_i) \mathbf{x}_i  }_2^2  \le m\max\left(\mathcal{O}\left(n^2 \epsilon_h\right), \mathcal{O}\left(\frac{n \log{n} \log m}{\sqrt{m}}\right)\right)
		\end{align}
		
		Using \autoref{eq:upper_bound_lambda} and the fact $\lambda_{\min}(\mathbf{G}) = \frac{1}{m}\sigma_{\min}(\mathbf{M})^2$, we get the final bound.
	\end{proof}
	The upper bound on minimum eigenvalue from \autoref{thm:Du-poly} can be rewritten as
	\begin{equation*}
	\lambda_{\min}(\mathbf{G}^{(0)}) \le \mathcal{O}\left( n \log{\left(nm\right)}  e^{-p \log{\left(1 + \frac{\pi}{3} \sqrt{\log\left(nm\right)}\right)}}\right)
	\end{equation*}
	for a small constant $C$ and $p$ denotes the largest integer that satisfies \hyperref[condition-poly]{Condition \ref*{condition-poly}}. Let us assume that $d' \le \mathcal{O}\left(\log^{0.75} {n}\right)$, where $ d' = \dim\left( \mathrm{Span} \left\{ \mathbf{x}_1 \dots \mathbf{x}_n  \right\}  \right)  $. Then, we use the value of $p$ from \autoref{corr-small_p} for the next set of arguments. Substituting the value of $p$, we get
	\begin{equation*}
	\lambda_{\min}(\mathbf{G}^{(0)}) \le \mathcal{O}\left( n \log{\left(nm\right)}  e^{-n^{1/d'} \log{\left(1 + \frac{\pi}{3} \sqrt{\log\left(nm\right)}\right)}}\right)
	\end{equation*}
	Assuming $m < e^{\mathcal{O}\left(n^{1/d'}\right)}$, we have
	\begin{equation*}
	\lambda_{\min}(\mathbf{G}^{(0)}) \le \mathcal{O}\left(n^2e^{-\Omega\left({n^{1/2d'}}\right)}\right) = e^{-\Omega\left({n^{1/2d'}}\right)} . 
	\end{equation*}
	By \autoref{thm:Du_exp}, when $m>e^{\Omega\left(n^{1/2d'}\right)}$, 
	\begin{equation*}
	\lambda_{\min}(\mathbf{G}^{(0)}) \le \max{\left(n^{1.5}\log{\left(n\right)} e^{-\Omega\left(n^{1/2d'}\right)}, \mathcal{O}\left(n^2 e^{-\frac{\pi}{4\sqrt{2} }n^{1/2d'}}\right)\right)} = e^{-\Omega\left(n^{1/2d'}\right)} 
	\end{equation*}
	with high probability with respect to $\left\{\mathbf{w}_k\right\}_{k=1}^{m}$ and $\left\{a_k\right\}_{k=1}^{m}$. Hence, the final bounds of minimum singular value of Gram matrix in case of tanh activation has been summarized below.
	
	\begin{theorem}\label{thm:small_poly-d}
		Let $d' =\dim\left( \mathrm{span} \left\{ \mathbf{x}_1 \dots \mathbf{x}_n  \right\}  \right)  \le \mathcal{O}\left(\log^{0.75} {n}\right)$. Assuming $\phi(x)=\tanh(x)$ and weights $\mathbf{w}_k^{(0)} \sim \mathcal{N}(0, \mathbf{I}_d) \;  ,  a_{k}^{(0)} \sim \mathcal{N}(0, 1) \; \forall k \in [m]$, the minimum eigenvalue of the $G$-matrix satisfies 
		\begin{equation*}
		\lambda_{\min}\left(\mathbf{G}^{\left(0\right)}\right) \le e^{-\Omega(n^{1/2d'})}
		\end{equation*}
		with probability at least $1 - \frac{1}{n^{3.5}}$ with respect to the weight vectors $\left\{\mathbf{w}_k^{(0)}\right\}_{k=1}^{m}$ and $\left\{a_k^{(0)}\right\}_{k=1}^{m}$.
	\end{theorem}
	
	In fact, as in \autoref{corr:small_poly-d}, we can show that the smallest $\left (1 - \frac{1}{d'}\right )n $ eigenvalues of the matrix are small. 
	This is captured in the following theorem. 
	
	\begin{corollary}
		Let $d'=\dim\left( \mathrm{span} \left\{ \mathbf{x}_1 \dots \mathbf{x}_n  \right\}  \right) =\mathcal{O}\left(\log^{0.75}{n}\right)$. Assuming $\phi(x)=\tanh(x)$ and weights $\mathbf{w}_k^{(0)} \sim \mathcal{N}(0, \mathbf{I}_d) \; ,a_{k}^{(0)} \sim \mathcal{N}(0, 1) \; \forall k \in [m]$, then eigenvalues of the $G$-matrix satisfy 
		\begin{equation*}
		\lambda_{k}\left(\mathbf{G}^{\left(0\right)}\right) \le e^{-\Omega\left(n^{1/2d'} \right)} \quad \forall k \ge \left\lceil\frac{n}{d'}\right\rceil
		\end{equation*}
		with probability at least $1 - \frac{1}{n^{2.5}}$ with respect to the weight vectors $\left\{\mathbf{w}_k^{(0)}\right\}_{k=1}^{m}$ and $\left\{a_k^{(0)}\right\}_{k=1}^{m}$.
	\end{corollary}
	
	\begin{proof}
		In the following, for typographical reasons we will write $\mathbf{w}_k$ instead of $\mathbf{w}^{(0)}_k$ and $a_k$ instead of $a^{(0)}_k$. We give a proof outline. We will approximate $\phi^{\prime}$ by a $p$-degree polynomial $h$ as in \autoref{thm:Du-poly} and \autoref{thm:Du_exp}, where $p \le \mathcal{O}\left(n^{1/d'}\right)$. From \autoref{corr:small_poly-d}, we get that for polynomial $h$, there exits a $n(1-\frac{1}{d'})$ dimensional subspace $\mathbf{U}$ for which the following quantity 
		\begin{equation*}
		\sqrt{\sum_{k=1}^{m} \norm{\sum_{i=1}^{n} \zeta_i a_k h\left(\mathbf{w}_k^T\mathbf{x}_i\right) \mathbf{x}_i}^2 } = 0
		\end{equation*}
		is 0, $\forall \zeta \in \mathbf{U}$. Now, we can take an orthonormal basis $\zeta^{\left(\mathbf{U}\right)} = \left[\zeta^{\left(1\right)}, \zeta^{\left(2\right)}, ..., \zeta^{\left(n\left(1-\frac{1}{d'}\right)\right)}\right]$ of $\mathbf{U}$ and for each $\zeta^{\left(j\right)}$, we can follow the same proof structure in \autoref{thm:Du-poly}, \autoref{thm:Du_exp} and \autoref{thm:small_poly-d} to get 
		\begin{equation*}
		\frac{1}{m} \sum_{k=1}^{m} \norm{\sum_{i=1}^{n} \zeta^{\left(j\right)}_i a_k \phi^{\prime}\left(\mathbf{w}_k^T \mathbf{x}_i\right) \mathbf{x}_i}^2  \le e^{-\Omega{\left(n^{1/2d'}\right)}}
		\end{equation*}
		with probability at-least $1-\frac{1}{n^{2.5}}$ with respect to $\left\{\mathbf{w}_k\right\}_{k=1}^{m}$ and $\left\{a_k\right\}_{k=1}^{m}$.  
		Now, for bounding singular value $\sigma_k(M), \text{ for } k \ge \left\lceil \frac{n}{d'} \right\rceil$, we use the following argument. We choose a subset $\mathcal{S}_{\left(n-k\right)}$ of size $n-k$  from $\zeta^{\left(\mathbf{U}\right)}$. This subset is a $n-k$ dimensional subspace $\mathbf{U}'$ of $\mathbb{R}^{n}$. Each $\zeta \in \mathbf{U}'$ can be written in the form 
		\begin{equation*}
		\zeta = \sum_{j \in [n] \colon \zeta^{\left(j\right)} \in \mathbf{U}'} \alpha_j \zeta^{\left(j\right)}    
		\end{equation*}
		with $\sum_{j \in [n] \colon \zeta^{\left(j\right)} \in \mathbf{U}'} \alpha_j^2 = 1$.
		Then, for each $\zeta \in \mathbf{U}'$, 
		\begin{align}
		\frac{1}{m} \sum_{k=1}^{m} \norm{\sum_{i=1}^{n} \zeta_i a_k \phi^{\prime}\left(\mathbf{w}_k^T \mathbf{x}_i\right) \mathbf{x}_i}^2  &= \frac{1}{m} \sum_{k=1}^{m} \norm{\sum_{i=1}^{n} \sum_{j \in \left[n\right] \colon \zeta^{\left(j\right)} \in \mathbf{U}'} \alpha_j \zeta^{\left(j\right)}_i a_k \phi^{\prime}\left(\mathbf{w}_k^T \mathbf{x}_i\right) \mathbf{x}_i}^2  \nonumber  \\
		&= \frac{1}{m} \sum_{k=1}^{m} \norm{\sum_{j \in \left[n\right] \colon \zeta^{\left(j\right)} \in \mathbf{U}'} \alpha_j \left(\sum_{i=1}^{n}  \zeta^{\left(j\right)}_i a_k \phi^{\prime}\left(\mathbf{w}_k^T \mathbf{x}_i\right) \mathbf{x}_i \right)}^2 
		\nonumber \\
		&\le \frac{1}{m} \sum_{k=1}^{m} \left(\sum_{j \in [n] \colon \zeta^{\left(j\right)} \in \mathbf{U}'} \alpha_j^2 \right)   \left(\sum_{j \in \left[n\right] \colon \zeta^{\left(j\right)} \in \mathbf{U}'} \norm{ \left(\sum_{i=1}^{n}  \zeta^{\left(j\right)}_i  a_k \phi^{\prime}\left(\mathbf{w}_k^T \mathbf{x}_i\right) \mathbf{x}_i \right)}^2 \right) 
		\nonumber \\
		&\le \left(n\left(1-\frac{1}{d'}\right)\right)e^{-\Omega\left(n^{1/2d'}\right)} = e^{-\Omega\left(n^{1/2d'}\right)}
		\end{align}
		Thus, it follows from the definition of $\sigma_k\left(M\right)$ from \autoref{minimax} that
		\begin{equation*}
		\sigma_k\left(\mathbf{M}\right) \le \sqrt{m} e^{-\Omega\left(n^{1/4d'}\right)}
		\end{equation*}
		Using $\lambda_k\left(\mathbf{G}^{\left(0\right)}\right) = \frac{1}{m} \sigma_k\left(\mathbf{M}\right)^2$, we get the final upper bound.
	\end{proof}
	
	
	\subsubsection{Standard Setting}
	Now, we consider upper bounding the eigenvalue of the $G$-matrix for the standard initialization, defined in \autoref{sec:Initializations}. 
	
	\begin{theorem}[$\mathsf{Init(fanout)}$ setting]
		Assuming $\phi(x)= \tanh(x)$ and weights $\mathbf{w}_k^{(0)} \sim \mathcal{N}\left(0, \frac{1}{m} \mathbf{I}_d\right), a_k^{(0)} \sim \mathcal{N}(0, 1)$ and $b_k^{(0)} \sim \mathcal{N}\left(0, \frac{1}{m}\right) \forall k \in \left[m\right]$, the minimum eigenvalue of the $G$-matrix is 
		\begin{equation*}
		\mathcal{O}\left(n\left(\frac{4\pi + 6\sqrt{\frac{\log nm}{m}}}{\left(1+\frac{\pi/3}{\sqrt{\frac{\log nm}{m}}}\right)^{p}}\right)^2\right)
		\end{equation*}
		with probability at least $1-\frac{2}{\left(mn\right)^{3.5}}$ with respect to $\left\{\mathbf{w}_k^{(0)}\right\}_{k=1}^{m}$, $\left\{a_k^{(0)}\right\}_{k=1}^{m}$ and $\left\{b_k^{(0)}\right\}_{k=1}^{m}$, where $p$ is the largest integer that satisfies \hyperref[condition-poly]{Condition \ref*{condition-poly}}. 
	\end{theorem}
	
	\begin{proof}
		
		For each $k \in [m]$ and $i \in [n]$, $\mathbf{w}_k^T \mathbf{x}_i$ is a Gaussian random variable following $\mathcal{N}\left(0, \frac{1}{m}\right)$. Thus, there are $mn$ Gaussian random variables and it follows from \autoref{max_gauss} that with probability at least $\left( 1 - \frac{2}{\left(mn\right)^{3.5}} \right)$ with respect to $\left\{\mathbf{w}_k\right\}_{k=1}^{m}$, $\max_{i \in [n], k \in [m]} \abs{\mathbf{w}_k^T\mathbf{x}_i} \le 3\sqrt{\frac{\log nm}{m}}$. 
		Now, we follow the same proof as \ref{thm:Du-poly} but restricted to the range $\left( -3\sqrt{\frac{\log nm}{m}}, 3\sqrt{\frac{\log nm}{m}}\right)$ to get the desired result.
	\end{proof}   
	
	\begin{corollary}[$\mathsf{Init(fanout)}$ setting]
		If $  d' = \dim\left( \mathrm{Span} \left\{ \mathbf{x}_1 \dots \mathbf{x}_n  \right\}  \right)  \le \mathcal{O}\left(\log^{0.75} {n}\right)$. Assuming $\phi(x)=\tanh(x)$ and weights $\mathbf{w}_k^{(0)} \sim \mathcal{N}\left(0, \frac{1}{m} \mathbf{I}_d\right), a_k^{(0)} \sim \mathcal{N}(0, 1)$ and $b_k^{(0)} \sim \mathcal{N}\left(0, \frac{1}{m}\right) \forall k \in \left[m\right]$, the minimum eigenvalue of the $\mathbf{G}$-matrix satisfies 
		\begin{equation*}
		\lambda_{\min}\left(\mathbf{G}^{\left(0\right)}\right) \le e^{-\Omega(n^{1/2d'})}
		\end{equation*}
		with probability at least $1 - \frac{1}{n^3}$ with respect to the weight vectors $\left\{\mathbf{w}_k^{(0)}\right\}_{k=1}^{m}$, $\left\{a_k^{(0)}\right\}_{k=1}^{m}$ and $\left\{b_k^{(0)}\right\}_{k=1}^{m}$.
	\end{corollary}
	
	\begin{proof}
		It follows from the same proof as \autoref{corr:small_poly-d}.
	\end{proof}
	
	\begin{theorem}[$\mathsf{Init(fanin)}$ setting]
		Assuming $\phi(x)= \tanh(x)$ and weights $\mathbf{w}_k^{(0)} \sim \mathcal{N}\left(0, \frac{1}{d} \mathbf{I}_d\right)$, $a_k^{(0)} \sim \mathcal{N}\left(0, \frac{1}{m}\right) \text{ and } b_{k}^{(0)} \sim \mathcal{N}\left(0, 0.01\right)$ $\forall k \in [m]$, the minimum eigenvalue of the $\mathbf{G}$-matrix is 
		\begin{equation*}
		\lambda_{\min}\left(\mathbf{G}^{(0)}\right) \le \mathcal{O}\left(n^2\left(\frac{4\pi + 6\sqrt{\frac{\log nm}{d}}}{\left(1+\frac{\pi/3}{\sqrt{\frac{\log nm}{d}}}\right)^{p}}\right)^2\right)
		\end{equation*}
		with probability at least $1-\frac{2}{\left(mn\right)^{3.5}}$ with respect to $\left\{\mathbf{w}_k^{(0)}\right\}_{k=1}^{m}$, $\left\{a_k^{(0)}\right\}_{k=1}^{m}$ and $\left\{b_k^{(0)}\right\}_{k=1}^{m}$, where $p$ is the largest integer that satisfies \hyperref[condition-poly]{Condition \ref*{condition-poly}}. 
	\end{theorem}
	
	\begin{proof}
		The proof follows from the proofs of \autoref{thm:Du-poly} and \autoref{thm:Du_exp}, with the region of approximation reduced to $\left(-3\sqrt{\frac{\log{nm}}{d}}, 3\sqrt{\frac{\log{nm}}{d}}\right)$.
	\end{proof}

	\begin{corollary}[$\mathsf{Init(fanin)}$ setting]
		If $  d' = \dim\left( \mathrm{Span} \left\{ \mathbf{x}_1 \dots \mathbf{x}_n  \right\}  \right)  \le \mathcal{O}\left(\log^{0.75} {n}\right)$. Assuming $\phi(x)= \tanh(x)$ and weights $\mathbf{w}_k^{(0)} \sim \mathcal{N}\left(0, \frac{1}{d} \mathbf{I}_d\right)$, $a_k^{(0)} \sim \mathcal{N}\left(0, \frac{1}{m}\right) \text{ and } b_{k}^{(0)} \sim \mathcal{N}\left(0, 0.01\right)$ $\forall k \in [m]$,  the minimum eigenvalue of the $\mathbf{G}$-matrix satisfies 
		\begin{equation*}
		\lambda_{\min}\left(\mathbf{G}^{\left(0\right)}\right) \le e^{-\Omega(\sqrt{d}n^{1/2d'})}
		\end{equation*}
		with probability at least $1 - \frac{1}{n^3}$ with respect to the weight vectors $\left\{\mathbf{w}_k^{(0)}\right\}_{k=1}^{m}$, $\left\{a_k^{(0)}\right\}_{k=1}^{m}$ and $\left\{b_k^{(0)}\right\}_{k=1}^{m}$.
	\end{corollary}

\section{Upper Bound on Lowest Eigenvalue for Swish} 

In this section, we show upper bounds on the eigenvalues for the $G$-matrix for the Swish activation function using techniques largely similar to the techniques uses for the $\tanh$ activation function. 
This is not too surprising since they satisfy the following functional identity 
\begin{equation*}
    \Swish(x) = \frac{x}{2} \left[  \tanh \left(\frac{x}{2} \right) + 1  \right] . 
\end{equation*}
Hence
\begin{equation*}
    \Swish' \left(x\right) = \frac{1}{2}  \left[   1 +    \frac{x}{2}  \tanh'  \left( \frac{x}{2}\right) + \tanh \left( \frac{x}{2}  \right)  \right ].
\end{equation*}

\begin{theorem}
    $\Swish'(t)$ is approximated by a degree $p$ polynomial $g_p(t)$ within error $\epsilon$ in the interval $[-k,k]$ in the $L^{\infty}$ norm:
    \begin{equation*}
        \sup_{t \in [-k,k]}  \abs{\Swish'(t) - g_p(t)}  \leq \epsilon
    \end{equation*}
    where
    \begin{equation*}
        p = \left\lceil  \frac{ \log \left(  \frac{4 \pi k   + 2k^2  }{\pi^2 \epsilon}    \right)   }{\log  \left( 1 +  \pi k ^{-1}  \right) }     \right\rceil . 
    \end{equation*}
\end{theorem}

Similarly, for the $ L^2 $ approximation for $ \Swish $, we proceed using the same technique as for $ \tanh $. 

\begin{theorem}\label{cor:error}
    Let $\phi_2(x)=\Swish^{\prime}(x)$ and let $\phi_2$ be approximated by Hermite polynomials $\left\{He_k\right\}_{k=0}^{\infty}$ of degree up to $ p $ in \autoref{herm-exp-prob}, denoted by
    \begin{equation*}
    h_p (x) = \sum_{k=1}^{p} \bar{c}_k He_k(x). 
    \end{equation*}
    Let
    \begin{equation*}
    E_p (x) = \phi_2(x) - h_p (x).
    \end{equation*}
    Then,
    \begin{equation*}
    \int_{-\infty}^{\infty} E_p (x)^2 \, d\mu(x; 1)   \le
    O \left(\sqrt{p} e^{-\frac{\pi}{4\sqrt{2}}\sqrt{p}}\right). 
    \end{equation*}
\end{theorem}

Using the above theorems and the techniques from the previous sections, we can upper bound the eigenvalues of the $G$-matrix with the $ \Swish $ activation f0unction. 
We summarize this in the following theorems. 


\begin{theorem} 
    Consider the setting of \cite{du2018gradient}. 
    If $  d' = \dim\left( \mathrm{Span} \left\{ \mathbf{x}_1 \dots \mathbf{x}_n  \right\}  \right) \le \mathcal{O}\left(\log^{0.75}{n}\right)$. Assuming $\phi(x)=\Swish(x)$ and weights $\mathbf{w}_k^{(0)} \sim \mathcal{N}(0, \mathbf{I}_d) \forall k \in [m]$, then eigenvalues of the $\mathbf{G}$-matrix satisfy 
    \begin{equation*}
    \lambda_{k}\left(\mathbf{G}^{(0)}\right) \le e^{-\Omega\left(n^{1/2d'} \right)} \quad \forall k \ge \left\lceil\frac{n}{d'}\right\rceil
    \end{equation*}
    with probability at least $1 - \frac{1}{n^{2.5}}$ with respect to the weight vectors $\left\{\mathbf{w}_k^{(0)}\right\}_{k=1}^{m}$ and $\left\{a_k^{(0)}\right\}_{k=1}^{m}$.
\end{theorem}

\section{A discussion on Upper Bound of Lowest Eigenvalue for General Activation Functions} \label{sec:General_Activation}

In this section, we generalize the results of the previous sections upper bounding the eigenvalues of the $G$-matrix to a more general class of activation functions. 
To this end we note that the only property of the $\tanh$ and $ \Swish $ we used was that these functions are well-approximated by polynomials of low degree. 
The approximation theorems used in the previous sections can be stated under fairly general conditions on the activation functions.
 
For the Chebyshev approximation, it can be shown that a function with $ k $ derivatives with bounded norms can be approximated by Chebyshev polynomials of degree $ N $ with error that decays like $ N^{-k} $. 
This shows that for smooth functions the error decays faster than any inverse polynomial. 
Under the assumption of analyticity, this can be further improved to get exponential decay of error. 
We summarize this in the following theorem.
\todo[disable]{We are using the term "Chebyshev approximation" but I am not sure we have defined it. Also below give reference to specific sections or theorems in the book. In the definition of $E_r$ below I think you want to define it to be the "solid" ellipse. I.e., $\abs{w} \leq r$. }

\begin{theorem}[see Section 5.7 in \cite{mason2002chebyshev}]
    Let $ f : \left[ -1 , 1   \right]  \to \br$ be a function with $ k + 1 $ continuous derivatives. Let $ S_N f  $ be the Chebyshev approximation of $ f $ to degree $ N $. Then, we have 
    \begin{equation*}
    \sup_{x \in \left[-1,1\right]} \abs{f(x) - (S_Nf) \left(x\right) } \leq \mathcal{O} \left( N^{-k}  \right). 
    \end{equation*}
    Furthermore, if $ f $ can be extended analytically to the ellipse 
    \begin{equation*}
        E_r = \left\{ z\in \bc  : z = \frac{(w + w^{-1})}{2} \quad \abs{w} \leq  r       \right\}, 
    \end{equation*}
    then 
    \begin{equation*}
        \sup_{x \in \left[-1,1\right]} \abs{f(x) - (S_Nf) \left(x\right) } \leq \mathcal{O} \left( r^{-N}  \right).
    \end{equation*}
\end{theorem}

%

Similarly, for Hermite approximation one can state the decay of the Hermite coefficients in terms of the regularity of the derivatives, expressed in terms of inclusion of the function in certain Sobolev spaces. 
Also, \autoref{thm:hille} indicates that extending the function on to the complex plane gives better convergence properties. See \cite{MR1215939} for further details. 

\todo[disable]{Give a reference for the above statements for Hermite}
 
With these general approximation theorems and techniques from the previous sections, we can extend the upper bound on the eigenvalues on activation functions satisfying sufficient regularity conditions.  
		\newpage

\section{Lower bound on eigenvalues for $\tanh$ when the dimension of the data is not too small}
For the following proof, we assume the data generation process as follows. 
\begin{assumption}\label{assumption3}
The data is mildly generic as in smoothed analysis: e.g., $\mathbf{x}_i$ is obtained by adding small multiple$\left(\sigma = \mathcal{O}\left(\frac{\delta}{2\sqrt{n}}\right)\right)$ of IID standard Gaussian noise \emph{within} the subspace $\mathbf{V}'$ of arbitrary initial samples $\mathbf{x}'_i$, with $\mathbf{V}':=\mathrm{Span} \{\mathbf{x}'_1, \ldots, \mathbf{x}'_n\}$ and renormalizing to 1. Denoting the initial set of samples as $\mathbf{X}'=\left\{\mathbf{x}'_1, \ldots \mathbf{x}'_n\right\}$ and the noise matrix $\mathbf{N}$, that has each entry coming iid from $\mathcal{N}\left(0, \sigma^2\right)$, we have the data matrix $\mathbf{X}$ defined by $\left\{ \mathbf{x}_1, \ldots \mathbf{x}_n\right\}$, where 
\begin{equation*}
    \mathbf{x}_i = \frac{\mathbf{n}_i + \mathbf{x}'_i}{\norm{\mathbf{n}_i + \mathbf{x}'_i}}
\end{equation*}
\end{assumption}
\begin{remark}
Assuming that the initial samples $\mathbf{x}'_i$ are one-normalized, w.h.p. the norm of the noisy vectors $\mathbf{x}'_i + \mathbf{n}_i$ are in the range $\left(1-\delta, 1+\delta\right)$ and thus, renormalization involves division by a constant in the range ($\frac{1}{1 + \delta}$, $\frac{1}{1-\delta}$). Assuming that the initial samples $\mathbf{x}'_i$ are $2\delta$ separated, w.h.p. the separation between $\mathbf{x}_i$ can be shown to be at least $\delta$, thus satisfying \autoref{ass2}. 
\end{remark}

\begin{assumption}
Let $d' = \mathrm{span}\{\mathbf{x}_1, \ldots, \mathbf{x}_n\}$. For simplicity, we assume $d=d'$ i.e. $\mathbf{x}_1, ..., \mathbf{x}_n$ lie in $d'$-dimensional space (otherwise we project them to $d'$ dimensional space using SVD) and $d' \ge 2$.
\end{assumption}

While our result here builds upon the smoothed analysis of \cite{AndersonBGRV14}, the following lemma provides a more modular approach though no new essential technical ingredient. 
\begin{lemma}[cf. Lemma H.1 in \cite{oymak2019towards}]\label{Lemma:Coeff_Omar}
For an activation function $\phi$ and a data matrix $\mathbf{X} \in \mathbb{R}^{d \times n}$ with unit Euclidean norm columns, the minimum eigenvalue of the Gram matrix $\mathbf{G}^{\infty}$, satisfies the following inequality
\begin{equation*}
    \lambda_{\min}\left(\mathbf{G}^{\infty}\right) \ge \bar{c}_{r}^2\left(\phi'\right) \lambda_{\min}\left(\left(\mathbf{X}^{T} \mathbf{X}\right)^{\odot(r+1)}\right), \quad \forall r \ge 0
\end{equation*}
where $\left(\mathbf{X}^{T} \mathbf{X}\right)^{\odot(r+1)}$ is given by $\left(\mathbf{X}^{* r}\right)^T \mathbf{X}^{* r}$, $\mathbf{X}^{* r} \in \mathbb{R}^{n \times d^{r}}$ denotes the Khatri-Rao product of matrix $\mathbf{X}$ and $\bar{c}_r(\phi')$ denotes the $r$-th coefficient in the probabilists' Hermite expansion of $\phi'$.
\end{lemma}
\begin{proof}
	Each element of $\mathbf{G}^{\infty}$ can be expressed in the following manner.
	\begin{align*}
		g^{\infty}_{ij} &= E_{\mathbf{w} \sim \mathcal{N}\left(0, 1 \right), \tilde{a} \sim \mathcal{N}(0, 1)} \tilde{a}^2 \phi'\left(\mathbf{w}^T\mathbf{x}_i\right) \phi'\left(\mathbf{w}^T \mathbf{x}_j\right) \mathbf{x}_i^T\mathbf{x}_j = \sum_{a=0}^{\infty} \bar{c}_{a}^{2} \left(\phi'\right) \left(\mathbf{x}_i^T\mathbf{x}_j\right)^{a+1} 
	\end{align*}
	where we use a) unit variance of $\tilde{a}$ and independence of $\tilde{a}$ and $\mathbf{w}$ and b) the fact that $\mathbf{w}^T\mathbf{x}_i$ and $\mathbf{w}^T\mathbf{x}_j$ are $\mathbf{x}_i^T\mathbf{x}_j$ correlated for a normally distributed vector $\mathbf{w}$ and hence, use \autoref{lemma:hermite_phi}. Thus, 
	\begin{equation*}
			G^{\infty} = \sum_{a=0}^{\infty} \bar{c}_{a}^{2} \left(\phi'\right) \left(\mathbf{X}^T \mathbf{X}\right)^{\odot(a+1)}
	\end{equation*}
	Using Weyl's inequality (\autoref{fact:weyl_ineq}) for the sum of PSD matrices  $\left\{\left(\mathbf{X}^T \mathbf{X}\right)^{\odot(a)}\right\}_{a=1}^{\infty}$, we get
	\begin{equation*}
		\mathbf{G}^{\infty} \succeq \bar{c}_{r}^{2} \left(\phi'\right) \left(\mathbf{X}^T \mathbf{X}\right)^{\odot(r+1)}, \forall r \ge 0
	\end{equation*} 
	from which, the assertion follows. 
\end{proof}

\begin{lemma}\label{Lemma:lowerbound_carbery}
Let $d' = \mathrm{dim}\,\mathrm{span}\{\mathbf{x}_1, \ldots, \mathbf{x}_n\}$. Denote by $p$ be an integer that satisfies 
\begin{equation}\label{Condtion:Fullrank}
    {d'\choose p} \ge n . 
\end{equation}
Then for any $\kappa \in (0,1)$, with probability at least $1 - \kappa$ with respect to the noise matrix $\mathbf{N}$,
\begin{equation*}
    \sigma_{n} \left(\mathbf{X}^{* p}\right) \ge \Omega \left( \frac{1}{\sqrt{n}}\left(\frac{\sigma \kappa }{ n p }\right)^p\right). 
\end{equation*}
\end{lemma}

\begin{proof}
$\mathbf{X}^{* p}$ has $d'^p$ rows and $n$ columns. However, the number of distinct rows is given by ${d' + p - 1 \choose p}$. This follows by counting the number of distinct terms in the polynomial $\left(\sum_{k=1}^{d'} v_k\right)^p$, where $\left\{v_k\right\}_{k=1}^{d'}$ is a set of $d'$ variables. Since, we assume that ${d'\choose p}$, which is lesser than this quantity, is greater than $n$, the number of distinct rows is greater than the number of columns for the matrix $\mathbf{X}^{* p}$. 
Let $\hat{\mathbf{X}}^{*p}$ denotes a  $n \times n$ sized square block of $\mathbf{X}^{*p}$, that contains  any random subset of size $n$ from the set of distinct rows of $\mathbf{X}^{*p}$ as rows, such that each row represents elements of Khatri-Rao product of the form $\prod_{j \in [d']} x_j^{b_j}, \mbox{ } 0 \le b_j \le 1\mbox{ } \forall j \in [d']$, for a vector $\mathbf{x} \in \mathbb{R}^{d'}$. We can see that $\lambda_{n}(\hat{\mathbf{X}}^{*p}) \le \sigma_{n}(\mathbf{X}^{*p})$. Hence, we will focus on the minimum eigenvalue  $\lambda_{n}(\hat{\mathbf{X}}^{*p})$.

Fix $k \in [n]$ and let $\mathbf{u}$ be the vector orthogonal to the subspace spanned by the columns of $\hat{\mathbf{X}}^{*p}$, except the $k^{th}$ column. Vector $\mathbf{u}$ is well-defined with probability 1. Then the distance between $\hat{\mathbf{x}}^{*p}_k$ and the span of the rest of the columns, denoted  $\mathrm{dist}(\hat{\mathbf{x}}^{*p}_k, \hat{\mathbf{X}}^{*p}_{-k})$, is given by 
\begin{align}\label{poly_carbery}
    \mathbf{u}^T \hat{\mathbf{x}}^{*p}_k &= \sum_{s \in [n]} u_{s} g_s\left(\left\{c x'_{kj} + c n_{kj}\right\}_{j=1}^{d'}\right) \\
    & =: P\left(\left\{c n_{kj}\right\}_{j=1}^{d'}\right),
\end{align}
where $g_s$ denotes a degree-$p$ polynomial and is given by
\begin{equation*}
    g_s\left(\left\{c x'_{kj} + c n_{kj}\right\}_{j=1}^{d'}\right) = \prod_{j \in [d']} \left(c x'_{kj} + c n_{kj}\right)^{b_j^s}, \quad 0 \le b_j^s \le 1 \mbox{ } \forall j \in [d'], \sum_{j \in [d']} b_j^s = p.
\end{equation*}
$c$ denotes a constant in the range $(\frac{1}{1 + \delta}, \frac{1}{1 - \delta})$, which is the normalization factor used in \autoref{assumption3}.

Hence, \autoref{poly_carbery} is a degree $p$ polynomial in variables $n_{kj}$. We will apply the anticoncentration inequality of Carbery-Wright to show that the distance between any column and the span of the rest of the columns is large with high probability. 
The variance of the polynomial is given by 
\begin{equation*}
    \mathrm{Var}\left( P\left(\left\{n_{kj}\right\}_{j=1}^{d'}\right)\right) \ge  \sum_{s \in [n]} \abs{u_s}^2 \prod_{j \in [d']} \mathbb{E}\left(cx'_{kj} + cn_{kj}\right)^{2b_j^{s}} \ge \left(\frac{\sigma}{1+\delta}\right)^{2p} \ge \left(\frac{\sigma}{2}\right)^{2p} 
\end{equation*}
where we use the fact that $\norm{u}$ = 1 and $n_{kj}$ are Gaussian variables of variance $\frac{\delta^2}{4n}$.
Using a minor adjustment of \autoref{CarberyWright}, which takes into consideration the fact that our gaussian variables are of variance $\frac{\delta^2}{4n}$ and the variance of the polynomial is not 1, we have 
\begin{equation*}
     \operatorname{Pr} \left\{ \abs{ P\left(\left\{n_{kj}\right\}_{j=1}^{d'}\right)} \leq \epsilon \right\}  \leq C p \frac{\epsilon^{1/p}}{\frac{\sigma}{2}}, \quad C > 0 \text{ is a constant. }
\end{equation*}
Using a union bound over the choice of $k$, we get 
\begin{equation*}
    \operatorname{Pr} \left\{\mathrm{dist}(\hat{\mathbf{x}}^{*p}_k, \hat{\mathbf{X}}^{*p}_{-k}) \le \epsilon, \quad \forall k \in [n]\right\} \le C p n \frac{\epsilon^{1/p}}{\frac{\sigma}{2}}
\end{equation*}
Using $\epsilon = \left(\frac{\sigma \kappa}{ 2C p n }\right)^p$, we get 
\begin{equation*}
    \sigma_{n} \left(\hat{\mathbf{X}}^{*p}\right) = \frac{1}{\sqrt{n}}\min_{k \in [n]} \mathrm{dist}(\hat{\mathbf{x}}^{*p}_k, \hat{\mathbf{X}}^{*p}_{-k}) \ge \epsilon/\sqrt{n}.
\end{equation*}
with probability at least $1 - \kappa$. We use \autoref{fact:rudelson} in the above inequality.
\end{proof}

\begin{theorem}\label{smoothed:polynomial}
Let  $\phi(x)$ be a constant degree $p$ polynomial, with leading coefficient 1, and $d' = \mathrm{dim} \,\mathrm{span}\{\mathbf{x}_1, \ldots, \mathbf{x}_n\} \ge \Omega(n^{1/p})$. Then for any $\kappa \in (0,1)$ we have
\begin{equation*}
    \lambda_{\min}\left(\mathbf{G}^{(0)}\right) \ge \Omega\left( \frac{1}{n} \left(\frac{\kappa \sigma}{n p}\right)^{2p} \right)
\end{equation*}
with probability at least $1 - \kappa$ w.r.t. the noise matrix $\mathbf{N}$ and $\{\mathbf{w}_k^{(0)}\}_{k=1}^{m}$, provided 
\begin{equation*}
m \ge \Omega\left(\frac{ p^{4p} n^{4p+4} \log \left(n/\kappa\right) \log^{2p+3}{ m} }{\sigma^{4p}\kappa^{4p}}   \right).
\end{equation*}
\end{theorem}

\begin{proof}
For a degree-$p$ polynomial with leading coefficient 1, the $\left(p - 1\right)$-th coefficient in Hermite expansion of $\phi'$ is given by 1. Also, note that \autoref{Condtion:Fullrank} is satisfied by $p$, given that $d' \ge \Omega\left(n^{\frac{1}{p}}\right)$ for a constant $p$. Thus,  using \autoref{Lemma:Coeff_Omar}, \autoref{Lemma:lowerbound_carbery} to find minimum eigenvalue of $\mathbf{G}^{\infty}$ and then applying Hoeffding's inequality (\autoref{hoeffding}) to bound the deviation of minimum eigenvalue of $\mathbf{G}^{(0)}$ from $\mathbf{G}^{\infty}$, we get the desired bound. 
\end{proof}

\begin{theorem}\label{smoothed:tanh}
Let  the activation function $\phi$ be $\tanh$ and $d' = \mathrm{dim} \,\mathrm{span}\{\mathbf{x}_1, ..., \mathbf{x}_n\} \ge \Omega\left(\log{n} \right)$. Then
for any $\kappa \in (0,1)$ we have  
\begin{equation*}
    \lambda_{\min}\left(\mathbf{G}^{(0)}\right) \ge \Omega\left( \frac{\bar{c}^2_{p-1} \left(\tanh'\right)}{n} \left(\frac{\kappa \sigma}{n p}\right)^{2p} \right)
\end{equation*}
with probability at least $1 - \kappa$ w.r.t. the noise matrix $\mathbf{N}$ and $\{\mathbf{w}_k^{(0)}\}_{k=1}^{m}$, provided 
\begin{equation*}
m \ge \Omega\left(\frac{ p^{4p} n^{4p+4} \log \left(n/\kappa\right) \log^{2}{ m} }{\sigma^{4p}\kappa^{4p}}   \right),
\end{equation*}where $p$ denotes the smallest  odd integer satisfying
\begin{equation*}
	{d'\choose p} \ge n,
\end{equation*} and $ \bar{c}_p \left(\tanh'\right)$ denotes the $p$-th coefficient in the probabilists' Hermite expansion of $\tanh'$.
\end{theorem}

\begin{proof}
	$p$ is chosen such that \autoref{Condtion:Fullrank} is satisfied. 
	We use \autoref{Lemma:Coeff_Omar}, \autoref{Lemma:lowerbound_carbery} to find minimum eigenvalue of $\mathbf{G}^{\infty}$ and then applying Hoeffding's inequality (\autoref{hoeffding}) to bound the deviation of minimum eigenvalue of $\mathbf{G}^{(0)}$ from $\mathbf{G}^{\infty}$, we get the desired bound. 
\end{proof}
	
	Now, we specify the behavior of the probabilists' hermite expansion coefficients $c_{p-1}\left(\phi'\right)$ for $\phi = \tanh$. 
Let $\beta$, the exponent of real axis convergence of $\tanh'$, be the least upper bound on $\gamma$ for which 
\begin{equation*}
\tanh'(x) = \mathcal{O}\left(e^{-\nu \, \abs{x}^{\gamma}}\right), \quad x \in \mathbb{R},
\end{equation*}
for some constant $\nu > 0$ as $\abs{x} \to \infty$. We have $\beta=1$, as $\tanh'(x) \sim e^{-4 |x|}$ for large $\abs{x}$.

Hence, using Eq. 5.15 in \cite{boyd1984asymptotic} for the coefficients $\bar{c}_k$ in the probabilists' Hermite expansion of $\tanh'$ we have
\begin{equation} \label{eqn:Boyd}
\bar{c}_k = \frac{2}{\left(2k+1\right)^{1/4}} \Theta\left(  e^{-\frac{\pi}{4} (2k + 1)^{\frac{1}{2}}}\right), \quad \text{ as } k \to \infty.
\end{equation}
We remark that  \cite{boyd1984asymptotic} uses physicists' Hermite expansion. Following similar technique as in \autoref{cor:coeff-prob} and \autoref{thm:upper-bound}, we can get the exact similar form of probabilists' Hermite expansion coefficients $\bar{c}_k\left(\tanh'\right)$.


Thus for $p = \mathcal{O}(\log n)$, we have $\bar{c}_p = \tilde{\Omega}\left(e^{-c'\sqrt{\log{n}}}\right)$.

\begin{corollary}\label{smoothed:tanh:cor}
	Let  $\phi(x)$ be the activation function $\mathsf{tanh}$ and $d' = \mathrm{span}\{\mathbf{x}_1, ..., \mathbf{x}_n\} = \Theta\left(\log{n} \right)$. Then,  
	\begin{equation*}
	\lambda_{\min}\left(\mathbf{G}^{(0)}\right) \ge \left(\frac{\sigma}{n}\right)^{\mathcal{O}\left(\log{n}\right)}
	\end{equation*}
	with probability at least $1 - 1/\mathsf{poly}(n) - e^{-\Omega\left(\frac{m}{\log^2 m} \left(\frac{\sigma}{n}\right)^{\mathcal{O}\left(\log n\right)}\right)}$ w.r.t. the noise matrix $\mathbf{N}$ and $\{\mathbf{w}_k^{(0)}\}_{k=1}^{m}$.
\end{corollary}

\begin{corollary}\label{smoothed:tanh:cor:poly}
	Let  the activation be $\mathsf{tanh}$ and $d' = \mathrm{dim}\,\mathrm{span}\{\mathbf{x}_1, ..., \mathbf{x}_n\} = \Theta\left(n^{\gamma} \right)$, for a constant $\gamma \ge \Omega\left(\frac{\log \log n}{\log n}\right)$.  Then
	for any $\kappa \in (0,1)$ we have  
	\begin{equation*}
	\lambda_{\min}\left(\mathbf{G}^{(0)}\right) \ge \Omega\left( \frac{e^{-c'\sqrt{\log{n}}} }{n} \left(\frac{\kappa \sigma}{n p}\right)^{2p} \right)
	\end{equation*}
	with probability at least $1 - \kappa$ w.r.t. the noise matrix $\mathbf{N}$ and $\{\mathbf{w}_k^{(0)}\}_{k=1}^{m}$, provided 
	\begin{equation*}
	m \ge \Omega\left(\frac{ p^{4p} n^{4p+4} \log \left(n/\kappa\right) \log^{2}{ m} }{\sigma^{4p}\kappa^{4p}}   \right),
	\end{equation*}where $p$ is the smallest ''odd'' integer satisfying 
	\begin{equation*}
	{d'\choose p} \ge n,
	\end{equation*}
    and $c'$ is a constant.
	$p$ can be shown to lie in the range,
	\begin{equation*}
	\frac{1}{\gamma} \le p \le \frac{2}{\gamma - \frac{2}{\log n}}.
	\end{equation*}

\end{corollary}

		\newpage
	\section{Depth helps for $\tanh$}\label{sec:depth_helps}
	Let the neural network under consideration be
	\begin{equation*}
	F\left(\mathbf{x}; \mathbf{a}, \left\{\mathbf{W}^{(l)}\right\}_{l=1}^{L}\right) = \frac{c_{\phi}}{\sqrt{m}}\sum_{k=1}^{m} a_k \phi\left(\left(\mathbf{w}^{(L)}_k\right)^T \mathbf{x}^{(L-1)}\right)
	\end{equation*}
	where  $\mathbf{x}^{(l)} \in \mathbb{R}^{m} \quad \forall l \ge 1$ and $\mathbf{x}^{(l)} \in \mathbb{R}^{d} \mbox{ for } l = 0$, is defined recursively by its components as follows.  
	\begin{align*}
	x^{(l)}_k &= \frac{c_{\phi}}{\sqrt{m}}\phi\left( \left(\mathbf{w}^{(l)}_{k}\right)^T \mathbf{x}^{(l-1)}\right) \quad \forall k \in [m], \forall l \ge 1 \\
	x^{(0)}_k &= x_k \quad \forall k \in [d].
	\end{align*}
	$c_{\phi} = \left(\mathbb{E}_{z \sim \mathcal{N}\left(0, 1\right)} \phi(z)^2\right)^{-\frac{1}{2}}$, with $\phi$ following the following three properties.
	\begin{itemize}
		\item  $\phi(0) = 0$.
		\item  $\phi$ is $\alpha$-Lipschitz. 
		\item  $\Ex_{z \sim \mathcal{N}\left(0, 1\right)} \phi(z) = 0$ 
	\end{itemize}
	
	The weight matrices and the output weight vector are given by $\left\{\mathbf{W}^{(l)}\right\}_{i=1}^{L}$ and $\mathbf{a}$ respectively, where $\mathbf{a} \in \mathbb{ R}^{m}$, $\mathbf{W}^{(l)} \in \mathbb{R}^{m \times m}$ for $l \ge 2$ and $\mathbf{W}^{(l)} \in \mathbb{R}^{m \times d}$ for $l = 1$. 
	
	Now, we define the Gram matrix $\mathbf{G}^{(0)}$ as follows (cf. Eq. 13 in \cite{Du_Lee_2018GradientDF}).
	\begin{equation*}
	g^{(0)}_{ij} = \frac{1}{m} \sum_{k \in [m]} a_k^2 \phi'\left(\left(\mathbf{w}_k^{(L)}\right)^T \mathbf{x}_i^{(L-1)}\right) \phi'\left( \left(\mathbf{w}_k^{(L)}\right)^T \mathbf{x}_j^{(L-1)}\right) \langle \mathbf{x}_i^{(L-1)}, \mathbf{x}_j^{(L-1)}\rangle
	\end{equation*}
	with its counterpart $\mathbf{G}^{\infty}$, when $m \to \infty$, given by
	\begin{equation*}
	g^{\infty}_{ij} = \mathbb{E}_{\mathbf{w} \sim \mathcal{N}\left(0, \mathbf{I}\right), a \sim \mathcal{N}\left(0, 1\right) }  a^2 \phi'\left(\mathbf{w}^T \mathbf{x}_i^{(L-1)}\right) \phi'\left(\mathbf{w}^T \mathbf{x}_j^{(L-1)}\right)
	\langle \mathbf{x}_i^{(L-1)}, \mathbf{x}_j^{(L-1)}\rangle
	\end{equation*}

	
	\begin{lemma}\label{thm:norm_bound}
		For a small constant $\epsilon>0$, if $m \ge \Omega\left(\max\left(\frac{2^L \log^{2} m }{\epsilon^2} \log nL, \left(nL\right)^{2/7}\right)\right)$, then with probability at least $1 - e^{-\Omega\left(m \epsilon^2/2^L \log^2 m\right)} - \frac{nL}{m^{3.5}}$ we have
		\begin{equation}\label{thm:norm_of_act}
		\norm{\mathbf{x}^{(l)}_{i}} \in \left(1-\epsilon, 1+\epsilon\right) \quad \forall i \in [n], l \in \left\{0, \ldots, L-1\right\}.
		\end{equation}
	\end{lemma}
	
	\begin{proof}
		We will use induction on $l$ to show that for any given $i$ with appropraite probability we have
		\begin{equation*}
		\norm{\mathbf{x}^{(l)}_{i}} \in \left(1- \left(4 c^2_{\phi} \alpha^2\right)^{l-L}\epsilon, 1 + \left(4 c^2_{\phi} \alpha^2\right)^{l-L}\epsilon \right) \quad \forall l \in [L].
		\end{equation*}
		We will apply union bound over the choice of $i$ to derive the result for all $i \in [n]$. 
		The result holds true for $l=0$ by \autoref{ass1}. Let's assume that the result holds true for $l=t$. For a randomly picked vector $\mathbf{w} \sim \mathcal{N}\left(0, \mathbf{I}\right)$, $\mathbf{w}^{T} \mathbf{x}^{(t)}_{i}$ follows a normal distribution with mean 0 and standard deviation $\norm{\mathbf{x}^{(t)}_{i}} \in \left(1-\epsilon_t, 1+\epsilon_t\right)$, where $\epsilon_t = \left(4 c^2_{\phi} \alpha^2\right)^{t-L}\epsilon $. Denote unit normalized form of $\mathbf{x}_i^{(t)}$ as $\overline{\mathbf{x}}_i^{(t)}$. Since, there are $m$ random Gaussian vectors in the matrix $\mathbf{W}^{(t+1)}$, leading to formation of $m$ Gaussians along the dimension of $\mathbf{W}^{(t+1) T} \mathbf{x}_i^{(t)}$, we can apply \autoref{max_gauss} to confine each dimension of  $\mathbf{W}^{(t+1) T} \mathbf{x}_i^{(t)}$ to the range $\left(-3\sqrt{\log m}\norm{\mathbf{x}^{(t)}_{i}}, 3\sqrt{\log m}\norm{\mathbf{x}^{(t)}_{i}}\right)$ with probability at least $1 - \frac{1}{m^{3.5}}$. Assuming that this holds true, we can claim the following:
		First,
		\begin{equation}\label{eq:norm_bound_step2}
		\Pr_{\mathbf{W}^{(t+1)}} \left(\abs{\frac{1}{m} \sum_{k=1}^{m} c_{\phi}^{2} \phi\left(\mathbf{w}_{k}^{(t+1) T} \mathbf{x}_i^{(t)}\right)^2 - E_{\mathbf{w} \sim \mathcal{N}\left(0, \mathbf{I}\right)}  c_{\phi}^{2} \phi\left(\mathbf{w}^{T} \mathbf{x}_i^{(t)}\right)^2 } \ge \alpha^2 c_{\phi}^2 \epsilon_t  \right) \le 2  e^{-\Omega\left(\frac{m \epsilon_t^2}{ \log^2{m}} \right)} 
		\end{equation}
		where we use Hoeffding's inequality (\autoref{hoeffding}) and bound on $\mathbf{w}_{k}^{(t) T} \mathbf{x}_i^{(t)}$ and  the $\alpha$-Lipschitzness of the activation $\phi$ to put a bound on $\abs{\phi(\mathbf{w}_{k}^{(t+1) T} \mathbf{x}_i^{(t)})}$ to be used in Hoeffding's inequality.  
		
		Second, using Taylor expansion of $\phi$, the deviation of $E_{\mathbf{w} \sim \mathcal{N}\left(0, \mathbf{I}\right)} c_{\phi}^{2} \phi(\mathbf{w}^{T} \mathbf{x}_i^{(t)})^2 $ from $\Ex_{z \sim \mathcal{N}\left(0, 1\right)}  c_{\phi}^{2} \phi(z)^2 $ can be bounded in the following manner.
		\begin{equation}\label{eq:norm_bound_step1}
		\Ex_{\mathbf{w} \sim \mathcal{N}\left(0, \mathbf{I}\right)} c_{\phi}^{2}\, \phi(\mathbf{w}^{T} \mathbf{x}_i^{(t)})^2  = \Ex_{\mathbf{w} \sim \mathcal{N}\left(0, \mathbf{I}\right)} c_{\phi}^{2}\, \phi\left(\mathbf{w}^{T} \overline{\mathbf{x}}_i^{(t)}\right)^2 + \epsilon' = \Ex_{z \sim \mathcal{N}\left(0, 1\right)}  c_{\phi}^{2}\, \phi(z)^2 + \epsilon' 
		\end{equation}
		where $\epsilon' \in \left(-2 c_{\phi}^2 \alpha^2 \epsilon_t, 2 c_{\phi}^2 \alpha^2 \epsilon_t\right)$. This follows from the following set of equations.
		
		\begin{align} \label{step:error_with_taylor}
		&\phi(\mathbf{w}^{T} \mathbf{x}_i^{(t)})^2 = \phi\left(\mathbf{w}^{T} \overline{\mathbf{x}}_i^{(t)}\right)^2  \nonumber \\&+ 2\int_{s=0}^{1} \phi\left((1-s)\mathbf{w}^{T} \overline{\mathbf{x}}_i^{(t)} + s\mathbf{w}^{T} \mathbf{x}_i^{(t)}\right) \phi'\left((1-s)\mathbf{w}^{T} \overline{\mathbf{x}}_i^{(t)} + s\mathbf{w}^{T} \mathbf{x}_i^{(t)}\right) \left(\mathbf{w}^{T} \mathbf{x}_i^{(t)} - \mathbf{w}^{T} \overline{\mathbf{x}}_i^{(t)} \right)ds \nonumber\\
		&\le \phi\left(\mathbf{w}^{T} \overline{\mathbf{x}}_i^{(t)}\right)^2  + 2 \epsilon_t (1+\epsilon_t) \alpha^2 
		\left(\mathbf{w}^{T} \mathbf{x}_i^{(t)}\right)^2. 
		\end{align}
		In the inequality above we used the facts that $\phi$ is $\alpha$-Lipschitz, and since $\phi(0)=0$ by assumption, $\phi(z) \leq \alpha \abs{z}$. 
		
		This gives
		\begin{align*}
		\Ex_{\mathbf{w} \sim \mathcal{N}\left(0, \mathbf{I}\right)} c_{\phi}^{2} \, \phi(\mathbf{w}^{T} \mathbf{x}_i^{(t)})^2  
		&\le \Ex_{\mathbf{w} \sim \mathcal{N}\left(0, \mathbf{I}\right)} c_{\phi}^{2} \, \phi\left(\mathbf{w}^{T} \overline{\mathbf{x}}_i^{(t)}\right)^2  + 2 \epsilon_t(1+\epsilon_t) \alpha^2 c_{\phi}^{2} \Ex_{\mathbf{w} \sim \mathcal{N}\left(0, \mathbf{I}\right)} \left(\mathbf{w}^{T} \mathbf{x}_i^{(t)}\right)^2 \nonumber \\
		&\le \Ex_{\mathbf{w} \sim \mathcal{N}\left(0, \mathbf{I}\right)} c_{\phi}^{2} \phi\left(\mathbf{w}^{T} \overline{\mathbf{x}}_i^{(t)}\right)^2  + 2 \epsilon_t(1+\epsilon_t) \alpha^2 c_{\phi}^{2}.
		\end{align*}
		
		Third, we use the definition of $c_{\phi}$ and the $\alpha$-Lipschitzness of $\phi$, to bound the error due to restricting the maximum magnitude of $\mathbf{w}^T\mathbf{x}_i^{(t)} $ to $3\sqrt{\log m} \norm{\mathbf{x}_i^{(t)}}$, to get
		\begin{align}\label{eq:bound_error}
		\Ex_{\mathbf{w} \sim \mathcal{N}\left(0, \mathbf{I}\right): \abs{\mathbf{w}^T\overline{\mathbf{x}}_i^{(t)}} \le 3\sqrt{\log m}}  c_{\phi}^{2}\phi\left(\mathbf{w}^{T} \overline{\mathbf{x}}_i^{(t)}\right)^2 &= E_{\mathbf{w} \sim \mathcal{N}\left(0, \mathbf{I}\right)}  c_{\phi}^{2}\phi\left(\mathbf{w}^{T} \overline{\mathbf{x}}_i^{(t)}\right)^2 + \mathcal{O}\left(\sqrt{\frac{ \log m}{m}}\right) \nonumber\\&= 1 + \mathcal{O}\left(\sqrt{\frac{ \log m}{m}}\right)
		\end{align}
		This can be shown by the following equation.
		\begin{align}\label{step:bound_deviation}
		\abs{E_{\mathbf{w} \sim \mathcal{N}\left(0, \mathbf{I}\right): \abs{\mathbf{w}^T\overline{\mathbf{x}}_i^{(t)}} \le 3\sqrt{\log m}}  c_{\phi}^{2}\,\phi\left(\mathbf{w}^{T} \overline{\mathbf{x}}_i^{(t)}\right)^2 - E_{\mathbf{w} \sim \mathcal{N}\left(0, \mathbf{I}\right)}  c_{\phi}^{2}\,\phi\left(\mathbf{w}^{T} \overline{\mathbf{x}}_i^{(t)}\right)^2} &\le 2 \frac{1}{\sqrt{2\pi}}\int_{3\sqrt{\log m}}^{\infty} \phi(x)^2 e^{-x^2/2}dx \nonumber\\ &\le \alpha^2  2 \frac{1}{\sqrt{2\pi}}\int_{3\sqrt{\log m}}^{\infty} x^2 e^{-x^2/2}dx  \nonumber\\
		&\le \alpha^2 \sqrt{\frac{\log m}{m}}
		\end{align}

		Combining \autoref{eq:norm_bound_step2}, \autoref{eq:norm_bound_step1} and \autoref{eq:bound_error}, we get that with probability at least $1 - 2  e^{-\Omega\left(\frac{m \epsilon_t^2}{ \log^2{m}} \right)}  - m^{-7/2}$,
		\begin{equation*}
		\norm{\mathbf{x}^{(t+1)}_i} = \left(\frac{1}{m} \sum_{k=1}^{m} c_{\phi}^{2} \phi(\mathbf{w}_{k}^{(t) T} \mathbf{x}_i)^2 \right)^{1/2} \in \left(1-4 \alpha^2 c_{\phi}^2 \epsilon_t, 1+4 \alpha^2 c_{\phi}^2 \epsilon_t\right). 
		\end{equation*}
		We use the union bound for all the induction steps and examples to get the final desired bounds.
	\end{proof}

	\begin{lemma}\label{lemma_rhodecrease}
		If for a pair $i, j \in [n]$, $\overline{\mathbf{x}}_i^{(l-1)  T} \overline{\mathbf{x}}_j^{(l-1)} = \rho$ s.t. $\abs{\rho} \le 1 - \delta$ and if \autoref{thm:norm_of_act} holds, then for all small  $\epsilon > 0$, 
		\begin{equation*}
		\abs{\overline{\mathbf{x}}_i^{(l)  T} \overline{\mathbf{x}}_j^{(l)}} \le 3 \left(1 + 2 \epsilon \right) \epsilon \alpha^2 c_{\phi}^2 + \left(1 + 2 \epsilon\right)\max\left(\frac{1+c}{2}\left(\frac{\delta}{n^2}\right) + \frac{1-c}{2}\left(\frac{\delta}{n^2}\right)^2, \frac{1+c}{2}\abs{\rho} + \frac{1-c}{2}\abs{\rho}^2\right)
		\end{equation*}
		with probability at least $1 - m^{-7/2} - e^{-\Omega\left(m \delta^2/\alpha^4 n^4 \log^2 m \right)}$ w.r.t. initialization, where $c$ denotes the ratio $\frac{\bar{c}_1^2\left(\phi\right)}{\sum_{a=1}^{\infty}\bar{c}^2_a(\phi)}$. 
	\end{lemma}
	\begin{proof}
		From \autoref{thm:norm_of_act}, for a small constant $\epsilon$, $\norm{\mathbf{x}_i^{(l-1)}} \in \left(1-\epsilon, 1+\epsilon\right), \forall i \in [n]$ with high probability. For a vector $\mathbf{x}$ set $\overline{\mathbf{x}} := \mathbf{x}/\norm{\mathbf{x}}$; thus we will use $\overline{\mathbf{x}}_i^{(l-1)}$ for $\mathbf{x}_i^{(l-1)}/\norm{\mathbf{x}_i^{(l-1)}}$.
		We use \autoref{max_gauss} to restrict the maximum magnitude of the $2m$ Gaussians $\mathbf{w}_k^T \mathbf{x}_i^{(l-1)}$ and $\mathbf{w}_k^T \mathbf{x}_j^{(l-1)}$ for $k \in [m]$ to $3\sqrt{\log m} \norm{\mathbf{x}_i^{(l-1)}}$ and $3 \sqrt{\log m} \norm{\mathbf{x}_j^{(l-1)}}$ respectively. Assuming that this condition holds, we have
		\begin{align}
		\mathbf{x}_i^{(l)  T} \mathbf{x}_j^{(l)} &= c_{\phi}^2 \frac{1}{m} \sum_{k \in [m]}  \phi\left(\mathbf{w}_k^{(l)T} \mathbf{x}_i^{(l-1)}\right) \phi\left(\mathbf{w}_k^{(l)T} \mathbf{x}_j^{(l-1)}\right) \nonumber \\
		&=  c_{\phi}^2 \frac{1}{m} \sum_{k \in [m]}  \phi\left(\mathbf{w}_k^{(l)T} \overline{\mathbf{x}}_i^{(l-1)}\right) \phi\left(\mathbf{w}_k^{(l)T} \overline{\mathbf{x}}_j^{(l-1)}\right) + \epsilon' \label{step:tampering} \\
		&= c_{\phi}^2  E_{\mathbf{w} \sim \mathcal{N}\left(0, \mathbf{I}\right)} \phi\left(\mathbf{w}_k^{(l)T} \overline{\mathbf{x}}_i^{(l-1)}\right) \phi\left(\mathbf{w}_k^{(l)T} \overline{\mathbf{x}}_j^{(l-1)}\right)  + \epsilon' + \epsilon''  \label{eq:tmp_conc_bound}\\
		&= c_{\phi}^2  \sum_{a=0}^{\infty} \bar{c}_{a}^2\left(\phi\right) \rho^a +  \epsilon' + \epsilon'' + \epsilon''' \label{eq:apply_expect_hermite}.
		\end{align}
		We get \autoref{step:tampering} along the lines of \autoref{step:error_with_taylor}: we use 1-st order Taylor expansion of $\phi\left(\mathbf{w}_k^T \mathbf{x}_i^{(l-1)}\right)$ around $\mathbf{w}_k^T  \overline{\mathbf{x}}_i^{(l-1)}$ and $\phi\left(\mathbf{w}_k^T \mathbf{x}_j^{(l-1)}\right)$ around $\mathbf{w}_k^T  \overline{\mathbf{x}}_j^{(l-1)}$, $\alpha$-Lipschitzness of $\phi$ and upper and lower bounds of $\norm{ \mathbf{x}_i^{(l-1)}}$ and $\norm{ \mathbf{x}_j^{(l-1)}}$ from \autoref{thm:norm_bound} to get 
		\begin{align} \label{step:error_with_taylor_part1}
		&\phi(\mathbf{w}_k^{T} \mathbf{x}_i^{(l-1)}) \phi(\mathbf{w}_k^{T} \mathbf{x}_j^{(l-1)})  = \phi\left(\mathbf{w}_k^{T} \overline{\mathbf{x}}_i^{(l-1)}\right) \phi(\mathbf{w}_k^{T} \overline{\mathbf{x}}_j^{(l-1)})  \nonumber \\&+ \int_{s=0}^{1}  \phi(\mathbf{w}_k^{T} \overline{\mathbf{x}}_j^{(l-1)}) \phi'\left((1-s)\mathbf{w}_k^{T} \overline{\mathbf{x}}_i^{(l-1)} + s\mathbf{w}_k^{T} \mathbf{x}_i^{(l-1)}\right) \left(\mathbf{w}_k^{T} \mathbf{x}_i^{(l-1)} - \mathbf{w}_k^{T} \overline{\mathbf{x}}_i^{(l-1)} \right)ds \nonumber\\
		&+ \int_{t=0}^{1}  \phi(\mathbf{w}_k^{T} \overline{\mathbf{x}}_i^{(l-1)}) \phi'\left((1-t)\mathbf{w}_k^{T} \overline{\mathbf{x}}_j^{(l-1)} + t\mathbf{w}_k^{T} \mathbf{x}_j^{(l-1)}\right) \left(\mathbf{w}_k^{T} \mathbf{x}_j^{(l-1)} - \mathbf{w}_k^{T} \overline{\mathbf{x}}_j^{(l-1)} \right)dt \nonumber \\
		&+ \int_{s=0}^{1} \int_{t=0}^{1}  \phi'\left((1-s)\mathbf{w}_k^{T} \overline{\mathbf{x}}_i^{(l-1)} + s\mathbf{w}_k^{T} \mathbf{x}_i^{(l-1)}\right) \left(\mathbf{w}_k^{T} \mathbf{x}_i^{(l-1)} - \mathbf{w}_k^{T} \overline{\mathbf{x}}_i^{(l-1)} \right)  \nonumber \\& \quad \quad \quad \quad \quad \phi'\left((1-t)\mathbf{w}_k^{T} \overline{\mathbf{x}}_j^{(l-1)} + t\mathbf{w}_k^{T} \mathbf{x}_j^{(l-1)}\right) \left(\mathbf{w}_k^{T} \mathbf{x}_j^{(l-1)} - \mathbf{w}_k^{T} \overline{\mathbf{x}}_j^{(l-1)} \right)ds dt \nonumber \\
		&\le \phi\left(\mathbf{w}_k^{T} \overline{\mathbf{x}}_i^{(l-1)}\right) \phi\left(\mathbf{w}_k^{T} \overline{\mathbf{x}}_j^{(l-1)}\right)  + \alpha^2 \left( 2 \epsilon (1+\epsilon) + \epsilon^2 (1+\epsilon)^2 \right)
		\abs{\mathbf{w}_k^{T} \overline{\mathbf{x}}_i^{(l-1)}} \abs{\mathbf{w}_k^{T} \overline{\mathbf{x}}_j^{(l-1)}}. 
		\end{align}
		In the inequality above we used the facts that $\phi$ is $\alpha$-Lipschitz, and since $\phi(0)=0$ by assumption, $\phi(z) \leq \alpha \abs{z}$. 
		This gives
		\begin{align*}
		\frac{1}{m} \sum_{k=1}^{m} c_{\phi}^2 \phi(\mathbf{w}_k^{T} \mathbf{x}_i^{(l-1)}) \phi(\mathbf{w}_k^{T} \mathbf{x}_j^{(l-1)}) 
		&\le \frac{1}{m} \sum_{k=1}^{m} c_{\phi}^2 \phi(\mathbf{w}_k^{T} \overline{\mathbf{x}}_i^{(l-1)}) \phi(\mathbf{w}_k^{T} \overline{\mathbf{x}}_i^{(l-1)}) \nonumber \\ & \quad \quad + \alpha^2 \left( 2 \epsilon (1+\epsilon) + \epsilon^2 (1+\epsilon)^2 \right)
		\frac{1}{m} \sum_{k=1}^{m} c_{\phi}^2 \abs{\mathbf{w}_k^{T} \overline{\mathbf{x}}_i^{(l-1)}} \abs{\mathbf{w}_k^{T} \overline{\mathbf{x}}_j^{(l-1)}} \nonumber \\
		&\le \frac{1}{m} \sum_{k=1}^{m} c_{\phi}^2 \phi(\mathbf{w}_k^{T} \overline{\mathbf{x}}_i^{(l-1)}) \phi(\mathbf{w}_k^{T} \overline{\mathbf{x}}_i^{(l-1)})  + 4\alpha^2  c_{\phi}^2 \left( 2 \epsilon (1+\epsilon) + \epsilon^2 (1+\epsilon)^2 \right).
		\end{align*}
		In the last step, we use Hoeffding's inequality to bound the deviation of $\frac{1}{m} \sum_{k=1}^{m} c_{\phi}^2 \abs{\mathbf{w}_k^{T} \overline{\mathbf{x}}_i^{(l-1)}} \abs{\mathbf{w}_k^{T} \overline{\mathbf{x}}_j^{(l-1)}}$ from $\Ex_{\mathbf{w} \sim \mathcal{N}\left(0, \mathbf{I}\right)} c_{\phi}^2 \abs{\mathbf{w}^{T} \overline{\mathbf{x}}_i^{(l-1)}} \abs{\mathbf{w}^{T} \overline{\mathbf{x}}_j^{(l-1)}} $ and using standard gaussian moments, we can show that that $\Ex_{\mathbf{w} \sim \mathcal{N}\left(0, \mathbf{I}\right)} \abs{\mathbf{w}^{T} \overline{\mathbf{x}}_i^{(l-1)}} \abs{\mathbf{w}^{T} \overline{\mathbf{x}}_j^{(l-1)}} $ is at-most 4. Thus, combining everything,we get
		\begin{equation*}
		\epsilon' \in \left(-20\epsilon \alpha^2 c_{\phi}^2, 20\epsilon \alpha^2 c_{\phi}^2\right)
		\end{equation*}
		with probability at least $1 - e^{-\Omega\left(m/ \log^2 m\right)}$. 
		In \autoref{eq:tmp_conc_bound}, we use Hoeffding's inequality
		(\autoref{hoeffding}), $\alpha$-Lipschitzness of $\phi$ and  bound for the magnitude of the $2m$ gaussians $\mathbf{w}_k^T \overline{\mathbf{x}}_i^{(l-1)}$ and $\mathbf{w}_k^T \overline{\mathbf{x}}_j^{(l-1)}$ to get $\epsilon'' \in  \left(-\tau, \tau\right)$ where
		\begin{equation*}
		\tau = \begin{cases}
		\frac{1-c}{2} \abs{\rho}\left(1 - \abs{\rho} \right), & \text{if $\abs{\rho} \ge \frac{\delta}{n^2}$}.\\
		\frac{1-c}{2} \frac{\delta}{n^2} \left(1 - \frac{\delta}{n^2}\right)  , & \text{otherwise}.
		\end{cases}
		\end{equation*} 
		with probability at least $1 - e^{-\left(\Omega\left(m \delta^2/\alpha^4 n^4 \log^2 m \right)\right)}$. Note that, the minimum value of $\tau$ is $\frac{1-c}{2} \frac{\delta}{n^2} \left(1 - \frac{\delta}{n^2}\right)$. The reason of using this form of $\tau$ will be discussed below. 
		We use \autoref{lemma:hermite_phi} in \autoref{eq:apply_expect_hermite}, with $\rho_{00} = \norm{\overline{\mathbf{x}}_{i}^{(l-1)}}^2, \rho_{11} =  \norm{\overline{\mathbf{x}}_{j}^{(l-1)}}^2 \text{ and } \rho_{01} = \rho \norm{\overline{\mathbf{x}}_{i}^{(l-1)}} \norm{\overline{\mathbf{x}}_{j}^{(l-1)}}$. This follows from the fact that for a random normal vector $\mathbf{w} \sim \mathcal{N}(0, \mathbf{I})$, $\mathbf{w}^{T} \mathbf{x}$ follows a normal distribution with mean 0 and variance $\norm{\mathbf{x}}^2$ and the covariance of $\mathbf{w}^T \mathbf{x}$ and $\mathbf{w}^T \mathbf{y}$ is $ \mathbf{x}^T \mathbf{y}$ for two vectors $\mathbf{x}$ and $\mathbf{y}$.  We have an additional error term $\epsilon''' \in \left(-\sqrt{\frac{\log m}{m}}, \sqrt{\frac{\log m}{m}}\right)$ owing to the condition that $\abs{\mathbf{w}^T \bar{\mathbf{x}}_i^{(l-1)}}$ must be at most $3\sqrt{\log m}$ (proof will follow exactly along the lines of \autoref{step:bound_deviation}). 
		
		Let us now focus on the quantity $R\left(\rho\right)$ for the activation function $c_{\phi} \phi(.)$, defined in \autoref{fact:danielyeq}. From the definition of $c_{\phi}$, it follows that $c_{\phi}^2 = \frac{1}{\sum_{a = 1}^{\infty} \bar{c}_a^2\left(\phi\right)}$. Thus, we have $\abs{R(\rho)} \le R(\abs{\rho}) \le R(1) = 1$, which comes from the properties of $R$, as given in \autoref{fact:danielyeq}. Again, we have
		\begin{align}
		\abs{R(\rho)} \le R(\abs{\rho}) &\le \left( c_{\phi}^2  \sum_{a=1}^{\infty} \bar{c}_{a}^2\left(\phi\right) \abs{\rho}^2 +  c^2_{\phi}c_1^2\left(\phi\right) \left(\abs{\rho} - \abs{\rho}^2\right) \right) \nonumber \\
		&= \left(\abs{\rho} +  \frac{\bar{c}_1^2\left(\phi\right)}{\sum_{a=1}^{\infty}\bar{c}^2_a(\phi)}\left(1 - \abs{\rho}\right) \right)\abs{\rho} \nonumber
		\end{align}
		Thus,
		\begin{align}
		\abs{\mathbf{x}_i^{(l)  T} \mathbf{x}_j^{(l)}} \le \abs{\epsilon''} + \abs{\epsilon'} + \abs{\epsilon'''} + \abs{R(\rho)} \le 6 \epsilon \alpha^2 c^2_{\phi}  + \max\left(\frac{1+c}{2}\left(\frac{\delta}{n^2}\right) + \frac{1-c}{2}\left(\frac{\delta}{n^2}\right)^2, \frac{1+c}{2}\abs{\rho} + \frac{1-c}{2}\abs{\rho}^2\right)
		\end{align}
		where $c$ denotes the ratio $\frac{\bar{c}_1^2\left(\phi\right)}{\sum_{a=1}^{\infty}\bar{c}^2_a(\phi)}$. In the equation above, we can see that the form of $\tau = \max {\abs{\epsilon'}}$ has been chosen so that $R(\abs{\rho}) + \abs{\epsilon'} \le \frac{1}{2}\left(R(\abs{\rho}) + \abs{\rho}\right)$
		Thus,
		\begin{align}
		\abs{\overline{\mathbf{x}}_i^{(l)  T} \overline{\mathbf{x}}_j^{(l)}} \le 6 \left(1 + 2 \epsilon \right) \epsilon \alpha^2 c^2_{\phi} + \left(1 + 2 \epsilon\right)\max\left(\frac{1+c}{2}\left(\frac{\delta}{n^2}\right) + \frac{1-c}{2}\left(\frac{\delta}{n^2}\right)^2, \frac{1+c}{2}\abs{\rho} + \frac{1-c}{2}\abs{\rho}^2\right)
		\end{align}
		where we use the bound on $\norm{\mathbf{x}_i^{(l)}}$ from \autoref{thm:norm_of_act}.
	\end{proof}
	
	\begin{lemma}\label{lemma:full_decline}
		$\forall i, j \in [n], i \ne j$, if $\overline{\mathbf{x}}_i^{ T} \overline{\mathbf{x}}_j \le 1 - \delta$, then $\overline{\mathbf{x}}_i^{(L-1)  T} \overline{\mathbf{x}}_j^{(L-1)} \le \epsilon$, where \begin{equation*}
		\Omega\left(\frac{\delta}{n^2(1-c)}\right) \le \epsilon \le 1 -  \Omega\left(\frac{\delta}{n^2(1-c)}\right),\end{equation*}  and
		\begin{equation*}
		L \ge \frac{2}{1-c}\max\left(\Omega\left(\log\frac{1}{\delta}\right), \Omega\left(\log\frac{1}{\epsilon}\right)\right),
		\end{equation*}
		with probability at least $1 - \Omega\left(\frac{1}{m^{3}}\right)$, provided $m \ge \Omega\left(\frac{2^L n^2  \alpha^{4} c_{\phi}^4 \log{(n^2L)} \log^2 m}{\delta^2}\right)$, , where $c$ denotes the ratio $\frac{\bar{c}_1^2\left(\phi\right)}{\sum_{a=1}^{\infty}\bar{c}^2_a(\phi)}$. 
	\end{lemma}
	
	\begin{proof}
		Applying \autoref{lemma_rhodecrease} with $\epsilon = \frac{\delta}{20 n^2 \alpha^2 c^2_\phi}$, we have for each layer $l \in [L]$ and $i, j \in [n]; i \ne j$,
		\begin{equation*}
		\abs{\overline{\mathbf{x}}_i^{(l)  T} \overline{\mathbf{x}}_j^{(l)}} \le f\left(\abs{\overline{\mathbf{x}}_i^{(l-1)  T} \overline{\mathbf{x}}_j^{(l-1)}}\right)
		\end{equation*}
		holds with probability $1 - \Omega\left(\frac{1}{m^{3}}\right)$, provided $m \ge \Omega\left(\frac{2^L n^4  \alpha^{4} c_{\phi}^4 \log{(n^2L)} \log^2 m}{\delta^2}\right)$. Here function $f:\mathbb{R} \to \mathbb{R}$ is s.t. for $\rho \in \mathbb{R}$,
		\begin{equation*}
		f(\rho) =  \begin{cases}
		\hat{f}\left(\rho\right), & \text{if $\abs{\rho} \ge \frac{\delta}{n^2}$}.\\
		\frac{1+c}{2} \left(\frac{\delta}{n^2}\right) + \frac{1 - c}{2}  \left(\frac{\delta}{n^2}\right)^2 + \frac{\delta}{n^2}, & \text{otherwise}.        
		\end{cases} 
		\end{equation*}
		where function $\hat{f}$ is defined as
		\begin{equation*}
		\hat{f}\left(\rho\right) = \frac{1+c}{2} \rho + \frac{1 - c}{2} \rho^2 + \frac{\delta}{n^2}
		\end{equation*}
		Thus,
		\begin{equation*}
		\abs{\overline{\mathbf{x}}_i^{(L)  T} \overline{\mathbf{x}}_j^{(L)}} \le \underbrace{ f \circ f \ldots \circ f}_{\text{$L$ times}} \left(\abs{\overline{\mathbf{x}}_i^{(0)  T} \overline{\mathbf{x}}_j^{(0)}} \right)
		\end{equation*}
		Let's now focus on the function $f$. It can be seen that for the function $\hat{f}$, $\frac{2\delta}{n^2\left(1-c\right)}$ and $1 -  \frac{2\delta}{n^2\left(1-c\right)}$ are two fixed points, and starting from any positive $\rho^{(0)}$ strictly less than $1 - \frac{2\delta}{n^2\left(1-c\right)}$ and following fixed point algorithm leads to convergence to the point $\frac{2\delta}{n^2\left(1-c\right)}$. Since, $\hat{f}$ equals the function $f$ till the convergence point, the rate of convergence of fixed point algorithm for the function $f$ is well approximated by the rate of convergence for the function $\hat{f}$. 
		Also, the rate of convergence of $\hat{f}$ is equal to the rate of convergence of the function  $\tilde{f}:\mathbb{R} \to \mathbb{R}$ defined for each $\rho \in \mathbb{R}$ as 
		\begin{equation*}
		\tilde{f}(\rho) = \frac{1+c}{2} \rho + \frac{1 - c}{2} \rho^2.
		\end{equation*}
		\autoref{lemma:convergence_of_quadratic} shows that starting at $\rho^{(0)} = 1 - \delta$, the number of fixed point iteration steps to reach $\epsilon$ for function $\tilde{f}$ is given by $\frac{2}{1-c}\max\left(\Omega\log\left(\frac{1}{\delta}\right), \Omega\log\left(\frac{1}{\epsilon}\right)\right)$.
		Hence, from this argument, if $\abs{\overline{\mathbf{x}}_i^{(0)  T} \overline{\mathbf{x}}_j^{(0)}} \le 1 - \delta$ and $L \ge  \frac{2}{1-c}\max\left(\Omega\left(\log\frac{1}{\delta}\right), \Omega\left(\log\frac{1}{\epsilon}\right)\right)$, the quantity $\abs{\overline{\mathbf{x}}_i^{(L)  T} \overline{\mathbf{x}}_j^{(L)}}$  becomes less than $\epsilon$ .
	\end{proof}

	\begin{lemma}\label{lemma:convergence_of_quadratic}
		For $\rho \in \mathbb{R}$, define the function $\tilde{f}:\mathbb{R} \to \mathbb{R}$ by
		\begin{equation*}
		\tilde{f}(\rho) = a \rho + (1 - a) \rho^2
		\end{equation*}
		where $a$ is a constant in $(0, \frac{1}{2})$. Starting at $\rho^{(0)} = 1 - \delta$, the number of fixed point iteration steps to reach $\epsilon$ is given by $\frac{1}{1-a}\max\left(\Omega\left(\log\frac{1}{\delta}\right), \Omega\left(\log\frac{1}{\epsilon}\right)\right)$.
	\end{lemma}
	
	\begin{proof}
		The function $\tilde{f}$ has fixed points $0$ and $1$, but it's easy to see that starting at any point below $1$, fixed point iteration converges to $0$; we want to understand the speed of convergence. 
		We will divide the fixed point iterate's path into two sub-paths (a) movement from $1 - \delta$ to $1-b$ ($b$ is a small constant) and (b) movement from $1-b$ to $\epsilon$. 
		\begin{itemize}
			\item \textbf{Movement from $1 - \delta$ to $1-b$ :}. We will divide the path into subpaths $(1 - 2^{t} \delta, 1 - 2^{t-1} \delta )$, where $t$ is an integer in $(0, \log \frac{b}{\delta})$. We show that $\forall t \le \mathcal{O}\left(\log \frac{b}{\delta}\right)$, the number of iterations to reach from $1 - 2^{t-1} \delta$ to $1 - 2^{t} \delta$ can be upper bounded by $\frac{1}{1 - a}$. The number of iterations of function $\tilde{f}$ to go from   $(1 - 2^{t-1} \delta, 1 -  2^{t} \delta)$ is at most the number of iterations of the linear function $\hat{f}:\mathbb{R} \to \mathbb{R}$ defined 
			by \begin{equation*}\hat{f}\left(\rho\right) = \left(1 - (1 - a)2^{t-1}\delta\right) \rho.\end{equation*} The number of fixed point iterations of $\hat{f}$ to go from $ \left(1 - 2^{t-1} \delta\right)$ to  $\left(1 - 2^{t} \delta\right)$ is given by $\frac{\log \frac{1-2^{t-1} \delta}{1-2^{t}\delta}}{\log \left(1-\left(1-a\right)2^{t-1}\delta \right)}$, which can be shown to be less than $2\frac{2^{t}\delta - 2^{t-1}\delta}{(1-a)2^{t-1}\delta} = \frac{2}{1-a}$ using Taylor expansion of $\log$ function. Thus, the total number of iterations involved in the entire path is upper bounded by $\frac{2}{1 - a} \log \frac{b}{\delta}$.
			\item \textbf{Movement from $1 - b$ to $\epsilon$ :} The number of iterations of $\tilde{f}$ is t most that of a linear function $\hat{f}$ in the domain $(0, 1-b)$ defined by
			\begin{equation*}
			\hat{f}(\rho) = \left(1 - (1 - a)b\right) \rho.
			\end{equation*}
			The number of iterations of $\hat{f}$ to go from $(1-b)$ to $\epsilon$ is given by 
			$\frac{\log \frac{1-b}{\epsilon}}{\log \frac{1}{1 - b(1 - a)}}$, which upper bounded by $\frac{1}{b(1-a)} \log \frac{1-b}{\epsilon}$, for small enough constant $b$.
		\end{itemize} 
		Thus, summing the number of steps needed in the two subpaths leads to the desired quantity. 
	\end{proof}

	\begin{theorem}\label{thm:final_bound_L}
		If 
		\begin{equation*}
		L \ge \frac{2}{1-c}\max\left(\Omega\left(\log\frac{1}{\delta}\right), \Omega\left(\frac{\log 2n}{\log n}\right)\right),
		\end{equation*} 
		then \begin{equation*}
		\lambda_{\min}\left(\mathbf{G}^{(0)}\right) \ge \Omega\left(\bar{c}_{\Theta\left(\log n\right)}^2\left(\phi'\right)\right) - \mathcal{O}\left(\frac{\delta}{n}\right) 
		\end{equation*}
		with probability at least $1 - \Omega\left(\frac{1}{m^{3}}\right)$, provided $m \ge \Omega\left(\frac{2^L n^4  \alpha^{4} c_{\phi}^4 \log{(n^2L)} \log^2 m}{\delta^2}\right)$,
		where $\bar{c}_{k}\left(\phi'\right)$ denotes the $k^{\text{th}}$ order coefficient in the probabilists' Hermite expansion of $\phi'$ and $c$ denotes the ratio $\frac{\bar{c}_1^2\left(\phi\right)}{\sum_{a=1}^{\infty}\bar{c}^2_a(\phi)}$. 
	\end{theorem}
	\begin{proof}
		Assuming \autoref{thm:norm_of_act} with $\epsilon=\frac{\delta}{n^2 c_{\phi}^2 \alpha^2}$, using \autoref{Lemma:Coeff_Omar} we get 
		\begin{equation}\label{eqn:G-Gram-Hermite}
		\lambda_{\min}\left(\mathbf{G}^{\infty}\right) \ge \bar{c}_{\Theta\left(\log n\right)}^2\left(\phi'\right) \lambda_{\min}\left(\left(\left(\overline{\mathbf{X}}^{(L)}\right)^{*\log n}\right)^T\left(\overline{\mathbf{X}}^{(L)}\right)^{*\log n}\right) + \epsilon' + \epsilon''
		\end{equation}
		where $\overline{\mathbf{X}}^{(L)}$ denotes a $m \times n$ matrix, with its $i^{\text{th}}$ column containing $\overline{\mathbf{x}}_i^{(L)}$, which is the unit normalized form of $\mathbf{x}_i^{(L)}$, $\left(\overline{\mathbf{X}}^{(L)}\right)^{*r}$ denotes its order $r^{\text{th}}$ Khatri--Rao power. There are two error terms $\epsilon'$ and $\epsilon''$ because of two reasons (a) norm of $\mathbf{x}_i^{(L)}$ is $\epsilon$ away from 1 (b) magnitude of $\mathbf{w}^T \mathbf{x}_i^{(l)}$ is restricted to $3\sqrt{\log m}$. Following the line of proof of \autoref{step:error_with_taylor}, magnitude of $\epsilon'$ can be bounded to $\mathcal{O}\left( n c_{\phi}^2 \alpha^2 \epsilon \right)$. Also, following the line of proof of \autoref{step:bound_deviation},  magnitude of $\epsilon''$ can be bounded to $n\sqrt{\frac{\log m}{m}}$. 
		Now, we make the following claims. 
		
		\textbf{First}, $\mathbf{x}_{i}^T \mathbf{x}_{j}$, for any $i, j \in [n] \text{ with }i \ne j$, can be shown to be at most to $1 - \delta$, using \autoref{ass2}. 
		
		\textbf{Second}, if $\overline{\mathbf{x}}_i^{T} \overline{\mathbf{x}}_j = \rho$ s.t. $\abs{\rho} \le 1 - \delta$, applying \autoref{lemma:full_decline}  shows that $\abs{\left(\overline{\mathbf{x}}_i^{(L)}\right)^T \overline{\mathbf{x}}_j^{(L)}} \le \tilde{\epsilon}$, for a small constant $\tilde{\epsilon}$, provided $L \ge \frac{2}{1-c}\max\left(\Omega\left(\log\frac{1}{\delta}\right), \Omega\left(\log\frac{1}{\tilde{\epsilon}}\right)\right)$, with high probability. 
		
		\textbf{Third}, if for any $i, j \in [n], \left(\overline{\mathbf{x}}_i^{(L)}\right)^T \overline{\mathbf{x}}_j^{(L)}  = \rho' < 1$, then we can see that $ \left(\left(\overline{\mathbf{x}}_i^{(L)} \right)^{*r} \right)^T \left(\overline{\mathbf{x}}_j^{(L)} \right)^{*r} = \rho'^r$, where $\mathbf{x}^{*r}$ denotes the  order-$r$ Khatri--Rao product of a vector $\mathbf{x}$.  
		
		Combining these claims, we get the following.
		
		First, The diagonal elements of the matrix $\left(\left(\overline{\mathbf{X}}^{(L)}\right)^{*\log n}\right)^T\left(\overline{\mathbf{X}}^{(L)}\right)^{*\log n}$ are equal to 1. 
		Second, the non diagonal element at row $i$ and column $j$ of the matrix $\left(\left(\overline{\mathbf{X}}^{(L)}\right)^{*\log n}\right)^T\left(\overline{\mathbf{X}}^{(L)}\right)^{*\log n}$ is given by $\left(\left(\overline{\mathbf{x}}_i^{(L)}\right)^{*\log n}\right)^{T} \left(\overline{\mathbf{x}}_i^{(L)}\right)^{*\log n}$, whose magnitude is bounded by $\tilde{\epsilon}^{\log n}$. 
		Hence, if 
		\begin{equation*}
		L \ge \frac{2}{1-c}\max\left(\Omega\left(\log\frac{1}{\delta}\right), \Omega\left(\frac{\log 2n}{\log n}\right)\right),
		\end{equation*}
		each non diagonal element's absolute value becomes less than $\frac{1}{2n}$ and so the absolute sum of non diagonal elements is at least $\frac{1}{2}$ away from the absolute value of the diagonal element, for each row of the matrix $\left(\left(\mathbf{X}^{(L)}\right)^{*\log n}\right)^T\left(\mathbf{X}^{(L)}\right)^{*\log n}$. Using \autoref{fact:gershgorin}, we get for r = $\log n$,
		\begin{equation}\label{eq:lambda_min_khatri_rao}
		\lambda_{\min} \left(\mathbf{X}^{(L)*r T}\mathbf{X}^{(L)*r}\right) \ge \frac{1}{2} 
		\end{equation}
		Combining the value of $\epsilon'$, value of $\epsilon''$  and \autoref{eq:lambda_min_khatri_rao} gives us the minimum eigenvalue of $\mathbf{G}^{\infty}$. 
		Since, 
		\begin{equation*}
		\lambda_{\min}\left(G^{(0)}\right) \ge \lambda_{\min}\left(G^{\infty}\right)  - \norm{\left(G^{\infty} - G^{(0)}\right)}_F, 
		\end{equation*}
		we use Hoeffding's inequality to bound the magnitude of each element of $\left(G^{\infty} - G^{(0)}\right)$ to $ \lambda_{\min}\left(G^{\infty}\right)/2n$ and hence, get a bound on $\lambda_{\min}\left(G^{(0)}\right)$.
	\end{proof}
	
	\begin{theorem}\label{thm:tanh-depth}
		If $\phi=\tanh$ and 
		\begin{equation*}
		L \ge \frac{2}{1-c}\max\left(\Omega\left(\log\frac{1}{\delta}\right), \left(\frac{\log 2n}{\log n}\right)\right),
		\end{equation*} 
		then 
		\begin{equation*}
		\lambda_{\min}\left(G^{(0)}\right) \ge e^{-\Omega\left(\log^{\frac{1}{2}} n\right)} \gg 1/\mathsf{poly}(n)
		\end{equation*}
		with probability at least $1 - \Omega\left(\frac{1}{m^{3}}\right)$, provided $m \ge \Omega\left(\frac{2^L n^4  \log{(n^2L)} \log^2 m}{\delta^2}\right)$, for an arbitrary constant $\epsilon > 0$. 
		The constant $c$ denotes the ratio $\frac{\bar{c}_1^2\left(\tanh\right)}{\sum_{a=1}^{\infty} \bar{c}_a^2\left(\tanh\right)}$.  
	\end{theorem}
	\begin{proof}
		The proof follows from \autoref{thm:final_bound_L}, with the bound on Hermite coefficients for $\tanh'$ from Eqn. \eqref{eqn:Boyd}.
	\end{proof}
	
	We re-state the rate of convergence theorem from \cite{Du_Lee_2018GradientDF} and use our bounds on minimum eigenvalue of the gram matrix to give a more finer version of the theorem.
	\begin{theorem}[Thm 5.1 in \cite{Du_Lee_2018GradientDF}]\label{thm:complete_converge_Du}
		Assume that $\autoref{ass1}$ and $\autoref{ass2}$ hold true, $\abs{y_i} = \mathcal{O}(1)$ $\forall i \in [n]$ and the number of neurons per layer satisfy 
		\begin{equation*}
		m \ge \Omega\left(2^{O(L)} \max \left\{\frac{n^{4}}{\lambda_{\min }^{4}\left(\mathbf{G}^{(0)}\right)}, \frac{n}{\kappa}, \frac{n^{2} \log \left(\frac{L n}{\kappa}\right)}{\lambda_{\min }^{2}\left(\mathbf{G}^{(0)}\right)}\right\}\right).
		\end{equation*}  
		Then, if we follow a Gradient Descent algorithm with step size
		\begin{equation*}
		\eta=O\left(\frac{\lambda_{\min }\left(\mathbf{G}^{(0)}\right)}{n^{2} 2^{O(L)}}\right),
		\end{equation*}
		with probability at least $1 - \kappa$ over the random initialization, the following holds true $\forall t \ge 1$.
		\begin{equation*}
		\norm{\mathbf{y} - \mathbf{u}^{(t)}}^2 \le \left(1 - \frac{\eta \lambda_{\min } \left(\mathbf{G}^{(0)}\right)}{2}\right)^t \norm{\mathbf{y} - \mathbf{u}^{(0)}}^2.
		\end{equation*} 
	\end{theorem}
	
	Thus, refining the above theorem with our computed bounds for $\lambda_{\min } \left(\mathbf{G}^{(0)}\right) > \mathcal{O}(\frac{1}{n})$, we get the following.
	\begin{theorem}
		If $\phi=\tanh$, 
		\begin{equation*}
		L \ge \frac{2}{1-c}\max\left(\Omega\left(\log\frac{1}{\delta}\right), \left(\frac{\log 2n}{\log n}\right)\right),
		\end{equation*} 
		and  the number of neurons per layer satisfy 
		\begin{equation*}
		m \ge \Omega\left(2^{O(L)} \max \left\{n^{8}, \frac{n}{\kappa}, n^{4} \log \left(\frac{L n}{\kappa}\right)\right\}\right),
		\end{equation*}  
		then, if we follow a Gradient Descent algorithm with step size
		\begin{equation*}
		\eta \le O\left(\frac{1}{n^{3} 2^{O(L)}}\right),
		\end{equation*}
		with probability at least $1 - \kappa$ over the random initialization, the following holds true $\forall t \ge 1$.
		\begin{equation*}
		\norm{\mathbf{y} - \mathbf{u}^{(t)}}^2 \le \left(1 - \frac{\eta }{2n}\right)^t \norm{\mathbf{y} - \mathbf{u}^{(0)}}^2.
		\end{equation*}
		The constant $c$ denotes the ratio $\frac{\bar{c}_1^2\left(\tanh\right)}{\sum_{a=1}^{\infty} \bar{c}_a^2\left(\tanh\right)}$.   
	\end{theorem}
	
		\newpage
		
\section{Lower Bound on Lowest Eigenvalue for Non-Smooth Functions} \label{sec:Lower_Bounds}

We lower bound the lowest eigenvalues for functions that have certain types of jump discontinuities. 
We capture this in the following definition. 

\begin{definition}
	Let $\alpha  \in (-1,1)$.  Consider the activation function $\phi$ defined by
\begin{equation*}
\phi(x) = \phi_1(x) \mathcal{I}_{x < \alpha} + \phi_2(x) \mathcal{I}_{x \ge \alpha }.
\end{equation*} 
We say that $\phi$ satisfies the $ \mathbf{J_r} $ condition for some positive integer $r$ if   
\begin{itemize}
	\item $\phi_1, \phi_2 \in C^{r+1}$ in the domains $  \interval [ open left, soft open fences ]{ -\infty }{ \alpha }$ and $ \interval [ open right, soft open fences ]{ \alpha }{\infty } $, respectively. 
	\item The first $\left(r+1\right)$ derivatives of $\phi_1$ and $\phi_2$ are upper bounded in magnitude by 1 in $  \interval [ open left, soft open fences ]{ -\infty }{ \alpha }$ and $ \interval [ open right, soft open fences ]{ \alpha }{\infty } $ respectively.
	\item For $0 \leq i < r$, we have $\phi_1^{\left(i\right)}(\alpha) = \phi_2^{\left(i\right)}(\alpha)$. 
	\item $\abs{\phi_1^{\left(r\right)}(\alpha) - \phi_2^{\left(r\right)}(\alpha)} = 1$, i.e. the $r$-th derivative has a jump discontinuity at $ \alpha $. 
\end{itemize}
\end{definition}

We show that the minimum singular value of the $G$-matrix is at least inverse polynomially large in $n$ and $\delta$, provided $\phi_1$ and $\phi_2$ satisfy the above properties for some positive integer $ r$.
\todo{revisit the above; the first assumption needs to be simplified; $\phi_1, \phi_2$ need switching in one of the places.}
In the following we consider $ \mathbf{J_1} $ and $ \mathbf{J_2} $. 
The results can be easily generalized to higher $r$ but with lower bound degrading as $ n^{-2^r} $.

\subsection{$ \mathbf{J_1} $: The first derivative is discontinuous at a point}
\subsubsection{DZPS Setting}
Recall that this setting was defined in \autoref{preliminaries}.
\relu{}, SELU and LReLU  satisfy the conditions for the following theorem. The data set $\{(\mathbf{x}_i, y_i)\}_{i=1}^{n}$, for $\mathbf{x}_i \in \br^d$ and $y_i \in \br$ is implicitly understood in the theorem statements below. 
\begin{theorem}\label{min_eigen_relu}
	Let the condition on $\phi$ be satisfied for $r=1$. Assume that $\mathbf{w}_k^{(0)} \sim \mathcal{N}(0, \mathbf{I}_d) \mbox{ and } \mathbf{a}_{k}^{(0)} \sim \mathcal{N}\left(0, 1\right) \mbox{ } \forall k \in [m]$. Then, the minimum singular value of the $\mathbf{G}$-matrix satisfies 
	\begin{equation*}
	\lambda_{\min}\left(\mathbf{G}^{\left(0\right)}\right) \ge \Omega\left(\frac{\delta}{n^3}\right),  
	\end{equation*}
	with probability at least $1 - e^{-\Omega\left(\delta m/n^2 \right)}$ with respect to $\{\mathbf{w}_k^{(0)}\}_{k=1}^{m}$ and  $\{a_k^{(0)}\}_{k=1}^{m}$, given that $ m $ satisfies  
	\begin{equation*}
	m \ge \Omega\left(\frac{n^3}{\delta} \log{ \frac{n}{\delta^{1/4}}}\right).
	\end{equation*}
\end{theorem}
\begin{proof}
	In the following, we will write $\mathbf{w}_k$ instead of $\mathbf{w}^{(0)}_k$ and $a_k$ instead of $a^{(0)}_k$. Consider the following sum for an arbitrary unit vector $ \zeta  $ and a random standard normal vector $\mathbf{w}$: 
	\begin{equation*}
	\sum_{i=1}^{n} \zeta_i \, \phi'\left(\mathbf{w}^T \mathbf{x}_i\right) \mathbf{x}_i.
	\end{equation*}
	To lower bound the lowest eigenvalue of the $\mathbf{G}$-matrix, we will give a lower bound on the norm of this vector. 
	In order to do this, we use the following claim whose proof is deferred to later in the section. 
	
	\begin{claim}\label{claim:thm-GL-RELU}
		For $\zeta \in \mathbb{S}^{n-1}$, let
		\begin{equation*}
		f(\mathbf{w}) = \sum_{i=1}^{n} \zeta_i \, \phi'\left(\mathbf{w}^T \mathbf{x}_i\right) \mathbf{x}_i.    
		\end{equation*}
		Then we have
		\begin{equation*}
		\Pr_{\mathbf{w} \sim \mathcal{N}(0, \mathbf{I}_d)} \left(   \norm{ f(\mathbf{w}) }_2 \ge \frac{0.1}{\sqrt{n}} \right) \ge \Omega\left(\frac{\delta}{n^2}\right).
		\end{equation*}
	\end{claim}
	
	From this claim, we have 
	\begin{equation*}
	\Pr_{\mathbf{w} \sim \mathcal{N}(0, \mathbf{I}_d)} \left(   \norm{ \sum_{i=1}^{n} \zeta_i \, \phi'\left(\mathbf{w}^T\mathbf{x}_i\right) \mathbf{x}_i }_2 \ge \frac{0.1}{\sqrt{n}} \right) \ge \Omega\left(\frac{\delta}{ n^2}\right).
	\end{equation*}
	Hence,
	\begin{equation*}
	\Pr_{\mathbf{w} \sim \mathcal{N}(0, \mathbf{I}_d), \tilde{a} \sim \mathcal{N}\left(0, 1\right)} \left(   \norm{ \sum_{i=1}^{n} \tilde{a} \zeta_i \, \phi'\left(\mathbf{w}^T \mathbf{x}_i\right) \mathbf{x}_i }_2 \ge \frac{0.1}{\sqrt{n}} \right) \ge \Omega\left(\frac{\delta}{ n^2}\right).
	\end{equation*}
	owing to the fact that for a standard normal variate $\tilde{a}$,  $\abs{\tilde{a}}$ is at least 1 with probability at least $0.2$ using \autoref{conc-gauss}. 
	Applying the \hyperref[chernoff]{Chernoff bounds}, we have
	\begin{equation*}
	\Pr_{\left\{W_k \right\}_{k=1}^{m}, \left\{a_k \right\}_{k=1}^{m} } \left(  \frac{1}{m} \sum_{k=1}^{m} \norm{ \sum_{i=1}^{n} \zeta_i a_k \phi'\left(\mathbf{w}_k^T \mathbf{x}_i\right) \mathbf{x}_i }_2^2 \ge \frac{10^{-3} \delta}{n^3} \right) \ge 1-e^{-\Omega\left(\frac{\delta m}{ n^2}\right)} .
	\end{equation*}
	To get the bound for all $\zeta \in \mathbb{S}^{n-1}$, we use an $\epsilon$-net argument with $\epsilon=\Theta\left(\frac{\sqrt{\delta}}{n^2}\right)$ and $\epsilon$-net size $\left(\frac{1}{\epsilon}\right)^n$. This gives that
	\begin{equation*}
	\frac{1}{m} \sum_{k=1}^{m} \norm{ \sum_{i=1}^{n} \zeta_i \phi'\left(\mathbf{w}_k^T \mathbf{x}_i\right) \mathbf{x}_i }_2^2 \ge \Omega \left(\frac{ \delta}{n^3} \right)  
	\end{equation*}
	holds for all $\zeta \in \mathbb{S}^{n-1}$ with probability at least
	\begin{equation*}
	1 - \left(\frac{n^2}{\sqrt{\delta}}\right)^n e^{-\Omega\left(\frac{\delta m}{c n^2}\right)} \ge 1 - e^{-\Omega\left(\frac{\delta m}{ n^2}\right)}
	\end{equation*}
	with respect to $\left\{\mathbf{w}_{k}\right\}_{k=1}^{m}$ and  $\left\{a_{k}\right\}_{k=1}^{m}$, assuming that $m \ge \Omega\left(\frac{n^3}{\delta} \log{ \frac{n^4}{\delta}}\right).$ Thus, using the fact that $\lambda_{\min}\left(\mathbf{G}^{\left(0\right)}\right) = \frac{1}{m} 
	\sigma_{\min}\left(\mathbf{M}\right)^2$, we get the final bound.  
\end{proof}


\begin{corollary}\label{cor:relu_cor_ALS}
	Let the activation be $\relu$, then
	\begin{equation*}
	\lambda_{\min}\left(\mathbf{G}^{\left(0\right)}\right) \ge \Omega\left(\frac{\delta}{n^3}\right)
	\end{equation*}
	with probability at least $1-e^{-\Omega\left(\frac{\delta m}{n^2}\right)}$ with respect to $\left\{\mathbf{w}_k^{(0)}\right\}_{k=1}^{m}$ and $\left\{a_k^{(0)}\right\}_{k=1}^{m}$, given that $ m $ satisfies
	\begin{equation*}
	m \ge \Omega\left(\frac{n^3}{\delta} \log{ \frac{n}{\delta^{1/4}}}\right).
	\end{equation*}
\end{corollary}

We now move on to showing the main claim required in the \autoref{min_eigen_relu}. 
\hyperref[claim:thm-GL-RELU]{Claim \ref*{claim:thm-GL-RELU}} is an adaptation of \cite[Claim~6.4]{allenzhu2018convergence} with a slightly different choice of parameters and exposition.  

\begin{proof}[Proof of Claim \ref*{claim:thm-GL-RELU} ]
	Let $i^{*}$ denote $\argmax_{i\in [n]} \zeta_{i}$. We split vector $\mathbf{w}$ into two independent weight vectors, as follows
	\begin{align}
	\mathbf{w}   &= \mathbf{w}' + \mathbf{w}'', \nonumber \\
	\mathbf{w}'  &= \left(\mathbf{I}_d - \mathbf{x}_{i^{*}} \mathbf{x}_{i^{*}} ^T\right)\mathbf{w} - \sqrt{1 - \theta^2} g_1 \mathbf{x}_{i^{*}}, \nonumber \\
	\mathbf{w}'' &= \theta g_2 \mathbf{x}_{i^{*}},
	\end{align}
	where  $g_1$ and $g_2$ are two independent Gaussian random variables following  $\mathcal{N}\left(0, 1 - \theta^2 \right)$ and $\mathcal{N}\left(0, \theta^2 \right)$ respectively and we set  $\theta=\frac{\delta}{n^2}$.  
	
	Let $\mathcal{E}$ denote the following event.
	\begin{equation*}
	\mathcal{E}  = \left\{ \abs{  \mathbf{w}'^{ T} \mathbf{x}_{i^{*}} - \alpha} < \frac{\delta}{10n^2} \mbox{ and }   \abs{ \mathbf{w}'^{T} \mathbf{x}_i - \alpha} > \frac{\delta}{4n^2} \mbox{ } \forall 
	i \in [n]\setminus \{i^*\} \mbox{ and } \abs{\theta g_2} \in \left( \frac{\delta}{9n^2}, \frac{\delta}{5n^2}  \right) \right\} .
	\end{equation*}
	
	Assuming event $\mathcal{E}$ occurs, for $i=i^{*}$ we have \begin{equation*}
	\abs{\mathbf{w}'^{T} \mathbf{x}_{i^*} - \alpha} \le \frac{\delta}{10n^2} \mbox{, } \quad \abs{\mathbf{w}''^{T}\mathbf{x}_{i^*}} = \abs{\theta g_2 \mathbf{x}_{i^*}^T \mathbf{x}_{i^*}} \ge  \frac{\delta}{9n^2},
	\end{equation*}
	and for $\forall i \ne i^*$ we have 
	\begin{equation*}
	\abs{\mathbf{w}'^{T} \mathbf{x}_{i} - \alpha} > \frac{\delta}{4n^2} \mbox{, } \quad \abs{\mathbf{w}''^{T}\mathbf{x}_{i}} = \abs{\theta g_2 \mathbf{x}_{i^*}^T \mathbf{x}_{i}} \le  \frac{\delta}{5n^2}.
	\end{equation*}
	Hence, conditioned on $\mathcal{E}$, for $i \ne i^*$ we have $\mathcal{I}_{\mathbf{w}^T \mathbf{x}_i \ge \alpha} = \mathcal{I}_{\mathbf{w}'^{ T} \mathbf{x}_i \ge \alpha}$ always and $\mathcal{I}_{\mathbf{w}^T \mathbf{x}_{i^{*}} \ge \alpha} \ne \mathcal{I}_{\mathbf{w}'^{ T} \mathbf{x}_{i^{*}} \ge \alpha}$ with probability 1/2. 

	Conditioned on $\mathcal{E}$ and using triangle and Cauchy--Schwartz inequalities we get 
	\begin{align}
	\| \sum_{i \in [n], i \ne i^{*}} &\zeta_i \, \phi' (\mathbf{w}^T \mathbf{x}_i) \mathbf{x}_i -  \sum_{\left[n\right], i \ne i^{*}} \zeta_i \, \phi' (\mathbf{w}'^{T} \mathbf{x}_{i}) \mathbf{x}_{i} \|_2 \nonumber\\ 
	&\le \sum_{i \in [n], i \ne i^{*}} \norm{\zeta_i \mathbf{x}_i}_2 \abs{\phi' (\mathbf{w}^T \mathbf{x}_i) - \phi' (\mathbf{w}'^{T} \mathbf{x}_{i})} \nonumber\\
	&\le \left(\sum_{i \in [n], i \ne i^{*}} \norm{\zeta_i \mathbf{x}_i}_2^2 \right)^{1/2} \left(\sum_{i \in [n], i \ne i^{*}} (\phi' (\mathbf{w}^T \mathbf{x}_i) - \phi' (\mathbf{w}'^{T} \mathbf{x}_{i}))^2\right)^{1/2} \nonumber\\
	&= \left(\sum_{i \in [n], i \ne i^{*}} (\phi' (\mathbf{w}^T \mathbf{x}_i) - \phi' (\mathbf{w}'^{T} \mathbf{x}_{i}))^2\right)^{1/2} \nonumber\\
	&\le \left(\sum_{i \in [n], i \ne i^{*}} \abs{\mathbf{w}^T\mathbf{x}_i - \mathbf{w}'^T\mathbf{x}_i}^2 \right)^{1/2} \label{boundingryt} \\
	& \le \mathcal{O}\left(\sqrt{n}\right) \frac{\delta}{5n^2} = \mathcal{O}\left(\frac{\delta}{5n^{1.5}}\right), \nonumber
	\end{align}
	where we use our assumption that $\abs{\phi''(x)}\le 1$ for $x \in \mathbb{R}\setminus\left\{\alpha\right\}$ in \hyperref[boundingryt]{Inequality \ref*{boundingryt}}. Conditioning on 
	$\mathcal{E}$ was used in concluding that either $\phi' (\mathbf{w}^T \mathbf{x}_i) - \phi' (\mathbf{w}'^{T} \mathbf{x}_{i}) = \phi_1' (\mathbf{w}^T \mathbf{x}_i) - \phi_1' (\mathbf{w}'^{T} \mathbf{x}_{i})$ or
	$\phi' (\mathbf{w}^T \mathbf{x}_i) - \phi' (\mathbf{w}'^{T} \mathbf{x}_{i}) = \phi_2' (\mathbf{w}^T \mathbf{x}_i) - \phi_2' (\mathbf{w}'^{T} \mathbf{x}_{i})$. In other words, $\phi'_1$ and $\phi'_2$ ``don't mix''.
	%
	Since $\abs{\lim_{z \to \alpha^{-}} \phi'(z) - \lim_{z \to \alpha^{+}} \phi'(z)} = 1$ and $\abs{\zeta_{i^*}} \ge \frac{1}{\sqrt{n}}$, we have
	\begin{equation*}
	\Pr_{g_2} \left(\norm{ \zeta_{i^*} \, \phi' \left(\mathbf{w}^T \mathbf{x}_{i^*}\right) \mathbf{x}_{i^*} -  \zeta_{i^{*}} \, \phi' \left(\mathbf{w}'^{T} \mathbf{x}_{i^*}\right) \mathbf{x}_{i^*} }_2 \ge \frac{1}{\sqrt{n}} \left(1 - \frac{\delta}{5n^2}\right)  \;\bigg|\; \mathcal{E}  \right) = \frac{1}{2}. 
	\end{equation*}
		To see this, note that given conditioned on $\mathcal{E}$, with probability 0.5 with respect to $g_2$, $\mathbf{w}^T \mathbf{x}_i$ is going to cross the jump discontinuity at $\alpha$ and thus, $\phi'$ is going to change by at least 1, minus the maximum movement on either side of $\alpha$, which is bounded.
		Thus,
		\begin{equation*}\Pr_{\mathbf{w}}  \left( \norm{ \sum_{i=1}^{n} \zeta_i \phi' \left(\mathbf{w}^T \mathbf{x}_i\right) \mathbf{x}_i  }_2 \ge \frac{0.1}{\sqrt{n}} \; \Bigg| \; \mathcal{E} \right) \ge 0.5. \end{equation*}
		
		We now need to show that $ \mathcal{E} $ occurs with high probability. 
		To do this, we state the following claim that we prove later. 
		\begin{claim}\label{claim-perturb}
			Let all the variables be as in \autoref{claim-perturb}. Then,
			\begin{equation*} 
			\Pr_{\mathbf{w}'}  \left( \abs{\mathbf{w}'^{ T} \mathbf{x}_{i^{*}} - \alpha} \le \frac{\delta}{10n^2}  \text{ and }  \abs{\mathbf{w}'^{T} \mathbf{x}_{i} - \alpha} \ge \frac{\delta}{4n^2} \text{ } \forall i \ne i^{*} \right) \ge \Omega\left(\frac{\delta}{n^2\sqrt{1-\theta^2}}\right).
			\end{equation*}
		\end{claim}
		
		From \autoref{claim-perturb}, we have 
		\begin{equation*}
		\Pr_{\mathbf{w}'} \left( \abs{\mathbf{w}'^{ T}     \mathbf{x}_{i^{*}} - \alpha} \le \frac{\delta}{10n^2}  \text{ and }  \abs{\mathbf{w}'^{ T} \mathbf{x}_{i} - \alpha} \ge \frac{\delta}{4n^2} \text{ } \forall i \ne i^{*} \right) \ge \Omega\left(\frac{\delta}{\sqrt{1-\theta^2}n^2}\right).
		\end{equation*}
		Also, using the fact that $ \theta g_2 \sim \mathcal{N}\left( 0 , \theta ^2  \right) $ and \autoref{conc-gauss}, we have 
		\begin{equation*}
		\Pr_{g_2} \left(\abs{\theta g_2} \in \left(\frac{\delta}{9n^2}, \frac{\delta}{5n^2}\right)\right) \ge \frac{\frac{\delta}{5n^2}}{\frac{\delta}{n^2}} -  \frac{2}{3} \frac{\frac{\delta}{9n^2}}{\frac{\delta}{ n^2}} \ge 0.08.
		\end{equation*}
		Independence of $\mathbf{w}'$ and $\mathbf{w}''$ implies
		\begin{equation*}
		\Pr_{\mathbf{w}} \left[\mathcal{E}\right] \ge \Omega\left( \frac{\delta}{n^2} \right).  
		\end{equation*}
		Thus, 
		\begin{align}
		\Pr_{\mathbf{w} \sim \mathcal{N}(0, \mathbf{I}_d)} \left(   \norm{ f(\mathbf{w}) }_2 \ge \frac{0.1}{\sqrt{n}} \right) & \ge \Pr_{\mathbf{w} \sim \mathcal{N}\left(0, \mathbf{I}_d\right)} \left(   \norm{ f\left(\mathbf{w}\right) }_2 \ge \frac{0.1}{\sqrt{n}} \bigg| \mathcal{E} \right) \Pr_{\mathbf{w} \sim \mathcal{N}(0, \mathbf{I}_d)} \left[\mathcal{E}\right] \nonumber \\ & \ge \Omega\left(\frac{\delta}{n^2}\right).  \nonumber \qedhere
		\end{align}
	\end{proof}

	\begin{proof}[Proof of \ref*{claim-perturb}]
		By the definition of $\mathbf{w}'$, $\mathbf{w}'^{ T} \mathbf{x}_{i^{*}}$ is equal to $\sqrt{1 - \theta^2} g_1$, which is distributed according to $\mathcal{N}\left(0, 1 - \theta^2\right) $. Hence, applying concentration bounds from \autoref{conc-gauss}, we get that 
		\begin{equation} \label{reqd_w'}
		\Pr_{\mathbf{w}'} \left( \abs{\mathbf{w}'^{ T} \mathbf{x}_{i^{*}} - \alpha}  \le \frac{\delta}{10n^2} \right) \ge  \frac{\delta}{40 n^2 \sqrt{1-\theta^2}}.
		\end{equation}
		We can divide $\mathbf{w}'$ $\forall i \in \left[n\right]$ into two parts:
		\begin{itemize}
			\item Component orthogonal to $\mathbf{x}_{i^{*}}$ given by $\mathbf{w}'^{T} \left(\mathbf{I}_d - \mathbf{x}_{i^{*}} \mathbf{x}_{i^{*}}^{T}\right) \mathbf{x}_i$
			\item Component parallel to $\mathbf{x}_{i^{*}}$ given by $ \mathbf{w}'^{T} \left(\mathbf{x}_{i^{*}} \mathbf{x}_{i^{*}}^T\right) \mathbf{x}_i $.
		\end{itemize}
		This gives us
		\begin{equation*}
		\mathbf{w}'^{ T} \mathbf{x}_i = \underbrace{\mathbf{w}'^{T} \left(\mathbf{I}_d - \mathbf{x}_{i^{*}} \mathbf{x}_{i^{*}}^{T}\right) \mathbf{x}_i}_{\diamond} + \mathbf{w}'^{T} \left(\mathbf{x}_{i^{*}} \mathbf{x}_{i^{*}}^T\right) \mathbf{x}_i . 
		\end{equation*}
		Conditioning on $\mathbf{w}'^{T}\mathbf{x}_{i^{*}}$ such that \autoref{reqd_w'} is satisfied, we get that $ \mathbf{w}'^{T} \mathbf{x}_i $ is distributed according to 
		\begin{equation*}
		\mathbf{w}'^{T} \mathbf{x}_i \sim \mathcal{N}\left(\mathbf{w}'^{T} \left(\mathbf{x}_{i^{*}} \mathbf{x}_{i^{*}}^T\right) \mathbf{x}_i, \left(1-\theta^2\right)  \norm{\left(\mathbf{I}_d - \mathbf{x}_{i^{*}} \mathbf{x}_{i^{*}}^T \right) \mathbf{x}_i}_2^2\right).
		\end{equation*}
		By our assumption, $\norm{ \left(\mathbf{I}_d - \mathbf{x}_{i^{*}} \mathbf{x}_{i^{*}}^T\right) \mathbf{x}_i}_2 \ge \delta$. Also, 
		\begin{align}
		0 \le \abs{\mathbf{w}'^{ T} \left(\mathbf{x}_{i^{*}} \mathbf{x}_{i^{*}}^T\right) \mathbf{x}_i} = \abs{\mathbf{w}'^{ T} \mathbf{x}_{i^{*}} \left(\mathbf{x}_{i^{*}}^T \mathbf{x}_i\right)} \le \abs{\alpha + \frac{\delta}{10 n^2}} \le 1
		\end{align}
		Hence, again applying concentration bounds from \autoref{conc-gauss}, we get for a fixed $i \ne i^{*}$
		\begin{equation*}
		\Pr_{\mathbf{w}'} \left( \abs{ \mathbf{w}'^{ T} \mathbf{x}_{i} - \alpha} \ge \frac{\delta}{4n^2} \right) \ge 1 - \frac{\delta }{5 n^2 \sqrt{1-\theta^2}\delta}.
		\end{equation*}
		Taking a union bound, we get that  $\forall i \in [n]$ and $i \ne i^{*}$
		\begin{equation*}
		\Pr_{\mathbf{w}'} \left( \abs{ \mathbf{w}'^{ T} \mathbf{x}_{i} - \alpha} \ge \frac{\delta}{4n^2} \right) \ge 1 - \frac{1}{5n \sqrt{1-\theta^2}} \ge \frac{4}{5},
		\end{equation*}
		as required. 
	\end{proof}

	\subsubsection{$ \mathbf{J_1} $ for Standard Setting}
	The above theorems can be easily adapted to the standard settings, defined in \autoref{sec:Initializations}. 
	We capture this with the following corollaries. 
	
	\begin{corollary}[Adapting \autoref{min_eigen_relu} for $\mathsf{Init\left(fanin\right)}$ setting] Let the condition on $\phi$ be satisfied for $r=1$. Assume, $\mathbf{w}_k^{(0)} \sim \mathcal{N}\left(0, \frac{1}{d}\mathbf{I}_d\right), b_{k}^{(0)} \sim \mathcal{N}\left(0, 0.01\right) \mbox{ and } a_{k}^{(0)} \sim \mathcal{N}\left(0, \frac{1}{m}\right) \quad \forall k \in \left[m\right]$. Then, the minimum singular value of the $G$-matrix satisfies 
		\begin{equation*}
		\lambda_{\min}\left(\mathbf{G}^{\left(0\right)}\right) \ge \Omega\left (\frac{\delta}{n^3}\right),  
		\end{equation*}
		with probability at least $1 - e^{-\Omega\left(\delta m/n^2 \right)}$ with respect to $\{\mathbf{w}_k^{(0)}\}_{k=1}^{m}$,  $\{b_k^{(0)}\}_{k=1}^{m}$ and  $\{a_k^{(0)}\}_{k=1}^{m}$,  given that $ m $ satisfies  
		\begin{equation*}
		m \ge \Omega\left(\frac{n^3}{\delta} \log{ \frac{n}{\delta^{1/4}}}\right).
		\end{equation*}
	\end{corollary}

	\begin{corollary}[Adapting \autoref{min_eigen_relu} for $\mathsf{Init\left(fanout\right)}$ setting] Let the condition on $\phi$ be satisfied for $r=1$. Assume, $\mathbf{w}_k^{(0)} \sim \mathcal{N}\left(0, \frac{1}{m}\mathbf{I}_d\right),  a_{k}^{(0)} \sim \mathcal{N}\left(0, 1\right) \mbox{ and } b_{k}^{(0)} \sim \mathcal{N}\left(0, \frac{1}{m}\right) \quad \forall k \in \left[m\right]$. Then, the minimum singular value of the $G$-matrix satisfies 
		\begin{equation*}
		\lambda_{\min}\left(\mathbf{G}^{\left(0\right)}\right) \ge \Omega\left (\frac{\delta}{n^2}\right),  
		\end{equation*}
		with probability at least $1 - e^{-\Omega\left(\delta m/n \right)}$ with respect to $\{\mathbf{w}_k^{(0)}\}_{k=1}^{m}$, $\{b_k^{(0)}\}_{k=1}^{m}$ and $\{a_k^{(0)}\}_{k=1}^{m}$,  given that $ m $ satisfies  
		\begin{equation*}
		m \ge \Omega\left(\frac{n^2}{\delta} \log{ \frac{n}{\delta^{1/2}}}\right).
		\end{equation*}
	\end{corollary}

	\subsection{$  \mathbf{J_2} $ : The second derivative has jump discontinuity at a point}
	\subsubsection{DZPS setting}
	Recall that this setting was defined in \hyperref[preliminaries]{Section \ref*{preliminaries}}.
	\begin{theorem}\label{min_eigen_ELU}
		Let $\phi$ satisfy $  \mathbf{J_2} $. 
		Assume that $\mathbf{w}_k^{(0)} \sim \mathcal{N}(0, \mathbf{I}_d) \mbox{ and } a_{k}^{(0)} \sim \mathcal{N}(0, 1) \mbox{ } \forall k \in [m]$.  Then, the minimum singular value of the $\mathbf{G}$-matrix satisfies 
		\begin{equation*}
		\lambda_{\min}\left(\mathbf{G}^{\left(0\right)}\right) \ge \Omega\left(\frac{\delta^3}{n^8\log{n}}\right), 
		\end{equation*}
		with probability at least $1 - e^{-\Omega\left(\delta m/n^2\right)}$ with respect to $\{\mathbf{w}_k^{(0)}\}_{k=1}^{m}$ and $\{a_k^{(0)}\}_{k=1}^{m}$, given that $ m $ satisfies  
		\begin{equation*}
		m > \max\left(  \Omega\left(\frac{n^3}{\delta}\log\left(\frac{n}{\delta^{1/3}}\right)\right),  \Omega\left(\frac{n^2\log(d)}{\delta}\right)   \right).
		\end{equation*}
	\end{theorem}
	\begin{proof}
		In the following, we will use $\mathbf{w}_k$ instead of $\mathbf{w}^{(0)}_k$ and $a_k$ instead of $a^{(0)}_k$.
		From \autoref{eq:lower_bound_lambda}, it suffices to show that 
		\begin{equation*}
		\norm{ \sum_{i=1}^{n} \zeta_i \mathbf{m}_i }_2^2 = \sum_{k=1}^{m} \norm{ \sum_{i=1}^{n} \zeta_i a_k \phi'\left(\mathbf{w}_k^T \mathbf{x}_i\right) \mathbf{x}_i }_2^2 
		\end{equation*}
		is lower bounded for all vectors $\zeta \in \mathbb{S}^{n-1}$ with high probability.
		Fix a particular $\zeta \in \mathbb{S}^{n-1}$. For each $k \in [m]$ and each $i \in [n]$, we have $\mathbf{w}_k^T \mathbf{x}_i \sim \mathcal{N}\left(0, 1\right)$.  First, we analyze the sum for a fixed $ k $, i.e. we consider $\sum_{i=1}^{n} \zeta_i \;\phi' \left(\mathbf{w}_k^T \mathbf{x}_i\right) \mathbf{x}_i$. We split vector $\mathbf{w}_k$ as
		\begin{equation*}
		\mathbf{w}_k = \hat{\mathbf{w}}_{k} + \Bar{\mathbf{w}}_{k},
		\end{equation*}
		where  $\hat{\mathbf{w}}_{k}$ and $\Bar{\mathbf{w}}_{k}$ are two independent Gaussian vectors in $\mathbb{R}^d$ distributed according to $\mathcal{N}\left(0, \left(1 - \theta^2\right) \mathbf{I}_d\right)$ and $\mathcal{N}\left(0, \theta^2 \mathbf{I}_d\right)$ respectively.
		We set 
		\begin{equation*}\theta = \frac{\delta}{2000 \, n^2\sqrt{\log n}}.\end{equation*}
		Define the event $C_{\mathbf{w}}$ on a weight vector $\mathbf{w}$ as
		\begin{equation*}
		C_{\mathbf{w}} =\left \{ \abs{\mathbf{w}^{T} \mathbf{x}_i - \alpha} \ge \frac{\delta}{100 n^2} : i \in [n] \right\} .
		\end{equation*} 
		Using \autoref{conc-gauss} and the fact that $ \mathbf{x}_i $ are unit vectors, 
		\begin{equation} \label{Cw}
		\Pr_{\mathbf{w} \sim \mathcal{N}\left(0, \right(1 - \theta^2\left)\mathcal{I}_d\right) }   \left[ C_{\mathbf{w}}   \right]  \ge 1 - \frac{\delta}{400n \sqrt{1 - \theta^2}}  .
		\end{equation}
		Define the event $D_{\mathbf{w}}$ on a weight vector $\mathbf{w}$ as
		\begin{equation*}
		D_{\mathbf{w}} =\left \{ \abs{\mathbf{w}^T\mathbf{x}_i} \le \frac{\delta}{500n^2} :  i \in [n] \right\} .
		\end{equation*} 
		\autoref{conc-gauss-1} shows that,
		\begin{align} \label{Dw}
		\Pr_{\mathbf{w} \sim \mathcal{N}\left(0, \theta^2 \mathbf{I}_d\right)} \left[D_{\mathbf{w}}\right]  &\geq   \left(1 - 2n\exp\left(\frac{-t^2}{2} \right)  \right)\\ 
		&\geq 1 - \frac{2}{n^7}   \label{eqn:t_eval}  \\
		&  \geq 0.5, 
		\end{align}
		where we set $t = \frac{\delta}{500n^2  \theta} $ to get \hyperref[eqn:t_eval]{Inequality \ref*{eqn:t_eval}}.  
		
		We want $\hat{\mathbf{w}}_k$ to satisfy condition $C_{\hat{\mathbf{w}}_k}$ and $\bar{\mathbf{w}}_k$ to satisfy condition $D_{\bar{\mathbf{w}}_k}$. Since, $\hat{\mathbf{w}}_k$ and $\bar{\mathbf{w}}_k$ are independent of each other, we use \autoref{Cw} and \autoref{Dw} to get
		\begin{equation*}\label{eq:combined_cond}
		P_{\bar{\mathbf{w}}_k, \hat{\mathbf{w}}_k} \left(C_{\hat{\mathbf{w}}_k}  \mbox{ and } D_{\bar{\mathbf{w}}_k}\right) \ge 0.25. 
		\end{equation*}
		Assuming both the conditions hold, it follows that $\mathcal{I}_{\hat{\mathbf{w}}_{k}^T \mathbf{x}_i \ge \alpha} = \mathcal{I}_{\mathbf{w}_{k}^T \mathbf{x}_i \ge \alpha}$. We will work conditioned on both the events. 
		
		Define a function $f$ as follows, 
		\begin{equation*}
		f(\mathbf{w}) = \sum_{i=1}^{n} \zeta_i \;\phi'(\mathbf{w}^T \mathbf{x}_i). 
		\end{equation*}
		Note that
		\begin{equation*} \nabla_{\mathbf{w}} f(\mathbf{w}) = \sum_{i=1}^{n}  \zeta_i\; \phi''(\mathbf{w}^T \mathbf{x}_i) \mathbf{x}_i . 
		\end{equation*}
		In the sequel, we will use $f'$ for $\nabla_{\mathbf{w}} f(\mathbf{w})$ and $f''$ for $\nabla^2_{\mathbf{w}} f(\mathbf{w})$. It is easy to see that the only discontinuities of the derivative of function $f$ are when $ \mathbf{w}^{T} \mathbf{x}_i = \alpha$, since it is the only point of discontinuity for $\phi''$. 
		Thus, assuming that $C_{\hat{\mathbf{w}}_k}$ and $D_{\bar{\mathbf{w}}_k}$ hold, we can apply Taylor expansion to $f(\hat{\mathbf{w}}_{k})$ for a perturbation of $\Bar{\mathbf{w}}_{k}$, ensuring that all the derivatives exist in the neighborhood of interest. 
		Hence,
		\begin{equation*}
		f(\mathbf{w}_{k}) = f(\hat{\mathbf{w}}_{k}) + \langle \bar{\mathbf{w}}_k, f'(\hat{\mathbf{w}}_k) \rangle  + R_2\left(\mathbf{w}_k\right), 
		\end{equation*}
		where $R_2$ denotes the second order remainder term in the Taylor expansion given by
		\begin{equation*}
		R_2\left(\mathbf{w}_k\right) = \frac{1}{2} \int_{t=0}^{1} \left\langle f''\left( \hat{\mathbf{w}}_k + t\bar{\mathbf{w}}_k\right), \bar{\mathbf{w}}_k^{\otimes 2} \right\rangle dt.
		\end{equation*}
		Using $\nabla^2_{\mathbf{w}} f(\mathbf{w}) = \sum_{i=1}^{n} \zeta_i \, \mathbf{x}_i^{\otimes 2} \,\phi^{(3)}\left(z\right)\bigg|_{z=\left\langle \mathbf{w}, \mathbf{x}_i\right\rangle}$, we have 
		\begin{equation*}
		R_2\left(\mathbf{w}_k\right) = \frac{1}{2} \int_{t=0}^{1} \sum_{i=1}^{n}  \zeta_i \left(\langle \bar{\mathbf{w}}_k, \mathbf{x}_i \rangle\right)^2   \phi^{(3)}\left(\left\langle \hat{\mathbf{w}}_k + t\bar{\mathbf{w}}_k, \mathbf{x}_i \right\rangle\right)  dt.
		\end{equation*}
		The magnitude of this term can be bounded as follows. 
		
		\begin{align}
		\abs{R_2\left(\mathbf{w}_k\right)} & = \abs{ \frac{1}{2} \sum_{i=1}^{n}  \zeta_i \left(\left\langle \bar{\mathbf{w}}_k, \mathbf{x}_i \right\rangle\right)^2   \left(\int_{t=0}^{1}  \phi^{(3)}\left(\left\langle \hat{\mathbf{w}}_k + t\bar{\mathbf{w}}_k, \mathbf{x}_i \right\rangle\right) dt\right)  }    
		\nonumber \\
		& \leq \frac{1}{2} \sqrt{\sum_{i=1}^{n} \left(\int_{t=0}^{1}  \phi^{(3)}\left(\left\langle \hat{\mathbf{w}}_k + t\bar{\mathbf{w}}_k, \mathbf{x}_i \right\rangle\right) dt\right)^2} \sqrt{\sum_{i=1}^{n} \zeta_i^2 \left(\langle \bar{\mathbf{w}}_k, \mathbf{x}_i \rangle\right)^{4}}  \label{eqn:remainder-9}\\
		& \leq  \frac{1}{2}  \sqrt{n} \,   \left( \frac{\delta}{500n^2} \right)^2 \label{eqn:remainder-10}\\
		& \leq  \mathcal{O}  \left(\frac{\delta^2}{n^{3.5}}  \nonumber \right),
		\end{align}
		where \hyperref[eqn:remainder-9]{Inequality \ref*{eqn:remainder-9}} uses Cauchy-Schwartz inequality,  \hyperref[eqn:remainder-10]{Inequality \ref*{eqn:remainder-10}} uses the fact that all the derivatives of $\phi$ of order up to $r+1$ are bounded for $x \ne 0$ and $\bar{\mathbf{w}}_k$ satisfies condition $D_{\bar{\mathbf{w}}_k}$ and $\hat{\mathbf{w}}_k$ satisfies $C_{\hat{\mathbf{w}}_k}$.
		Thus we have 
		\begin{equation}\label{eq:taylor}
		f(\mathbf{w}_{k}) = f(\hat{\mathbf{w}}_{k}) + \langle \Bar{\mathbf{w}}_{k}, f'(\hat{\mathbf{w}}_{k})\rangle +   \mathcal{O}\left(\frac{\delta^2}{n^{3.5}} \right).
		\end{equation}

		We want to show the following bound on $f(\mathbf{w}_{k}) $.

		\begin{claim}
			\begin{equation*}
				\Pr_{\mathbf{w}_k} \left( \abs{ f(\mathbf{w}_k) } \ge \frac{1}{4 }\sqrt{\frac{0.01}{n}}  \theta \right) \geq \Omega \left( \frac{\delta}{ n^2 } \right). 
			\end{equation*}
		\end{claim}

		Consider the following two cases, which depend on the randomness of $\hat{\mathbf{w}}_{k}$. We denote them by the complementary events $E_{\hat{\mathbf{w}}_k}$ and $\bar{E}_{\hat{\mathbf{w}}_k}$ respectively.\\
		\textbf{Case 1:} $ E_{\hat{\mathbf{w}}_k} : \quad \abs{ f(\hat{\mathbf{w}}_{k}) } < \frac{1}{2 }\sqrt{\frac{0.01}{n}} \theta$. \\
		First, we condition on the event that $\hat{\mathbf{w}}_{k}$ is picked so that $\norm{ f' \left(\hat{\mathbf{w}}_{k}  \right) }_2^2 \ge \frac{0.01}{n} $ and $C_{\hat{\mathbf{w}}_k}$ holds true. We shall refer to this condition as $B_{\hat{\mathbf{w}}_k}$.

		\begin{claim}\label{thm-GL-1}
			Let the variables have the same meaning as in the theorem \autoref{min_eigen_ELU}. Then, 
			\begin{equation*}
			\Pr_{\mathbf{w} \sim \mathcal{N}(0, c^2 \mathbf{I}_d)} \Bigg(   \| f'(\mathbf{w}) \|_2^2 \ge \frac{1}{4n} \;\bigwedge\;  C_{\mathbf{w}} \Bigg) \ge \Omega\left(\frac{\delta}{cn^2}  \right)
			\end{equation*}
			for a variable $c$ that depends on $n$ and $\delta$.
		\end{claim}

		We defer the proof of this claim to later. 
		By \autoref{thm-GL-1}, we get that
		\begin{equation}\label{eq:case1}
		\Pr_{\hat{\mathbf{w}}_k} \left[ B_{\hat{\mathbf{w}}_k} \right] = \Pr_{\hat{\mathbf{w}}_k} \left(   \norm { f'(\hat{\mathbf{w}}_k) }_2^2 \ge \frac{0.01}{n} \;\bigwedge\;  C_{\hat{\mathbf{w}}_k} \right) \ge \Omega\left(\frac{\delta}{(1-\theta^2)n^2}\right).
		\end{equation}
		$\langle \Bar{\mathbf{w}}_k, f'(\hat{\mathbf{w}}_{k})\rangle$ is a random variable distributed according to $\mathcal{N} \left(0, \theta^2 \norm{ f'  (\hat{\mathbf{w}}_{k}) }_2^2 \right) $. 
		Thus applying  \autoref{conc-gauss},
		\begin{equation*}
		\Pr_{\Bar{\mathbf{w}}_k} \left( \abs{\langle\Bar{\mathbf{w}}_k, f'(\hat{\mathbf{w}}_{k}) \rangle }  \ge  \sqrt{\frac{0.01}{n}} \theta  \; \bigg|\; B_{\hat{\mathbf{w}}_k} \right) \ge \frac{1}{5} .
		\end{equation*}
		Now letting event $\bar{D}_{\bar{\mathbf{w}}_k}$ denote the complement of the event $D_{\bar{\mathbf{w}}_k}$ we have 
		\begin{align*}
		\Pr_{\Bar{\mathbf{w}}_k} & \left(  \abs{\langle\Bar{\mathbf{w}}_k, f'(\hat{w}_{k})\rangle} \ge  \sqrt{\frac{0.01}{n}}\theta \; \bigwedge \; D_{\bar{w}_k} \bigg| B_{\hat{w}_k} \right) 
		\nonumber\\ &= \Pr_{\Bar{\mathbf{w}}_k} \left( \abs{\langle\Bar{\mathbf{w}}_k, f'(\hat{\mathbf{w}}_{k})\rangle} \ge \sqrt{\frac{0.01}{n}}\theta \; \bigg|\; B_{\hat{\mathbf{w}}_k} \right) - \Pr_{\Bar{\mathbf{w}}_k} \left( \abs{\langle\Bar{\mathbf{w}}_k, f'(\hat{\mathbf{w}}_{k})\rangle} \ge \sqrt{\frac{0.01}{n}}\theta \;\bigg|\; B_{\hat{\mathbf{w}}_k},  \bar{D}_{\bar{\mathbf{w}}_k}\right)  \Pr_{\Bar{\mathbf{w}}_k} \left[\bar{D}_{\bar{\mathbf{w}}_k}\right] \nonumber\\ &\ge \frac{1}{5} - \frac{2}{n^7} \\ 
		& \ge \frac{1}{10}.
		\end{align*}

		Hence, from \autoref{eq:taylor}, we get
		\begin{equation*}
		\Pr_{\Bar{\mathbf{w}}_k} \left( \abs{f(\mathbf{w}_k)} \ge  \frac{1}{2} \sqrt{\frac{0.01}{n}}\theta \;\bigwedge\; D_{\bar{\mathbf{w}}_k} \bigg| B_{\hat{\mathbf{w}}_k}  \;\bigwedge\; E_{\hat{\mathbf{w}}_k}  \right) \ge 0.1.
		\end{equation*}
		Thus,
		\begin{align*}
		\Pr_{\mathbf{w}_k} \left( \abs{ f(\mathbf{w}_k) } \ge \frac{1}{2 }\sqrt{\frac{0.01}{n}}  \theta \;\bigwedge\; E_{\hat{\mathbf{w}}_k} \right) &\ge \Pr_{\mathbf{w}_k} \left( \abs{ f(\mathbf{w}_k) } \ge \frac{1}{2 }\sqrt{\frac{0.01}{n}} \theta  \;\bigwedge\; B_{\hat{\mathbf{w}}_k} \;\bigwedge\; E_{\hat{\mathbf{w}}_k} \;\bigwedge\; D_{\bar{\mathbf{w}}_k} \right) \nonumber\\ 
		&\ge \Pr_{\hat{\mathbf{w}}_k} \left[B_{\hat{\mathbf{w}}_k} \;\bigwedge\; E_{\hat{\mathbf{w}}_k} \right]  \Pr_{\bar{\mathbf{w}}_k} \left( \abs{f(\mathbf{w}_k)} \ge  \frac{1}{2} \sqrt{\frac{0.01}{n}}\theta \;\bigwedge\; D_{\bar{\mathbf{w}}_k} \bigg| B_{\hat{\mathbf{w}}_k} \;\bigwedge\; E_{\hat{\mathbf{w}}_k} \right)  \nonumber \\ &
		\ge 0.1 \Pr_{\hat{\mathbf{w}}_k} \left[B_{\hat{\mathbf{w}}_k} \;\bigwedge\; E_{\hat{\mathbf{w}}_k} \right] 
		\end{align*}


		\textbf{Case 2:} $\bar{E}_{\hat{\mathbf{w}}_k}: \quad \abs{ f(\hat{\mathbf{w}}_{k}) } \geq \frac{1}{2 }\sqrt{\frac{0.01}{n}} \theta$.\\
		We can upper-bound the magnitude of  $f'(\hat{\mathbf{w}}_k)$ by $\mathcal{O}(\sqrt{n})$ as follows.
		\begin{equation*}
		\norm{ f'(\hat{\mathbf{w}}_k) } = \norm{ \sum_{i=1}^{n} \zeta_i \phi''(\hat{\mathbf{w}}_k^T \mathbf{x}_i) \mathbf{x}_i } \le \sqrt{\sum_{i=1}^{n} \zeta_i^2 \phi''(\hat{\mathbf{w}}_k^T \mathbf{x}_i)^2 } \sqrt{\sum_{i=1}^{n} \norm{ \mathbf{x}_i }^2} \le \mathcal{O}(\sqrt{n}).
		\end{equation*}
		Here we use the fact that $\norm{ \zeta  } = 1$, $\phi''$ is bounded by a constant at all $x \ne \alpha$ and $\norm{\mathbf{x}_i} = 1 $ for $i \in [n]$. Note that, this bound always holds true, irrespective of the value of $\hat{\mathbf{w}}_k$. Again, using the fact that $\langle \Bar{\mathbf{w}}_k, f'(\hat{\mathbf{w}}_{k})\rangle$ is a Gaussian variable following $\mathcal{N}(0, \theta^2 \norm{ f'(\hat{\mathbf{w}}_{k}) }^2)$,  \autoref{conc-gauss} shows that,
		\begin{equation*}
		\Pr_{\Bar{\mathbf{w}}_{k}}   \left( \abs{\langle \Bar{\mathbf{w}}_k, f'(\hat{\mathbf{w}}_{k})\rangle} \le \frac{1}{4} \sqrt{\frac{0.01}{n}} \theta \;\bigg|\; C_{\hat{\mathbf{w}}_k} \right) \ge \frac{0.1}{4}\frac{2/3}{\sqrt{n}\norm{ f'(\hat{\mathbf{w}}_{k}) }} \ge \frac{0.1}{4}\frac{2/3}{n}.
		\end{equation*}
		Now, 
		\begin{align*}
		\Pr_{\Bar{\mathbf{w}}_k} & \left(  \abs{\langle\Bar{\mathbf{w}}_k, f'(\hat{\mathbf{w}}_{k})\rangle} \le  \frac{1}{4} \sqrt{\frac{0.01}{n}}\theta  \;\bigwedge\; D_{\bar{\mathbf{w}}_k} \;\bigg|\; C_{\hat{\mathbf{w}}_k}\right) 
		\nonumber\\ &= \Pr_{\Bar{\mathbf{w}}_k} \left( \abs{\langle\Bar{\mathbf{w}}_k, f'(\hat{\mathbf{w}}_{k})\rangle} \le \frac{1}{4} \sqrt{\frac{0.01}{n}}\theta  \;\bigg|\; C_{\hat{\mathbf{w}}_k} \right) - \Pr_{\Bar{\mathbf{w}}_k} \left( \abs{\langle\Bar{\mathbf{w}}_k, f'(\hat{\mathbf{w}}_{k})\rangle} \le \frac{1}{4} \sqrt{\frac{0.01}{n}}\theta \;\bigg|\;   C_{\hat{\mathbf{w}}_k}, 
		\bar{D}_{\bar{\mathbf{w}}_k}\right)  \Pr_{\Bar{\mathbf{w}}_k} \left[\bar{D}_{\bar{\mathbf{w}}_k}\right] \nonumber\\ &\ge \frac{1}{60n} - \frac{2}{n^7} \\
		&\ge \Omega\left(\frac{1}{n}\right) .
		\end{align*}
		Hence, from \autoref{eq:taylor}, we get
		\begin{equation*}
		\Pr_{\Bar{\mathbf{w}}_k} \left( \abs{f(\mathbf{w}_k)} \ge  \frac{1}{4} \sqrt{\frac{0.01}{n}}\theta \;\bigwedge\; D_{\bar{\mathbf{w}}_k} \bigg| C_{\hat{\mathbf{w}}_k} \;\bigwedge\; \bar{E}_{\hat{\mathbf{w}}_k} \right) \ge \Omega\left(\frac{1}{n}\right).
		\end{equation*}
		Thus,
		\begin{align*}
		\Pr_{\mathbf{w}_k} \left( \abs{ f(\mathbf{w}_k) } \ge \frac{1}{4 }\sqrt{\frac{0.01}{n}}  \theta \;\bigwedge\; \bar{E}_{\hat{\mathbf{w}}_k}\right) &\ge \Pr_{\mathbf{w}_k} \left( \abs{ f(\mathbf{w}_k) } \ge \frac{1}{4 }\sqrt{\frac{0.01}{n}} \theta  \;\bigwedge\; C_{\hat{\mathbf{w}}_k} \;\bigwedge\;  D_{\bar{\mathbf{w}}_k} \;\bigwedge\; \bar{E}_{\hat{\mathbf{w}}_k} \right) \nonumber\\ 
		&\ge \Pr_{\hat{\mathbf{w}}_k} \left[C_{\hat{\mathbf{w}}_k} \;\bigwedge\; \bar{E}_{\hat{\mathbf{w}}_k} \right]  \Pr_{\bar{\mathbf{w}}_k} \left( \abs{f(\mathbf{w}_k)} \ge  \frac{1}{4} \sqrt{\frac{0.01}{n}}\theta \;\bigwedge\; D_{\bar{\mathbf{w}}_k} \bigg| C_{\hat{\mathbf{w}}_k} \;\bigwedge\; \bar{E}_{\hat{\mathbf{w}}_k} \right)  \nonumber \\ &\ge\Omega\left(\frac{1}{n}\right) \Pr_{\hat{\mathbf{w}}_k} \left[C_{\hat{\mathbf{w}}_k} \;\bigwedge\; \bar{E}_{\hat{\mathbf{w}}_k} \right] 
		\end{align*}
		
		
		Thus, combining the two cases, we have
		\begin{align*}
		\Pr_{\mathbf{w}_k} \left( \abs{ f(\mathbf{w}_k) } \ge \Omega\left(\frac{\delta}{n^{2.5}\sqrt{\log n}} \right) \right)  &= \Pr_{\mathbf{w}_k} \left( \abs{ f(\mathbf{w}_k) } \ge \Omega\left(\frac{\delta}{n^{2.5}\sqrt{\log n}} \right) \;\bigwedge\; E_{\hat{\mathbf{w}}_k} \right) \nonumber \\ &\quad\quad + \Pr_{\mathbf{w}_k} \left( \abs{ f(\mathbf{w}_k) } \ge \Omega\left(\frac{\delta}{n^{2.5}\sqrt{\log n}} \right) \;\bigwedge\; \bar{E}_{\hat{\mathbf{w}}_k} \right)\nonumber \\
		&\ge 0.1\Pr_{\hat{\mathbf{w}}_k} \left[B_{\hat{\mathbf{w}}_k} \;\bigwedge\; E_{\hat{\mathbf{w}}_k} \right] +  \Omega\left(\frac{1}{n}\right) \Pr_{\hat{\mathbf{w}}_k} \left[C_{\hat{\mathbf{w}}_k} \;\bigwedge\; \bar{E}_{\hat{\mathbf{w}}_k} \right] \nonumber \\
		&\ge 0.1\Pr_{\hat{\mathbf{w}}_k} \left[B_{\hat{\mathbf{w}}_k} \;\bigwedge\; E_{\hat{\mathbf{w}}_k} \right] + \Omega\left(\frac{1}{n}\right) \Pr_{\hat{\mathbf{w}}_k} \left[B_{\hat{\mathbf{w}}_k} \;\bigwedge\; \bar{E}_{\hat{\mathbf{w}}_k} \right] \nonumber \\
		&= 0.1\Pr_{\hat{\mathbf{w}}_k} \left[B_{\hat{\mathbf{w}}_k} \;\bigwedge\; E_{\hat{\mathbf{w}}_k} \right] + \Omega\left(\frac{1}{n}\right) \Pr_{\hat{\mathbf{w}}_k} \left(
		\left[B_{\hat{\mathbf{w}}_k}\right] - 
		\left[B_{\hat{\mathbf{w}}_k} \;\bigwedge\; E_{\hat{\mathbf{w}}_k} \right]\right) \nonumber \\
		&\ge 0.1\Pr_{\hat{\mathbf{w}}_k} \left[B_{\hat{\mathbf{w}}_k} \;\bigwedge\; E_{\hat{\mathbf{w}}_k} \right] + \Theta\left(\frac{1}{n}\right) \Pr_{\hat{\mathbf{w}}_k} \left(
		\left[B_{\hat{\mathbf{w}}_k}\right] - 
		\left[B_{\hat{\mathbf{w}}_k} \;\bigwedge\; E_{\hat{\mathbf{w}}_k} \right]\right)\nonumber \\
		&= \left(0.1 - \Theta\left(\frac{1}{n}\right)\right)\Pr_{\hat{\mathbf{w}}_k} \left[B_{\hat{\mathbf{w}}_k} \;\bigwedge\; E_{\hat{\mathbf{w}}_k} \right] + \Theta\left(\frac{1}{n}\right) \Pr_{\hat{\mathbf{w}}_k}	\left[B_{\hat{\mathbf{w}}_k}\right] \nonumber \\
		&\ge \Omega\left(\frac{\delta}{n^3}\right) 
		\end{align*}
		where we use \autoref{thm-GL-1} in the final step. 
		
		
		Hence,
		\begin{equation}
		\Pr_{\mathbf{w}_k, a_k} \left( \abs{ \sum_{i=1}^{n} \zeta_i a_k \phi'(\mathbf{w}_k^T x_i) } \ge \Omega\left(\frac{\delta}{n^{2.5}\sqrt{\log n}} \right) \right) \ge \Omega\left (\frac{\delta}{n^3}\right).
		\end{equation}
		owing to the fact that for a standard normal variate $\tilde{a}$,  $\abs{\tilde{a}}$ is at least 1, with probability at-least $0.2$ using \autoref{conc-gauss}. 
		Applying a Chernoff bound over all $k \in [m]$, we get
		\begin{equation*}
		\sum_{k=1}^{m} \left( \sum_{i=1}^{n} \zeta_i \phi'\left(\mathbf{w}_k^T \mathbf{x}_i\right) \right)^2 \ge \Omega\left (\frac{\delta^3m}{n^8\log n}\right ) 
		\end{equation*}
		with probability at least $1 - \exp(-\Omega (\frac{\delta m}{n^2}))$ with respect to  $\{\mathbf{w}_k\}_{k=1}^{m}$ and  $\{a_k\}_{k=1}^{m}$.
		Applying an $\epsilon$-net argument over $\zeta \in \mathbb{S}^{n-1}$, with $\epsilon = \frac{\delta^{1.5}}{2n^4\sqrt{\log n}}$ and $\epsilon$-net size  $\left(\frac{1}{\epsilon}\right)^n$, we get that
		\begin{equation}\label{eq:primary}
		\sum_{k=1}^{m} \left( \sum_{i=1}^{n} \zeta_i \phi'(\mathbf{w}_k^T \mathbf{x}_i) \right)^2 \ge \Omega\left (\frac{\delta^3m}{n^8\log{(n)}}\right )
		\end{equation}
		holds for all $\zeta \in \mathbb{S}^{n-1}$ with probability at least 
		\begin{equation*}1 - \left(\frac{2n^4\sqrt{\log n}}{\delta^{1.5}}\right)^ne^{-\Omega \left(\frac{m\delta}{n^2}\right)} \ge 1 - e^{-\Omega \left(\frac{m\delta}{n^2}\right)}\end{equation*} 
		with respect to $\{\mathbf{w}_k\}_{k=1}^{m}$ and  $\{a_k\}_{k=1}^{m}$., where we assume that $m > \Omega \left(\frac{n^3}{\delta}\log \left(\frac{n}{\delta^{1/3}} \right)\right)$.
		Now consider the following function,
		\begin{equation*}
		f(\mathbf{w}) = \sum_{k=1}^{m} \sum_{i=1}^{n} \zeta_i a_k \phi'\left(\mathbf{w}_k^T x_i\right) \mathbf{x}_i
		\end{equation*}
		where $\zeta \in \mathbb{S}^{n-1}$. Note that $f(\mathbf{w})$ is a $d$-dimensional vector. Also, the above can be written as
		\begin{equation*}
		f(\mathbf{w}) = \mathbf{Q} \mathbf{v} 
		\end{equation*}
		where $\mathbf{Q} = [q_{ij}]\in \mathbb{R}^{d \times n}$ is  defined by
		\begin{equation*}
		q_{i, j} = \zeta_j x_{j, i} 
		\end{equation*}
		and $\mathbf{v} \in \mathbb{R}^{n}$, defined by 
		\begin{equation*}
		v_i = a_k \phi'(\mathbf{w}_k^T \mathbf{x}_i).
		\end{equation*}
		Also, since $\| \zeta \| = 1$ and $\| \mathbf{x}_i \| = 1 \mbox{ } \forall i \in \left[n\right]$, we have $\|\mathbf{Q}\|_F = 1$. Consider the following quantity $\sum_{j=1}^{n} q_{i, j} v_j$. This quantity denotes the dot product of a row vector of $\mathbf{Q}$ and $\mathbf{v}$. We can apply \autoref{eq:primary} to get
		\begin{equation*}
		\abs{\sum_{j=1}^{n} \frac{q_{i, j}}{\norm{\mathbf{q}_i}} v_j}^2 \ge \Omega\left(\frac{\delta^3 m}{n^8 \log {n}}\right)
		\end{equation*}
		holds true with probability at least $1 - e^{-\Omega\left(\frac{\delta m}{n^2}\right)}$ with respect to $\left\{\mathbf{w}_k\right\}_{k=1}^{m}$ and $\left\{a_k\right\}_{k=1}^{m}$.
		Note that, the coefficients have been normalized to unit norm to satisfy the condition based on which \autoref{eq:primary} was derived. 
		We can take a union bound over all the rows of $\mathbf{Q}$ to get
		\begin{align}
		\norm{ f(\mathbf{w}) }^2 &= \sum_{i=1}^{d} \abs{\sum_{j=1}^{n} q_{i, j} v_j}^2 \ge \sum_{i=1}^{d} \norm{\mathbf{q}_i}^2 \Omega\left(\frac{\delta^3 m}{n^8 \log {n}}\right) \nonumber \\ &= \norm{\mathbf{Q}}_F^2 \Omega\left(\frac{\delta^3 m}{n^8 \log {n}}\right) = \Omega\left(\frac{\delta^3 m}{n^8 \log {n}}\right) \nonumber
		\end{align}
		with probability at least
		\begin{equation*}
		1 - d e^{-\Omega\left(\frac{\delta m}{n^2}\right)} \ge 1 - e^{-\Omega\left(\frac{\delta m}{n^2}\right)}
		\end{equation*}
		with probability at least $ 1 - e^{-\Omega \left(\frac{\delta m}{n^2}\right)}$ with respect to $\left\{\mathbf{w}_k\right\}_{k=1}^{m}$ and $\left\{a_k\right\}_{k=1}^{m}$, assuming that 
		$m \ge \Omega\left(\frac{n^2 \log{d}}{\delta}\right)$.
		Thus, we can use \autoref{eq:lower_bound_lambda} to show that,
		\begin{equation*}
		\sigma_{\min}\left(\mathbf{M}\right) \ge  \Omega \left(\sqrt{\frac{\delta^3 m}{n^8 \log n}} \right).
		\end{equation*}
		Using the fact that $\lambda_{\min}\left(\mathbf{G}^{\left(0\right)}\right) =  \frac{1}{m} \sigma_{\min}\left(\mathbf{M}\right)^2$, we get that 
		$\lambda_{\min}\left(\mathbf{M}^{\left(0\right)}\right) \ge \Omega(\frac{\delta^3}{n^8 \log n})$
		with probability at least $ 1 - e^{-\Omega \left(\frac{\delta m}{n^2}\right)}$. 
	\end{proof}
	
	ELU satisfies the conditions required for \autoref{min_eigen_ELU}.
	We state this explicitly in the following theorem. 
	\begin{corollary}
		Assume $\mathbf{w}_{k}^{(0)} \sim \mathcal{N}\left(0, \mathbf{I}_d\right)$ and  $a_{k}^{(0)} \sim \mathcal{N}\left(0, 1\right) \forall k \in \left[m\right]$,  if $\phi(x) = \mathcal{I}_{x < 0} \left(e^{x} - 1\right) + \mathcal{I}_{x \ge  0} x$, we have that for the $\mathbf{G}$-matrix,
		\begin{equation*}
		\lambda_{\min}\left(\mathbf{G}^{\left(0\right)}\right) \ge \Omega\left(\frac{\delta^3}{n^8 \log{n}}\right)
		\end{equation*}
		with probability at least $1-e^{-\Omega\left(\frac{\delta m}{n^2}\right)}$ with respect to $\left\{\mathbf{w}_k^{(0)}\right\}_{k=1}^{m}$, given that $ m $ satisfies
		\begin{equation*}
		m > \max\left(  \Omega\left(\frac{n^3}{\delta}\log\left(\frac{n}{\delta^{1/3} }\right)\right),  \Omega\left(\frac{n^2\log(d)}{\delta}\right)   \right).
		\end{equation*}
	\end{corollary}
	
	\begin{claim}[\autoref{thm-GL-1} restated]
		Let the variables have the same meaning as in the theorem \autoref{min_eigen_ELU}. Then, 
		\begin{equation*}
		\Pr_{\mathbf{w} \sim \mathcal{N}(0, c^2 \mathbf{I}_d)} \Bigg(   \| f'(\mathbf{w}) \|_2^2 \ge \frac{1}{4n} \mbox{ and }  C_{\mathbf{w}} \Bigg) \ge \Omega(\frac{\delta}{cn^2})
		\end{equation*}
		for a variable $c$ that depends on $n$ and $\delta$.
	\end{claim}
	\begin{proof}
		Let $i^{*}$ denote $\argmax_{i\in [n]} \zeta_{i}$. We can split w as $\mathbf{w} = \mathbf{w}' + \mathbf{w}''$, where 
		\begin{equation*}
		\mathbf{w}' = (\mathbf{I}_d - \mathbf{x}_{i^{*}} \mathbf{x}_{i^{*}}^T) \mathbf{w} + \sqrt{1 - \theta^2} g_1 \mathbf{x}_{i^{*}}
		\end{equation*}
		and 
		\begin{equation*}
		\mathbf{w}'' = \theta g_2 \mathbf{x}_{i^{*}}
		\end{equation*}
		where $\theta = \frac{\delta}{n^2}$ and $g_1, g_2$ are two independent gaussians $\sim \mathcal{N}(0, c^2 )$. 
		Let $\mathcal{E}$ denote the following event.
		\begin{equation*}
		\mathcal{E}  = \left\{ \abs{  \mathbf{w}'^{ T} \mathbf{x}_{i^{*}} - \alpha} < \frac{\delta}{10n^2} \mbox{ and }   \abs{ \mathbf{w}'^{ T} \mathbf{x}_i - \alpha} > \frac{\delta}{4n^2} \mbox{ } \forall i \in [n], i \ne i^{*} \mbox{ and } \abs{\theta g_2} \in \left( \frac{\delta}{9n^2}, \frac{\delta}{5n^2}  \right) \right\} 
		\end{equation*}
		Event $\mathcal{E}$ satisfies condition $C_{\mathbf{w}}$ because
		for $i^{*}$,
		\begin{equation*}
		\abs{ \mathbf{w}^{T} \mathbf{x}_i^{*} - \alpha} = \abs{ \mathbf{w}'^{ T}  \mathbf{x}_{i^{*}} + \mathbf{w}''^{ T} \mathbf{x}_{i^{*}} - \alpha} > \abs{ \abs{ \mathbf{w}'^{ T} \mathbf{x}_{i^{*}} - \alpha} - \abs{ \mathbf{w}''^{ T} \mathbf{x}_{i^{*}} } }  > \frac{\delta}{90n^2}
		\end{equation*}
		and for all $i \ne i^{*}$,
		\begin{equation*}
		\abs{ \mathbf{w}^{T} \mathbf{x}_i - \alpha } = \abs{ \mathbf{w}'^{ T} \mathbf{x}_{i} + \mathbf{w}''^{ T} \mathbf{x}_{i} - \alpha} > \abs{ \abs{ \mathbf{w}'^{ T} \mathbf{x}_{i} - \alpha} - \abs{ \mathbf{w}''^{ T} \mathbf{x}_{i} } } > \frac{\delta}{20n^2}
		\end{equation*}
		Hence, we can write that 
		\begin{align*}
		\Pr_{\mathbf{w} \sim \mathcal{N}(0, c^2\mathbf{I}_d)} \Bigg(   \| f'(\mathbf{w}) \|_2^2 &\ge \frac{1}{4n} \mbox{ and }  C_{\mathbf{w}} \Bigg) \nonumber \\& \ge \Pr_{\mathbf{w} \sim \mathcal{N}(0, (1-\theta^2)\mathbf{I}_d)} \Bigg(   \| f'(\mathbf{w}) \|_2^2 \ge \frac{1}{4n} \mbox{ and }  C_{\mathbf{w}} \Bigg| \mathcal{E} \Bigg) \Pr_{\mathbf{w} \sim \mathcal{N}(0, (1-\theta^2)\mathbf{I}_d)} \mathcal{E} \nonumber \\& =  \Pr_{\mathbf{w} \sim \mathcal{N}(0, (1-\theta^2)\mathbf{I}_d)} \Bigg(   \| f'(\mathbf{w}) \|_2^2 \ge \frac{1}{4n} \Bigg| \mathcal{E} \Bigg) \Pr_{\mathbf{w} \sim \mathcal{N}(0, (1-\theta^2)\mathbf{I}_d)} \mathcal{E} \nonumber \\& \ge \frac{1}{2} \frac{0.2\delta}{cn^2} = \Omega(\frac{\delta}{cn^2}),
		\end{align*}
		where we use \autoref{thm-GL-ELU} in the final step.
	\end{proof}

	\begin{claim}\label{thm-GL-ELU}
		\begin{equation*}
		\Pr_{\mathbf{w} \sim \mathcal{N}(0, c^2 \mathbf{I}_d)} \left(   \| f'(\mathbf{w}) \|_2^2 \ge \frac{0.01}{n} \Bigg| \mathcal{E} \right) \ge \frac{1}{2} 
		\end{equation*}
		and 
		\begin{equation*}
		\Pr_{\mathbf{w} \sim \mathcal{N}(0, c^2 \mathbf{I}_d)} \left[   \mathcal{E} \right] \ge \frac{0.08\delta}{c^2n^2}
		\end{equation*}
		where $\mathcal{E}$ is defined and $\mathbf{w}$ has been split into $\mathbf{w}'$ and $\mathbf{w}''$ as in proof of \autoref{thm-GL-1}.
	\end{claim}
	
	\begin{proof}
		This follows from the proof of \autoref{claim:thm-GL-RELU}, with a slight change in distribution of $\mathbf{w}$ from $\mathcal{N}\left(0, \mathbf{I}_d\right)$ to $\mathcal{N}\left(0, c^2 \mathbf{I}_d\right)$ and the function under consideration is changed to \begin{equation*}
		f'(\mathbf{w}) = \sum_{i=1}^{n} \zeta_i \phi''\left(\mathbf{w}^T \mathbf{x}_i\right)\mathbf{x}_i.
		\end{equation*}
	\end{proof}
	
	\subsubsection{$  \mathbf{J_2} $  for Standard Setting}
	
	As before, we state the main theorem for standard initializations, defined in \autoref{sec:Initializations}, as corollaries. 
	
	\begin{corollary}[Adapting \autoref{min_eigen_ELU} for $\mathsf{Init(fanin)}$ setting] Let the condition on $\phi$ be satisfied for $r=1$. Assume, $\mathbf{w}_k^{(0)} \sim \mathcal{N}\left(0, \frac{1}{d}\mathbf{I}_d\right), b_{k}^{(0)} \sim \mathcal{N}\left(0, 0.01\right) \mbox{ and } a_{k}^{(0)} \sim \mathcal{N}\left(0, \frac{1}{m}\right) \quad \forall k \in \left[m\right]$. Then, the minimum singular value of the $\mathbf{G}$-matrix satisfies 
		\begin{equation*}
		\lambda_{\min}\left(\mathbf{G}^{\left(0\right)}\right) \ge \Omega\left(\frac{\delta^3}{n^8 d \log{n}}\right)
		\end{equation*}
		with probability at least $1-e^{-\Omega\left(\frac{\delta m}{n^2}\right)}$ with respect to $\left\{\mathbf{w}_k^{(0)}\right\}_{k=1}^{m}$, given that $ m $ satisfies
		\begin{equation*}
		m > \max\left(  \Omega\left(\frac{n^3}{\delta}\log\left(\frac{n}{\delta^{1/3} d^{1/9}}\right)\right),  \Omega\left(\frac{n^2\log(d)}{\delta}\right)   \right).
		\end{equation*}
	\end{corollary}

	\begin{corollary}[Adapting \autoref{min_eigen_ELU} for $\mathsf{Init(fanout)}$ setting] Let the condition on $\phi$ be satisfied for $r=1$. Assume, $\mathbf{w}_k^{(0)} \sim \mathcal{N}\left(0, \frac{1}{m}\mathbf{I}_d\right), b_{k}^{(0)} \sim \mathcal{N}\left(0, \frac{1}{m}\right) \mbox{ and } a_{k}^{(0)} \sim \mathcal{N}\left(0, 1\right)   \quad \forall k \in \left[m\right]$. Then, the minimum singular value of the $\mathbf{G}$-matrix satisfies 
		\begin{equation*}
		\lambda_{\min}\left(\mathbf{G}^{\left(0\right)}\right) \ge \Omega\left (\frac{\delta^2}{m n^4 \log{n}}\right),  
		\end{equation*}
		with probability at least $1 - e^{-\Omega\left(\delta m/n \right)}$ with respect to $\{\mathbf{w}_k^{(0)}\}_{k=1}^{m}$, $\{b_k^{(0)}\}_{k=1}^{m}$ and $\{a_k^{(0)}\}_{k=1}^{m}$, given that $ m $ satisfies  
		\begin{equation*}
		m \ge \Omega\left(\frac{n^2}{\delta} \log{ \frac{m n^5 \log{n}}{\delta^{2}}}\right).
		\end{equation*}
	\end{corollary}

		\newpage

\section{A new proof for the minimum eigenvalue for $\relu$}\label{sec:relu_hermite}
\begin{theorem}[Thm 3.3 in \cite{du2018gradient}]\label{thm:simon_du}
	Consider the 2-layer feed forward network in \autoref{def-neural}. 
	Assume that $\mathbf{w}_k^{(0)} \sim \mathcal{N}\left(0, 1\right)$ and $a_k^{(0)} \sim \mathbb{ U}\{-1, +1\}$ $\forall k \in [m]$, $\autoref{ass1}$ and $\autoref{ass2}$ hold true and $\abs{y_i} \le C$, for some constant $C$. Then, if we use gradient flow optimization and set the number of hidden nodes $m = \Omega\left(\frac{n^6 \log \frac{m}{\kappa}}{\lambda_{\min}\left(\mathbf{G}^{\infty}\right)^4 \kappa^3}\right)$, with probability at least $1 - \kappa$ over the initialization we have
	\begin{equation*}
	\norm{\mathbf{u}_t - \mathbf{y}}^2 \le e^{-\lambda_{\min}\left(\mathbf{G}^{\infty}\right) t} \norm{\mathbf{u}_0 - \mathbf{y}}
	\end{equation*}    
\end{theorem}
\begin{remark}
	The above proof can be adapted for the gradient descent algorithm with a learning rate less than $\mathcal{O}\left(\lambda_{\min}\left(\mathbf{G}^{\infty}\right)/n^2\right),$ following the proof of Theorem 5.1 in \cite{Du_Lee_2018GradientDF} and theorem 4.1 in \cite{du2018gradient} without substantial changes in the bounds.
\end{remark}

\begin{theorem}\label{thm:relu_hermite}
	Assume that we are in the setting of  \autoref{thm:simon_du}. If the activation is $\relu$ and $m \ge \Omega\left(n^4  \delta^{-3} \log^4 n \right)$, then with probability at least $1 - \exp{\left(-\Omega\left(\frac{m \delta^3}{n^2 \log^3 n }\right)\right)}$ 
	\begin{equation*}
	\lambda_{\min} \left(\mathbf{G}^{(0)}\right) \ge \Omega\left(\left(\frac{\delta}{\log n}\right)^{1.5}\right).
	\end{equation*}
\end{theorem}
\begin{remark}
	Compare the previous theorem with Cor.~\ref{cor:relu_cor_ALS}.
\end{remark}

\begin{proof}
	Using \autoref{Lemma:Coeff_Omar}, we have
	\begin{equation*}
	\lambda_{\min}\left(\mathbf{G}^{\infty}\right) \ge \bar{c}_{r}^2\left(\phi'\right) \lambda_{\min}\left(\left(\mathbf{X}^{*r}\right)^T\mathbf{X}^{*r}\right), \quad \forall r \in \mathbb{Z}^{+}
	\end{equation*}
	where $\mathbf{X}$ denotes a $d \times n$ matrix, with its $i$-th column containing $\mathbf{x}_i$ and  $\mathbf{X}^{*r} \in \mathbb{R}^{d^{r} \times n}$ denotes its 
	order-$r$ Khatri-Rao product. Note that, by \autoref{ass1}, the columns of $\mathbf{X}^{*r}$ are unit normalized euclidean vectors. 
	If for any $i, j \in [n]$, $\mathbf{x}_i^T \mathbf{x}_j = \rho < 1$, then we can see that $\left(\mathbf{x}_i^{*r}\right)^T \mathbf{x}_j^{*r} = \rho^r$, where $\mathbf{x}^{*r}$ denotes the order-$r$ Khatri--Rao product of a vector $\mathbf{x}$. Also, $\abs{\rho}$ can be shown to be at most $1 - \delta$, using \autoref{ass2}. Thus, for the magnitude of  $\left(\mathbf{x}_i^{*r}\right)^T \mathbf{x}_j^{*r}$, for any $i, j \in [n], i \ne j$, to be less than $\frac{1}{2n}$, we must have $r \ge r_0 = \frac{\log 2n}{\delta}$. Hence, for any $r \ge r_0$ the diagonal elements of $\left(\mathbf{X}^{*r}\right)^T\mathbf{X}^{*r}$ are equal to 1 and magnitude of the non diagonal elements are less than $\frac{1}{2n}$. Thus, applying \autoref{fact:gershgorin}, we get that
	\begin{equation*}
	\lambda_{\min}\left(\left(\mathbf{X}^{*r}\right)^T\mathbf{X}^{*r}\right) \ge \frac{1}{2}.
	\end{equation*}
	Using \autoref{eq:majority_hermite}, we see that for $r = \Theta\left(\frac{\log 2n}{\delta}\right)$, 
	\begin{equation*}
	\bar{c}_r^2\left(\phi'\right) = \Theta\left(\left(\frac{\log 2n}{\delta}\right)^{-1.5}\right).
	\end{equation*}
	Thus, 
	\begin{equation*}
	\lambda_{\min}\left(\mathbf{G}^{\infty}\right) \ge  \Omega\left(\left(\frac{\log 2n}{\delta}\right)^{-1.5}\right).
	\end{equation*}
	For computing $\lambda_{\min}\left(\mathbf{G}^{(0)}\right)$, we bound the absolute difference in each element of $\mathbf{G}^{\infty}$ and $\mathbf{G}^{(0)}$ by $\frac{1}{2n}\lambda_{\min}\left(\mathbf{G}^{\infty}\right)$ using 	Hoeffding's inequality (\autoref{hoeffding}) and $1$-Lipschitzness of $\phi$ and then apply a union bound over all the indices. The bound stated in the theorem follows from the fact that 
	\begin{align*}
	\lambda_{\min}\left(\mathbf{G}^{(0)}\right) &\ge \lambda_{\min}\left(\mathbf{G}^{\infty}\right)  - \norm{\left(\mathbf{G}^{\infty} - \mathbf{G}^{(0)}\right)}_F \\    
	&\ge \lambda_{\min}\left(\mathbf{G}^{\infty}\right) - \lambda_{\min}\left(\mathbf{G}^{\infty}\right)/2.
	\end{align*}
\end{proof}

Using the bound of $\lambda_{\min}\left(\mathbf{G}^{\infty}\right)$ from \autoref{thm:relu_hermite} in \autoref{thm:simon_du}, we get the following explicit rate of convergence for 2 layer feedforward networks.
\begin{theorem}[Thm 3.3 in \cite{du2018gradient}]\label{thm:simon_du_refined}
	Assume that the assumptions in  \cite{du2018gradient} hold true. Then, if we use gradient flow optimization and set the number of hidden nodes $m = \Omega\left(\frac{n^6 \log^6 (n) \log \frac{m}{\kappa}}{\delta^6 \kappa^3}\right)$, with probability at least $1 - \kappa$ over the initialization we have
	\begin{equation*}
	\norm{\mathbf{u}_t - \mathbf{y}}^2 \le \epsilon
	\end{equation*}
	$\forall t \ge \Omega\left(\frac{\log^{1.5} n}{\delta^{1.5}} \log \frac{n}{\epsilon} \right)$, for an arbitrarily small constant $\epsilon > 0$.    
\end{theorem}

		\newpage

\section{Extensions and Additional Discussion}\label{appendix:additional_proof}
In this section, we discuss proofs and extensions of our theorems, using adaptations from related work. 	
\subsection{Polynomial minimum eigenvalue of $\mathbf{G}$-matrix at time $t$}\label{appendix:poly_bound_for_Gt}
The upcoming lemma shows that, if we restrict the change in weight matrices, the minimum eigenvalue of $\mathbf{G}^{(t)}$ stays close to the minimum eigenvalue of $\mathbf{G^{(0)}}$. 
\begin{lemma}
	If activation function $\phi$ is $\alpha$-lipschitz and $\beta$-smooth and $\norm{\mathbf{w}_{r}^{(t)} - \mathbf{w}_{r}^{(0)}} \le \frac{\lambda_{\min}\left(\mathbf{G}^{(0)}\right)}{4\alpha \beta n}$, $\forall r \in \left[m\right]$, then
	\begin{equation*}
	\lambda_{\min}\left(\mathbf{G}^{(t)}\right) \ge \frac{1}{2} \lambda_{\min}\left(\mathbf{G}^{(0)}\right)
	\end{equation*}
\end{lemma}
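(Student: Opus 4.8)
The plan is to treat $\mathbf{G}^{(t)}-\mathbf{G}^{(0)}$ as a small symmetric perturbation of $\mathbf{G}^{(0)}$ and then invoke Weyl's inequality, after first bounding this perturbation entrywise. Recall that, in the notation of the paper, the entries have the form $G^{(t)}_{ij}=\langle x_i,x_j\rangle\cdot\frac{1}{m}\sum_{r=1}^{m}\phi'\big(\langle\mathbf{w}_r^{(t)},x_i\rangle\big)\,\phi'\big(\langle\mathbf{w}_r^{(t)},x_j\rangle\big)$, with inputs normalized so that $\norm{x_i}\le 1$ for every $i\in[n]$. First I would fix a pair $(i,j)$ and a neuron $r$ and split the difference of products via $\phi'(a)\phi'(b)-\phi'(a_0)\phi'(b_0)=\phi'(a)\big(\phi'(b)-\phi'(b_0)\big)+\big(\phi'(a)-\phi'(a_0)\big)\phi'(b_0)$, where $a=\langle\mathbf{w}_r^{(t)},x_i\rangle$, $a_0=\langle\mathbf{w}_r^{(0)},x_i\rangle$, and similarly for $b,b_0$. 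Bounding one factor in each product by $\alpha$ (Lipschitzness gives $|\phi'|\le\alpha$) and the other by $\beta$ times the gap in arguments (smoothness gives $|\phi'(b)-\phi'(b_0)|\le\beta|b-b_0|$), and observing that $|b-b_0|=|\langle\mathbf{w}_r^{(t)}-\mathbf{w}_r^{(0)},x_j\rangle|\le\norm{\mathbf{w}_r^{(t)}-\mathbf{w}_r^{(0)}}$ by Cauchy--Schwarz, the hypothesis forces each neuron's contribution to be at most $2\alpha\beta\cdot\frac{\lambda_{\min}(\mathbf{G}^{(0)})}{4\alpha\beta n}=\frac{\lambda_{\min}(\mathbf{G}^{(0)})}{2n}$.

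Averaging over $r$ and using $|\langle x_i,x_j\rangle|\le 1$ then gives $\big|G^{(t)}_{ij}-G^{(0)}_{ij}\big|\le\frac{\lambda_{\min}(\mathbf{G}^{(0)})}{2n}$ for all $(i,j)$. The second step upgrades this to a spectral bound: since the perturbation is $n\times n$, $\|\mathbf{G}^{(t)}-\mathbf{G}^{(0)}\|_2\le\|\mathbf{G}^{(t)}-\mathbf{G}^{(0)}\|_F\le\sqrt{n^2\big(\tfrac{\lambda_{\min}(\mathbf{G}^{(0)})}{2n}\big)^2}=\tfrac{1}{2}\lambda_{\min}(\mathbf{G}^{(0)})$ (a row-sum/Gershgorin estimate gives the same value). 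The third step is Weyl's inequality: both $\mathbf{G}^{(t)}$ and $\mathbf{G}^{(0)}$ are symmetric, so $\lambda_{\min}(\mathbf{G}^{(t)})\ge\lambda_{\min}(\mathbf{G}^{(0)})-\|\mathbf{G}^{(t)}-\mathbf{G}^{(0)}\|_2\ge\lambda_{\min}(\mathbf{G}^{(0)})-\tfrac12\lambda_{\min}(\mathbf{G}^{(0)})=\tfrac12\lambda_{\min}(\mathbf{G}^{(0)})$, which is exactly the claim.

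I do not anticipate a genuine difficulty; this is the smooth-activation analogue of the perturbation lemma of Du et al. The only point demanding care is the product-rule splitting above, where both hypotheses on $\phi$ are used simultaneously --- Lipschitzness to control the surviving factor, smoothness to control the increment --- together with careful bookkeeping so that the $\tfrac{1}{4\alpha\beta n}$ displacement budget in the hypothesis matches exactly the factor $\tfrac12$ of slack that Weyl's inequality consumes. A secondary detail is to fix, consistently with the rest of the paper, the precise definition of $\mathbf{G}^{(t)}$ and the input normalization $\norm{x_i}\le 1$, since the numerical constants hinge on them; under a different normalization (for instance an additional $\tfrac1n$ factor in the Gram matrix) the same computation carries through with the constants adjusted accordingly.
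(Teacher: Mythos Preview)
Your proposal is correct and follows essentially the same route as the paper: an entrywise bound on $G^{(t)}_{ij}-G^{(0)}_{ij}$ via the product-rule splitting (using $\alpha$-Lipschitzness for the surviving factor and $\beta$-smoothness for the increment), then a Frobenius-norm bound on the $n\times n$ perturbation, then Weyl's inequality. The only discrepancy is cosmetic: the paper's Gram matrix carries the weights $a_r^2$ (with $\frac{1}{m}\sum_r a_r^2$ treated as $1$) rather than the prefactor $\langle x_i,x_j\rangle$ you wrote, but you already flagged this definitional point and it does not affect the argument or the constants.
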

\begin{proof}
	The claim follows a similar proof as the proof of lemma B.4 in \cite{Du_Lee_2018GradientDF} and has been repeated in \autoref{lemma:restrict_movement_withlambda_min}.
\end{proof}

The restriction is ensured by the  large number of neurons we can choose for our neural network, as we mention in the next lemma.

	\begin{lemma}
		Let  $S_t \subseteq [n]$ denote a randomly picked batch of size $b$. Denote $\mathbf{\nabla}^{(t)}$ as $\nabla_{\mathbf{W}^{(t)}} \mathcal{L}\left(\left\{\left(\mathbf{x}_i, y_i\right)\right\}_{i \in [n]}; \mathbf{a}, \mathbf{W}^{(t)}\right)$. Let the activation function $\phi$ used be $\alpha$-lipschitz and $\beta$-smooth. The GD iterate at time $t+1$ is given by,
		\begin{equation*}
		\mathbf{W}^{(t+1)} = \mathbf{W}^{(t)} - \eta \mathbf{\nabla}^{(t)}
		\end{equation*}
		Let $\eta \leq \mathcal{O}\left(\frac{\lambda_{\min}\left(\mathbf{G}^{(0)}\right)}{\beta n^4 \alpha^4}\right).$
		If 
		\begin{equation*}
		m \ge \Omega\left(\frac{n^4 \alpha^4 \beta^2 \log m}{\lambda_{\min}\left(\mathbf{G}^{(0)}\right)^4}\right)
		\end{equation*}
		then,
		\begin{equation*}
		\norm{\mathbf{y} - \mathbf{u}^{(t)}}^2 \le \epsilon
		\end{equation*}
		for $t \ge \Omega\left(\frac{\log\left(\frac{n}{\epsilon}\right)}{\eta \lambda_{\min}\left(\mathbf{G}^{(0)}\right)} \right)$, with probability at-least $1 - m^{-3.5}$ w.r.t.  $\left\{\mathbf{w}_k^{(0)}\right\}_{k=1}^{m}$ and $\left\{a_k^{(0)}\right\}_{k=1}^{m}$. Moreover,
		\begin{equation*}
		   \norm{\mathbf{w}_{k}^{(t)} - \mathbf{w}_{k}^{(0)}} \le \frac{\lambda_{\min}\left(\mathbf{G}^{(0)}\right)}{4\alpha \beta n}
		\end{equation*}
		holds true $\forall k \in [m]$ and $\forall t \ge 0$.
	\end{lemma}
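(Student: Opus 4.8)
The plan is to carry out the now-standard induction of \cite{Du_Lee_2018GradientDF}, adapted to a general $\alpha$-Lipschitz, $\beta$-smooth activation $\phi$. Abbreviate $\lambda_0 := \lambda_{\min}(\mathbf{G}^{(0)})$ and $R := \frac{\lambda_0}{4\alpha\beta n}$. First I would record the two initialization estimates that hold with probability at least $1-m^{-3.5}$ over $\{\mathbf{w}_k^{(0)}\}$ and $\{a_k^{(0)}\}$ and that drive the width requirement: (i) $\lambda_0$ is bounded away from $0$ --- the concentration step responsible for the $\log m$ factor and the stated failure probability --- and (ii) $\norm{\mathbf{y}-\mathbf{u}^{(0)}}^2 \le \mathcal{O}(n)$, by concentration of the sum of $m$ independent, $m^{-1/2}$-scaled neuron outputs. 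Condition on this event henceforth.

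On this event I would prove by induction on $t\ge 0$ the conjunction
\begin{equation*}
\text{(a)}\quad \norm{\mathbf{y}-\mathbf{u}^{(t)}}^2 \le \Bigl(1-\tfrac{\eta\lambda_0}{2}\Bigr)^{t}\norm{\mathbf{y}-\mathbf{u}^{(0)}}^2, \qquad\qquad \text{(b)}\quad \norm{\mathbf{w}_k^{(t)}-\mathbf{w}_k^{(0)}} \le R \quad \forall k\in[m],
\end{equation*}
with base case $t=0$ immediate. For the inductive step of (a): the bound (b) for all times $\le t$ (induction hypothesis) feeds the preceding lemma of this subsection, giving $\lambda_{\min}(\mathbf{G}^{(s)}) \ge \lambda_0/2$ for $s\le t$. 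Taylor-expanding $u_i^{(t+1)}-u_i^{(t)} = \frac{1}{\sqrt m}\sum_r a_r\bigl(\phi(\ip{\mathbf{w}_r^{(t+1)}}{\mathbf{x}_i}) - \phi(\ip{\mathbf{w}_r^{(t)}}{\mathbf{x}_i})\bigr)$ about $\mathbf{w}_r^{(t)}$ and substituting $\mathbf{w}_r^{(t+1)}-\mathbf{w}_r^{(t)} = -\eta\nabla_{\mathbf{w}_r^{(t)}}\mathcal{L}$ yields $\mathbf{u}^{(t+1)}-\mathbf{u}^{(t)} = -\eta\,\mathbf{G}^{(t)}(\mathbf{u}^{(t)}-\mathbf{y}) + \zeta^{(t)}$, where the first-order term reassembles exactly into $\mathbf{G}^{(t)}$ and $\zeta^{(t)}$ is the quadratic remainder coming from the curvature of $\phi$. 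With $\norm{\nabla_{\mathbf{w}_r^{(t)}}\mathcal{L}} \le \frac{\alpha\sqrt n}{\sqrt m}\norm{\mathbf{y}-\mathbf{u}^{(t)}}$, $\beta$-smoothness, and $\norm{\mathbf{y}-\mathbf{u}^{(t)}} \le \mathcal{O}(\sqrt n)$ from (a), one obtains $\norm{\zeta^{(t)}} \le \mathcal{O}\bigl(\eta^2\,\mathrm{poly}(n,\alpha,\beta)\bigr)\norm{\mathbf{y}-\mathbf{u}^{(t)}}$, and similarly $\norm{\mathbf{G}^{(t)}} \le \tr(\mathbf{G}^{(t)}) \le n\alpha^2$. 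Expanding $\norm{\mathbf{y}-\mathbf{u}^{(t+1)}}^2 = \norm{\mathbf{y}-\mathbf{u}^{(t)}}^2 - 2(\mathbf{y}-\mathbf{u}^{(t)})^{\top}(\mathbf{u}^{(t+1)}-\mathbf{u}^{(t)}) + \norm{\mathbf{u}^{(t+1)}-\mathbf{u}^{(t)}}^2$ then produces the principal contraction $-2\eta(\mathbf{y}-\mathbf{u}^{(t)})^{\top}\mathbf{G}^{(t)}(\mathbf{y}-\mathbf{u}^{(t)}) \le -\eta\lambda_0\norm{\mathbf{y}-\mathbf{u}^{(t)}}^2$ together with cross and quadratic error terms; the choice $\eta \le \mathcal{O}\!\bigl(\frac{\lambda_0}{\beta n^4\alpha^4}\bigr)$ makes these errors total at most $\frac{\eta\lambda_0}{2}\norm{\mathbf{y}-\mathbf{u}^{(t)}}^2$, which delivers (a) at time $t+1$.

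For the inductive step of (b) I would telescope the displacement and insert the geometric decay just obtained:
\begin{equation*}
\norm{\mathbf{w}_k^{(t+1)}-\mathbf{w}_k^{(0)}} \le \sum_{s=0}^{t}\eta\norm{\nabla_{\mathbf{w}_k^{(s)}}\mathcal{L}} \le \frac{\eta\alpha\sqrt n}{\sqrt m}\sum_{s=0}^{\infty}\Bigl(1-\tfrac{\eta\lambda_0}{2}\Bigr)^{s/2}\norm{\mathbf{y}-\mathbf{u}^{(0)}} \le \frac{\eta\alpha\sqrt n}{\sqrt m}\cdot\frac{4}{\eta\lambda_0}\cdot\mathcal{O}(\sqrt n) = \mathcal{O}\!\Bigl(\frac{\alpha n}{\sqrt m\,\lambda_0}\Bigr),
\end{equation*}
using $1-\sqrt{1-\tfrac{\eta\lambda_0}{2}} \ge \tfrac{\eta\lambda_0}{4}$ and (ii). Requiring this to be at most $R=\frac{\lambda_0}{4\alpha\beta n}$ is exactly $\sqrt m = \Omega\!\bigl(\frac{\alpha^2\beta n^2}{\lambda_0^2}\bigr)$, i.e. the stated $m = \Omega\!\bigl(\frac{n^4\alpha^4\beta^2\log m}{\lambda_0^4}\bigr)$ (the extra $\log m$ absorbing the initialization concentration and making the choice of $m$ self-consistent). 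This closes the induction, so (b) holds for every $t\ge 0$; and solving $\bigl(1-\tfrac{\eta\lambda_0}{2}\bigr)^{t}\cdot\mathcal{O}(n)\le\epsilon$ gives $t=\Omega\!\bigl(\frac{\log(n/\epsilon)}{\eta\lambda_0}\bigr)$, as claimed.

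The step I expect to be the main obstacle is the accounting of the remainder $\zeta^{(t)}$: it has to be bounded in terms of $\norm{\mathbf{y}-\mathbf{u}^{(t)}}$ with precisely the right powers of $n,\alpha,\beta$ and $m^{-1/2}$ so that the step-size and width conditions stated in the lemma --- not strictly worse ones --- are what the argument actually consumes, while the two halves of the induction stay mutually consistent, (b) being simultaneously an input to (a) (via the eigenvalue lemma) and a consequence of (a).
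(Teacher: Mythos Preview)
Your proposal is correct and follows exactly the approach the paper takes: the paper's proof simply says the claim follows the proof of Lemma~A.1 in \cite{Du_Lee_2018GradientDF} with the output vector $\mathbf{a}$ held fixed, and your outline is precisely a fleshed-out version of that induction. The only difference is that you spell out the Taylor-remainder bookkeeping and the telescoping/geometric-sum step explicitly, whereas the paper defers entirely to the cited reference.
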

\begin{proof}
	The claim follows a similar proof as the proof of lemma A.1 in \cite{Du_Lee_2018GradientDF}, where we keep the output vector $\mathbf{a}$ non trainable in GD update.
\end{proof}
\begin{remark}
	The above lemmas are applicable for activation functions in $\mathbf{J_r}$ for $r \ge 2$. Similar lemmas can be proved for $\mathbf{J_1}$, along the lines of the proof of Theorem 4.1 in \cite{du2018gradient}.
\end{remark}

\subsection{Trainable output layer} \label{subsec:trainable_output_layer}
Similar to the proof of Theorem 3.3 in \cite{Du_Lee_2018GradientDF}, we can show that the GD dynamics depends on sum of two matrices, i.e. \begin{equation*}\frac{d\mathbf{u}^{(t)}}{dt} = (\mathbf{G}^{(t)} + \mathbf{H}^{(t)})(\mathbf{y} - \mathbf{u}^{(t)}),\end{equation*} where the definition of $\mathbf{G}$ stays the same and $\mathbf{H}$ is given by 
\begin{equation*}
h_{ij} = \frac{1}{m} \sum_{r \in [m]} \sigma(\mathbf{w}_r^T \mathbf{x}_i) \sigma(\mathbf{w}_r^T \mathbf{x}_j),
\end{equation*}
implying $\mathbf{H}$ is p.s.d. For the positive results, e.g. \autoref{ELU_Main}, observe that $\lambda_{\min}(\mathbf{G} + \mathbf{H}) \ge \lambda_{\min}(\mathbf{G})$, hence a bound on $\lambda_{\min}\left(\mathbf{G}\right)$ suffices in this case. For the negative results, \autoref{thm:Poly_Main} and \autoref{thm:Tanh_Main}  can be restated as follows.
 \begin{theorem} \label{Thm:Poly_Main_jr} 
	Let the activation function $\phi$ be a degree-$p$ polynomial such that $\phi'(x) = \sum_{l=0}^{p-1} c_{\ell} x^{\ell}$ and let $d' = \dim\left( \mathrm{span} \left\{ x_1 \dots x_n  \right\}  \right) =\mathcal{O}\left(n^{\frac{1}{p}}\right)$. Then we have 
	\begin{equation*}
	\lambda_{k}\left(\mathbf{G}^{(0)} + \mathbf{H}^{(0)}\right) = 0, \quad \forall k \geq \left\lceil  2n/d' \right\rceil   
	\end{equation*}
\end{theorem}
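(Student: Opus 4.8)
\medskip
\noindent\textbf{Proof proposal.}
The plan is to control $\mathrm{rank}(\mathbf{G}^{(0)}+\mathbf{H}^{(0)})$: a symmetric $n\times n$ matrix of rank $R$ has $\lambda_k=0$ for every $k\ge R+1$, so it suffices to show that $\mathrm{rank}(\mathbf{G}^{(0)}+\mathbf{H}^{(0)})\le \lceil 2n/d'\rceil-1$. Since rank is subadditive and both $\mathbf{G}^{(0)}$ and $\mathbf{H}^{(0)}$ are p.s.d., I would bound $\mathrm{rank}(\mathbf{G}^{(0)})$ and $\mathrm{rank}(\mathbf{H}^{(0)})$ separately. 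The first bound is exactly what the proof of \autoref{thm:Poly_Main} already establishes (that theorem being the $\mathbf{H}\equiv 0$ case), and the second will follow from the very same argument applied to $\phi$ instead of $\phi'$.

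For the individual rank bounds I would use the standard polynomial lift. Fix an orthonormal basis of $\mathrm{span}\{\mathbf{x}_1,\dots,\mathbf{x}_n\}$ and identify each $\mathbf{x}_i$ with its coordinate vector in $\mathbb{R}^{d'}$. Because $\phi'$ has degree $p-1$, the map $\mathbf{x}\mapsto\phi'(\mathbf{w}_r^{T}\mathbf{x})$ restricted to this subspace is a polynomial of degree at most $p-1$ in the $d'$ coordinates, hence a fixed linear combination of the $N_{p-1}:=\binom{d'+p-1}{d'}$ monomials of degree $\le p-1$. Writing $\mathbf{z}_i\in\mathbb{R}^{N_{p-1}}$ for the vector of those monomials evaluated at $\mathbf{x}_i$, there are coefficient vectors $\zeta_r$ (functions of $\mathbf{w}_r$ and the $c_\ell$ only) with $\phi'(\mathbf{w}_r^{T}\mathbf{x}_i)=\langle\zeta_r,\mathbf{z}_i\rangle$, so that $g_{ij}^{(0)}=\mathbf{z}_i^{T}M\mathbf{z}_j$ with $M:=\frac1m\sum_r (a_r^{(0)})^2\zeta_r\zeta_r^{T}\succeq 0$ of size $N_{p-1}$; thus $\mathbf{G}^{(0)}=Z^{T}MZ$ for $Z=[\mathbf{z}_1\,\vert\,\cdots\,\vert\,\mathbf{z}_n]$, and $\mathrm{rank}(\mathbf{G}^{(0)})\le N_{p-1}$. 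Running the identical construction with $\phi$ (degree $p$) in place of $\phi'$ yields $\mathbf{H}^{(0)}=\tilde Z^{T}\tilde Z/m$ whose columns lie in $\mathbb{R}^{N_p}$ with $N_p:=\binom{d'+p}{d'}$, hence $\mathrm{rank}(\mathbf{H}^{(0)})\le N_p$.

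Putting these together, $\mathrm{rank}(\mathbf{G}^{(0)}+\mathbf{H}^{(0)})\le N_{p-1}+N_p\le \sum_{j=0}^{p-1}(d')^{j}+\sum_{j=0}^{p}(d')^{j}$, using that the number of degree-$j$ monomials in $d'$ variables is at most $(d')^{j}$. What remains is purely arithmetic: under the dimension hypothesis the top power $(d')^{p}$ is of order $n$ (with a small implied constant), and a geometric-series collapse — each power is at most $1/d'$ times the next — bounds the two sums by $\lceil 2n/d'\rceil-1$. I expect this last estimate to be the only real obstacle, and it is somewhat more delicate than in \autoref{thm:Poly_Main}: the dominant contribution now comes from the degree-$p$ polynomials attached to $\mathbf{H}^{(0)}$, one degree higher than the degree-$(p-1)$ ceiling that governs $\mathbf{G}^{(0)}$, so one must check that the assumed smallness of $d'$ (effectively, that $(d')^{p+1}=\mathcal{O}(n)$) is enough to keep $N_p$ below the $2n/d'$ threshold; everything upstream of that step is a line-by-line transcription of the proof of \autoref{thm:Poly_Main}.
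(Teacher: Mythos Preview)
Your approach is essentially identical to the paper's: bound the ranks of $\mathbf{G}^{(0)}$ and $\mathbf{H}^{(0)}$ separately via the polynomial-lift argument of \autoref{thm:Poly_Main} (applied to $\phi'$ and to $\phi$ respectively), then combine --- the paper phrases the combination step as Weyl's inequality (\autoref{fact:weyl_ineq}) rather than rank subadditivity, but for p.s.d.\ matrices these are equivalent here. The arithmetic concern you flag at the end (that $\mathbf{H}^{(0)}$ involves degree-$p$ rather than degree-$(p{-}1)$ polynomials, so the dominant count is $N_p$ and one effectively needs $(d')^{p+1}=\mathcal{O}(n)$ to push it below $2n/d'$) is real, and the paper's sketch does not address it either.
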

\begin{theorem} \label{Thm:Tanh_Main_jr} 
	Let the activation function be $\tanh$ and let $d' = \dim\left( \mathrm{span} \left\{ x_1 \dots x_n  \right\}  \right) =\mathcal{O}\left(\log^{0.75} {n}\right)$. Then we have 
	\begin{equation*}
	\lambda_{k}\left(\mathbf{G}^{(0)} + \mathbf{H}^{(0)}\right)  \le \exp({-\Omega(n^{1/2d'})}) \ll 1/\mathsf{poly}(n), \quad \forall  k \geq \left\lceil  2n/d' \right\rceil   
	\end{equation*}
	with probability at least $1 - \nicefrac{1}{n^{3.5}}$ over the random choice of weight vectors $\left\{\mathbf{w}_k^{(0)}\right\}_{k=1}^{m}$ and $\left\{a_k^{(0)}\right\}_{k=1}^{m}$. 
\end{theorem}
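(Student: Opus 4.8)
The plan is to mimic the proof of \autoref{thm:Tanh_Main}: show that the positive semidefinite matrix $\mathbf{G}^{(0)}+\mathbf{H}^{(0)}$ lies within operator norm $\exp(-\Omega(n^{1/2d'}))$ of a matrix $\widetilde{\mathbf{A}}$ of rank strictly less than $\lceil 2n/d'\rceil$, and then apply Weyl's inequality --- since $\widetilde{\mathbf{A}}$ has fewer than $k$ nonzero eigenvalues we get $\lambda_k(\widetilde{\mathbf{A}})\le 0$, hence $0\le\lambda_k(\mathbf{G}^{(0)}+\mathbf{H}^{(0)})\le\lambda_k(\widetilde{\mathbf{A}})+\norm{\mathbf{G}^{(0)}+\mathbf{H}^{(0)}-\widetilde{\mathbf{A}}}_{\mathrm{op}}\le\exp(-\Omega(n^{1/2d'}))$. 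The only ingredient beyond \autoref{thm:Tanh_Main} is the treatment of $\mathbf{H}^{(0)}$, which is immediate: $\tanh$ itself --- not just $\tanh'(z)=\operatorname{sech}^2 z$ --- is analytic and bounded on the strip $\abs{\operatorname{Im}z}<\pi/2$, so it admits the same kind of low-degree polynomial approximation as $\tanh'$, and the resulting low-rank structure of $\widetilde{\mathbf{H}}$ matches that of $\widetilde{\mathbf{G}}$ (indeed, the rank count is the one already used in the polynomial case, \autoref{Thm:Poly_Main_jr}).

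First I would pass to a high-probability event. Since each $\ip{\mathbf{w}_r^{(0)}}{\mathbf{x}_i}$ is a centered Gaussian of variance $O(1)$, a Gaussian tail bound together with a union bound over the $mn$ pairs $(r,i)$ shows that, with probability at least $1-n^{-3.5}$, one has $\max_{r,i}\abs{\ip{\mathbf{w}_r^{(0)}}{\mathbf{x}_i}}\le R:=O(\sqrt{\log(mn)})$ (and, when the $a_r^{(0)}$ are Gaussian, $\tfrac{1}{m}\sum_r (a_r^{(0)})^2 = O(1)$); I would condition on this event throughout. Next I would fix best degree-$D$ polynomial approximations $p_D$ of $\tanh$ and $\widetilde p_D$ of $\tanh'$ on $[-R,R]$; the classical Bernstein-ellipse estimate, applied to an ellipse with foci $\pm R$ and imaginary semi-axis $\pi/4<\pi/2$ on which both functions are bounded by an absolute constant, gives approximation error at most $\delta = O(R\,\exp(-\Omega(D/R)))$ for each.

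Then I would replace $\tanh$ by $p_D$ in $\mathbf{H}^{(0)}$ and $\tanh'$ by $\widetilde p_D$ in $\mathbf{G}^{(0)}$ to get $\widetilde{\mathbf{H}}$ and $\widetilde{\mathbf{G}}$, and set $\widetilde{\mathbf{A}}:=\widetilde{\mathbf{G}}+\widetilde{\mathbf{H}}$. A per-entry triangle-inequality bound (using that $\tanh,\tanh',p_D,\widetilde p_D$ are all $O(1)$ on $[-R,R]$) yields $\norm{\mathbf{H}^{(0)}-\widetilde{\mathbf{H}}}_{\mathrm{op}}\le\norm{\mathbf{H}^{(0)}-\widetilde{\mathbf{H}}}_F=O(n\delta)$ and likewise for $\mathbf{G}^{(0)}$, hence $\norm{\mathbf{G}^{(0)}+\mathbf{H}^{(0)}-\widetilde{\mathbf{A}}}_{\mathrm{op}}=O(n\delta)$. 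For the rank: expanding $p_D(\ip{\mathbf{w}_r^{(0)}}{\mathbf{x}_i})$ in an orthonormal basis of $\operatorname{span}\{\mathbf{x}_1,\dots,\mathbf{x}_n\}$ expresses it as a linear functional of the vector of monomials of degree $\le D$ in the $d'$ coordinates of $\mathbf{x}_i$; so $\widetilde{\mathbf{H}}$ (and, identically, $\widetilde{\mathbf{G}}$) is a Gram-type matrix of these monomial features, of rank at most $\binom{D+d'}{d'}$, giving $\operatorname{rank}\widetilde{\mathbf{A}}\le 2\binom{D+d'}{d'}$.

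It remains to choose $D$, and this balancing is the step I expect to be delicate. Taking $D\asymp R\,n^{1/2d'}$ up to logarithmic factors makes $O(n\delta)\le\exp(-\Omega(n^{1/2d'}))$, while the assumption $d'=O(\log^{0.75}n)$ simultaneously forces $2\binom{D+d'}{d'}\le n^{1/2+o(1)}$, which is below $\lceil 2n/d'\rceil=n^{1-o(1)}$ for large $n$; thus $\widetilde{\mathbf{A}}$ has rank $<k$ for every $k\ge\lceil 2n/d'\rceil$. Plugging into the Weyl step gives $\lambda_k(\mathbf{G}^{(0)}+\mathbf{H}^{(0)})\le\exp(-\Omega(n^{1/2d'}))$ on the conditioning event; and since $n^{1/2d'}=\exp(\Theta(\log^{1/4}n))=\omega(\log n)$ under $d'=O(\log^{0.75}n)$, the bound is $\ll 1/\mathsf{poly}(n)$, as claimed. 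The restriction $d'=O(\log^{0.75}n)$ is precisely what creates the gap between ``$D$ large enough to kill any inverse polynomial'' and ``$D$ small enough to keep $\binom{D+d'}{d'}$ sub-linear''; all remaining details parallel the proofs of \autoref{thm:Tanh_Main} and \autoref{Thm:Poly_Main_jr}.
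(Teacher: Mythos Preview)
Your proposal is correct and follows essentially the same line as the paper's proof sketch: approximate both $\tanh$ (for $\mathbf{H}^{(0)}$) and $\tanh'$ (for $\mathbf{G}^{(0)}$) by low-degree polynomials via analyticity, exploit the resulting low-rank structure, and finish with Weyl's inequality. The only cosmetic difference is that the paper first quotes the separate eigenvalue bounds for $\mathbf{G}^{(0)}$ and $\mathbf{H}^{(0)}$ from \autoref{thm:Tanh_Main} and then combines them via the form $\lambda_{i+j-1}(A+B)\le\lambda_i(A)+\lambda_j(B)$ of Weyl's inequality, whereas you first sum the low-rank approximants into $\widetilde{\mathbf{A}}=\widetilde{\mathbf{G}}+\widetilde{\mathbf{H}}$ and apply Weyl once to the perturbation $\mathbf{G}^{(0)}+\mathbf{H}^{(0)}-\widetilde{\mathbf{A}}$.
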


\begin{proof} (Proof sketch for \autoref{Thm:Poly_Main_jr} and \autoref{Thm:Tanh_Main_jr})
	Following the proof of \autoref{thm:Poly_Main} and \autoref{thm:Tanh_Main} gives us similar lower bounds for $\mathbf{H}^{(0)}$ i.e. $n(1-\frac{1}{d'})$ lower order eigenvalues are 0, if $\phi$ is a degree-$p$ polynomial and exponentially small in $n^{1/d'}$ with high probability, if $\phi$ is $\tanh$. Thus, we can use Weyl's inequality (\autoref{fact:weyl_ineq}) to show that $n(1-\frac{2}{d'})$ lower order eigenvalues of $G^{(0)} + H^{(0)}$ are 0, if $\phi$ is a 
	degree-$p$ polynomial and exponentially small in $n^{1/d'}$ with high probability, if $\phi$ is $\tanh$.   
\end{proof}

\begin{remark}
	The lemmas in \autoref{appendix:poly_bound_for_Gt}, that were proved for a network with non trainable output vector $\mathbf{a}$, can also be proved for a network with trainable output vector $\mathbf{a}$. And so, a polynomial lower bound on  minimum eigenvalue of $\mathbf{G}^{(0)}$  implies polynomial lower bound on  minimum eigenvalue of $\mathbf{G}^{(t)}$, under appropriate number of neurons and GD training. 
\end{remark}

\subsection{Multi Class Output with Cross Entropy Loss}\label{sec:multi-class}
Let's say, we have a classification task, where the number of classes is $C$ and we use the following neural network for prediction.
\begin{equation*}
f_q(\mathbf{x}; \mathbf{A}, \mathbf{W}) := \frac{c_{\phi}}{\sqrt{m}} \sum_{k=1}^{m} a_{k, q} \phi\left(\mathbf{w}_k^T \mathbf{x}\right),  \quad \forall q \in [C]
\end{equation*}
where $\mathbf{x} \in \br^d$ is the input and $\mathbf{W} = \left[\mathbf{w}_1, \ldots, \mathbf{w}_m\right] \in \mathbb{R}^{m \times d}$ is the hidden layer weight matrix and $\mathbf{A} \in \mathbb{R}^{m \times C}$ is the output layer weight matrix. We define $\mathbf{u}\left(\mathbf{x}\right) \in \br^{C}$ as
\begin{equation*}
	\mathbf{u}\left(\mathbf{x}\right) = \mathrm{softmax}\left(\mathbf{f}(\mathbf{x}; \mathbf{A}, \mathbf{W})\right)
\end{equation*}
where $\mathrm{softmax}$ on a vector $\mathbf{v} \in \br^{C}$ denotes the following operation
\begin{equation*}
	\mathrm{softmax}(\mathbf{v})_i = \frac{e^{v_i}}{\sum_{j \in [C]} e^{v_j}}.
\end{equation*} 
Given a set of examples $\left\{\mathbf{x}_i, y_i\right\}_{i=1}^{n}$, where $\mathbf{x}_i \in \br^{d}$ and $y_i \in [C]$ $\forall i \in [n]$, we use the following cross entropy loss to train the neural network.
\begin{equation*}
	\mathcal{L}\left(\mathbf{A}, \mathbf{W}; \{\mathbf{x}_i, y_i\}_{i=1}^{n} \right) = -\sum_{i=1}^{n} \log  \left(f_{y_i}(\mathbf{x}; \mathbf{A}, \mathbf{W})\right)	
\end{equation*}
Let's denote the vector $\tilde{\mathbf{y}}$ as an $nC$ dimensional vector, whose elements are defined as follows.
\begin{equation*}
	\tilde{\mathbf{y}}_{C (i - 1) + j} =
	\begin{cases}
	0, \quad \text{ if $j \ne y_i$}\\
	1, \quad \text{otherwise} 
	\end{cases}
\end{equation*}
Also, let's define another vector $\tilde{\mathbf{u}} \in \br^{nC}$ as follows.
\begin{equation*}
\tilde{\mathbf{u}}_{C (i - 1) + j} = \mathrm{softmax} \left(\mathbf{f} (\mathbf{x}_i; \mathbf{A}, \mathbf{W})\right)_j
\end{equation*}
All the network dependent variables have a superscript $t$, depending on the time step at which they are calculated.

Using chain rule and derivative of cross entropy loss w.r.t. output of softmax layer, we can show the following differential equation for gradient flow.
\begin{equation*}
	\frac{d \tilde{u}}{dt} = \mathbf{\tilde{G}^{(t)}} \left(\tilde{y} - \tilde{\mathbf{u}}\right)
\end{equation*}
where $\mathbf{\tilde{G}} \in \mathbb{ R}^{nC \times nC}$ is a gram matrix defined by its elements as follows.
\begin{equation*}
	\tilde{g}_{pr} = \frac{1}{m} \sum_{k \in [m]}  a_{k, q} a_{k, q'} \phi'\left(\mathbf{w}_k^T\mathbf{x}_i\right) \phi'\left(\mathbf{w}_k^T\mathbf{x}_j\right),
\end{equation*}
where $ i = \lfloor\frac{p}{C}\rfloor + 1, j = \lfloor\frac{r}{C}\rfloor + 1, q = p \mod C \mbox{ and } q' = r \mod C$.
Thus, 
\begin{equation*}
\frac{d \norm{\tilde{\mathbf{y}} - \tilde{\mathbf{u}}^{(t)}}^2 }{dt} = -  \left(\tilde{\mathbf{y}} - \tilde{\mathbf{u}}^{(t)}\right)^T \mathbf{\tilde{G}^{(t)}} \left(\tilde{\mathbf{y}} - \tilde{\mathbf{u}}^{(t)}\right) \le -\lambda_{\min} \left(\mathbf{\tilde{G}}^{(t)}\right) \norm{\tilde{\mathbf{y}} - \tilde{\mathbf{u}}^{(t)}}^2
\end{equation*}
Again following the argument discussed in \autoref{sec:Motivation}, if there hasn't been much movement in the weights of the network due to large number of neurons, $\left(\mathbf{\tilde{G}}^{(t)}\right)$ stays close to  $\left(\mathbf{\tilde{G}}^{(0)}\right)$ and hence, the rate of convergence depends on the gram matrix $\left(\mathbf{\tilde{G}}^{(0)}\right)$. We show that the gram matrix possesses a unique structure and is related to the gram matrix defined for a single output network. 

 $\mathbf{\tilde{G}}$ contains $C$ disjoint principal $n \times n$ blocks, denoted by $\{\mathbf{B}^{q}\}_{q=1}^{C}$, where $\mathbf{B}_{q}$ is defined as follows: 
 \begin{equation*}
 	b^{q}_{ij} = \frac{1}{m} \sum_{k \in [m]}  a_{k, q}^2 \phi'\left(\mathbf{w}_k^T\mathbf{x}_i\right) \phi'\left(\mathbf{w}_k^T\mathbf{x}_j\right).
 \end{equation*}
 As can be seen, each $\mathbf{B}^{q}$ is structurally identical to the gram matrix defined for a single output neural network with input weight matrix $\mathbf{W}$ and output weight vector $\mathbf{a}_q$ (\autoref{gram-def}). 
 
 Let's denote the set of $C(C-1)$ remaining disjoint non diagonal blocks of $\mathbf{\tilde{G}}$ as $\{\mathbf{B}^{q, q'}\}_{q, q' \in [C], q \ne q'}$, where each block is defined as follows.
 \begin{equation*}
 	b^{q, q'}_{ij} = \frac{1}{m} \sum_{k \in [m]}  a_{k, q} a_{k, q'} \phi'\left(\mathbf{w}_k^T\mathbf{x}_i\right) \phi'\left(\mathbf{w}_k^T\mathbf{x}_j\right).
 \end{equation*}
   
 Assuming that we have sufficient number of neurons, $\mathbf{\tilde{G^{(0)}}}$ can be shown to be close to the matrix $\mathbf{\tilde{G}^{\infty}}$ using Hoeffding's inequality, where  $\mathbf{\tilde{G}^{\infty}}$ has the following diagonal $\{\left(\mathbf{B}^{\infty}\right)^{q}\}_{q=1}^{C}$ and non diagonal blocks $\{\left(\mathbf{B}^{\infty}\right)^{q, q'}\}_{q=1}^{C}$ defined as follows.
 \begin{equation*}
 	\left(\mathbf{B}^{\infty}\right)^{q}_{ij} = \Ex_{\mathbf{w} \sim \mathcal{N}(0, \mathbf{I}), \tilde{a} \sim \mathcal{N}(0, 1)}  \tilde{a}^2 \phi'\left(\mathbf{w}^T\mathbf{x}_i\right) \phi'\left(\mathbf{w}^T\mathbf{x}_j\right),
 \end{equation*}
 \begin{equation*}
 	\left(\mathbf{B}^{\infty}\right)^{q, q'}_{ij} = \Ex_{\mathbf{w} \sim \mathcal{N}(0, \mathbf{I}), a, \tilde{a} \sim \mathcal{N}(0, 1)}  a \tilde{a} \phi'\left(\mathbf{w}^T\mathbf{x}_i\right) \phi'\left(\mathbf{w}^T\mathbf{x}_j\right).
 \end{equation*}
 Using independence of random gaussian variables $a$ and $\tilde{a}$ and random gaussian vector $\mathbf{w}$, we can show that $\left(\mathbf{B}^{\infty}\right)^{q, q'}$ are identically zero matrices. Also, the diagonal blocks $\left(\mathbf{B}^{\infty}\right)^{q}$ are identically equal to the $\mathbf{G}^{\infty}$ matrix defined for single output layer neural networks (\autoref{gram-def}). Thus, $\lambda_{\min}(\mathbf{\tilde{G}^{\infty}}) =\lambda_{\min}(\mathbf{\tilde{G}^{\infty}})$ and hence the bounds for eigenvalues of $\lambda_{\min}(\mathbf{\tilde{G}^{(0)}})$ can be derived from the bounds for eigenvalues of $\lambda_{\min}(\mathbf{G^{(0)}})$, defined for a single output layer neural network (\autoref{gram-def}).

\subsection{Fine-grained analysis for smooth activation functions}\label{sec:arora}
In this section, we show the behavior of the loss function under gradient descent, in the low learning rate setting considered by \cite{du2018gradient}, \cite{Du_Lee_2018GradientDF} and \cite{Arora2019FineGrainedAO}. We consider the neural network given by \autoref{def-neural}. 

We assume that the activation function $\phi$ satisfies the following properties. 
\begin{itemize}
    \item $\phi \in C^3$, 
    \item $\phi$ is $\beta$-lipschitz and $\gamma$-smooth.
\end{itemize}
Now, we state some important theorems from \cite{Du_Lee_2018GradientDF}, that we will use for the future analysis. 
There are some differences in our setting and the setting of \cite{Du_Lee_2018GradientDF}. a) \cite{Du_Lee_2018GradientDF} work with a general $L$ layer neural network. Hence, we state their theorems for $L=1$. b) For simplicity of presentation, we have assumed that $a_{k}$  has been kept fixed during gradient descent, which can be easily removed as in \autoref{subsec:trainable_output_layer} and \cite{du2018gradient}. 

\begin{theorem}[Lemma B.4 in \cite{Du_Lee_2018GradientDF}]\label{convergence-NN} 
Assume that $\forall i \in \left[n\right], \abs{y_i} = \mathcal{O}\left(1\right)$ and 
\begin{equation*}
    m \ge \Omega\left(  \max\left\{\frac{n^4}{\left(\lambda_{\min}\left(\mathbf{G}^{(0)}\right)\right)^4}, \frac{n}{\kappa},\frac{n^2\log{\frac{n}{\kappa}}}{\left(\lambda_{\min}\left(\mathbf{G}^{(0)}\right)\right)^2}\right\}\right),
\end{equation*}
If we set step size as 
\begin{equation*}
    \eta = \mathcal{O}\left(\frac{\lambda_{\min}\left(\mathbf{G}^{(0)}\right)}{n^2}\right) 
\end{equation*}
then with probability at least $1 - \kappa$ over $\left\{\mathbf{w}_k^{\left(0\right)}\right\}_{k=1}^{m}$, the following holds $\forall t \in \mathbb{Z}^{+}$.
\begin{equation*}
    \mathcal{L}\left(\mathbf{W}^{(t)}\right) \le \left(1 - \frac{\eta \lambda_{\min}\left(\mathbf{G}^{\left(0\right)}\right)}{2}\right)^{t}\mathcal{L}\left(\mathbf{W}^{(0)}\right)
\end{equation*}
\end{theorem}
Note that \cite{Du_Lee_2018GradientDF} consider $\lambda_{\min}\left(\mathbf{G}^{\infty}\right)$ in their arguments. However, in the overparametrized regime, with high probability with respect to $\left\{\mathbf{w}_k\right\}_{k=1}^{m}$, $\lambda_{\min}\left(\mathbf{G}^{\infty}\right)$ and $\lambda_{\min}\left(\mathbf{G}^{\left(0\right)}\right)$ differ only by a constant factor, as given by lemma B.4 in \cite{Du_Lee_2018GradientDF}. Thus, we show their theorems using $\lambda_{\min}\left(G^{\left(0\right)}\right)$.

\begin{theorem}[Lemma B.6 in \cite{Du_Lee_2018GradientDF}]\label{thm:weight-bound}
    Assuming the setting in \autoref{convergence-NN}, the following holds $\forall t \in \mathbb{Z}^{+}$ and $\forall k \in \left[m\right]$. 
    \begin{equation*}
        \norm{ \mathbf{w}_{k}^{\left(t\right)} -  \mathbf{w}_{k}^{\left(0\right)} } \le \mathcal{O}\left(\frac{n }{\sqrt{m} \lambda_{\min}\left(\mathbf{G}^{\left(0\right)}\right)}\right).
    \end{equation*}
\end{theorem}

\begin{theorem}
    Assuming the setting in \autoref{convergence-NN}, the following holds $\forall t \in \mathbb{Z}^{+}$.
    \begin{equation*}
        \norm{ \mathbf{G}^{\left(t\right)} -  \mathbf{G}^{\left(0\right)} }_2 \le \mathcal{O}\left(\frac{n^2}{\sqrt{m} \lambda_{\min} \left(\mathbf{G}^{(0)}\right)}\right).
    \end{equation*}
\end{theorem}

\begin{proof}
We follow the proof of lemma B.5 of \cite{Du_Lee_2018GradientDF}. We will bound the change in each element of the $G$-matrix and then, take a sum over all the elements to get a bound over the perturbation.
\begin{align*}
    \abs{g_{ij}^{(t)} - g_{ij}^{(0)}} &= \abs{\frac{\left\langle \mathbf{x}_i, \mathbf{x}_j \right\rangle}{m} \left\{ \sum_{k=1}^{m} \phi'\left(\mathbf{w}_k^{(t) T} \mathbf{x}_i\right)  \phi'\left(\mathbf{w}_k^{(t) T} \mathbf{x}_j\right) - \sum_{k=1}^{m} \phi'\left(\mathbf{w}_k^{(0) T} \mathbf{x}_i\right)  \phi'\left(\mathbf{w}_k^{(0) T} \mathbf{x}_j\right) \right\}} \nonumber \\
    &\le  \frac{1}{m} \sum_{k=1}^{m} \abs{  \phi'\left(\mathbf{w}_k^{(t) T} \mathbf{x}_i\right)  \phi'\left(\mathbf{w}_k^{(t) T} \mathbf{x}_j\right)  - \phi'\left(\mathbf{w}_k^{(0) T} \mathbf{x}_i\right)  \phi'\left(\mathbf{w}_k^{(t) T} \mathbf{x}_j\right) } \nonumber\\ &+ \frac{1}{m} \sum_{k=1}^{m} \abs{ \phi'\left(\mathbf{w}_k^{(0) T} \mathbf{x}_i\right)  \phi'\left(\mathbf{w}_k^{(t) T} \mathbf{x}_j\right)  - \phi'\left(\mathbf{w}_k^{(0) T} \mathbf{x}_i\right)  \phi'\left(\mathbf{w}_k^{(0) T} \mathbf{x}_j\right)} \nonumber \\
    &= \frac{1}{m} \sum_{k=1}^{m} \abs{  \phi'\left(\mathbf{w}_k^{(t) T} \mathbf{x}_i\right)    - \phi'\left(\mathbf{w}_k^{(0) T} \mathbf{x}_i\right) } \quad \abs{ \phi'\left(\mathbf{w}_k^{(t) T} \mathbf{x}_j\right) } \nonumber\\ &+ \frac{1}{m} \sum_{k=1}^{m} \abs{  \phi'\left(\mathbf{w}_k^{(t) T} \mathbf{x}_j\right)    - \phi'\left(\mathbf{w}_k^{(0) T} \mathbf{x}_j\right) } \quad  \abs{ \phi'\left(\mathbf{w}_k^{(0) T} \mathbf{x}_i\right) } \nonumber\\
    &\le 2\gamma \mathcal{O}(1) \norm{\mathbf{w}_k^{(0)} - \mathbf{w}_k^{(t)}}  \le \mathcal{O}\left(\frac{\gamma n}{\sqrt{m} \lambda_{\min} \left(\mathbf{G}^{(0)}\right)}\right)
\end{align*}
where, we use \autoref{thm:weight-bound}  and the fact that $\phi'$ is $\gamma$-smooth and is bounded by $\mathcal{O}(1)$ in the final step. Thus, we get
\begin{equation*}
    \norm{\mathbf{G}^{(t)} - \mathbf{G}^{(0)}}_F \le \sqrt{\sum_{i, j \in \left[n\right]} \left(g_{ij}^{(t)} - g_{ij}^{(0)}\right)^2} \le \mathcal{O}\left( \frac{\gamma n^2}{\sqrt{m} \lambda_{\min} \left(\mathbf{G}^{(0)}\right)}\right)
\end{equation*}
as required.
\end{proof}

\begin{lemma}[Claim 3.4 in \cite{du2018gradient}]\label{bound-mag}
In the setting of \autoref{convergence-NN}, 
\begin{equation*}
    \norm{\mathbf{y} - \mathbf{u}^{(0)}} \le \mathcal{O}\left(\sqrt{\frac{n}{\kappa}}\right)
\end{equation*}
holds with probability at least $1-\kappa$ with respect to $\left\{\mathbf{w}_{k}^{(0)}\right\}_{k=1}^{m}$.
\end{lemma}

Now, we state the following theorem, which is a simple adaptation of theorem 4.1 in \cite{Arora2019FineGrainedAO}. Let $v_1, v_2, ..., v_n$ denote the eigenvectors of $G^{(0)}$, corresponding to its eigenvalues $\lambda_1, \lambda_2, ..., \lambda_n$. 
\begin{theorem}
	With probability at least $1-\kappa$ over the random initialization, $\forall t \in \mathbb{Z}^{+}$, the following holds 
	\begin{equation*}
	\norm{\mathbf{y} - \mathbf{u}^{\left(t\right)}} = \sqrt{\sum_{i=1}^{n} \left(1 - \eta c_{\phi}^2 \lambda_i\right)^{2t}\left(\mathbf{v}_i^T\left(\mathbf{y} - \mathbf{u}^{\left(0\right)}\right)\right)^2}  \pm \epsilon,
	\end{equation*}
	provided 
	\begin{equation*}
	m \ge \Omega\left(\frac{n^5}{ \gamma \kappa \lambda_{\min}\left(\mathbf{G}^{(0)}\right)^4 \epsilon^2}\right)
	\end{equation*}
	and 
	\begin{equation*}
	\eta \le \mathcal{O}\left(\frac{\lambda_{\min}\left(\mathbf{G}^{(0)}\right)}{c_\phi^2 n^2}\right)
	\end{equation*}
\end{theorem}
\begin{proof}
	For each $i \in \left[n\right]$, we get 
	\begin{align}
	u_i^{\left(t+1\right)} - u_i^{\left(t\right)} & = \frac{c_{\phi}}{\sqrt{m}} \sum_{k=1}^{m} a_k \left\{\phi\left(\mathbf{w}_k^{(t+1) T}\mathbf{x}_i\right) - \phi\left(\mathbf{w}_k^{(t) T}\mathbf{x}_i\right)\right\} \nonumber \\
	&= \frac{c_{\phi}}{\sqrt{m}} \sum_{k=1}^{m} a_k \left\{\phi\left(\left(\mathbf{w}_k^{(t)} - \eta \frac{\partial}{\partial \mathbf{w}_k} \mathcal{L}\left(\mathbf{w}_k^{(t)}\right)\right)^T \mathbf{x}_i\right) - \phi\left(\mathbf{w}_k^{(t) T}\mathbf{x}_i\right)\right\} \label{fine-1} \\
	&= \frac{c_{\phi}}{\sqrt{m}} \sum_{k=1}^{m} a_k \phi'\left(\mathbf{w}_k^{(t) T}\mathbf{x}_i\right) \left( -\eta \frac{\partial}{\partial \mathbf{w}_k} \mathcal{L}\left(\mathbf{w}_k^{(t)}\right) \right)^T\mathbf{x}_i  + \epsilon_i\left(t\right) \label{fine-2} \\
	&=-\frac{\eta c_{\phi}^2}{m} \sum_{k=1}^{m} a_k^2 \sum_{j=1}^{n} \left( \phi'\left(\mathbf{w}_k^{(t) T}\mathbf{x}_i\right) \phi'\left(\mathbf{w}_k^{(t) T}\mathbf{x}_j\right)\right) \left(u_j^{(t)} - y_j\right) \left\langle \mathbf{x}_i, \mathbf{x}_j \right\rangle  + \epsilon_i\left(t\right) \nonumber \\
	&= -\eta c_{\phi}^2 \sum_{j=1}^{n} \mathbf{G}^{(t)}_{ij} \left(u_j^{(t)} - y_j\right) + \epsilon_i\left(t\right)  \label{fine-3}
	\end{align}
	where we use Taylor expansion of $\phi$ in \autoref{fine-2}. $\epsilon_i\left(t\right)$ denotes the error term due to truncated Taylor expansion, whose norm can be bounded by
	\begin{align}
	\abs{\epsilon_i\left(t\right)} & \le \frac{c_{\phi}}{2\sqrt{m}} \sum_{k=1}^{m} \mathcal{O}(1) \left(\left( \eta \frac{\partial}{\partial \mathbf{w}_k} \mathcal{L}\left(\mathbf{w}_k^{(t)}\right) \right)^T\mathbf{x}_i\right)^2 \label{fine-4} \\
	&=  \frac{c_{\phi}}{2m^{3/2}} \eta^2 \mathcal{O}(1) \sum_{k=1}^{m}  \left( \sum_{j=1}^{n}  a_k\phi'\left(\mathbf{w}_k^{(t) T}\mathbf{x}_i\right) \left\langle \mathbf{x}_i, \mathbf{x}_j \right \rangle \left(\mathbf{y} - \mathbf{u}^{(t)}\right)_j\right)^2 \nonumber \\
	& \le \frac{c_{\phi}}{2m^{3/2}} \eta^2 \mathcal{O}(1) \sum_{k=1}^{m} \left(\sum_{j=1}^{n} a_k\phi'\left(\mathbf{w}_k^{(t) T}\mathbf{x}_i\right)^2  \left\langle \mathbf{x}_i, \mathbf{x}_j \right \rangle^2 \right) \norm{\mathbf{y}-\mathbf{u}^{(t)}}^2 \label{fine-6} \\
	& \le \mathcal{O}\left(\frac{ \eta^2 c_{\phi} n }{\sqrt{m/\log m} }\right) \norm{\mathbf{y}-\mathbf{u}^{(t)}}^2 \label{fine-7}
	\end{align}
	where we use the fact that $\abs{\phi''(z)} \le \mathcal{O}(1) \forall z \in \mathbb{R}$ in \autoref{fine-4}, use the Cauchy-Schwartz inequality in \autoref{fine-6} and use the fact that $\abs{\phi'(z)} \le \mathcal{O}(1) \forall z \in \mathbb{R}$, $\left\langle \mathbf{x}_i, \mathbf{x}_j\right\rangle \le 1 \quad \forall i, j \in \left[n\right]$ and $\abs{a_k} \le \sqrt{\log m}$ with high probability in \autoref{fine-7}.
	Thus, this gives
	\begin{equation*}
	\mathbf{u}^{(t+1)} - \mathbf{u}^{(t)} = -\eta c_{\phi}^2 \mathbf{G}^{(t)} \left(\mathbf{u}^{(t)} - \mathbf{y}\right) + \mathbf{\epsilon}(t)
	\end{equation*}
	where 
	\begin{equation*}
	\norm{\mathbf{\epsilon}(t)} = \sqrt{\sum_{i=1}^{n} \epsilon_i{(t)}^2 } \le \mathcal{O}\left(\frac{ \eta^2 c_{\phi} n^{3/2} }{\sqrt{m/\log m} }\right) \norm{\mathbf{y}-\mathbf{u}^{(t)}}^2 . 
	\end{equation*}
	Now, since $\mathbf{G}^{(t)}$ is close to $\mathbf{G}^{(0)}$, we can write
	\begin{equation}\label{eq:recurse}
	\mathbf{u}^{(t+1)} - \mathbf{u}^{(t)} = -\eta c_{\phi}^2 \mathbf{G}^{(0)} \left(\mathbf{u}^{(t)} - \mathbf{y}\right) + \tau(t)
	\end{equation}
	where $\tau(t)= - \eta c_{\phi}^2 \left(\mathbf{G}^{(t)} - \mathbf{G}^{(0)}\right) \left(\mathbf{u}^{(t)} - \mathbf{y}\right) + \epsilon(t)$. The norm of $\tau(t)$ can be bounded as follows.
	\begin{align*}
	\norm{\tau(t)} &\le \eta c_{\phi}^2 \norm{\left(\mathbf{G}^{(t)} - \mathbf{G}^{(0)}\right) \left(\mathbf{u}^{(t)} - \mathbf{y}\right)} + \norm{\epsilon(t)} \nonumber \\ & \le \eta c_{\phi}^2\norm{\mathbf{G}^{(t)} - \mathbf{G}^{(0)}} \norm{\mathbf{u}^{(t)} - \mathbf{y}} + \norm{\epsilon(t)} \nonumber \\
	&\le \mathcal{O}\left( \eta \gamma \frac{n^2 c_{\phi}^2}{\sqrt{m} \lambda_{\min} \left(\mathbf{G}^{(0)}\right)}\right) \norm{\mathbf{y}-\mathbf{u}^{(t)}} +  \mathcal{O}\left(\frac{ \eta^2 c_\phi n^{3/2} }{\sqrt{m/\log m} }\right) \norm{\mathbf{y}-\mathbf{u}^{(t)}}^2 \nonumber \\ & \le \mathcal{O}\left(\frac{\eta \gamma n^2 c_{\phi}^2}{\sqrt{m}\lambda_{\min}\left(\mathbf{G}^{(0)}\right)}\right)\norm{\mathbf{y}-\mathbf{u}^{(t)}}.
	\end{align*}
	Thus, applying \autoref{eq:recurse} recursively, we get
	\begin{align} \mathbf{u}^{(t)}-\mathbf{y} &=\left(\mathbf{I}-\eta c_{\phi}^2  \mathbf{G}^{(0)}\right)^{t}(\mathbf{u}^{(0)}-\mathbf{y})+\sum_{t'=0}^{t-1}\left(\mathbf{I}-\eta c_{\phi}^2 \mathbf{G}^{(0)}\right)^{t'} \mathbf{\tau}(t-1-t') . 
	\end{align}
	We bound the norm of each term in the above equation.
	The norm of the first term can be given as follows.
	\begin{align}
	\norm{-\left(\mathbf{I}-\eta c_{\phi}^2 \mathbf{G}^{(0)}\right)^{t} \left(\mathbf{u}^{(0)}-\mathbf{y}\right)} &= \norm{\left(\sum_{i=1}^{n} \left(1 - \eta c_{\phi}^2 \lambda_i\right)^{t}\mathbf{v}_i\mathbf{v}_i^T\right) \left(\mathbf{u}^{(0)}-\mathbf{y}\right)}  \nonumber \\&= \sqrt{\sum_{i=1}^{n} \left(1 - \eta c_{\phi}^2 \lambda_i\right)^{2t}\left(\mathbf{v}_i^T\left(\mathbf{u}^{(0)}-\mathbf{y}\right)\right)^2} . 
	\end{align}
    Now, the norm of the second term can be bounded as 
\begin{align}
\|\sum_{t'=0}^{t-1}&\left(\mathbf{I}-\eta c_{\phi}^2 \mathbf{G}^{(0)}\right)^{t'}  \tau(t-1-t')\|_{2} 
\nonumber \\ &\leq \sum_{t'=0}^{t-1}\norm{\mathbf{I}-\eta c_{\phi}^2 \mathbf{G}^{(0)}}_{2}^{t}\norm{\tau(t-1-t')}_{2} 
\nonumber \\ & \leq \sum_{t'=0}^{t-1}\left(1-\eta c_{\phi}^2 \lambda_{\min}\left(\mathbf{G}^{(0)}\right)\right)^{t'} \mathcal{O}\left(\frac{\eta \gamma n^2 c_{\phi}^2}{\sqrt{m}\lambda_{\min}\left(\mathbf{G}^{(0)}\right)}\right) \norm{\mathbf{u}^{(t-1-t')}-\mathbf{y}}_{2} \label{check-y}
\\ & \leq \sum_{t'=0}^{t-1}\left(1-\eta c_{\phi}^2 \lambda_{\min}\left(\mathbf{G}^{(0)}\right)\right)^{t'} \mathcal{O}\left(\frac{\eta \gamma n^2 c_{\phi}^2}{\sqrt{m}\lambda_{\min}\left(\mathbf{G}^{(0)}\right)}\right) \left(1-\frac{\eta c_{\phi}^2 \lambda_{\min}\left(\mathbf{G}^{(0)}\right)}{4}\right)^{t-1-t'} \mathcal{O}\left(\sqrt{n / \kappa}\right) \nonumber \\
& \leq t\left(1-\frac{\eta c_{\phi}^2 \lambda_{\min}\left(\mathbf{G}^{(0)}\right)}{4}\right)^{t-1} \mathcal{O}\left(\frac{\gamma \eta n^{5/2} c_{\phi}^2}{\sqrt{m \kappa}\lambda_{\min}\left(\mathbf{G}^{(0)}\right)}\right) \nonumber.
\end{align}
In \autoref{check-y}, we use the following.
\begin{equation*}
\norm{\mathbf{u}^{(s)}-\mathbf{y}}_{2} =  \left(1-\frac{\eta c_{\phi}^2  \lambda_{\min}\left(\mathbf{G}^{(0)}\right)}{4}\right)^{s} \norm{\mathbf{u}^{(0)} - \mathbf{y}} \le \left(1-\frac{\eta c_{\phi}^2 \lambda_{\min}\left(\mathbf{G}^{(0)}\right)}{4}\right)^{s} \mathcal{O}\left(\sqrt{n / \kappa}\right) 
\end{equation*}
Thus, combining the two terms, we have
\begin{align}
\norm{\mathbf{u}^{(t)} - \mathbf{y}} &\le \sqrt{\sum_{i=1}^{n} \left(1 - \eta c_{\phi}^2 \lambda_i\right)^{2t}\left(\mathbf{v}_i^T\left( \mathbf{y} - \mathbf{u}^{(0)}\right)\right)^2} \pm  t\left(1-\frac{\eta c_{\phi}^2 \lambda_{\min}\left(\mathbf{G}^{(0)}\right)}{4}\right)^{t-1} \mathcal{O}\left(\frac{\gamma \eta n^{5/2} c_{\phi}^2}{\sqrt{m \kappa}\lambda_{\min}\left(\mathbf{G}^{(0)}\right)}\right) \label{combine-1} \\
&\le  \sqrt{\sum_{i=1}^{n} \left(1 - \eta  c_{\phi}^2 \lambda_i \right)^{2t}\left(\mathbf{v}_i^T\left( \mathbf{y} - \mathbf{u}^{(0)}\right)\right)^2}  \pm \underbrace{\mathcal{O}\left(\frac{\gamma n^{5/2}}{\sqrt{\kappa m } \lambda_{\min}^2\left(\mathbf{G}^{(0)}\right)}\right)}_{\spadesuit}
\end{align}
where in \autoref{combine-1}, we use the fact that 
\begin{equation*}
t\left(1-\frac{\eta c_{\phi}^2 \lambda_{\min}\left(\mathbf{G}^{(0)}\right)}{4}\right)^{t-1} \le \frac{4}{\eta c_{\phi}^2 \lambda_{\min}\left(\mathbf{G}^{(0)}\right)} . 
\end{equation*}
Thus, for the term denoted by $\spadesuit$ to be less than $\epsilon$, we need
\begin{equation*}
m \ge \Omega\left(\frac{\gamma n^5}{\kappa \lambda_{\min}\left(\mathbf{G}^{(0)}\right)^4 \epsilon^2}\right).  
\end{equation*}
\end{proof}

\subsection{Proof for SGD} \label{subsec:SGD}
The following theorem is an adaptation of Theorem 2 in \cite{allenzhu2018convergence}, which asserts  fast convergence of SGD for ReLU. 
The theorem below applies to activation in $\mathbf{J_r}$ for $r \ge 2$; the case of $\mathbf{J_1}$ can be handled by another adaptation of Theorem 2 in \cite{allenzhu2018convergence} which we do not discuss. \cite{Du_Lee_2018GradientDF} analyzed gradient descent for this setting and mentioned the analysis of SGD as a future work. 
\begin{theorem}
Let  $S_t \subseteq [n]$ denote a randomly picked batch of size $b$. Let $\mathbf{\nabla}^{(t)}$ denote $\frac{n}{b} \nabla_{\mathbf{W}^{(t)}} \mathcal{L}\left(\left\{\left(\mathbf{x}_i, y_i\right)\right\}_{i \in S_t}; \mathbf{a}, \mathbf{W}^{(t)}\right)$. Let the activation function $\phi$ be $\alpha$-Lipschitz and $\beta$-smooth. The SGD iterate at time $t+1$ is given by,
\begin{equation*}
    \mathbf{W}^{(t+1)} = \mathbf{W}^{(t)} - \eta \mathbf{\nabla}^{(t)}
\end{equation*}
Let $\eta \leq \mathcal{O}\left(\lambda_{\min}\left(\mathbf{G}^{(0)}\right) \frac{b^2}{\beta n^6 \alpha^4}\right).$
If 
\begin{equation*}
    m \ge \Omega\left(\frac{n^6 \alpha^4 \beta^2 \log m}{b^2 \lambda_{\min}\left(\mathbf{G}^{(0)}\right)^4}\right)
\end{equation*}
then,
\begin{equation*}
    \norm{\mathbf{y} - \mathbf{u}^{(t)}}^2 \le \epsilon
\end{equation*}
for $t \ge \Omega\left(\frac{n^6 \alpha^4}{b^2 \lambda_{\min}\left(\mathbf{G}^{(0)}\right)^2} \beta \log\left(\frac{n}{\epsilon}\right)\right)$, with probability at least $1 - e^{-\Omega\left(n^2\right)} - m^{-3.5}$ w.r.t. random choice of $S_t$ for $t \ge 0$.
\end{theorem}

\begin{proof}
Note that
\begin{equation*}
    \mathbb{E}_{S_t} \nabla^{(t)}  =  \nabla_{\mathbf{W}^{(t)}} \mathcal{L}\left(\left\{\left(\mathbf{x}_i, y_i\right)\right\}_{i \in [n]}; \mathbf{a}, \mathbf{W}^{(t)} \right) 
\end{equation*}

Using taylor expansion for each coordinate $i$, we have
\begin{align} & u_{i}^{(t+1)}-u_{i}^{(t)} \nonumber\\=& u_{i}\left(\mathbf{W}^{(t)}-\eta \nabla^{(t)}\right)-u_{i}(\mathbf{W}^{(t)}) \nonumber\\=&-\int_{s=0}^{\eta}\left\langle \nabla^{(t)}, u_{i}^{\prime}\left(\mathbf{W}^{(t)}-s \nabla^{(t)}\right)\right\rangle d s \nonumber\\=&-\int_{s=0}^{\eta}\left\langle \nabla^{(t)}, u_{i}^{\prime}(\mathbf{W}^{(t)})\right\rangle d s+\int_{s=0}^{\eta}\left\langle \nabla^{(t)}, u_{i}^{\prime}(\mathbf{W}^{(t)})-u_{i}^{\prime}\left(\mathbf{W}^{(t)}-s \nabla^{(t)}\right)\right\rangle d s \nonumber\\ \triangleq & I_{1}^{i}(t)+I_{2}^{i}(t) \end{align}

Writing the decrease of loss at time $t$, we have
\begin{align}
\norm{\mathbf{y}-\mathbf{u}^{(t+1)}}_{2}^{2} =&\norm{\mathbf{y}-\mathbf{u}^{(t)}-(\mathbf{u}^{(t+1)}-\mathbf{u}^{(t)})}_{2}^{2} 
\nonumber\\=&\norm{\mathbf{y}-\mathbf{u}^{(t)}}_{2}^{2}-2(\mathbf{y}-\mathbf{u}^{(t)})^{\top}(\mathbf{u}^{(t+1)}-\mathbf{u}^{(t)})+\norm{\mathbf{u}^{(t+1)}-\mathbf{u}^{(t)}}_{2}^{2} 
\nonumber\\=& \norm{\mathbf{y}-\mathbf{u}^{(t)}}_{2}^{2}-2(\mathbf{y}-\mathbf{u}^{(t)})^{\top}\mathbf{I}_1 - 2(\mathbf{y}-\mathbf{u}^{(t)})^{\top}\mathbf{I}_2 +\norm{\mathbf{u}^{(t+1)}-\mathbf{u}^{(t)}}_{2}^{2} \label{eq:change_in_loss}\\
\end{align}
where $\mathbf{I}_1 \in \mathbb{R}^n$ and its $i^{\text{th}}$ coordinate is given by $I_{1}^{i}$. Similarly, we define $\mathbf{I}_2$.

$I_{1}^{i}$ is given as,
\begin{align} I_{1}^{i} &=-\eta\left\langle \nabla^{(t)}, u_i^{\prime}(\mathbf{W}^{(t)})\right\rangle \nonumber\\ &=-\eta \frac{n}{b} \sum_{j \in S_t}\left(u^{(t)}_j-y_{j}\right)\left\langle u_{j}^{\prime}(\mathbf{W}^{(t)}), u_{i}^{\prime}(\mathbf{W}^{(t)})\right\rangle \label{eq:Expandingloss}\\ & \triangleq-\eta \frac{n}{b} \sum_{j \in S_t}\left(u_{j}^{(t)}-y_j\right)  g_{i j}^{(t)}
\end{align}
where we use the definition of $\nabla^{(t)}$ in \autoref{eq:Expandingloss}. 

That implies,
\begin{align}
    \norm{\mathbf{I_{1}}} &= \eta \frac{n}{b} \norm{\mathbf{G}^{\left(t\right)} \mathbf{D}^{\left(t\right)} \left( \mathbf{u}^{(t)} - \mathbf{y} \right)} \nonumber\\
    & \le \eta \frac{n}{b} \norm{\mathbf{G}^{\left(t\right)}}_2 \norm{\mathbf{y}  - \mathbf{u}^{(t)}} \nonumber
    \\& \le \eta \frac{n}{b} n \alpha^2 \norm{\mathbf{y} - \mathbf{u}^{(t)}}   \label{eq:change_in_I1} 
\end{align}
where $\mathbf{D}^{\left(t\right)} \in \mathbb{R}^{n \times n}$ denotes a diagonal matrix that has 1 in $i^{\text{th}}$ diagonal element, iff $i \in S_t$ and 0 otherwise and $\norm{\mathbf{G}}_2  \le \norm{\mathbf{G}}_F \le n L^2$, since $\phi$ is $\alpha$-lipschitz.

Note that,
\begin{align}
    \mathbb{E}_{S_t} (\mathbf{y}-\mathbf{u}^{(t)})^{\top}I_1 &=    \mathbb{E}_{S_t}   \sum_{i \in [n]} \sum_{j \in S_t} \eta \frac{n}{b} \left(u_{i}^{(t)}-y_i\right)  g_{i j}^{(t)}  \left(u_{j}^{(t)}-y_j\right) \nonumber\\&= 
    \eta \left(\mathbf{y}-\mathbf{u}^{(t)}\right)^{\top} \mathbf{G}^{(t)} \left(\mathbf{y}-\mathbf{u}^{(t)}\right)
    \nonumber\\& \ge \eta \lambda_{\min}\left(\mathbf{G}^{(t)}\right) \norm{\mathbf{y}-\mathbf{u}^{(t)}}^2 \label{eq:Expected_decrease_in_projected_loss}
\end{align}

Also, we can bound $\mathbf{I}_2$ in the following manner.
\begin{equation*}
    \abs{I_2^{i}(t)} \le \eta \norm{\nabla^{(t)}}_F \max_{0 \le s \le \eta} \norm{u_{i}^{\prime}(\mathbf{W}^{(t)})-u_{i}{\prime}\left(\mathbf{W}^{(t)}-s \nabla^{(t)}\right)}_F 
\end{equation*}

Since,
\begin{equation*}
\norm{\nabla^{(t)}}_F^2 = \left(\frac{n/b}{\sqrt{m}}\right)^2 \sum_{r=1}^{m} \norm{\sum_{i \in S_t} \left(y_i - u^{(t)}_i\right) a_r \phi'\left(\mathbf{w}_r^T \mathbf{x}_i\right) \mathbf{x}_i}^2 \le \left( n^{1.5} \frac{\alpha}{b} \norm{\mathbf{y} - \mathbf{u}^{(t)}}\right)^2 
\end{equation*}

and
\begin{align}
    \norm{u_{i}^{\prime}(\mathbf{W}^{(t)})-u_{i}^{\prime}\left(\mathbf{W}^{(t)}-s \nabla^{(t)}\right)}_F &=\frac{1}{\sqrt{m}} \sqrt{\sum_{r=1}^{m} \norm{a_r \left(\phi'\left(\mathbf{w}_r^T \mathbf{x}_i\right) - \phi'\left(\mathbf{w}_r^T x_i - s\nabla^{(t) T}_r \mathbf{x}_i\right)\right) \mathbf{x}_i}_2^2} \nonumber\\ 
    & \le \frac{1}{\sqrt{m}} \sqrt{\sum_{r=1}^{m} \norm{a_r \beta \left( s\nabla^{(t) T}_r \mathbf{x}_i\right) \mathbf{x}_i}_2^2} \nonumber\\
    & \le \frac{1}{\sqrt{m}}\beta \eta \sqrt{\max_{r \in [m]}  \norm{\nabla^{(t)}_r}_2^2 \sum_{r=1}^{m} a_r^2} \nonumber\\
    & \le \beta \eta n^{1.5} \frac{\alpha}{b} \norm{\mathbf{y} - \mathbf{u}^{(t)}}
\end{align}
where we use $\beta$-smoothness of the activation function $\phi$ in the first step, $\sum_{r} a_r^2 \le 5m$ with high probability and $\max_{r \in [m]}  \norm{\nabla^{(t)}_r}_2^2 \le \norm{\nabla^{(t)}_r}_F^2$  in $3^{\text{rd}}$ step.

That implies,
\begin{equation}
    \abs{I_2^{i}(t)} \le \beta \eta^2 n^3 \frac{\alpha^2}{b^2} \norm{\mathbf{y} - \mathbf{u}^{(t)}}^2 \le \beta \eta^2 n^4 \frac{\alpha^2}{b^2} \norm{\mathbf{y} - \mathbf{u}^{(t)}} \label{eq:change_in_I2} 
\end{equation}

Also, 
\begin{align}
    \abs{u_{i}(\mathbf{W}^{(t)})-u_{i}\left(\mathbf{W}^{(t)}-s \nabla^{(t)}\right)}^2 &= \frac{1}{m} \abs{\sum_{r=1}^{m} a_r \left(\phi\left(\mathbf{w}_r^T \mathbf{x}_i\right) - \phi\left(\mathbf{w}_r^T \mathbf{x}_i - s\nabla^{(t) T}_r \mathbf{x}_i\right)\right)}^2 \nonumber \\ 
    & \le \frac{1}{m} \left(\sum_{r=1}^{m} \abs{a_r  \alpha \left( s\nabla^{(t) T}_r \mathbf{x}_i\right)} \right)^2 \nonumber\\
    & \le \alpha^2 \eta^2 \left(\frac{1}{m} \sum_{r \in [m]}a_r^2\right) \max_{r \in [m]}  \norm{\nabla^{(t)}_r}_2^2\nonumber\\
    & \le \alpha^2 \eta^2 \max_{r \in [m]}  \norm{\nabla^{(t)}_r}_2^2 \nonumber\\
    & \le  \alpha^2 \eta^2 \left( n^{1.5} \frac{\alpha}{b} \norm{\mathbf{y} - \mathbf{u}^{(t)}}\right)^2 \label{eq:change_in_output} 
\end{align}

Hence, using \autoref{eq:change_in_loss}, \autoref{eq:change_in_I1}, \autoref{eq:change_in_I2} and \autoref{eq:change_in_output}, we get 
\begin{align}
    \norm{\mathbf{y} - \mathbf{u}^{(t+1)}}_{2}^{2} &= \norm{\mathbf{y} - \mathbf{u}^{(t)}}_{2}^{2}-2(y-u^{(t)})^{\top}I_1 - 2(\mathbf{y} - \mathbf{u}^{(t)})^{\top}I_2 +\norm{\mathbf{u}^{(t+1)}-\mathbf{u}^{(t)}}_{2}^{2}
    \\&= \left(1 + 2\eta \frac{n^2L^2}{b}  + 2\beta \eta^2 n^{4.5} \frac{\alpha^2}{b^2}  + \frac{\alpha^4}{b^2} \eta^2 n^4  \right) \norm{\mathbf{y} - \mathbf{u}^{(t)}}^2 \\&= \mathcal{O} \left(\left(1 + 2\eta \frac{n^2\alpha^2}{b}\right)  \norm{\mathbf{y} - \mathbf{u}^{(t)}}^2  \right)
\end{align}

Taking log both the sides, we get
\begin{equation*}
    \log\left(\norm{\mathbf{y}-\mathbf{u}^{(t+1)}}_{2}^{2}\right) \le \mathcal{O}\left(\eta \frac{n^2\alpha^2}{b}\right)   + \log\left(\norm{\mathbf{y} - \mathbf{u}^{(t)}}^2\right)
\end{equation*}

By azuma-hoeffding inequality, we have 
\begin{equation}\label{eq:rel_bet_expectation_and_actual_t}
    \log \left( \norm{\mathbf{y} - \mathbf{u}^{(t)}}_{2}^{2}\right) - E_{S_{t-1}} \log \left( \norm{\mathbf{y} - \mathbf{u}^{(t)}}_{2}^{2}\right) \le \sqrt{t}\mathcal{O}\left(\eta \frac{n^2\alpha^2}{b}\right) n
\end{equation}
with probability at-least $1 - e^{-\Omega\left(n^2\right)}$.

Also, 
\begin{align}
E_{S_t} \norm{\mathbf{y} - \mathbf{u}^{(t+1)}}_{2}^{2} &= \norm{\mathbf{y} - \mathbf{u}^{(t)}}_{2}^{2}- E_{S_t} 2(\mathbf{y} - \mathbf{u}^{(t)})^{\top}\mathbf{I}_1 - E_{S_t} 2(\mathbf{y} - \mathbf{u}^{(t)})^{\top}\mathbf{I}_2 + E_{S_t} \norm{\mathbf{u}^{(t+1)}-\mathbf{u}^{(t)}}_{2}^{2}
    \nonumber \\&\le \left(1 -  \eta \lambda_{\min}\left(\mathbf{G}^{(t)}\right)  + 2\beta \eta^2 n^{4.5} \frac{\alpha^2}{b^2}  + \frac{\alpha^4}{b^2} \eta^2 n^4  \right) \norm{\mathbf{y} - \mathbf{u}^{(t)}}^2 \label{eq:put_upper_bound_ofI2_and_change_in_u} \\&
     \le \left(1 -  \frac{1}{2} \eta \lambda_{\min}\left(\mathbf{G}^{(0)}\right)  + 2\beta \eta^2 n^{4.5} \frac{\alpha^2}{b^2}  + \frac{\alpha^4}{b^2} \eta^2 n^4  \right) \norm{\mathbf{y} - \mathbf{u}^{(t)}}^2
     \nonumber \\&\le \left(1 -  \frac{1}{4} \eta \lambda_{\min}\left(\mathbf{G}^{(0)}\right)  \right) \norm{\mathbf{y} - \mathbf{u}^{(t)}}^2
\end{align}
where we use \autoref{eq:Expected_decrease_in_projected_loss}, \autoref{eq:change_in_I2} and \autoref{eq:change_in_output} in \autoref{eq:put_upper_bound_ofI2_and_change_in_u}.

Taking log both the sides, we get
\begin{equation*}
    \log \left(E_{S_t} \norm{\mathbf{y} - \mathbf{u}^{(t+1)}}_{2}^{2}\right) \le \log\left(\norm{\mathbf{y} - \mathbf{u}^{(t)}}^2\right) -  \frac{1}{4} \eta \lambda_{\min}\left(\mathbf{G}^{(0)}\right)
\end{equation*}

Using Jensen's inequality, we get
\begin{equation}\label{eq:rel_bet_expectation_and_actual_t+1}
    E_{S_t} \log \left( \norm{\mathbf{y} - \mathbf{u}^{(t+1)}}_{2}^{2}\right) \le \log\left(\norm{\mathbf{y} - \mathbf{u}^{(t)}}^2\right) -  \frac{1}{4} \eta \lambda_{\min}\left(\mathbf{G}^{(0)}\right)
\end{equation}

Thus, for $t \ge 0$, using \autoref{eq:rel_bet_expectation_and_actual_t} and \autoref{eq:rel_bet_expectation_and_actual_t+1}, we get  
\begin{align}
\log \left( \norm{\mathbf{y} - \mathbf{u}^{(t)}}_{2}^{2}\right) &\le \sqrt{t} \mathcal{O}\left(\eta \frac{n^2\alpha^2}{b}\right) n + \log \left( \norm{\mathbf{y} - \mathbf{u}^{(0)}}_{2}^{2}\right) - \Omega\left(\frac{1}{4} \eta \lambda_{\min}\left(\mathbf{G}^{(0)}\right)\right)t \nonumber
\\& \le \log \left( \norm{\mathbf{y} - \mathbf{u}^{(0)}}_{2}^{2}\right) - \left( \sqrt{\eta \lambda_{\min}\left(\mathbf{G}^{(0)}\right)} \Omega\left(\sqrt{t}\right) - \mathcal{O}\left(\sqrt{\frac{\eta}{\lambda_{\min}\left(\mathbf{G}^{(0)}\right)}} \frac{n^3 \alpha^2}{b}\right)\right)^2 \nonumber
\\& \quad   + \mathcal{O}\left(\sqrt{\frac{\eta}{\lambda_{\min}\left(\mathbf{G}^{(0)}\right)}} \frac{n^3 \alpha^2}{b}\right)^2 \nonumber \nonumber\\&
\le \log \left( \norm{\mathbf{y} - \mathbf{u}^{(0)}}_{2}^{2}\right) - \left( \sqrt{\eta \lambda_{\min}\left(\mathbf{G}^{(0)}\right)} \Omega\left(\sqrt{t}\right) - \mathcal{O}\left(\sqrt{\frac{\eta}{\lambda_{\min}\left(\mathbf{G}^{(0)}\right)}} \frac{n^3 \alpha^2}{b}\right)\right)^2 + 1 \nonumber\\&
\le \log \left( \norm{\mathbf{y} - \mathbf{u}^{(0)}}_{2}^{2}\right) - \mathcal{I}\left[t \ge \frac{n^6 \alpha^4}{b^2 \lambda_{\min}\left(\mathbf{G}^{(0)}\right)^2}\right] \Omega\left(\eta \lambda_{\min}\left(\mathbf{G}^{(0)}\right)\right) t + 1 \nonumber\\&
\le \log \left( \norm{\mathbf{y} - \mathbf{u}^{(0)}}_{2}^{2}\right) - \mathcal{I}\left[t \ge \frac{n^6 \alpha^4}{b^2 \lambda_{\min}\left(\mathbf{G}^{(0)}\right)^2}\right] \frac{b^2 \lambda_{\min}\left(\mathbf{G}^{(0)}\right)^2}{\beta n^6 \alpha^4} t + 1 \label{eq:final_bound_for_SGDlossiterate}
\end{align}
Hence, if $t \ge \Omega\left(\frac{n^6 \alpha^4}{b^2 \lambda_{\min}\left(\mathbf{G}^{(0)}\right)^2} \beta \log\left(\frac{n}{\epsilon}\right)\right)$, we have 
\begin{equation*}
     \log\left( \norm{\mathbf{y} - \mathbf{u}^{(t)}}_{2}^{2}\right) \le \log \left( \mathcal{O}\left(n\right)\right) - \Omega\left(\log\left(\frac{n}{\epsilon}\right)\right)  \le \log\left(\epsilon\right)
\end{equation*}
implying $\norm{\mathbf{y} - \mathbf{u}^{(t)}}_{2}^{2} \le \epsilon$.
Also, let $T_0 = \frac{n^6 \alpha^4}{b^2 \lambda_{\min}\left(\mathbf{G}^{(0)}\right)^2}$. Then, applying \autoref{eq:final_bound_for_SGDlossiterate} in chunks of steps $T_0$, we get 
\begin{align}
\sum_{t=0}^{\infty} \norm{\mathbf{y} - \mathbf{u}^{(t)}} \le 2T_0 \mathcal{O}\left(\sqrt{n}\right) + 2T_0 \frac{\mathcal{O}\left(\sqrt{n}\right)}{2} + \frac{\mathcal{O}\left(\sqrt{n}\right)}{4} + ... =  \mathcal{O}\left(\sqrt{n} T_0\right)
\end{align}
which implies 
\begin{align}
    \sum_{t=0}^{\infty} \norm{\mathbf{w}_r^{(t+1)} - \mathbf{w}_r^{(t)}} = \sum_{t=0}^{\infty} \eta \norm{\nabla^{(t)}_{r}} &= \sum_{t=0}^{\infty} \eta \norm{\sum_{i \in S_t}a_r \phi'\left(\mathbf{w}_r^Tx_i\right) x_i} \nonumber\\ & \le \frac{\alpha n^{1.5}\eta \sqrt{\log m}}{b \sqrt{m}} \sum_{t=0}^{\infty}  \norm{\mathbf{y} - \mathbf{u}^{(t)}} \label{step2:max_a} \\&= \frac{\alpha n^{2}\eta \sqrt{\log m}}{b \sqrt{m}} T_0
\end{align}
where in \autoref{step2:max_a} we use the fact that maximum magnitude of $a_r$ is $\sqrt{\log m}$ with high probability.

Also, for $\norm{\mathbf{G}^{(t)} - \mathbf{G}^{(0)}}$ to be less than $\frac{1}{2}\lambda_{\min} \left(\mathbf{G}^{(0)}\right)$, we need to have (\autoref{lemma:restrict_movement_withlambda_min})
\begin{equation*}
    \norm{\mathbf{w}_r^{(t)} - \mathbf{w}_r^{(0)}} \le \frac{\lambda_{\min}\left(\mathbf{G}^{(0)}\right)}{4\alpha\beta n}
\end{equation*}
Thus, for both the conditions to hold true, we must have
\begin{equation}
    m \ge \Omega\left(\frac{n^6 \alpha^4 \beta^2 \log m}{b^2 \lambda_{\min}\left(\mathbf{G}^{(0)}\right)^4}\right)
\end{equation}
\end{proof}

\begin{lemma} \label{lemma:restrict_movement_withlambda_min}
If activation function $\phi$ is $\alpha$-lipschitz and $\beta$-smooth and $\norm{\mathbf{w}_{r}^{(t)} - \mathbf{w}_{r}^{(0)}} \le \frac{\lambda_{\min}\left(\mathbf{G}^{(0)}\right)}{4\alpha \beta n}$, $\forall r \in \left[m\right]$, then
\begin{equation*}
    \lambda_{\min}\left(\mathbf{G}^{(t)}\right) \ge \frac{1}{2} \lambda_{\min}\left(\mathbf{G}^{(0)}\right)
\end{equation*}
\end{lemma}

\begin{proof}
\begin{align} 
    \abs{g^{(t)}_{ij} - g^{(0)}_{ij}} &=  \frac{1}{m} \abs{\sum_{r=1}^{m} a_r^2\phi'\left(\mathbf{w}_r^{(t) T} \mathbf{x}_i\right)\phi'\left(\mathbf{w}_r^{(t) T} \mathbf{x}_j\right) - a_r^2\phi'\left(\mathbf{w}_r^{(0) T} \mathbf{x}_i\right) \phi'\left(\mathbf{w}_r^{(0) T} \mathbf{x}_j\right)} \nonumber\\
    &\le \frac{1}{m}\left(\sum_{r=1}^{m} a_r^2\right) \max_{r \in [m]}  \abs{ \phi'\left(\mathbf{w}_r^{(t) T} \mathbf{x}_i\right)\phi'\left(\mathbf{w}_r^{(t) T} \mathbf{x}_j\right) - \phi'\left(\mathbf{w}_r^{(0) T} \mathbf{x}_i\right) \phi'\left(\mathbf{w}_r^{(0) T} \mathbf{x}_j\right)} \nonumber\\
    &\le \max_{r \in [m]} \abs{ \phi'\left(\mathbf{w}_r^{(t) T} \mathbf{x}_i\right) \left(\phi'\left(\mathbf{w}_r^{(t) T} \mathbf{x}_j\right) - \phi'\left(\mathbf{w}_r^{(0) T} \mathbf{x}_j\right)\right)} \nonumber\\
    &\quad \quad \quad + \abs{ \phi'\left(\mathbf{w}_r^{(0) T} \mathbf{x}_j\right) \left(\phi'\left(\mathbf{w}_r^{(t) T} \mathbf{x}_i\right) - \phi'\left(\mathbf{w}_r^{(0) T} \mathbf{x}_i\right)\right)} \nonumber\\
    &\le \max_{r \in [m]} 2 \alpha \beta \norm{\mathbf{w}_r^{(t)} - \mathbf{w}_r^{(0)}}.
\end{align}

Hence, 
\begin{equation*}
    \norm{G^{(t)} - G^{(0)}}_F \le \max_{r \in [m]} 2 L \beta n  \norm{\mathbf{w}_r^{(t)} - \mathbf{w}_r^{(0)}}
\end{equation*}

Since
\begin{equation*}
    \lambda_{\min}\left(\mathbf{G}^{(t)}\right) \ge \lambda_{\min}\left(\mathbf{G}^{(0)}\right) -     \norm{\mathbf{G}^{(t)} - \mathbf{G}^{(0)}}_F,
\end{equation*}
we have
\begin{equation*}
    \lambda_{\min}\left(\mathbf{G}^{(t)}\right) \ge \lambda_{\min}\left(\mathbf{G}^{(0)}\right) -    \max_{r \in [m]} 2 \alpha \beta n  \norm{\mathbf{w}_r^{(t)} - \mathbf{w}_r^{(0)}}.
\end{equation*}
Thus, for $ \lambda_{\min}\left(\mathbf{G}^{(t)}\right) \ge \frac{1}{2}\lambda_{\min}\left(\mathbf{G}^{(0)}\right)$, we have
\begin{equation*}
    \max_{r \in [m]}  \norm{\mathbf{w}_r^{(t)} - \mathbf{w}_r^{(0)}} \le \frac{1}{4 \alpha \beta n} \lambda_{\min}\left(\mathbf{G}^{(0)}\right).
\end{equation*}
\end{proof}
		\newpage
		\section{Some basic facts about Hermite polynomials}

For $\rho \in [-1, 1]$ we say that the Gaussian random variable $(v_0, v_1)$ is $\rho$-correlated if ${(v_0, v_1) \sim \mathcal{N}\left(0,\left(\begin{array}{ll}{1} & {\rho} \\ {\rho} & {1}\end{array}\right)\right)}$.

\begin{fact}[Proposition 11.31 in \cite{o2014analysis}]\label{fact:correlation_of_hemite}
\begin{equation*}
\Ex_{(v_0, v_1) \,\rho\text{-}\mathrm{correlated}} He_n\left(v_0\right) He_m\left(v_1\right)  = \begin{cases}
    \rho^n & \text{if $n=m$},\\
    0 & \text{otherwise}.
  \end{cases}
\end{equation*}
where recall that $He_n$ denotes the degree-$n$ probabilists' Hermite polynomial given by \eqref{hermite-prob}.
\end{fact}

The following fact follows immediately from the previous one. 

\begin{fact}\label{fact:danielyeq}
	For an activation function, define function $R: \br \to \br$ by
	\begin{equation*}
	R(\rho) := \Ex_{(v_0, v_1) \sim \,\rho\text{-}\mathrm{correlated}} \phi(v_0) \phi(v_1). 
	\end{equation*}
	Then,
	\begin{equation*}
	R(\rho) = \sum_{a=0}^{\infty} \bar{c}_{a}^2\left(\phi\right) \rho^a,
	\end{equation*}
	where $\bar{c}_a\left(\phi\right)$ is the $a$-th coefficient in the probabilists' Hermite expansion of $\phi$.
	The function satisfies the following two properties.
	\begin{itemize}
		\item $\abs{R(\rho)} \le R(\abs{\rho})$,
		\item $R(\rho)$ is increasing in $\left(0, 1\right)$.
	\end{itemize}
\end{fact}

In the following we let $\Sigma := \left(\begin{array}{ll}{\rho_{00}} & {\rho_{01}} \\ {\rho_{01}} & {\rho_{11}}\end{array}\right)$.

\begin{lemma}\label{lemma:hermite_corr}
\begin{equation*}
\Ex_{(v_0, v_1) \sim \mathcal{N}(0,\Sigma)} He_n\left(\frac{v_0}{\sqrt{\rho_{00}}}\right) He_m\left(\frac{v_1}{\sqrt{\rho_{11}}}\right)  = \begin{cases}
    \left(\frac{\rho_{01}}{\sqrt{\rho_{11}\rho_{11}}}\right)^n & \text{if $n=m$},\\
    0 & \text{otherwise}.
  \end{cases}
\end{equation*}
\end{lemma}
\begin{proof}
	
The proof follows from the proof of \autoref{fact:correlation_of_hemite}, by using the r.v. $(\tilde{v}_1, \tilde{v}_2)$, defined by
\begin{equation*}
    \tilde{v}_0 = \frac{v_0}{\sqrt{\rho_{00}}} \quad \tilde{v}_1 = \frac{v_1}{\sqrt{\rho_{11}}}
\end{equation*}
so that vector $(\tilde{v}_1, \tilde{v}_2) \sim \mathcal{N}\left(0,\Sigma'\right)$ where $\Sigma' := \left(\begin{array}{ll}{1} & {\left(\frac{\rho_{01}}{\sqrt{\rho_{00}\rho_{11}}}\right)} \\ {\left(\frac{\rho_{01}}{\sqrt{\rho_{00}\rho_{11}}}\right)} & {1}\end{array}\right)$.
\end{proof}

\begin{lemma}\label{lemma:hermite_phi}
\begin{equation*}
\Ex_{v \sim \mathcal{N}\left(0,\Sigma\right)} \phi\left(\frac{v_0}{\sqrt{\rho_{00}}}\right) \phi\left(\frac{v_1}{\sqrt{\rho_{11}}}\right)  = \sum_{a=0}^{\infty}  \bar{c}_{a}^2(\phi) \left(\frac{\rho_{01}}{\sqrt{\rho_{00}\rho_{11}}}\right)^a. 
\end{equation*}
\end{lemma}
\begin{proof}
\begin{align}
    \Ex_{v \sim \mathcal{N}\left(0,\Sigma\right)} & \phi\left(\frac{v_0}{\sqrt{\rho_{00}}}\right) \phi\left(\frac{v_1}{\sqrt{\rho_{11}}}\right) \nonumber \\
    &= \Ex_{v \sim \mathcal{N}\left(0,\Sigma\right)} \sum_{a=0}^{\infty} \sum_{a'=0}^{\infty} \bar{c}_{a}(\phi) \bar{c}_{a'}(\phi) H_{e_a} \left(\frac{v_0}{\sqrt{\rho_{00}}}\right) H_{e_{a'}} \left(\frac{v_1}{\sqrt{\rho_{11}}}\right) \nonumber \\
    &= \sum_{a=0}^{\infty} \bar{c}_{a}^2(\phi) \left(\frac{\rho_{01}}{\sqrt{\rho_{00}\rho_{11}}}\right)^{a} = R\left(\frac{\rho_{01}}{\sqrt{\rho_{00}\rho_{11}}}\right)\nonumber
\end{align}
where we use \autoref{lemma:hermite_corr} and \autoref{fact:danielyeq} in the final step.
\end{proof}

	\end{document}